\tikzset{terminal state/.style={draw,rectangle,minimum size=.3in}}
\DeclareMathOperator*{\argmax}{argmax}
\tikzstyle{startstop} = [rectangle, rounded corners, minimum width=3cm, minimum height=1cm,text centered, draw=black, fill=red!30]
\tikzstyle{io} = [trapezium, trapezium left angle=70, trapezium right angle=110, minimum width=3cm, minimum height=1cm, text centered, draw=black, fill=blue!30]
\tikzstyle{process} = [rectangle, minimum width=3cm, minimum height=1cm, text centered, draw=black, fill=orange!30]
\tikzstyle{decision} = [diamond, minimum width=3cm, minimum height=1cm, text centered, draw=black, fill=green!30]
\tikzstyle{arrow} = [thick,->,>=stealth]
\newtheorem{theorem}{Theorem}
\newtheorem{proposition}{Proposition}
\newtheorem{lemma}{Lemma}
\newenvironment{pseudocode}[1][htb]
{
	\begin{algorithm}[#1]%
	}{\end{algorithm}}
\icmltitlerunning{Safe Policy Improvement with Baseline Bootstrapping}
\begin{document}

\twocolumn[
\icmltitle{Safe Policy Improvement with Baseline Bootstrapping}



\icmlsetsymbol{equal}{*}

\begin{icmlauthorlist}
\icmlauthor{Romain Laroche}{to}
\icmlauthor{Paul Trichelair}{to}
\icmlauthor{Remi Tachet des Combes}{to}
\end{icmlauthorlist}

\icmlaffiliation{to}{Microsoft Research, Montr\'eal, Canada}

\icmlcorrespondingauthor{Romain Laroche}{romain.laroche@microsoft.com}

\icmlkeywords{Reinforcement Learning, Batch RL, Safe Policy Improvement}

\vskip 0.3in
]



\printAffiliationsAndNotice{}  

	\begin{abstract}
	    This paper considers Safe Policy Improvement (SPI) in Batch Reinforcement Learning (Batch RL): from a fixed dataset and without direct access to the true environment, train a policy that is guaranteed to perform at least as well as the baseline policy used to collect the data. 
	    Our approach, called SPI with Baseline Bootstrapping (SPIBB), is inspired by the knows-what-it-knows paradigm: it bootstraps the trained policy with the baseline when the uncertainty is high. 
	    Our first algorithm, $\Pi_b$-SPIBB, comes with SPI theoretical guarantees. 
	    We also implement a variant, $\Pi_{\leq b}$-SPIBB, that is even more efficient in practice. 
	    We apply our algorithms to a motivational stochastic gridworld domain and further demonstrate on randomly generated MDPs the superiority of SPIBB with respect to existing algorithms, not only in safety but also in mean performance. 
	    Finally, we implement a model-free version of SPIBB and show its benefits on a navigation task with deep RL implementation called SPIBB-DQN, which is, to the best of our knowledge, the first RL algorithm relying on a neural network representation able to train efficiently and reliably from batch data, without any interaction with the environment.
	\end{abstract}

	\section{Introduction}
	\label{sec:introduction}

	Most real-world Reinforcement Learning agents \citep[RL]{Sutton1998} are to be deployed simultaneously on numerous independent devices and cannot be patched quickly. In other practical applications, such as crop management or clinical tests, the outcome of a treatment can only be assessed after several years. Consequently, a bad update could be in effect for a long time, potentially hurting the user's trust and/or causing irreversible damages. Devising safe algorithms with guarantees on the policy performance is a key challenge of modern RL that needs to be tackled before any wide-scale adoption.
	
	Batch RL is an existing approach to such offline settings and consists in training a policy on a fixed set of observations without access to the true environment~\cite{Lange2012}. It should not be mistaken with the multi-batch setting where the learner trains successive policies from small batches of interactions with the environment~\cite{Duan2016}. Current Batch RL algorithms are however either unsafe or too costly computationally to be used in real-world applications. Safety in RL \cite{garcia2015comprehensive} is an overloaded term, as it may be considered with respect to parametric uncertainty~\cite{thomas2015high,ghavamzadeh2016safe}, internal uncertainty~\cite{Altman1999,Carrara2019}, interruptibility~\cite{Orseau2016,Guerraoui2017}, or as exploration in a hazardous environment \cite{Schulman2015,Schulman2017,Fatemi2019}. We focus on the former.
	
	
	In this paper, we develop novel \textit{safe and efficient} Batch RL algorithms. Our methodology for Safe Policy Improvement (SPI), called SPI with Baseline Bootstrapping (SPIBB), is introduced in Section~\ref{sec:spibb}. It consists in bootstrapping the trained policy with the behavioral policy, called \textit{baseline}, in the state-action pair transitions that were not probed enough in the dataset. It therefore assumes access to the baseline, an assumption already made in the SPI literature~\cite{ghavamzadeh2016safe}. Other SPI algorithms assume knowledge of the baseline performance instead~\cite{thomas2015high,thomas2015high2}. We argue that our assumption is more natural since SPI aims to improve an existing policy. This scenario is typically encountered when a policy is trained in a simulator and then run in its real environment, for instance in Transfer RL~\cite{Taylor2009}; or when a system is designed with expert knowledge and then optimized, for example in Dialogue applications~\cite{Laroche2010}.
	
    Still in Section \ref{sec:spibb}, we implement a computationally efficient algorithm, $\Pi_b$-SPIBB, that provably approximately outperforms the baseline with high confidence. At the expense of theoretical guarantees, we design a variant, $\Pi_{\leq b}$-SPIBB, that is even more efficient in practice. Moreover, we implement an equivalent model-free version. Coupled with a pseudo-count implementation~\cite{Bellemare2016}, it allows applying SPIBB algorithms to tasks requiring a neural network representation. Finally, we position our algorithm with respect to competing SPI algorithms found in the literature. 
    
    Then, in Section~\ref{sec:results}, we motivate our approach on a small stochastic gridworld domain and further demonstrate on randomly generated MDPs the superiority of SPIBB compared to existing algorithms, not only in safety but also in mean performance. Furthermore, we apply the model-free version to a continuous navigation task. It is, to the best of our knowledge, the first RL algorithm relying on a neural network representation able to train efficiently and reliably from batch data, without any interaction with the environment~\cite{Duan2016}.

    Finally, Section~\ref{sec:conclusion} concludes the paper. The appendix includes the proofs, thorough experiment details, and the complete results of experiments. The code may be found at \url{https://github.com/RomainLaroche/SPIBB} and \url{https://github.com/rems75/SPIBB-DQN}.

	\section{SPI with Baseline Bootstrapping}
	\label{sec:spibb}
	A proper introduction to Markov Decision Processes \citep[MDPs]{bellman1957markovian} and Reinforcement Learning \citep[RL]{Sutton1998} is available in Appendix~\ref{sup:MDP}. Due to space constraint, we only define our notations here.
	
	An MDP is denoted by $M=\langle \mathcal{X}, \mathcal{A},R,P,\gamma \rangle$, where $\mathcal{X}$ is the state space, $\mathcal{A}$ is the action space, $R^*(x,a)\in[-R_{max},R_{max}]$ is the bounded stochastic reward function, $P^*(\cdot|x,a)$ is the transition distribution, and $\gamma\in[0,1)$ is the discount factor. The true environment is modelled as an unknown finite MDP $M^*=\langle \mathcal{X}, \mathcal{A},R^*,P^*,\gamma \rangle$ with $R^*(x,a)\in[-R_{max},R_{max}]$. $\Pi=\{\pi:\mathcal{X}\rightarrow \Delta_{\mathcal{A}}\}$ is the set of stochastic policies, where $\Delta_{\mathcal{A}}$ denotes the set of probability distributions over the set of actions $\mathcal{A}$.
	
	The state and state-action value functions are respectively denoted by $V_M^\pi(x)$ and $Q_M^\pi(x,a)$.	We define the performance of a policy by its expected return, starting from the initial state $x_0$: $\rho(\pi,M) = V^\pi_M(x_0)$. Given a policy subset $\Pi'\subseteq\Pi$, a policy $\pi'$ is said to be $\Pi'$-optimal for an MDP $M$ when it maximizes its performance on $\Pi'$: $\rho(\pi',M)=\max_{\pi\in\Pi'}\rho(\pi,M)$. We will also make use of the notation $V_{max}$ as a known upper bound of the return's absolute value: $V_{max}\leq\frac{R_{max}}{1-\gamma}$.
	
	In this paper, we focus on the batch RL setting where the algorithm does its best at learning a policy from a fixed set of experience. Given a dataset of transitions $\mathcal{D}=\langle x_j,a_j,r_j,x'_j\rangle_{j\in\llbracket 1,N\rrbracket}$, we denote by $N_{\mathcal{D}}(x,a)$ the state-action pair counts; and by $\widehat{M}=\langle \mathcal{X}, \mathcal{A},\widehat{R},\widehat{P},\gamma \rangle$ the Maximum Likelihood Estimation (MLE) MDP of the environment, where $\widehat{R}$ is the reward mean and $\widehat{P}$ is the transition statistics observed in the dataset. Vanilla batch RL, referred hereinafter as Basic RL, looks for the optimal policy in $\widehat{M}$. This policy may be found indifferently using dynamic programming on the explicitly modelled MDP $\widehat{M}$, $Q$-learning with experience replay until convergence~\cite{Sutton1998}, or Fitted-$Q$ Iteration with a one-hot vector representation of the state space~\cite{ernst2005tree}.
	
	\subsection{Percentile criterion and Robust MDPs}
	\label{sec:percentile}
	We start from the \textit{percentile criterion}~\cite{Delage2010} on the safe policy improvement over the baseline $\pi_b$:
	\begin{align}
	&\pi_C = \argmax_{\pi\in\Pi} \mathbb{E}\left[\rho(\pi,M) \,\middle\vert\, M \sim \mathbb{P}_\textsc{mdp}(\cdot|\mathcal{D})\right], \label{eq:percentilecriterion} \\
	\textnormal{s.t. } &\mathbb{P}\left(\rho(\pi,M)\geq \rho(\pi_b,M) - \zeta \,\middle\vert\, M \sim \mathbb{P}_\textsc{mdp}(\cdot|\mathcal{D})\right)\geq 1-\delta, \nonumber	
	\end{align}
	where $\mathbb{P}_\textsc{mdp}(\cdot|\mathcal{D})$ is the posterior probability of the MDP parameters, $1-\delta$ is the high probability meta-parameter, and $\zeta$ is the approximation meta-parameter. \citep{ghavamzadeh2016safe} use Robust MDP~\cite{Iyengar2005,Nilim2005} to bound from below the constraint in \eqref{eq:percentilecriterion} by considering a set of admissible MDPs $\Xi=\Xi(\widehat{M},e)$ defined as:
	\begin{align}
	\Xi(\widehat{M},e) := &\left\{ M = \langle \mathcal{X}, \mathcal{A}, R, P, \gamma\rangle \quad\textnormal{s.t. } \forall(x,a)\in\mathcal{X}\times\mathcal{A}, \right. \nonumber\\
	&\left. \begin{array}{ll}
    	||P(\cdot|x,a)-\widehat{P}(\cdot|x,a)||_1 \leq e(x,a), \\
    	|R(x,a)-\widehat{R}(x,a)| \leq e(x,a)R_{max} 
	\end{array}
	\right\}
	\end{align}
	where $e:\mathcal{X}\times\mathcal{A}\rightarrow\mathbb{R}$ is an error function depending on $\mathcal{D}$ and $\delta$.
	In place of the intractable expectation in Equation~\eqref{eq:percentilecriterion}, Robust MDP classically consider optimizing the policy performance $\rho(\pi,M)$ of the worst-case scenario in $\Xi$:
	\begin{align}
	\pi_R = \argmax_{\pi\in\Pi} \min_{M\in\Xi} \rho(\pi,M). \label{eq:robustcriterion}
	\end{align}
	
	In our benchmarks, we use the Robust MDP solver described in \citet{ghavamzadeh2016safe}. \citet{ghavamzadeh2016safe} also contemplate the policy improvement worst-case scenario:
	\begin{align}
	\pi_S = \argmax_{\pi\in\Pi} \min_{M\in\Xi} \left(\rho(\pi,M) - \rho(\pi_b,M)\right). \label{eq:SPIcriterion}
	\end{align}
	
	They prove that this optimization is an NP-hard problem and propose an algorithm approximating the solution without any formal proof: Approximate Robust Baseline Regret Minimization (ARBRM). There are three problems with ARBRM. First, it assumes that there is no error in the transition probabilities of the baseline, which is very restrictive and amounts to Basic RL when the support of the baseline is the full action space in each state (as is the case in all our experiments). Second, given its high complexity, it is difficult to empirically assess its percentile criterion safety except on simple tasks. Third, in order to retain safety guarantees, ARBRM requires a conservative safety test that consistently fails in our experiments. These are the reasons why our benchmarks do not include ARBRM.
	
	\RestyleAlgo{boxruled}
	\setlength{\algomargin}{2pt}
	\subsection{SPIBB methodology}
	In this section, we reformulate the percentile criterion to make searching for an efficient and provably-safe policy tractable in terms of computer time. Our new criterion consists in optimizing the policy with respect to its performance in the MDP estimate $\widehat{M}$, while guaranteeing it to be $\zeta$-approximately at least as good as $\pi_b$ in the admissible MDP set $\Xi$. Formally, we write it as follows:
	\begin{align}
	\max_{\pi\in\Pi} \rho(\pi,\widehat{M}), \textnormal{ s.t. } \forall M\in\Xi, \rho(\pi,M) \geq \rho(\pi_b,M) - \zeta.	\label{eq:criterion}
	\end{align}
	
	From Theorem 8 of \citet{ghavamzadeh2016safe}, it is direct to guarantee that, if all the state-action pair counts satisfy:
	\begin{align}
	    N_\mathcal{D}(x,a) \geq N_{\wedge} = \frac{8V_{max}^2}{\zeta^2(1-\gamma)^2}\log\frac{2\lvert\mathcal{X}\rvert\lvert\mathcal{A}\rvert2^{|\mathcal{X}|}}{\delta},  \label{eq:petrikconstraint}
	\end{align}
	and if $\widehat{M}$ is the Maximum Likelihood Estimation (MLE) MDP, then, with high probability $1-\delta$, the optimal policy $\pi^\odot = \argmax_{\pi\in\Pi}\rho(\pi,\widehat{M})$ in $\widehat{M}$ is $\zeta$-approximately safe with respect to the true environment $M^*$:
	\begin{align}
	    \rho(\pi^\odot,M^*) \geq \rho(\pi^*,M^*) - \zeta \geq \rho(\pi_b,M^*) - \zeta.
	\end{align}

	In the following, we extend this result by allowing constraint \eqref{eq:petrikconstraint} to be violated on a subset of the state-action pairs $\mathcal{X}\times\mathcal{A}$, called the bootstrapped set and denoted by $\mathfrak{B}$. $\mathfrak{B}$ is the set of state-action pairs with counts smaller than $N_\wedge$.
	
	\subsection{$\Pi_b$-SPIBB}
	\label{sec:pibootstrap}
	In this section, we develop two novel algorithms based on policy bootstrapping and prove associated SPI bounds. More precisely, when a state-action pair $(x,a)$ is rarely seen in the dataset, we propose to rely on the baseline by copying its probability to take action $a$: 
	\begin{align}
	    \pi^\odot_{spibb}(a|x)=\pi_b(a|x)\;\textnormal{if}\;(x,a)\in\mathfrak{B}. \label{eq:policyiterationconstraint}
	\end{align}
	
	We let $\Pi_b$ denote the set of policies that verify \eqref{eq:policyiterationconstraint} for all state-action pairs. Our first algorithm, coined $\Pi_b$-SPIBB, consists in the usual policy optimization of the expected return $\rho(\pi, \widehat{M})$ under constraint \eqref{eq:policyiterationconstraint}. In practice, it may be achieved in a model-based manner by explicitly computing the MDP model $\widehat{M}$, constructing the set of allowed policies $\Pi_b$ and finally searching for the $\Pi_b$-optimal policy $\pi^\odot_{spibb}$ in $\widehat{M}$ using policy iteration over $\Pi_b$~\cite{Howard1966,Puterman1979}. In the policy evaluation step, the current policy $\pi^{(i)}$ is evaluated as $Q^{(i)}_{\widehat{M}}$. In the policy improvement step, $\pi^{(i+1)}$ is defined as the greedy policy with respect to $Q^{(i)}$ under the constraint of belonging to $\Pi_b$ (Algorithm~\ref{alg:Pibproj} describes how to enforce this constraint in linear time).

	\begin{pseudocode}[ht!]
		\caption{Greedy projection of $Q^{(i)}$ on $\Pi_{b}$}
		\KwIn{Baseline policy $\pi_{b}$}
		\KwIn{Last iteration value function $Q^{(i)}$}
		\KwIn{Set of bootstrapped state-action pairs $\mathfrak{B}$}
		\KwIn{Current state $x$ and action set $\mathcal{A}$}
		\BlankLine
		Initialize $\pi^{(i)}_{spibb} = 0$
		
		\lFor{$(x,a) \in \mathfrak{B}$}{
			$\pi^{(i)}_{spibb}(a|x) = \pi_b(a|x)$
		}
		
		$\displaystyle\pi^{(i)}_{spibb}\left(x,\argmax_{a|(x,a)\notin\mathfrak{B}}Q^{(i)}(x,a)\right) = \sum_{a|(x,a)\notin\mathfrak{B}} \pi_b(a|x)$
		
		\Return $\pi^{(i)}_{spibb}$
		\label{alg:Pibproj}
	\end{pseudocode}
	
	The following theorems prove that $\Pi_b$-SPIBB converges to a $\Pi_b$-optimal policy $\pi^\odot_{spibb}$, and that $\pi^\odot_{spibb}$ is a safe policy improvement over the baseline in the true MDP $M^*$.
	
	\begin{theorem}[Convergence]
		$\Pi_b$-SPIBB converges to a policy $\pi^\odot_{spibb}$ that is $\Pi_b$-optimal in the MLE MDP $\widehat{M}$.
		\label{th:pib-spibb-convergence}
	\end{theorem}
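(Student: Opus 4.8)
The plan is to transcribe the classical proof of convergence of policy iteration \citep{Howard1966,Puterman1979} to the constrained policy class $\Pi_b$, exploiting the fact that the membership constraint \eqref{eq:policyiterationconstraint} factorizes over states. Concretely, write $\Pi_b=\prod_{x\in\mathcal{X}}\Pi_b^x$, where $\Pi_b^x\subseteq\Delta_{\mathcal{A}}$ is the (fixed, compact, polytope) set of action distributions $p$ such that $p(a)=\pi_b(a|x)$ whenever $(x,a)\in\mathfrak{B}$. This per-state decoupling is what lets a Bellman-style analysis go through, exactly as in the unconstrained case but with $\Delta_{\mathcal{A}}$ replaced by $\Pi_b^x$ at each state.

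The core step is a \emph{constrained policy improvement lemma}: if $\pi^{(i+1)}$ is the output of Algorithm~\ref{alg:Pibproj} applied to $Q^{(i)}=Q^{\pi^{(i)}}_{\widehat{M}}$, then $V^{\pi^{(i+1)}}_{\widehat{M}}(x)\geq V^{\pi^{(i)}}_{\widehat{M}}(x)$ for every $x\in\mathcal{X}$. To prove it, fix $x$ and compare $\sum_{a}\pi^{(i+1)}(a|x)Q^{(i)}(x,a)$ with $V^{\pi^{(i)}}_{\widehat{M}}(x)=\sum_{a}\pi^{(i)}(a|x)Q^{(i)}(x,a)$: on the bootstrapped actions the two policies coincide (both equal $\pi_b$, since $\pi^{(i)}\in\Pi_b$), and on the remaining actions $\pi^{(i+1)}$ puts all of the leftover mass $\sum_{a\,:\,(x,a)\notin\mathfrak{B}}\pi_b(a|x)$ on $\argmax_{a\,:\,(x,a)\notin\mathfrak{B}}Q^{(i)}(x,a)$, which can only raise the weighted average. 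This gives the one-step inequality $T^{\pi^{(i+1)}}V^{(i)}\geq V^{(i)}$ for the $\widehat{M}$-evaluation operator $T^{\pi}$; iterating and using monotonicity of $T^{\pi^{(i+1)}}$ (the standard policy-improvement-theorem argument) yields the claimed pointwise value inequality, hence a monotone non-decreasing sequence $\bigl(V^{\pi^{(i)}}_{\widehat{M}}\bigr)_i$, bounded by $V_{max}$.

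For termination, note that every policy produced by Algorithm~\ref{alg:Pibproj} is determined by the choice of one maximizing non-bootstrapped action per state, so the set of reachable iterates is finite (cardinality at most $|\mathcal{A}|^{|\mathcal{X}|}$); combined with strict monotonicity of the value until stabilization, the scheme reaches a fixed point $\pi^\odot_{spibb}$ in finitely many iterations. To identify the fixed point as $\Pi_b$-optimal, introduce the constrained Bellman optimality operator $(\mathcal{T}V)(x)=\max_{p\in\Pi_b^x}\sum_{a\in\mathcal{A}}p(a)\bigl(\widehat{R}(x,a)+\gamma\sum_{x'\in\mathcal{X}}\widehat{P}(x'|x,a)V(x')\bigr)$, which is a $\gamma$-contraction in sup norm (the feasible sets $\Pi_b^x$ being fixed and compact) and whose unique fixed point $V^\star$ equals $\max_{\pi\in\Pi_b}V^\pi_{\widehat{M}}$ by the usual greedy-selection argument. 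At the policy-iteration fixed point, $\pi^\odot_{spibb}$ is greedy within $\Pi_b$ with respect to its own $Q^{\pi^\odot_{spibb}}_{\widehat{M}}$, so $V^{\pi^\odot_{spibb}}_{\widehat{M}}=\mathcal{T}V^{\pi^\odot_{spibb}}_{\widehat{M}}$; uniqueness of the fixed point of $\mathcal{T}$ forces $V^{\pi^\odot_{spibb}}_{\widehat{M}}=V^\star$, i.e.\ $\pi^\odot_{spibb}$ is $\Pi_b$-optimal in $\widehat{M}$.

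I expect the constrained policy improvement lemma to be the only step requiring real care; the rest is a faithful rewriting of the textbook argument with $\Pi$ replaced by $\Pi_b$. The subtlety to watch is that the greedy update must leave the probability mass on $\mathfrak{B}$-actions exactly at $\pi_b$ and redistribute only the complementary mass — precisely what Algorithm~\ref{alg:Pibproj} does — since allowing arbitrary reshuffling of the distribution would break both the improvement inequality and membership in $\Pi_b$.
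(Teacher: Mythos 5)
Your proof is correct, but it takes a genuinely different route from the paper's. The paper proves this theorem in one line by reusing the semi-MDP transformation built for the safety analysis (Lemma~\ref{lem:nearopt-pi}): each non-bootstrapped pair $(x,a)$ becomes an option ``do $a$, then follow the bootstrapped part of $\pi_b$ until reaching a state where a non-bootstrapped action is available,'' so that the constrained optimization over $\Pi_b$ in $\widehat{M}$ becomes an \emph{unconstrained} policy iteration in a well-defined semi-MDP, and convergence is then delegated to the known semi-MDP policy iteration result of \citet{Gosavi2004}. You instead keep the original MDP and redo the textbook argument directly on the constrained class, exploiting that $\Pi_b=\prod_x\Pi_b^x$ factorizes into fixed compact polytopes per state: a constrained policy improvement lemma (where the key point, which you correctly identify, is that both $\pi^{(i)}$ and $\pi^{(i+1)}$ carry exactly the same leftover mass $\sum_{a:(x,a)\notin\mathfrak{B}}\pi_b(a|x)$ on non-bootstrapped actions, so concentrating it on the argmax can only help), finiteness of the reachable iterates, and a constrained Bellman optimality operator whose contraction property identifies the fixed point as $\Pi_b$-optimal. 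Your version is longer but self-contained and arguably more transparent --- it makes explicit \emph{why} the per-state decoupling of the constraint is what saves the argument, and it does not import the options machinery or an external convergence citation; the paper's version buys brevity and reuses a construction it needs anyway for Theorem~\ref{th:safepolicyimprovement-pi}. The only small point to tidy in yours is the standard policy-iteration caveat that when $V^{\pi^{(i+1)}}=V^{\pi^{(i)}}$ one must conclude termination rather than risk cycling among value-equivalent greedy selections (fixed by consistent tie-breaking, or by observing that value equality already implies $\pi^{(i)}$ is greedy with respect to its own $Q$ and hence a fixed point of $\mathcal{T}$); and, as an edge case both you and the paper leave implicit, in a state where every action is bootstrapped the set $\Pi_b^x$ is a singleton and the argmax over non-bootstrapped actions is vacuous with zero mass to assign, which is harmless.
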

	\begin{theorem}[Safe policy improvement]
		Let $\Pi_b$ be the set of policies under the constraint of following $\pi_b$ when $(x,a)\in\mathfrak{B}$. Then, $\pi^\odot_{spibb}$ is a $\zeta$-approximate safe policy improvement over the baseline $\pi_b$ with high probability $1-\delta$, where:
		\begin{equation*}
		\zeta = \cfrac{4 V_{max}}{1-\gamma} \sqrt{\cfrac{2}{N_{\wedge}}\log\cfrac{2|\mathcal{X}||\mathcal{A}|2^{|\mathcal{X}|}}{\delta}}  - \rho(\pi^\odot_{spibb}, \widehat{M}) + \rho(\pi_b, \widehat{M})
		\end{equation*}
		\label{th:safepolicyimprovement-pi}
	\end{theorem}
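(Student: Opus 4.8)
The plan is to reduce the claim to a \emph{one-sided} control of the difference between the improvements of $\pi^\odot_{spibb}$ over $\pi_b$ in $M^*$ and in $\widehat M$. Setting
$$D:=\Big(\rho(\pi^\odot_{spibb},M^*)-\rho(\pi_b,M^*)\Big)-\Big(\rho(\pi^\odot_{spibb},\widehat M)-\rho(\pi_b,\widehat M)\Big),$$
the inequality $D\geq-\tfrac{4V_{max}}{1-\gamma}\sqrt{\tfrac{2}{N_\wedge}\log\tfrac{2|\mathcal{X}||\mathcal{A}|2^{|\mathcal{X}|}}{\delta}}$ rearranges, term by term, into $\rho(\pi^\odot_{spibb},M^*)\geq\rho(\pi_b,M^*)-\zeta$ with exactly the $\zeta$ of the statement. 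So it suffices to prove that this inequality for $D$ holds with probability at least $1-\delta$; Theorem~\ref{th:pib-spibb-convergence} together with $\pi_b\in\Pi_b$ additionally gives $\rho(\pi^\odot_{spibb},\widehat M)\geq\rho(\pi_b,\widehat M)$, hence $\zeta\leq\tfrac{4V_{max}}{1-\gamma}\sqrt{\tfrac{2}{N_\wedge}\log\tfrac{2|\mathcal{X}||\mathcal{A}|2^{|\mathcal{X}|}}{\delta}}$ and the guarantee is non-vacuous.

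First I would fix the high-probability event. With $e(x,a):=\sqrt{\tfrac{2}{N_\mathcal{D}(x,a)}\log\tfrac{2|\mathcal{X}||\mathcal{A}|2^{|\mathcal{X}|}}{\delta}}$, Weissman's $\ell_1$-deviation inequality for the empirical transition kernels, Hoeffding's inequality for the empirical rewards, and a union bound over the $|\mathcal{X}||\mathcal{A}|$ state-action pairs yield that $M^*\in\Xi(\widehat M,e)$ with probability at least $1-\delta$, i.e. $\|P^*(\cdot|x,a)-\widehat P(\cdot|x,a)\|_1\leq e(x,a)$ and $|R^*(x,a)-\widehat R(x,a)|\leq e(x,a)R_{max}$ for every $(x,a)$. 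By the definition of $\mathfrak{B}$, $(x,a)\notin\mathfrak{B}$ implies $N_\mathcal{D}(x,a)\geq N_\wedge$ and hence $e(x,a)\leq e_\wedge:=\sqrt{\tfrac{2}{N_\wedge}\log\tfrac{2|\mathcal{X}||\mathcal{A}|2^{|\mathcal{X}|}}{\delta}}$, while on $\mathfrak{B}$ only $e(x,a)\leq 2$ is available. Everything below is on this event.

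The core of the proof is the bound on $D$. I would expand each of the two improvements with the performance-difference lemma written from the baseline side: for any MDP $M$ and any $\pi\in\Pi_b$,
$$\rho(\pi,M)-\rho(\pi_b,M)=\sum_{x\in\mathcal{X}}d^{\pi_b}_{M}(x)\sum_{a:\,(x,a)\notin\mathfrak{B}}\big(\pi(a|x)-\pi_b(a|x)\big)\,Q^{\pi}_{M}(x,a),$$
where $d^{\pi_b}_{M}$ is the unnormalized $\gamma$-discounted occupancy of $\pi_b$ from $x_0$ (so $\sum_x d^{\pi_b}_{M}(x)=(1-\gamma)^{-1}$), and the inner sum collapses to pairs \emph{outside} $\mathfrak{B}$ precisely because $\pi\in\Pi_b$ and $\pi_b$ coincide on $\mathfrak{B}$. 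Subtracting this identity at $M=M^*$ and $M=\widehat M$ with $\pi=\pi^\odot_{spibb}$ decomposes $D$ into a part driven by $Q^{\pi^\odot_{spibb}}_{M^*}-Q^{\pi^\odot_{spibb}}_{\widehat M}$, evaluated at non-bootstrapped pairs only, and a part driven by the occupancy discrepancy $d^{\pi_b}_{M^*}-d^{\pi_b}_{\widehat M}$; each is then propagated through a resolvent $(I-\gamma P_{\cdot,\pi})^{-1}$ (rows summing to $(1-\gamma)^{-1}$) against the $e_\wedge$ bounds on the non-bootstrapped pairs and the a priori bounds $\|Q^{\pi}_M\|_\infty,\|V^{\pi}_M\|_\infty\leq V_{max}$, which combine to the constant $\tfrac{4V_{max}}{1-\gamma}$. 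Taking $\pi=\pi_b$ in the same identity shows the $\pi_b$-against-$\pi_b$ term vanishes identically and contributes no additional error.

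I expect the main obstacle to be the treatment of $\mathfrak{B}$ in this last step, and it is the reason the statement is phrased as an improvement \emph{over} $\pi_b$ rather than as an absolute performance bound: on $\mathfrak{B}$ the model errors are not small, and a naive simulation-lemma bound on $|\rho(\pi,M^*)-\rho(\pi,\widehat M)|$ for a single $\pi\in\Pi_b$ is simply false, since such a $\pi$ still traverses poorly-estimated transitions through $\mathfrak{B}$. The $\Pi_b$ constraint is what saves the argument: since $\pi^\odot_{spibb}$ and $\pi_b$ behave \emph{identically} on $\mathfrak{B}$, the large errors there are never multiplied by a non-zero policy difference at the top level of the expansion, so the only way they can still enter $D$ is through the occupancy discrepancy $d^{\pi_b}_{M^*}-d^{\pi_b}_{\widehat M}$, which must be shown to be of order $e_\wedge$ rather than of order $1$ — that is, the cumulative weight that $\pi_b$, and hence up to a controllable slack any policy in $\Pi_b$, places on the under-sampled set $\mathfrak{B}$ has to be tied back to $N_\wedge$ through the very definition $(x,a)\in\mathfrak{B}\iff N_\mathcal{D}(x,a)<N_\wedge$. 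Nailing down this quantitative link is the delicate point; the remaining error propagation is routine.
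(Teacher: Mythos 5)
Your reduction of the claim to the one-sided bound on $D$ is the same first step as the paper's, and your high-probability event is set up correctly, but the core of the argument has a genuine gap that you yourself flag and do not close. The difficulty is worse than you describe: it is not only the occupancy discrepancy $d^{\pi_b}_{M^*}-d^{\pi_b}_{\widehat M}$ that escapes control, but also the term driven by $Q^{\pi^\odot_{spibb}}_{M^*}-Q^{\pi^\odot_{spibb}}_{\widehat M}$ ``at non-bootstrapped pairs only.'' Even though the policy difference at the top level of your performance-difference expansion is supported outside $\mathfrak{B}$, the quantity $Q^{\pi}_{M}(x,a)$ at a non-bootstrapped pair integrates the entire future of $\pi$, which repeatedly traverses bootstrapped pairs where the per-step model error is only bounded by $2$, not by $e_\wedge$; the resolvent $(\mathbb{I}-\gamma P\pi)^{-1}$ propagates those order-one errors right back into your bound. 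Your proposed fix for the occupancy term --- tying the weight $\pi_b$ places on $\mathfrak{B}$ back to $N_\wedge$ ``through the very definition'' of $\mathfrak{B}$ --- cannot work: $N_\mathcal{D}(x,a)<N_\wedge$ is a statement about the dataset, not about the true discounted occupancy (for a small dataset every pair is bootstrapped while the occupancies are whatever they are), and even a small occupancy on $\mathfrak{B}$ would not by itself produce a discrepancy scaling as $e_\wedge=\sqrt{2\log(\cdot)/N_\wedge}$.

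The paper closes exactly this gap with a construction you do not have: it recasts $M^*$ and $\widehat M$ as semi-MDPs whose ``actions'' are options $o_a$ initiable only at non-bootstrapped $(x,a)$, each option executing $a$ and then following the bootstrapped part of $\pi_b$ until the next non-bootstrapped decision. Because the data were collected with $\pi_b$ (an assumption your per-pair Weissman/Hoeffding event does not exploit), each occurrence of $(x,a)\notin\mathfrak{B}$ in the dataset is followed by an i.i.d.\ sample of the option's multi-step outcome, so the semi-MDP transition and reward at $(x,o_a)$ concentrate at rate $e_\wedge$ directly --- the poorly estimated bootstrapped dynamics are folded inside the options and never appear as a per-step error. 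A simulation lemma in the semi-MDP then gives the two-sided bound $\lvert\rho(\pi,M^*)-\rho(\pi,\widehat M)\rvert\leq 2\epsilon V_{max}/(1-\gamma)$ for \emph{every} $\pi\in\Pi_b$ (so your assertion that such an absolute bound ``is simply false'' is incorrect --- it is false only for the naive per-step recursion), and applying it to $\pi^\odot_{spibb}$ and to $\pi_b$ yields the constant $4$. To complete your route you would need an equivalent device: some way to concentrate multi-step trajectory statistics through $\mathfrak{B}$ using the $\pi_b$-collected data, which is precisely what the options construction supplies and what your sketch leaves open.
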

	
	Proofs of both theorems are available in Appendix \ref{sup:th-proofs}. Theorem \ref{th:pib-spibb-convergence} is a direct application of the classical policy iteration theorem. Theorem \ref{th:safepolicyimprovement-pi} is a generalization of Theorem~8 in \citet{ghavamzadeh2016safe}. The resulting bounds may look very similar at first. The crucial difference is that, in our case, $N_\wedge$ is not a property of the dataset, but a hyper-parameter of the algorithm. In all our experiments, $\lVert e\rVert_\infty$ from Theorem 8 would be equal to 2, leading to a trivial bound. In comparison, $\Pi_b$-SPIBB allows safe improvement if $N_\wedge$ is chosen large enough to ensure safety and small enough to ensure improvement.
	
	SPIBB takes inspiration from \citet{ghavamzadeh2016safe}'s idea of finding a policy that is guaranteed to be an improvement for any realization of the uncertain parameters, and similarly estimates the error on those parameters, as a function of the state-action pair counts. But instead of searching for the analytic optimum, SPIBB looks for a solution that improves the baseline when it can guarantee improvement and falls back on the baseline when the uncertainty is too high. One can see it as a \textit{knows-what-it-knows} algorithm, asking for help from the baseline when it \textit{does not know whether it knows}~\cite{li2008knows}. As such, our algorithms can be seen as pessimistic, the flip side of \textit{optimism in the face of uncertainty}~\cite{Szita2008}. As a consequence, $\Pi_b$-SPIBB is not optimal with respect to the criterion in Equation \eqref{eq:criterion}. But in return, it is inherently safe as it only allows to search in a set of policies for which the improvement over the baseline can be safely evaluated~\cite{Thomas2017}. It is also worth mentioning that SPIBB is computationally simple, which allows us to develop the SPIBB-DQN algorithm in the next section.
	
	\subsection{Model-free $\Pi_b$-SPIBB and SPIBB-DQN}
	The $\Pi_b$-SPIBB policy optimization may indifferently be achieved in a model-free manner by fitting the $Q$-function to the following target $y^{(i)}_j$ over the transition samples in the dataset $\mathcal{D}=\langle x_j,a_j,r_j,x'_j\rangle_{j\in\llbracket 1,N\rrbracket}$:
	\begin{align}
	    y^{(i)}_j &= r_j + \gamma\sum_{a'|(x_j',a')\in\mathfrak{B}} \pi_b(a'|x_j') Q^{(i)}(x_j',a') \label{eq:spibb-DQN} \\
	    &+ \gamma\left(\sum_{a'|(x_j',a')\notin\mathfrak{B}} \pi_b(a'|x_j')\right) \max_{a'|(x_j',a')\notin\mathfrak{B}} Q^{(i)}(x_j',a') \nonumber
	\end{align}
	
	The first term $r_j$ is the immediate reward observed during the recorded transition, the second term is the return estimate of the bootstrapped actions (where the trained policy is constrained to the baseline policy), and the third term is the return estimate maximized over the non-bootstrapped actions. SPIBB-DQN is the DQN algorithm fitted to these targets $y^{(i)}_j$~\cite{Mnih2015}. Note that computing the SPIBB targets requires determining the bootstrapped set $\mathfrak{B}$, which relies on an estimate of the state-action counts $\widetilde{N}_{\mathcal{D}}(x,a)$, also called pseudo-counts~\cite{Bellemare2016,Fox2018,Burda2019}.
	
	\begin{theorem}
	    In finite MDPs, Equation \ref{eq:spibb-DQN} admits a unique fixed point that coincides with the $Q$-value of the policy trained with model-based $\Pi_b$-SPIBB.
	\end{theorem}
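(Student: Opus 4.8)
Here is how I would prove the statement.

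\medskip

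\noindent\textbf{Overview and the operator.} The plan is to read Equation~\eqref{eq:spibb-DQN} as the fixed-point equation of a Bellman-type operator on the MLE MDP $\widehat{M}$, show that this operator is a $\gamma$-contraction in the sup-norm, and then identify its unique fixed point with $Q^{\pi^\odot_{spibb}}_{\widehat{M}}$. With an exact tabular (one-hot) representation, fitting $Q$ to the per-sample targets $y^{(i)}_j$ makes $Q^{(i+1)}(x,a)$ equal to the empirical mean of $y^{(i)}_j$ over the transitions with $(x_j,a_j)=(x,a)$; since $\widehat{R}(x,a)$ and $\widehat{P}(\cdot|x,a)$ are exactly these empirical means, one iteration of model-free $\Pi_b$-SPIBB is the map $Q\mapsto\mathcal{T}Q$ with
\[
(\mathcal{T}Q)(x,a) = \widehat{R}(x,a) + \gamma\sum_{x'}\widehat{P}(x'|x,a)\, g_Q(x'),
\]
where $g_Q(x') := \sum_{a'|(x',a')\in\mathfrak{B}}\pi_b(a'|x')Q(x',a') + \big(\sum_{a'|(x',a')\notin\mathfrak{B}}\pi_b(a'|x')\big)\max_{a'|(x',a')\notin\mathfrak{B}}Q(x',a')$, with the convention that the last term is $0$ when every action at $x'$ is bootstrapped (and the usual MLE convention for never-visited pairs). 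A fixed point of Equation~\eqref{eq:spibb-DQN} is thus exactly a fixed point of $\mathcal{T}$.

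\medskip

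\noindent\textbf{Contraction and uniqueness.} For any $Q_1,Q_2$ and any $x'$, the weights $\{\pi_b(a'|x')\}_{a'}$ that multiply the values and the $\max$-term inside $g_Q(x')$ sum to one, and $Q\mapsto\max_{a'|(x',a')\notin\mathfrak{B}}Q(x',a')$ is $1$-Lipschitz for $\|\cdot\|_\infty$; hence $|g_{Q_1}(x')-g_{Q_2}(x')|\le\|Q_1-Q_2\|_\infty$, and averaging against $\widehat{P}(\cdot|x,a)$ yields $\|\mathcal{T}Q_1-\mathcal{T}Q_2\|_\infty\le\gamma\|Q_1-Q_2\|_\infty$. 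Since $\gamma<1$ and $\mathbb{R}^{|\mathcal{X}||\mathcal{A}|}$ with the sup-norm is complete, Banach's fixed-point theorem gives a unique fixed point $Q^\odot$ of $\mathcal{T}$.

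\medskip

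\noindent\textbf{Identification with $Q^{\pi^\odot_{spibb}}_{\widehat{M}}$.} By Theorem~\ref{th:pib-spibb-convergence}, model-based $\Pi_b$-SPIBB converges to a $\Pi_b$-optimal $\pi^\odot_{spibb}$; at convergence of policy iteration, $\pi^\odot_{spibb}$ is greedy within $\Pi_b$ with respect to its own value $Q^{\pi^\odot_{spibb}}_{\widehat{M}}$, which by Algorithm~\ref{alg:Pibproj} means $\pi^\odot_{spibb}(\cdot|x')$ agrees with $\pi_b$ on $\mathfrak{B}$ and puts the remaining mass $\sum_{a'|(x',a')\notin\mathfrak{B}}\pi_b(a'|x')$ on an $\argmax_{a'|(x',a')\notin\mathfrak{B}}Q^{\pi^\odot_{spibb}}_{\widehat{M}}(x',a')$. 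Splitting the sum over actions accordingly, this gives $\sum_{a'}\pi^\odot_{spibb}(a'|x')Q^{\pi^\odot_{spibb}}_{\widehat{M}}(x',a') = g_{Q^{\pi^\odot_{spibb}}_{\widehat{M}}}(x')$, so the ordinary Bellman evaluation equation for $\pi^\odot_{spibb}$ in $\widehat{M}$ reads exactly $Q^{\pi^\odot_{spibb}}_{\widehat{M}} = \mathcal{T}Q^{\pi^\odot_{spibb}}_{\widehat{M}}$; by the uniqueness above, $Q^\odot = Q^{\pi^\odot_{spibb}}_{\widehat{M}}$, which is the claim. Ties in the $\argmax$ are harmless, since every $\Pi_b$-greedy policy with respect to $Q^{\pi^\odot_{spibb}}_{\widehat{M}}$ gives the same value through the same computation.

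\medskip

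\noindent\textbf{Expected main obstacle.} There is no deep difficulty: the contraction estimate is routine once one notices that the baseline probabilities on $\mathfrak{B}$ together with the coefficient of the $\max$-term form a convex combination, so non-expansiveness of $\max$ transfers to $\mathcal{T}$. The step needing the most care is the bookkeeping linking the per-sample target recursion, the operator $\mathcal{T}$ on $\widehat{M}$, and the greedy projection of Algorithm~\ref{alg:Pibproj}: one must check they line up exactly, including the degenerate states where $\mathfrak{B}$ contains all or none of the actions and the never-visited state-action pairs, and invoke Theorem~\ref{th:pib-spibb-convergence} to guarantee that the converged $\pi^\odot_{spibb}$ is self-greedy within $\Pi_b$.
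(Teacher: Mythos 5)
Your proof is correct and follows essentially the same route as the paper's: decompose the converged policy into its bootstrapped part (equal to $\pi_b$ on $\mathfrak{B}$) and its greedy non-bootstrapped part, and match the Bellman evaluation equation in $\widehat{M}$ with the empirical average of the targets $y^{(i)}_j$. The only difference is cosmetic: you spell out the $\gamma$-contraction of the modified operator (convex combination plus non-expansiveness of $\max$), whereas the paper simply asserts uniqueness as a classical consequence of the Bellman operator being a contraction — your version is, if anything, the more careful one on that point.
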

	
	\subsection{$\Pi_{\leq b}$-SPIBB} 
	In our empirical evaluation, we consider a variant of $\Pi_b$-SPIBB: the space of policies to search is relaxed to $\Pi_{\leq b}$, the set of policies that do not to give more weight than $\pi_b$ to bootstrapped actions. As a consequence, in comparison with $\Pi_b$-SPIBB, it allows to cut off bad performing actions even when their estimate is imprecise:
	\begin{equation}
		\Pi_{\leq b} = \left\{\pi\in\Pi \,\middle\vert\,\pi(a|x) \leq \pi_b(a|x)\;\textnormal{if}\;(x,a)\in\mathfrak{B}\right\}
	\end{equation}

	The resulting algorithm is referred as $\Pi_{\leq b}$-SPIBB and amounts, as for $\Pi_b$-SPIBB, to perform a policy iteration under the policy constraint to belong to $\Pi_{\leq b}$. The convergence guarantees of Theorem~\ref{th:pib-spibb-convergence} still apply to $\Pi_{\leq b}$-SPIBB, but we lose the SPI ones.

	Algorithm~\ref{alg:pileqbproj} in Appendix~\ref{sup:algos}, describes the greedy projection of $Q^{(i)}$ on $\Pi_{\leq b}$. Appendix~\ref{sup:pibvspi<b} also includes a comprehensive example that illustrates the difference between the $\Pi_b$-SPIBB and $\Pi_{\leq b}$-SPIBB policy improvement steps. Despite the lack of safety guarantees, our experiments show $\Pi_{\leq b}$-SPIBB to be even safer than $\Pi_{b}$-SPIBB while outperforming it in most scenarios. Multi-batch settings -- where it may be better to keep exploring the bootstrapped pairs -- might be an exception~\cite{Lange2012}.

	\subsection{Other related works}	
	High-Confidence PI refers to the family of algorithms introduced in \citet{Paduraru2013,Mandel2014,thomas2015high}, which rely on the ability to produce high-confidence lower bound on the Importance Sampling (IS) estimate of the trained policy performance. IS and SPIBB approaches are very different in nature: IS provides frequentist bounds, while SPIBB provides Bayesian bounds. In comparison to SPIBB, IS has the advantage of not depending on the MDP model and as a consequence may be applied to infinite MDPs with guarantees. However, the IS estimates are known to be high variance. 
	Another drawback of the IS approach is that it fails for long horizon problem. Indeed, \citet{Guo2017} show that the amount of data required by IS-based SPI algorithms scales exponentially with the horizon of the MDP. Regarding the dependency in the horizon of SPIBB algorithms, the discount factor $\gamma$ is often translated as a planning horizon: $H=\frac{1}{1-\gamma}$. This is the case in UCT for instance~\cite{Kocsis2006}.
	As a consequence, Theorem \ref{th:safepolicyimprovement-pi} tells us that the safety is linear in the horizon (given a fixed $V_{max}$).
	
	In \citet{Kakade2002}, Conservative Policy Iteration (CPI) not only assumes access to the environment, but also to a $\mu$-restart mechanism which can basically sample at will from the environment according to a distribution $\mu$. This is used in step (2) of the CPI algorithm to build an estimate of the advantage function precise enough to ensure policy improvement with high probability. SPIBB does not have access to the true environment: all it sees are the finite samples from the batch. Similarly, \citet{Pirotta2013b,Pirotta2013a} consider a single safe policy improvement in order to speed up training of policy gradients (use less policy iterations). These are however not safe in the sense of finding a policy that improves a previous policy with high confidence: they will converge to the same policy asymptotically, the optimal one in the MLE MDP. Additionally, they are not considering the batch setting.

	\begin{figure*}[t]
		\centering
    	\newcommand*{\xMin}{0}
    	\newcommand*{\xMax}{5}
    	\newcommand*{\yMin}{0}
    	\newcommand*{\yMax}{5}	
    	\newcommand*{\tikzfigscale}{0.85}
		\subfloat[Gridworld domain.]{
			\begin{tikzpicture}[scale=\tikzfigscale, every node/.style={scale=\tikzfigscale}]
			\foreach \i in {\xMin,...,\xMax} {
				\draw [very thin,gray] (\i,\yMin) -- (\i,\yMax)  node [below] at (\i,\yMin) {};
			}
			\foreach \i in {\yMin,...,\yMax} {
				\draw [very thin,gray] (\xMin,\i) -- (\xMax,\i) node [left] at (\xMin,\i) {};
			}
			\draw[red,very thick] (3,0) -- (3,3);
			\draw[red,very thick] (4,2) -- (4,4);
			\draw[red,very thick] (0,0) -- (0,5) -- (5,5) -- (5,0) -- (0,0);
			
			\node[draw] at (0.5,0.5) {S};
			\node[draw] at (4.5,4.5) {G};
			
			\node[draw] at (0.5,0.5) {S};
			\node[draw] at (4.5,4.5) {G};
			\draw[->,blue] (0.5,0.7) -- (0.5,1.3);
			\draw[->,blue] (0.7,1.5) -- (1.3,1.5);
			\draw[->,blue] (1.5,1.7) -- (1.5,2.3);
			\draw[->,blue] (1.5,2.7) -- (1.5,3.3);
			\draw[->,blue] (1.7,3.5) -- (2.3,3.5);
			\draw[->,blue] (2.5,3.7) -- (2.5,4.3);
			\draw[->,blue] (2.7,4.5) -- (3.3,4.5);
			\draw[->,blue] (3.7,4.5) -- (4.3,4.5);
			\end{tikzpicture}
			\label{fig:gridworld}
		}
		\subfloat[1\%-CVaR heatmap: $\Pi_{b}$-SPIBB.]{
			\includegraphics[trim = 10pt 40pt 45pt 60pt, clip, width=0.33\textwidth]{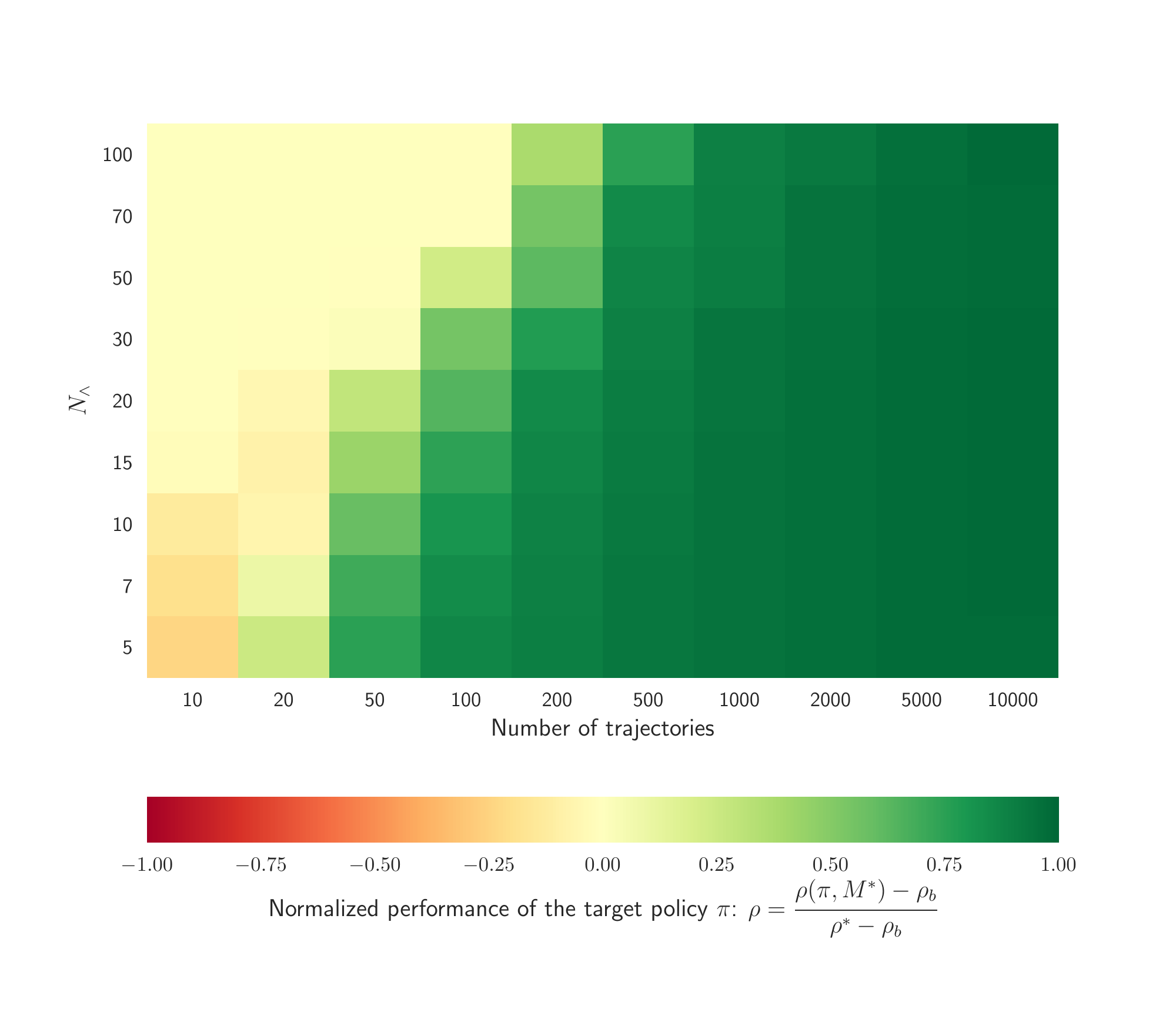}
			\label{fig:heatmap_maze_pib}
		}
		\subfloat[1\%-CVaR heatmap: $\Pi_{\leq b}$-SPIBB.]{
			\includegraphics[trim = 10pt 40pt 45pt 60pt, clip, width=0.33\textwidth]{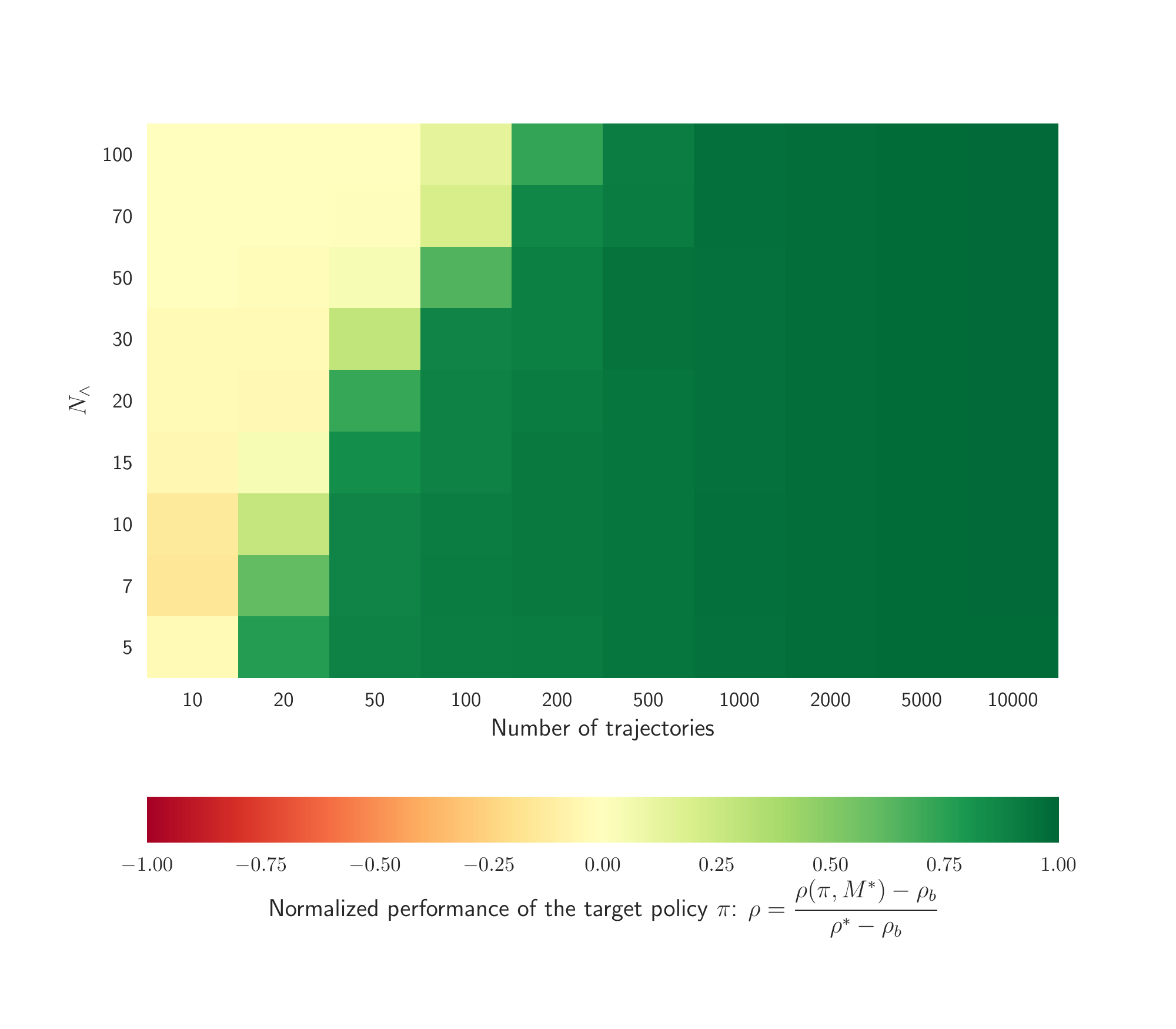}
			\label{fig:heatmap_maze_pi<b}
		} \\
		\bigcentering
		\subfloat[Mean: benchmark with $N_\wedge=5$.]{
			\includegraphics[trim = 5pt 5pt 5pt 5pt, clip, width=0.33\textwidth]{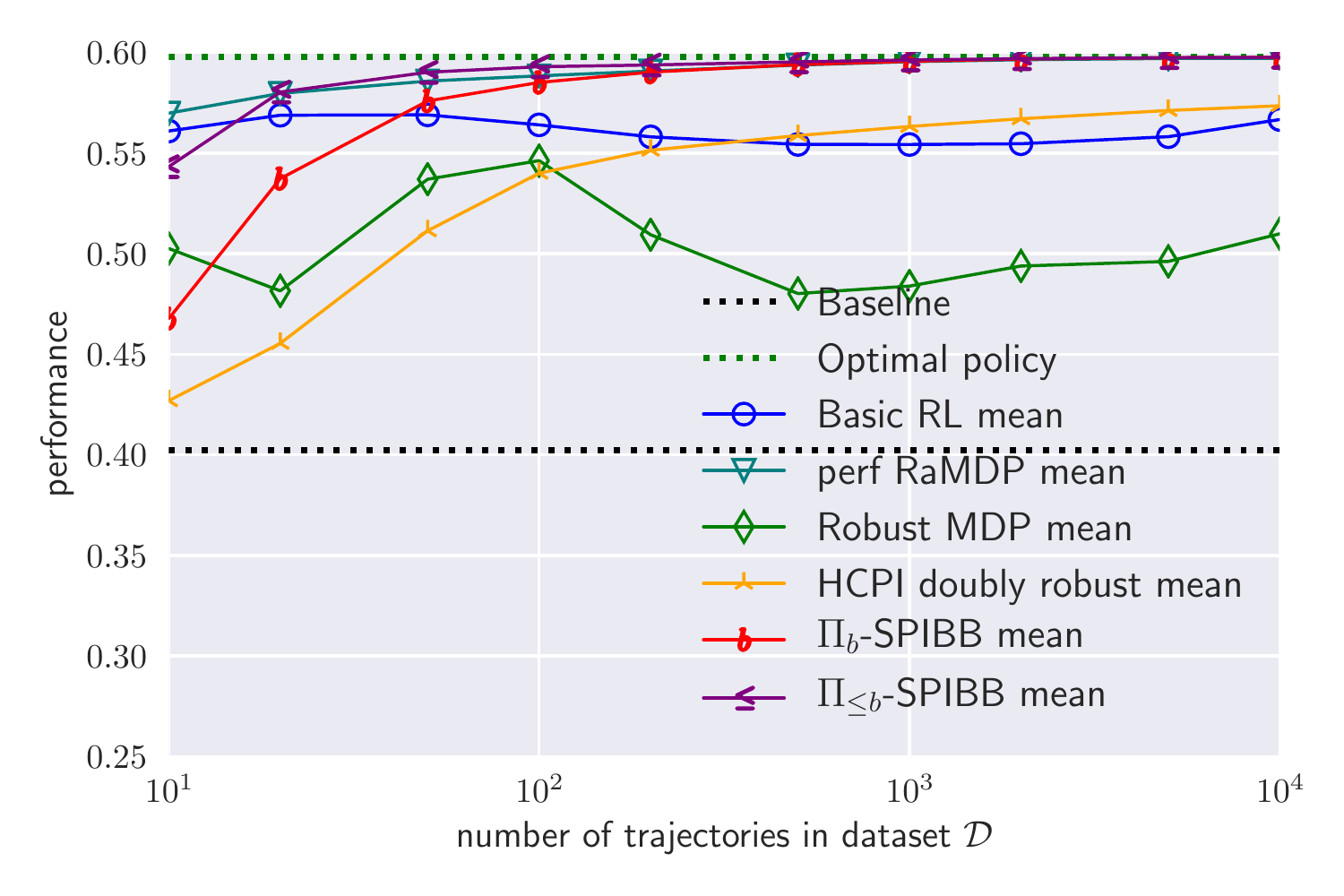}
			\label{fig:benchmark_mean_maze_N=5}
		} 
		\subfloat[1\%-CVaR: benchmark with $N_\wedge=5$.]{
			\includegraphics[trim = 5pt 5pt 5pt 5pt, clip, width=0.33\textwidth]{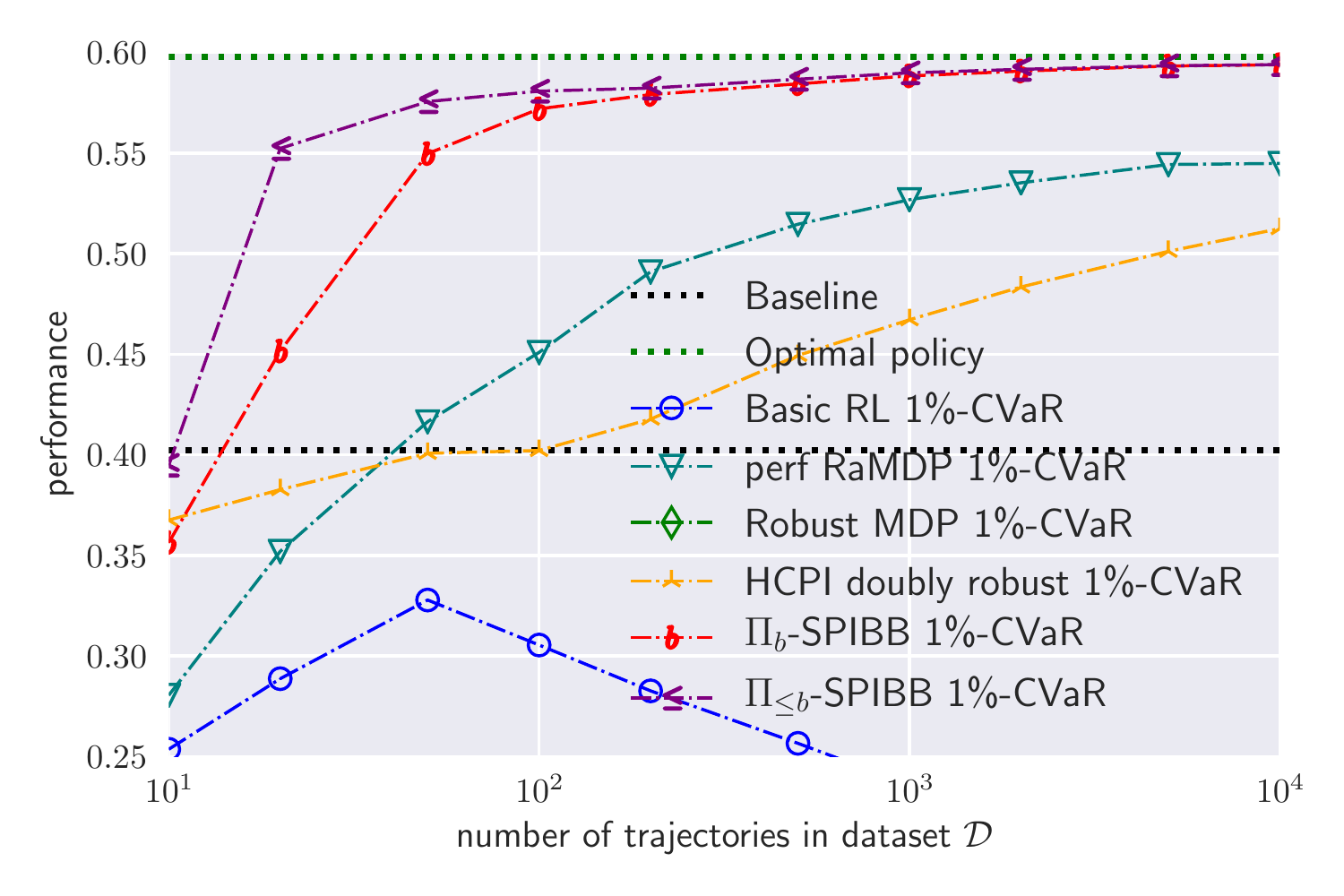}
			\label{fig:benchmark_CVaR_maze_N=5}
		}
		\subfloat[Mean \& 1\%-CVaR: SPIBB w. $N_\wedge=20$.]{
			\includegraphics[trim = 5pt 5pt 5pt 5pt, clip, width=0.33\textwidth]{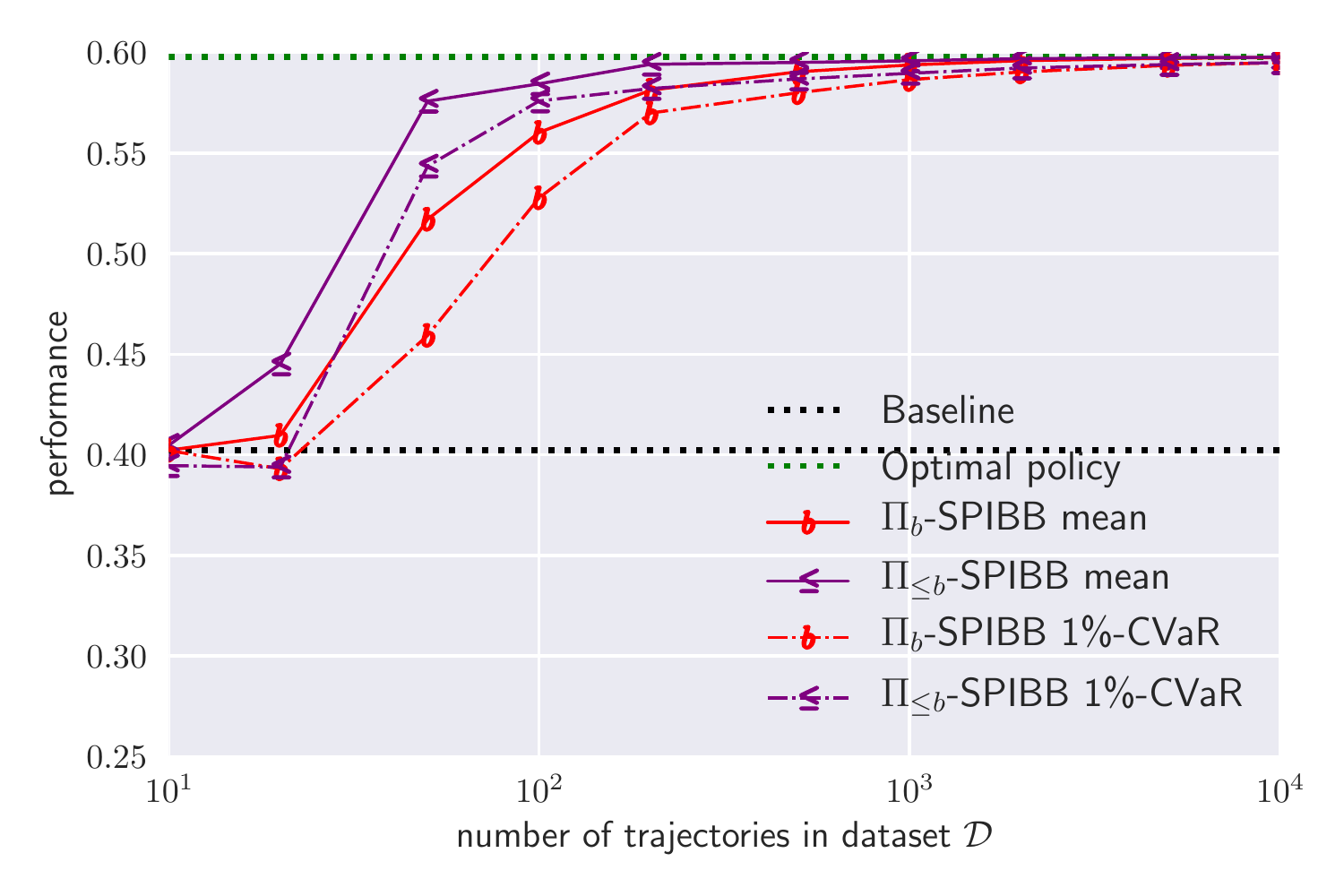}
			\label{fig:maze_spibb_N=20}
		}
		\caption{Gridworld experiment: Figure (a) illustrates the domain with an optimal trajectory. Figures (b-c) are heatmaps of the 1\%-CVaR normalized performance of the SPIBB algorithms as a function of $N_\wedge$. Figures (d-e) show the benchmark for the mean and 1\%-CVaR performance. Figure (f) displays additional curves for another value of $N_\wedge$. }
		\label{fig:maze_main}
	\end{figure*}
	
	\section{SPIBB Empirical Evaluation}
	\label{sec:results}
	
	The performance of Batch RL algorithms can vary greatly from one dataset to another. To properly assess existing and SPIBB algorithms, we evaluate their ability to generate policies that consistently outperform the baseline. Practically, we repeated 100k times the following procedure on various environments: randomly generate a dataset, train a policy on that dataset using each algorithm and each hyper-parameter in the benchmark and compute the performance of the trained policy (with $\gamma=0.95$). We formalize this experimental protocol in Appendix \ref{sup:gridworld-protocol}. The algorithms are then evaluated using the mean performance and conditional value at risk performance (CVaR, also called expected shortfall) of the policies they produced. The $X\%$-CVaR is the mean performance over the $X\%$ worst runs. Given the high number of runs, all the results that are visible to the naked eye are significant.

	

	In addition to the SPIBB algorithms, our finite MDP benchmark contains four algorithms: Basic RL, HCPI~\cite{thomas2015high}, Robust MDP, and RaMDP~\cite{ghavamzadeh2016safe}. RaMDP stands for Reward-adjusted MDP and applies an exploration penalty when performing actions rarely observed in the dataset. At the exception of Basic RL, they all rely on one hyper-parameter: $\delta_{hcpi}$, $\delta_{rob}$ and $\kappa_{adj}$ respectively. We performed a grid search on those parameters and for HCPI compared 3 versions. In the main text, we only report the best performance we found ($\delta_{hcpi}=0.9$, $\delta_{rob}=0.1$, and $\kappa_{adj}=0.003$), the full results can be found in Appendix~\ref{sup:benchmarkalgos}. Additionally, Robust MDP and RaMDP depend on a safety test that always failed in our experiments. We still report their performance.
	
	\subsection{Does SPIBB outperform existing algorithms?}
    \label{sec:gridworld_exp}
	
 	\begin{figure*}[t]
		\bigcentering
		\subfloat[1\%-CVaR benchmark with $N_\wedge=20$.]{
			\includegraphics[trim = 5pt 5pt 5pt 5pt, clip, width=0.33\textwidth]{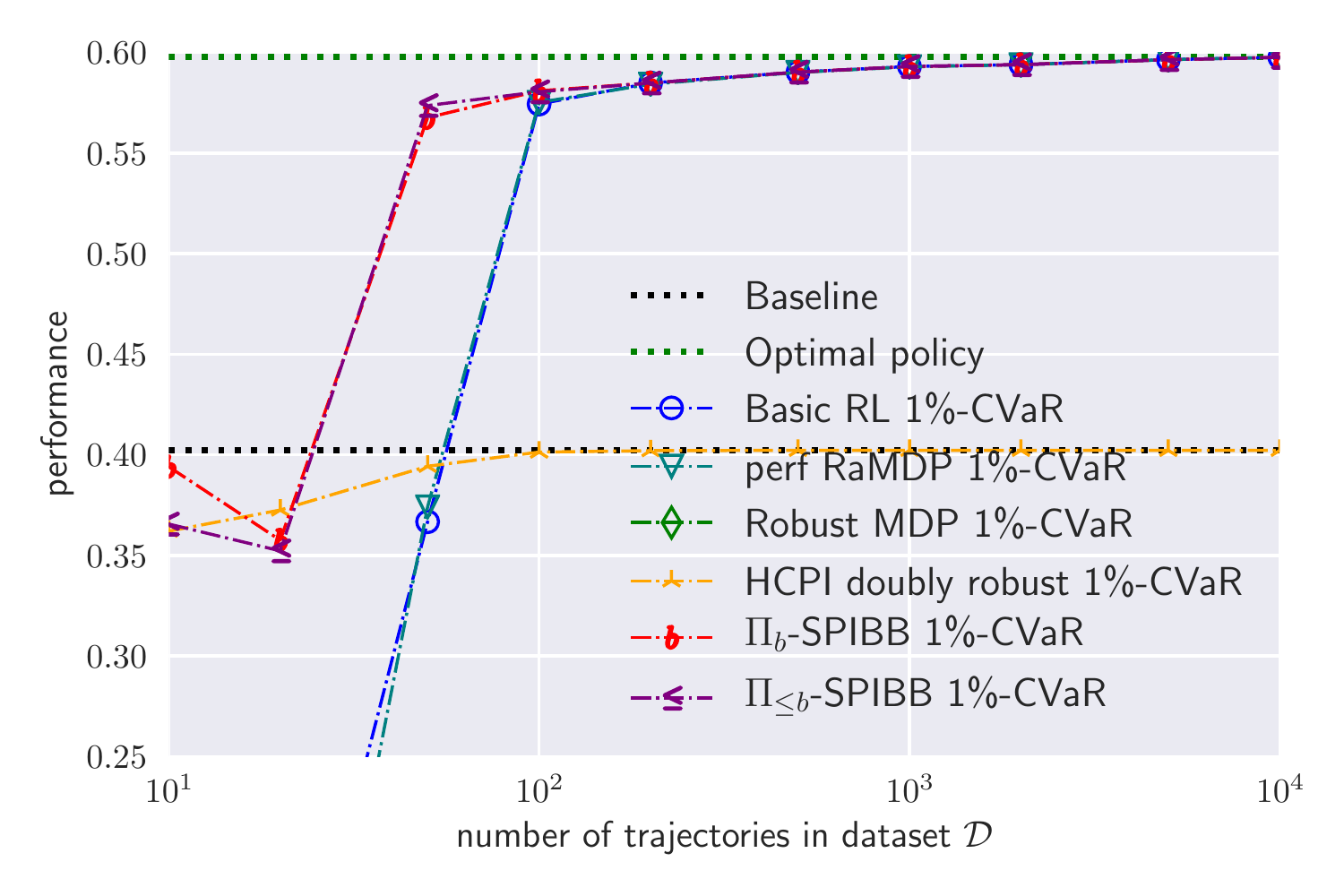}
			\label{fig:benchmark_CVaR_rand_N=20}
		} 
		\subfloat[1\%-CVaR heatmap: $\Pi_{b}$-SPIBB.]{
			\includegraphics[trim = 10pt 130pt 45pt 60pt, clip, width=0.33\textwidth]{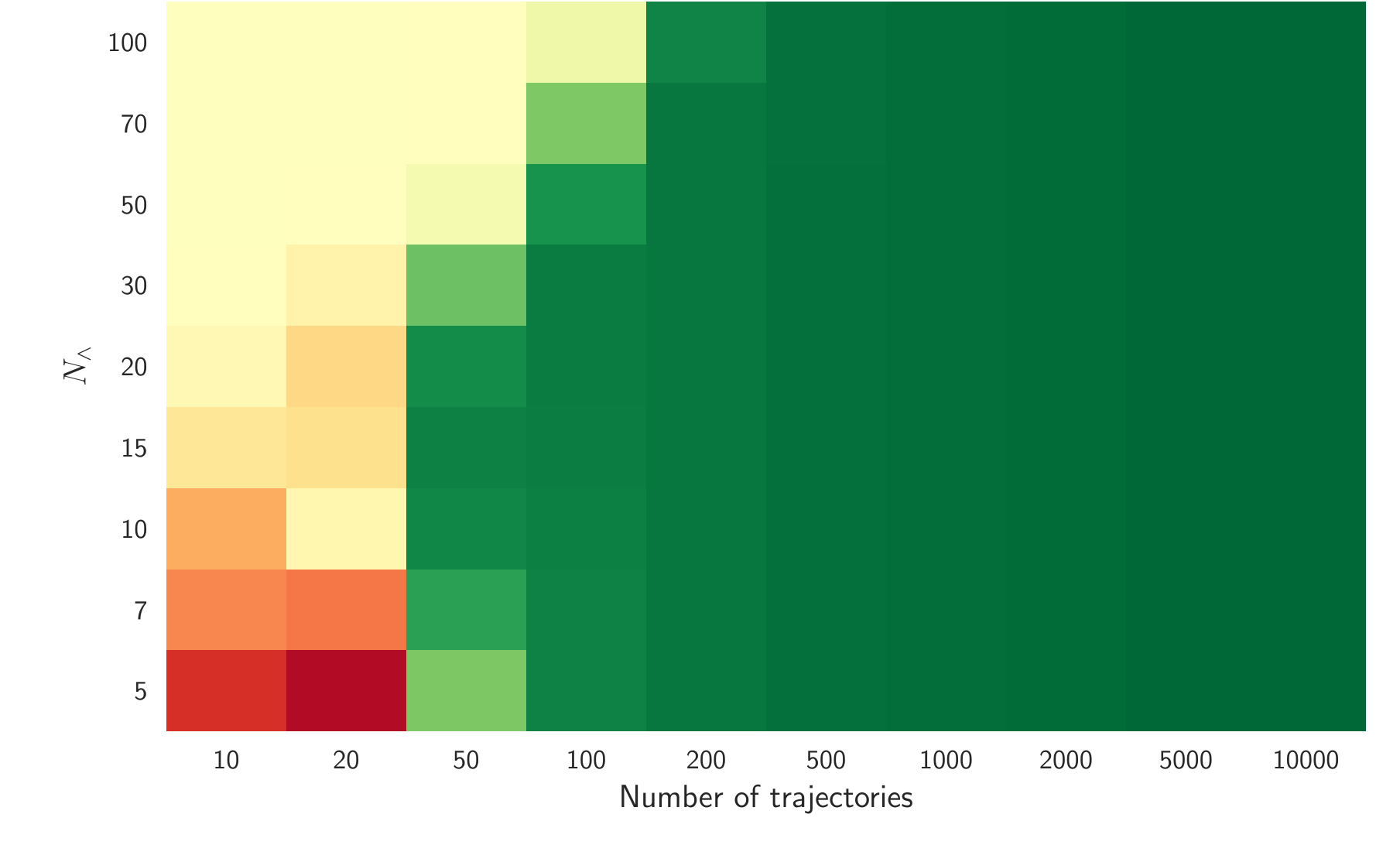}
			\label{fig:heatmap_CVaR_rand_pi_b}
		}
		\subfloat[1\%-CVaR heatmap: $\Pi_{\leq b}$-SPIBB.]{
			\includegraphics[trim = 10pt 130pt 45pt 60pt, clip, width=0.33\textwidth]{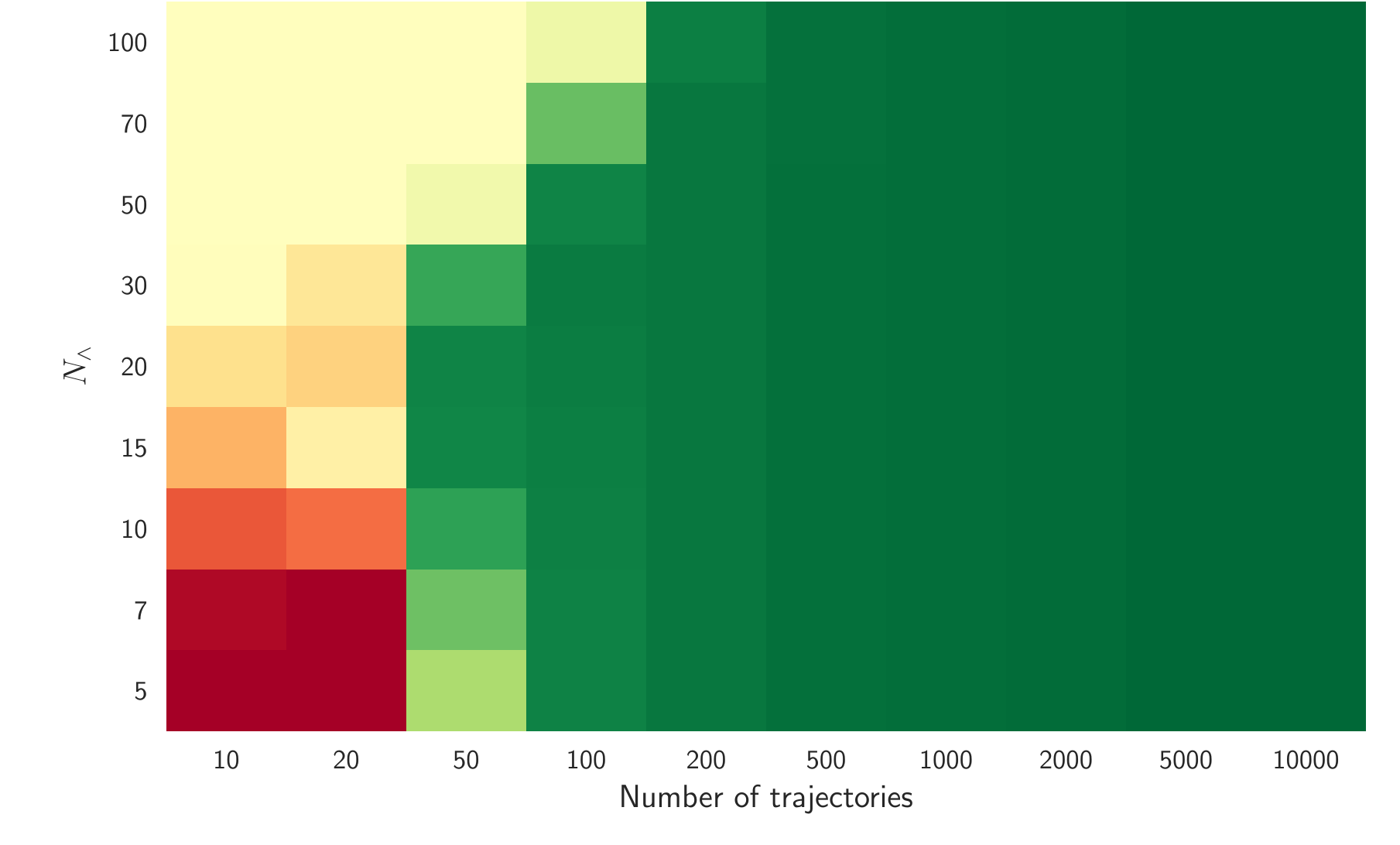}
			\label{fig:heatmap_CVaR_rand_pi_<b}
		}
		\caption{Gridworld experiment with random behavioural policy:  Figure (a) shows the benchmark for the 1\%-CVaR performance, with SPIBB using $N_\wedge=20$. Figures (b-c) are the heatmaps of the 1\%-CVaR normalized performance of the SPIBB algorithms as a function of $N_\wedge$ (same heat colours as in Figures \subref*{fig:heatmap_maze_pib} and \subref*{fig:heatmap_maze_pi<b}).}
		\label{fig:rand_behavioural_main}
	\end{figure*}
	
	Our first domain is a discrete, stochastic, $5 \times 5$ gridworld (see Figure \subref*{fig:gridworld}), with 4 actions: up, down, left and right. The transitions are stochastic: the agent moves in the requested direction with $75\%$ chance, in the opposite one with $5\%$ chance and to either side with $10\%$ chance each. The initial and final states are respectively the bottom left and top right corners. The reward function is $+1$ when the final state is reached and 0 everywhere else. The baseline we use in this experiment is a fixed stochastic policy with a 0.4 performance, the optimal policy has a 0.6 performance.
	 
	We start by analysing the sensitivity of $\Pi_{b}$-SPIBB and $\Pi_{\leq b}$-SPIBB with respect to $N_\wedge$. We visually represent the results as two 1\%-CVaR heatmaps: Figures \subref*{fig:heatmap_maze_pib} and \subref*{fig:heatmap_maze_pi<b} for $\Pi_{b}$-SPIBB and $\Pi_{\leq b}$-SPIBB. They read as follows: the colour of a cell indicates the improvement over the baseline normalized with respect to the optimal performance: red, yellow, and green respectively mean below, equal to, and above baseline performance. We observe for SPIBB algorithms that the policy improvement is safe (at the slight exception of $\Pi_{b}$-SPIBB with a low $N_\wedge$ on 10-trajectory datasets), that the bigger the $N_\wedge$, the more conservative SPIBB gets, and that $\Pi_{\leq b}$-SPIBB outperforms $\Pi_{b}$-SPIBB.
	 
	In Figure~\subref*{fig:benchmark_mean_maze_N=5}, we see that Basic RL improves the baseline on average, but not monotonically with the size of the dataset, and remains quite far from optimal. That fact is explained by the fairly frequent learning of catastrophic policies and will be analyzed in details with the 1\%-CVaR results. HCPI is more conservative for small datasets but slightly outperforms Basic RL for bigger ones, still remaining away from optimal. We also observe that Robust MDPs do even worse than Basic RL; in fact, they learn policies that remain at the center of the grid where the dataset contains a maximum of transitions and therefore where the Robust MDPs have a minimal estimate error, and completely ignore the goal. A similar behaviour is observed with RaMDP when its hyper-parameter is set too high ($\geq 0.004$). Inversely, when it is set too low ($\leq 0.002$), RaMDP behaves like Basic RL. But in the tight spot of 0.003, RaMDP is very efficient. We refer the interested reader to Appendix~\ref{sup:benchmarkalgos} for the analysis of hyper-parameter search for the benchmark algorithms. Overall, RaMDP and $\Pi_{\leq b}$-SPIBB win this benchmark based on mean performance, with $\Pi_{b}$-SPIBB not far behind.
	
	Figure \subref*{fig:benchmark_CVaR_maze_N=5} displays the 1\%-CVaR performance of the algorithms. We observe that the very good mean performance of RaMDP hides some catastrophic runs where the trained policy under-performs for small datasets. In contrast, $\Pi_{\leq b}$-SPIBB's curve remains over the baseline. $\Pi_{b}$-SPIBB is again a bit behind. HCPI also proves to be near safe. We explained in the previous paragraph why Robust MDP often generates bad policies. It actually does it so often, and the policies are so bad, that its curve does not even show on the graph. Let us now consider Basic RL and explain why it does so poorly, even at times on very large datasets (considering that the MDP has 25 states and 4 actions). The dataset is collected using a baseline that performs some actions only very rarely. As a consequence, even in big datasets, some state-action pairs are observed only once or twice. Given the stochasticity of the environment, the MLE MDP might be quite different from the true MDP in those states, leading to policies falsely taking advantage of those chimaeras. SPIBB algorithms are not allowed to jump to conclusions without sufficient proof and have to conservatively reproduce the baseline policy in those configurations.
	
	Figure \subref*{fig:maze_spibb_N=20} shows the SPIBB curves for a higher value of $N_\wedge = 20$. There, the algorithms are more conservative and therefore safe, while still achieving near optimality on big datasets.  Full results may be found in Appendix~\ref{sup:maze_full_results}.

	\begin{figure*}[t]
		\centering
		\bigcentering
		\subfloat[Mean: with $\eta=0.1$ and $N_\wedge=10$.]{
			\includegraphics[trim = 5pt 5pt 5pt 5pt, clip, width=0.33\textwidth]{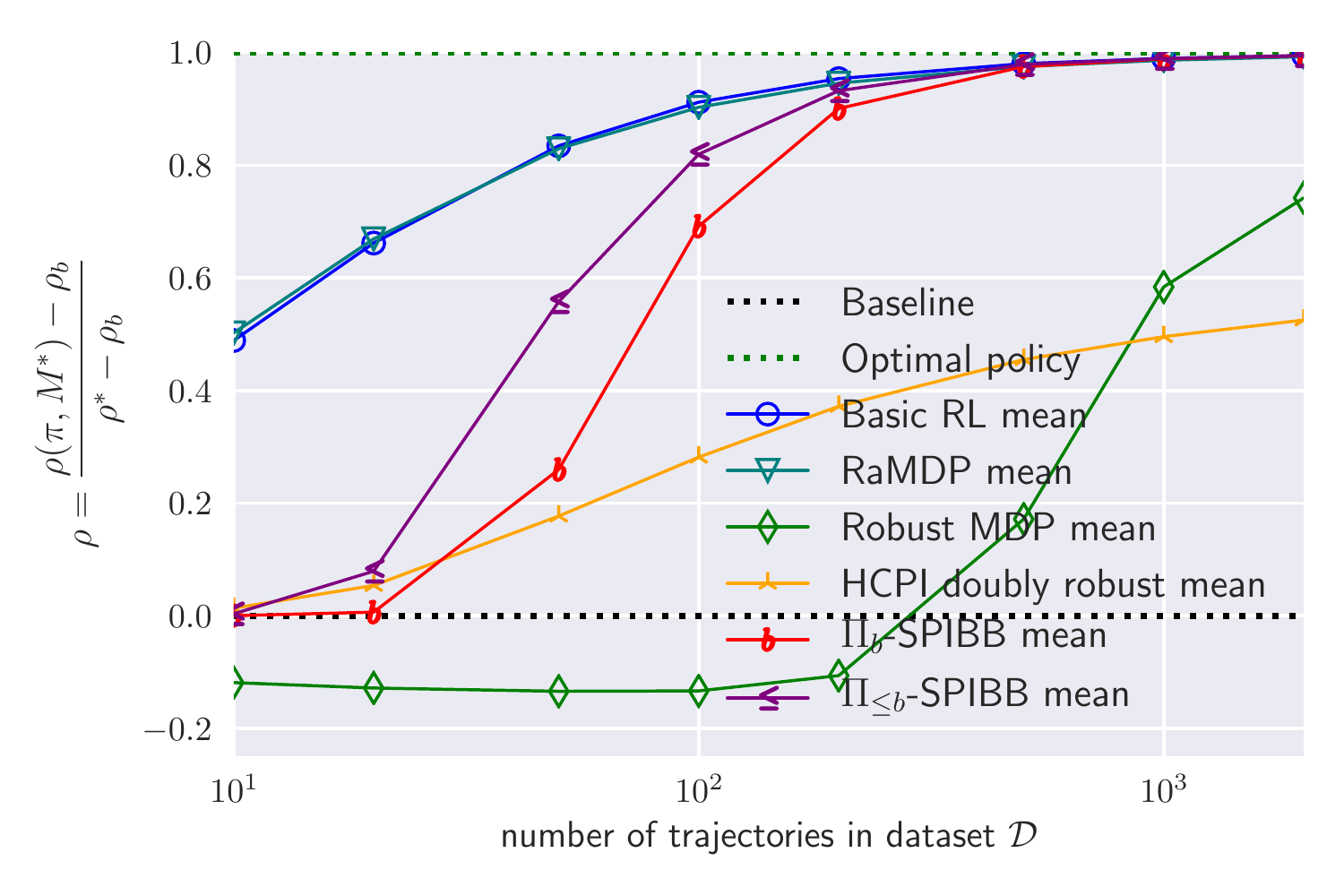}
			\label{fig:benchmark_randomMDP_mean_ratio=0.1}
		}
		\subfloat[Mean: with $\eta=0.9$ and $N_\wedge=10$.]{
			\includegraphics[trim = 5pt 5pt 5pt 5pt, clip, width=0.33\textwidth]{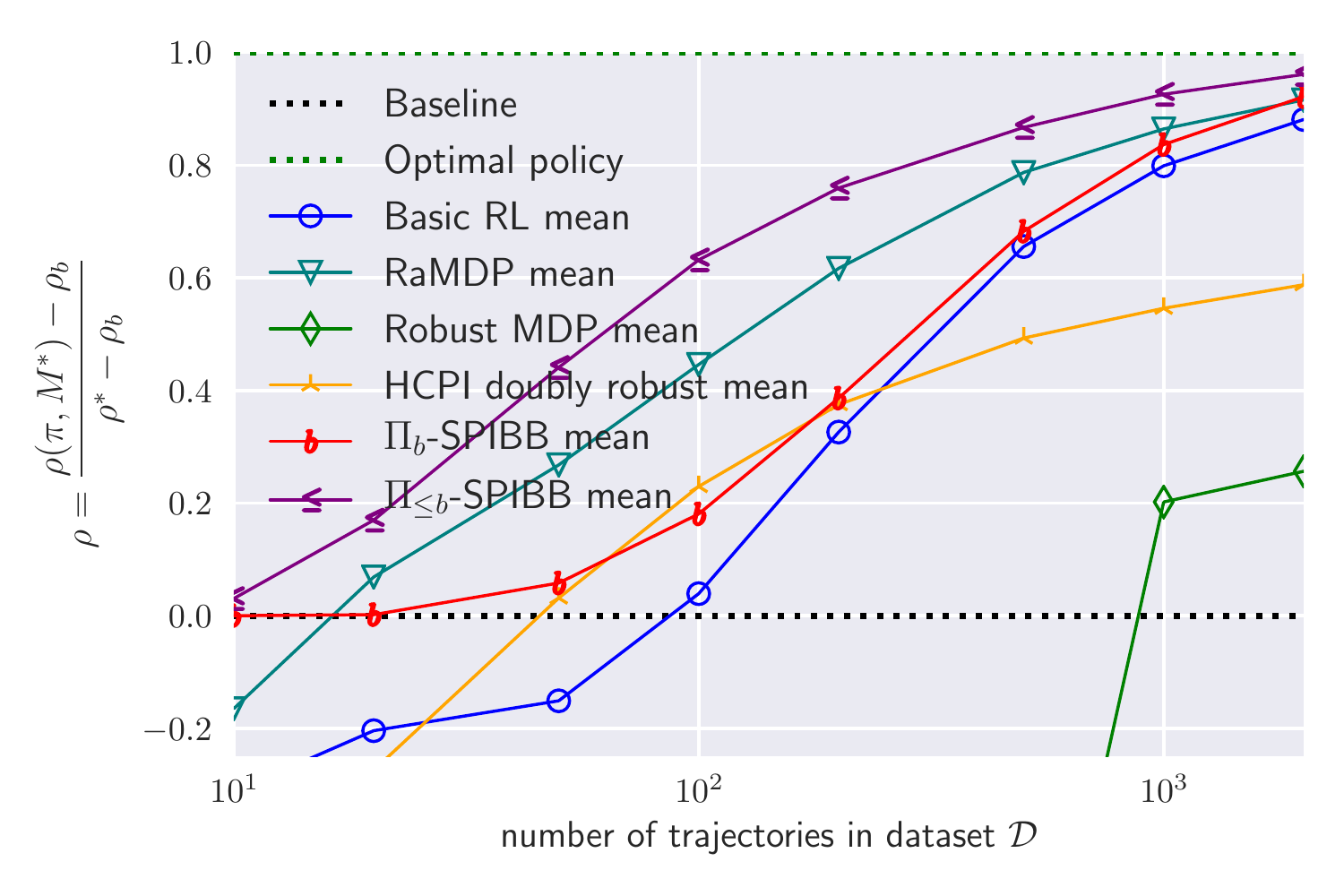}
			\label{fig:benchmark_randomMDP_mean_ratio=0.9}
		}
		\subfloat[1\%-CVaR: with $\eta=0.9$ and $N_\wedge=10$.]{
			\includegraphics[trim = 5pt 5pt 5pt 5pt, clip, width=0.33\textwidth]{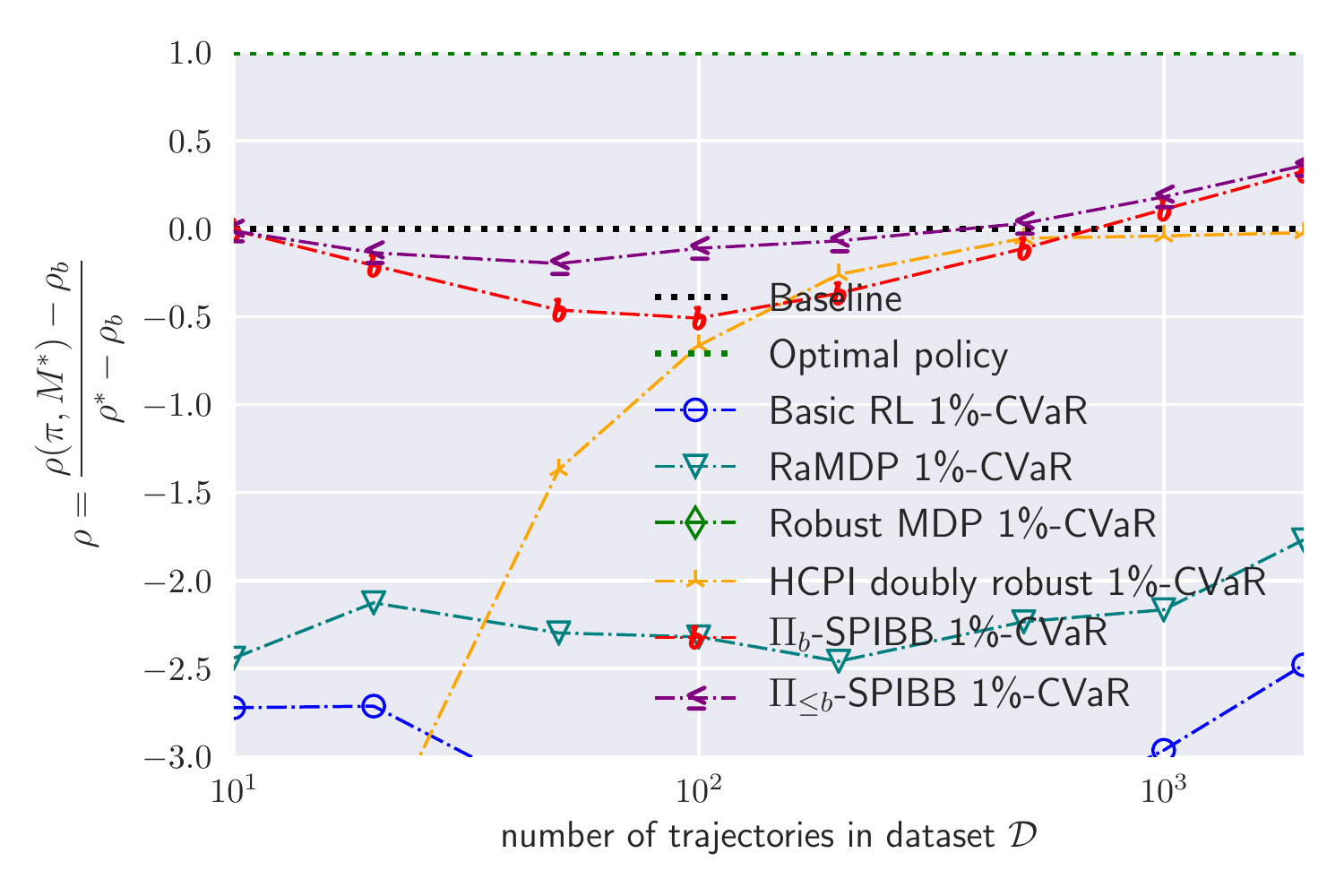}
			\label{fig:benchmark_randomMDP_percentile_ratio=0.9}
		} \\
		\subfloat[1\%-CVaR: RaMDP with $\delta_{adj}=0.003$.]{
			\includegraphics[trim = 10pt 130pt 45pt 60pt, clip, width=0.33\textwidth]{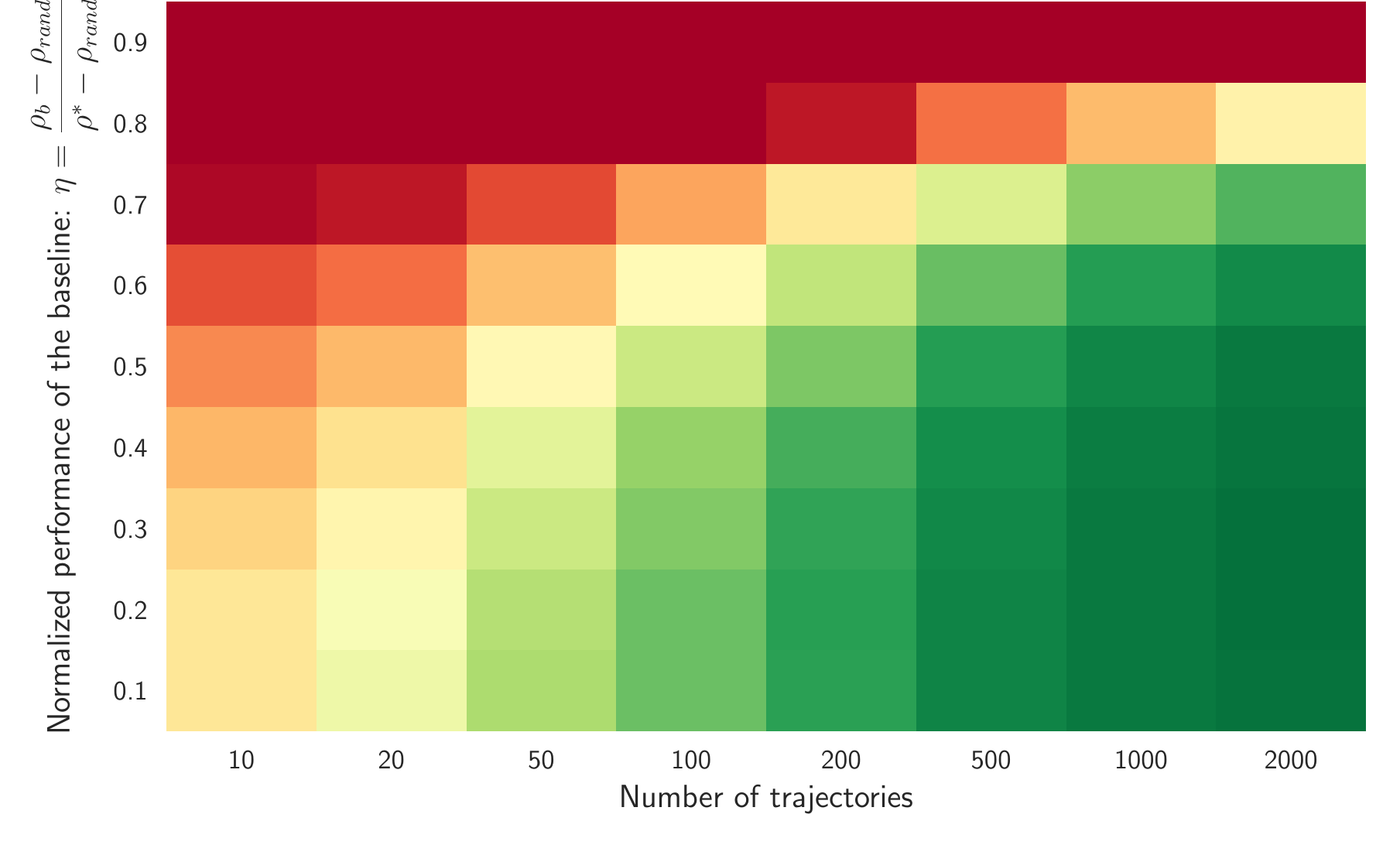}
			\label{fig:heatmap_RaMDP}
		}
		\subfloat[1\%-CVaR: $\Pi_{\leq b}$-SPIBB with $N_\wedge=10$.]{
			\includegraphics[trim = 10pt 130pt 45pt 60pt, clip, width=0.33\textwidth]{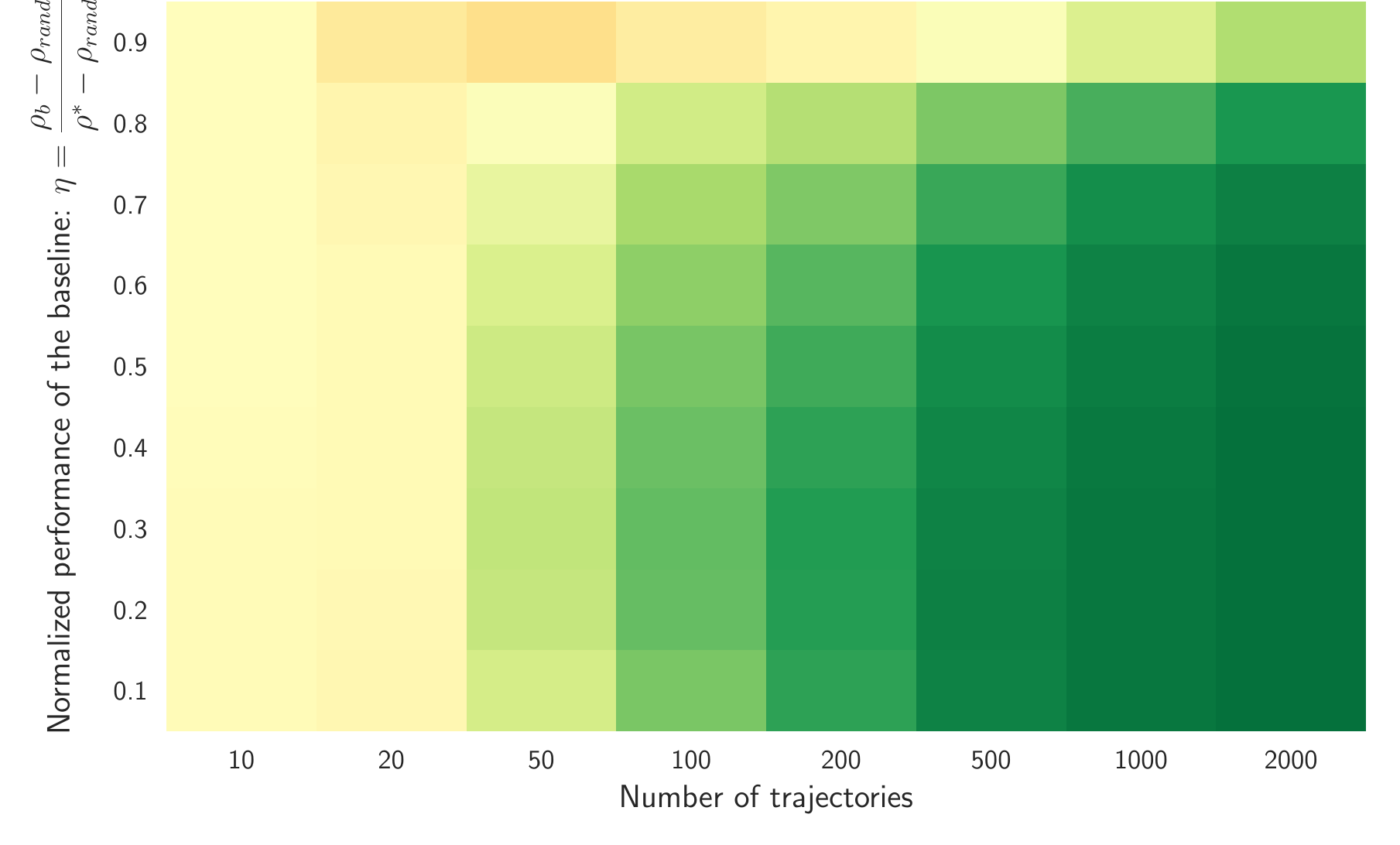}
			\label{fig:heatmap_pi_<b_randomMDP_N=10}
		}
		\subfloat[1\%-CVaR: $\Pi_{\leq b}$-SPIBB with $\eta=0.9$.]{
			\includegraphics[trim = 10pt 130pt 45pt 60pt, clip, width=0.33\textwidth]{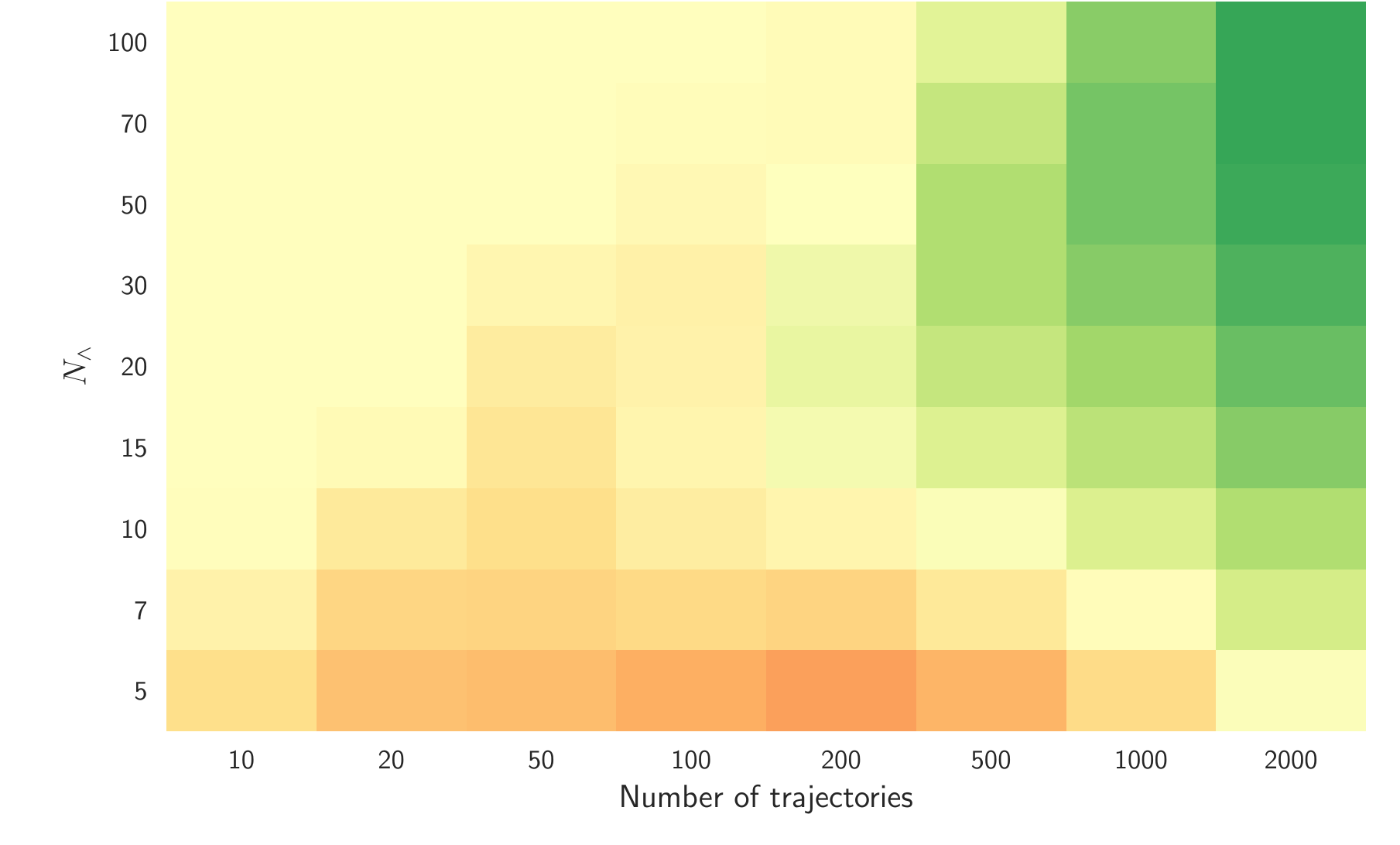}
			\label{fig:heatmap_pi_<b_randomMDP_ratio=0.9}
		}
		\caption{Random MDPs domain: Figures (a-c) show the mean and 1\%-CVaR performances for $\eta$ values of 0.1 and 0.9 and SPIBB with $N_\wedge=10$. Figures (d-e) are the 1\%-CVaR as a function of $\eta$ for RaMDP and $\Pi_{\leq b}$-SPIBB respectively. Figure (f) is the 1\%-CVaR heatmap for $\Pi_{\leq b}$-SPIBB as a function of $N_\wedge$ with $\eta=0.9$.}
		\label{fig:randomMDP_main}
	\end{figure*}
	
	\subsection{Must the dataset be collected with the baseline?} 
    \label{sec:gridworld_exp_random_baseline}
	SPIBB theory relies on the assumption that the baseline was used for the data collection, which is a limiting factor of the method. In practice, this assumption simply ensures that the preferential trajectories of the baseline are experienced in the batch of trajectories used for training. We modify the previous experiment by producing datasets using a uniform random policy, while keeping the same Gridworld environment and the same baseline for bootstrapping. In this setting, Basic RL does not have its non-monotonic behaviour anymore, but both our algorithms, $\Pi_b$-SPIBB and $\Pi_{\leq b}$-SPIBB, still significantly outperform their competitors (see Figure \subref*{fig:benchmark_CVaR_rand_N=20}). Note however the following differences: Basic RL becomes safe with 100 trajectories, RaMDP does not improve Basic RL anymore, and HCPI has more difficulty improving the baseline. Robust MDP still does not show on the 1\%-CVaR figure. Focusing more specifically on the SPIBB algorithms and their $N_\wedge$ sensitivity, Figures \subref*{fig:heatmap_CVaR_rand_pi_b} and \subref*{fig:heatmap_CVaR_rand_pi_<b} show that they fail to be completely safe when $N_\wedge\leq 10$ and $|\mathcal{D}|\leq 20$; and that $\Pi_b$-SPIBB slightly outperforms $\Pi_{\leq b}$-SPIBB. Indeed, $\Pi_{\leq b}$-SPIBB cannot take advantage anymore of the bias that the behavioural policy tends to take actions that are better than average. Full results may be found in Appendix~\ref{sup:random_baseline_full_results}.

    
 	\begin{figure*}[t]
		\bigcentering
		\subfloat[Helicopter environment.]{
			\includegraphics[trim = 5pt 5pt 5pt 5pt, clip, width=0.27\textwidth]{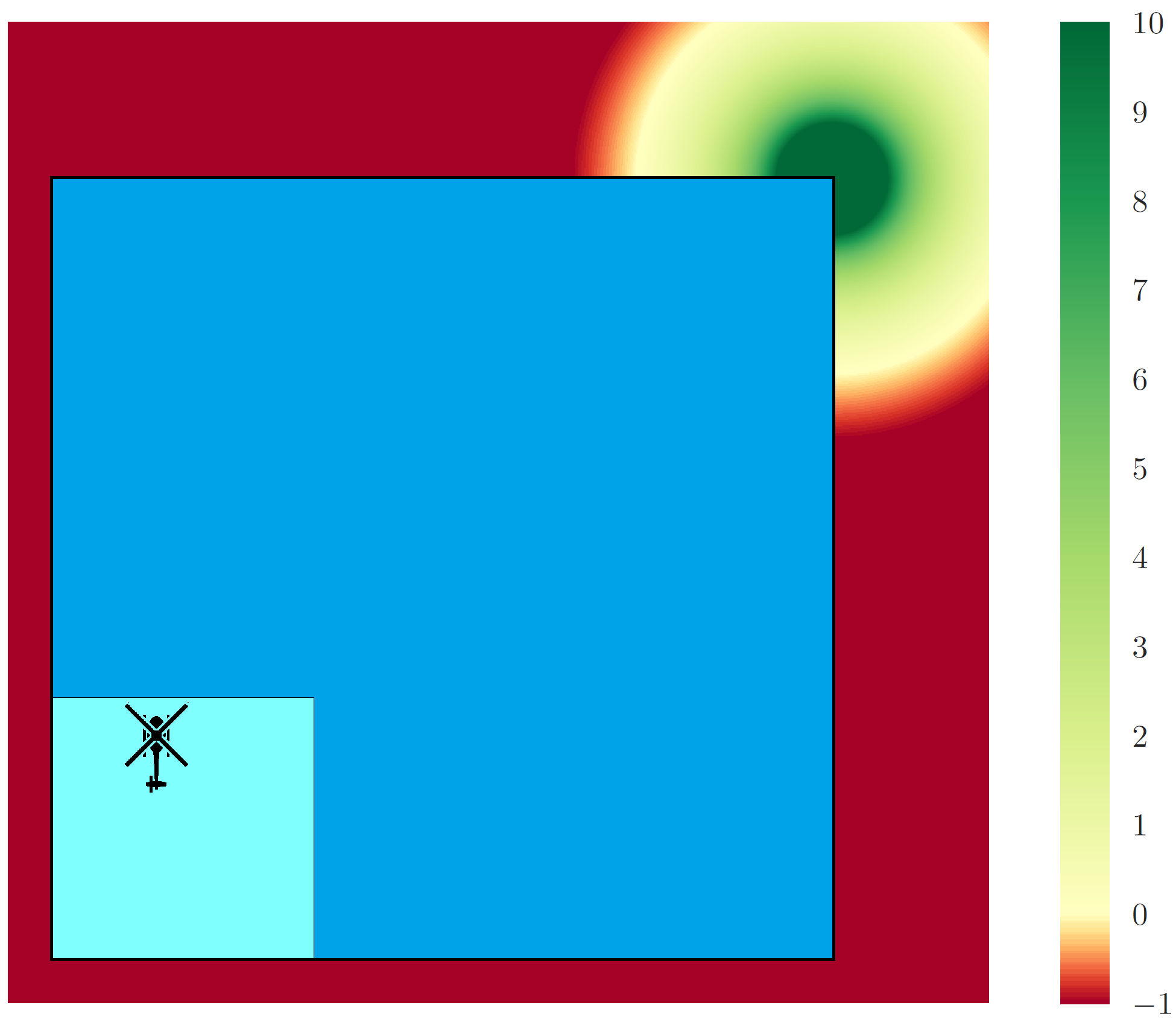}
			\label{fig:helicopter}
		} 
		\subfloat[Mean and 10\%-CVaR in function of $N_\wedge$.]{
			\includegraphics[trim = 5pt 5pt 5pt 5pt, clip, width=0.35\textwidth]{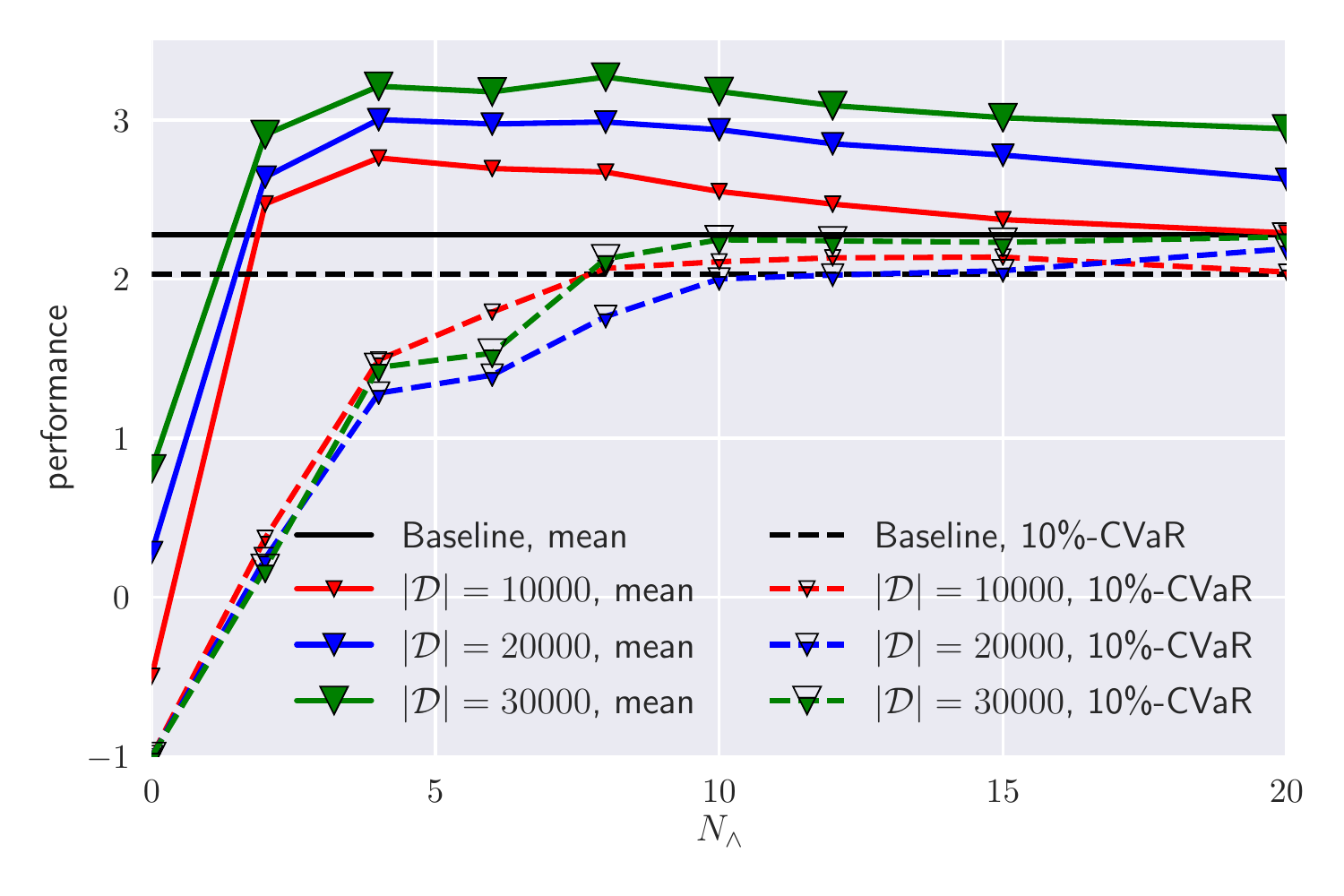}
			\label{fig:10k-20k-30k-datasets}
		}
		\subfloat[Performance in function of the noise factor.]{
			\includegraphics[trim = 5pt 5pt 5pt 5pt, clip, width=0.35\textwidth]{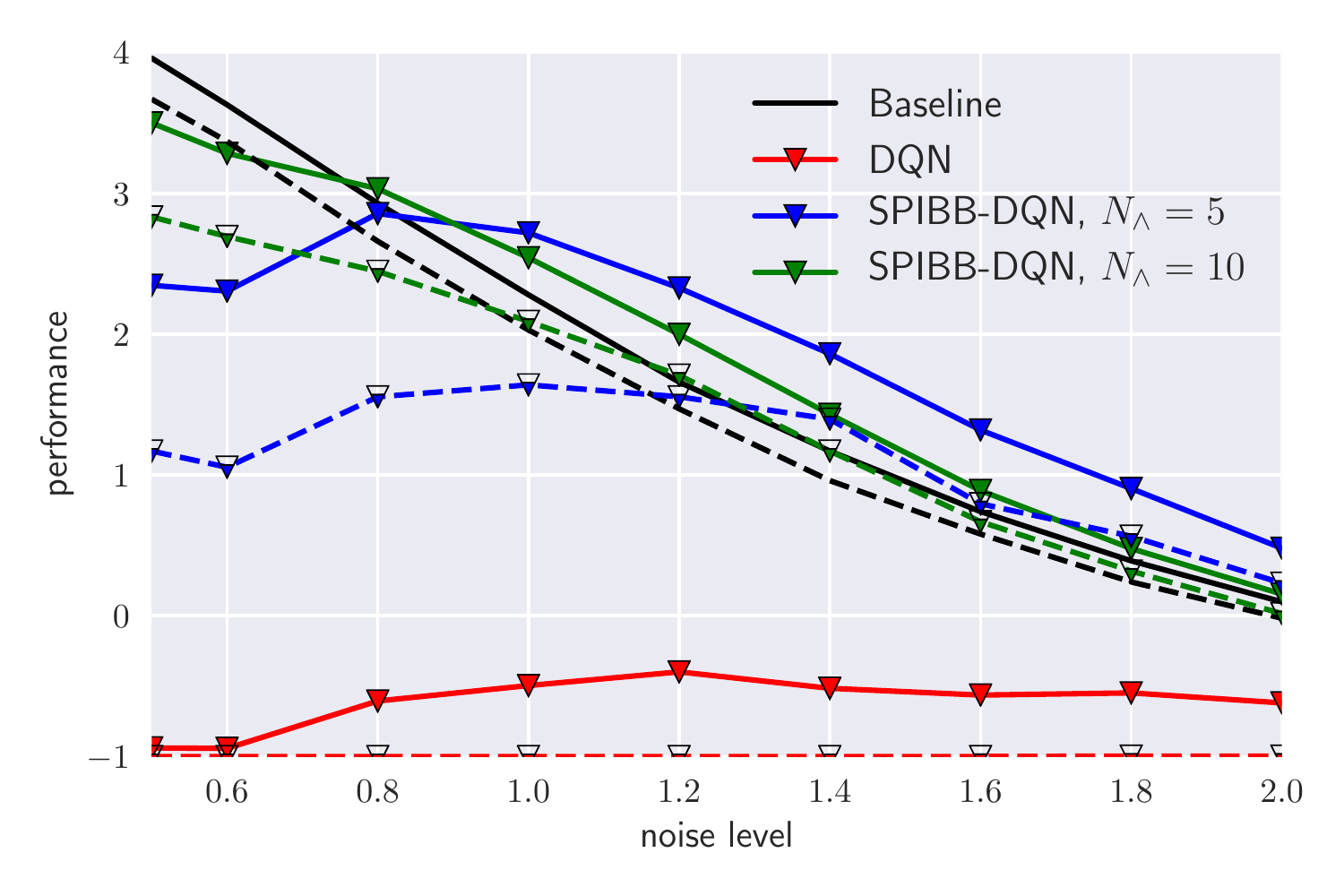}
			\label{fig:noise-factor}
		}
		\caption{SPIBB-DQN experiments: Figure (a) is an illustration of the environment. Figure (b) displays the mean and 10\%-CVaR performance as a function of $N_\wedge$ for three dataset sizes. Figure (c) displays the mean and 10\%-CVaR performance for the baseline, vanilla DQN, RaMDP with $\kappa_{adj}=0.01$, SPIBB-DQN with $N_\wedge = 5$, and with $N_\wedge = 10$, as a function of the transition noise factor.}
		\label{fig:spibb_dqn}
	\end{figure*}
	
    \subsection{Does SPIBB achieve SPI in most domains?}
    \label{sec:RandomMDPs_exp}
    In this section, we study the conditions required on the environment and on the baseline for SPIBB to be helpful. To do so, we use a generator of Random MDPs where the number of states has been fixed to $|\mathcal{X}|=50$, the number of actions to $|\mathcal{A}|=4$ and the connectivity of the transition function to 4. This means that for a given state-action pair $(x,a)$, its transition function $P(x'|x,a)$ is non-zero on four states $x'$ only. The initial state is fixed at $x_0$. The reward function is 0 everywhere except when entering the terminal state, where it equals 1. The terminal state is chosen in such a way that the optimal value function is minimal. It coarsely amounts to choosing the state that is the hardest to reach/farthest from $x_0$.
    For a randomly generated MDP $M$, we generate baselines with different levels of performance (the process is detailed in Appendix~\ref{sup:baselinegen}). Specifically, we set a target performance for the baseline based on a hyper-parameter $\eta\in \left\{0.1,0.2,0.3,0.4,0.5,0.6,0.7,0.8,0.9\right\}$: $\rho(\pi_b,M) = \eta\rho(\pi^*,M) + (1-\eta)\rho(\tilde{\pi},M)$, where $\pi^*$ and $\tilde{\pi}$ are respectively the optimal and the uniform policies.
    
    Figure \subref*{fig:benchmark_randomMDP_mean_ratio=0.1} shows the mean results with a bad, highly stochastic baseline ($\eta=0.1$). Since, the baseline is bad, it is an easy task to safely improve it. Basic RL and RaMDP dominate the benchmark in mean, but also in safety (not shown). SPIBB algorithms are too conservative for small datasets but catch up on the bigger ones. Figure \subref*{fig:benchmark_randomMDP_mean_ratio=0.9} shows the mean results with a very good baseline, therefore  very hard task to safely improve. On average, the podium is composed by $\Pi_{\leq b}$-SPIBB, RaMDP, $\Pi_{b}$-SPIBB, followed closely by Basic RL. But, when one considers more specifically the 1\%-CVaR performance, all fail to be safe but the SPIBB algorithms. Note that a -0.5 normalized performance is still a good performance, and that this loss is actually predicted by the theory: Theorem \ref{th:safepolicyimprovement-pi} proves a $\zeta$-approximate safe policy improvement.
    
    The heatmaps shown in Figures \subref*{fig:heatmap_RaMDP} and \subref*{fig:heatmap_pi_<b_randomMDP_N=10} allow us to compare more globally the 1\%-CVaR performance of RaMDP and $\Pi_{\leq b}$-SPIBB. One observes that the former is unsafe in a large area of the map (where it is red, for high $\eta$ or small datasets), while the latter is safe everywhere. Figure \subref*{fig:heatmap_pi_<b_randomMDP_ratio=0.9} displays a heatmap of the $\Pi_{\leq b}$-SPIBB 1\%-CVaR performance in the hardest scenario ($\eta=0.9$) in function of its $N_\wedge$ hyper-parameter. Unsurprisingly, the algorithm becomes slightly unsafe when $N_\wedge$ gets too low. As it increases, the red stains disappear meaning that it becomes completely safe. The green sections show that it still allows for some policy improvement. Full results may be found in Appendix~\ref{sup:random_MDPs_full_results}.

	\subsection{Does SPIBB scale to larger tasks?}
	\label{sec:spibb-dqn-exp}

    For the sake of simplicity and to be able to repeat several runs of each experiment efficiently, instead of applying pseudo-count methods from the literature~~\cite{Bellemare2016,Fox2018,Burda2019}, we consider here a pseudo-count heuristic based on the Euclidean state-distance, and a task where it makes sense to do so. The pseudo-count of a state-action $(x,a)$ is defined as the sum of its similarity with the state-action pairs $(x_i,a_i)$ found in the dataset. The similarity between $(x,a)$ and $(x_i,a_i)$ is equal to 0 if $a_i\neq a$, and to $\max(0,1-d(x,x_i))$ otherwise, where $d(\cdot,\cdot)$ is the Euclidean distance between two states.
	
	We consider a helicopter navigation task (see Figure \subref*{fig:helicopter}). The helicopter starts from a random position in the teal area, with a random initial velocity. The 9 available actions consist in applying thrust: backward, no, or forward acceleration, along the two dimensions. The episode terminates when the velocity exceeds some maximal value, in which case it gets a -1 reward, or when the helicopter leaves the blue area, in which case it gets a reward as chromatically indicated on Figure \subref*{fig:helicopter}. The dynamics of the domain follow the basic laws of physics with a Gaussian centered additive noise both on the position and the velocity, see Appendix~\ref{sup:dummy-parameters} for full details of the domain. To train our algorithms, we use a discount factor $\gamma=0.9$, but we report in our results the undiscounted final reward. The baseline is generated as follows: we first train a policy with online DQN, stop before full convergence and then apply a softmax on the obtained $Q$-network. Our experiments consist in 300 runs on SPIBB-DQN with a range of $N_\wedge$ values and for different dataset sizes. SPIBB-DQN with $N_\wedge=0$ is equivalent to vanilla DQN. We also tried RaMDP with several values of $\kappa_{adj}\in[0.001,0.1]$ without any success. For figure clarity, we do not report RaMDP in the Main Document figures. The set of used parameters and the results of the preliminary experiments are reported in Appendices~\ref{sup:dqn-parameters} and \ref{sup:DQN-figures}.
	
	Figure \subref*{fig:10k-20k-30k-datasets} displays the mean and 10\%-CVaR performances in function of $N_\wedge$ for three dataset sizes (10k, 20k, and 30k). We observe that vanilla DQN ($N_\wedge=0$) significantly worsens the baseline in mean and achieves the worst possible 10\%-CVaR performance. SPIBB-DQN not only significantly improves the baseline in mean performance for $N_\wedge\geq 1$, but also in 10\%-CVaR when $N_\wedge\geq 8$. The discerning reader might wonder about the CVaR curve for the baseline. It is explained by the fact that the evaluation of the policies are not exact. The curve accounts for the evaluation errors, errors also obviously encountered with the trained policies.
	
    We performed an additional experiment. Keeping the baseline identical, we trained on 10k-transitions datasets obtained from environments with a different transition noise. Figure \subref*{fig:noise-factor} shows the mean and 10\%-CVaR performances for the baseline, vanilla DQN, and SPIBB-DQN with $N_\wedge\in\left\{5,10\right\}$. First, we observe that vanilla DQN performs abysmally. Second, we see that the baseline quickly gets more efficient when the noise is removed making the safe policy improvement task harder for SPIBB-DQN. SPIBB is efficient at dealing with stochasticity, the noise attenuation reduces its usefulness. Third, as we get to higher noise factors, the stochasticity becomes too high to efficiently aim at the goal, but SPIBB algorithms still succeed at safely improving the baseline.
	
	
	
	\section{Conclusion and Future Work}
	\label{sec:conclusion}
	In this paper, we tackle the problem of safe Batch Reinforcement Learning. We reformulate the percentile criterion without compromising its safety. We lose optimality that way but keep a PAC-style guarantee of policy improvement. It allows the implementation of an algorithm $\Pi_{b}$-SPIBB that run as fast as a vanilla model-based RL algorithm, while generating a provably safe policy improvement over a known baseline $\pi_b$. A variant algorithm $\Pi_{\leq b}$-SPIBB is shown to perform better and safer on a wide range of domains, but does not come with safety guarantees. Basic Batch RL and the other benchmark competitors are shown to fall short on at least one, and generally two, of the following criteria: mean performance, safety, or domain-dependent hyper-parameter sensitivity. Finally, we implement a DQN version of SPIBB that is the first deep batch algorithm allowing policy improvement in a safe manner.


	
	\newpage
	\bibliographystyle{icml2019}
	\bibliography{ICML-SPIBB}
	\normalsize
	
	\newpage
	\onecolumn
	\appendix
	\setcounter{proposition}{0}
	\setcounter{lemma}{0}
	\setcounter{theorem}{0}
	
	\section{SPIBB Theory Complements}
	\label{sup:proofs}
	
	\subsection{The MDP framework}
	\label{sup:MDP}
	Markov Decision Processes \citep[MDPs]{bellman1957markovian} are a widely used framework to address the problem of optimizing a sequential decision making problem. In this work, we assume that the true environment is modelled as an unknown finite MDP $M^*=\langle \mathcal{X}, \mathcal{A},R^*,P^*,\gamma \rangle$, where $\mathcal{X}$ is the finite state space, $\mathcal{A}$ is the finite action space, $R^*(x,a)\in[-R_{max},R_{max}]$ is the true bounded stochastic reward function, $P^*(\cdot|x,a)$ is the true transition distribution, and $\gamma\in[0,1)$ is the discount factor. 
	Without loss of generality, we assume that the process deterministically begins in state $x_{0}$. The agent then makes a decision about which action $a_{0}$ to select. This action leads to a new state that depends on the transition probability and the agent receives a reward $R^*(x_{0},a_{0})$. This process is then repeated until the end of the episode. We denote by $\pi$ the policy which corresponds to the decision making mechanism that assigns actions to states. $\Pi=\{\pi:\mathcal{X}\rightarrow \Delta_{\mathcal{A}}\}$ denotes the set of stochastic policies, and $\Delta_{\mathcal{A}}$ denotes the set of probability distributions over the set of actions $\mathcal{A}$.
	
	The state value function $V_M^\pi(x)$ (resp. state-action value function $Q_M^\pi(x,a)$) is the expectation of the discounted sum of rewards when following $\pi\in\Pi$, starting from state $x\in\mathcal{X}$ (resp. performing action $a\in\mathcal{A}$ in state $x\in\mathcal{X}$) in the MDP $M=\langle \mathcal{X}, \mathcal{A},R,P,\gamma  \rangle$:
	\begin{align}
	    V_M^\pi(x) = \sum_{a\in\mathcal{A}} \pi(a|x) Q_M^\pi(x,a) = \mathbb{E}_{M,\pi,x}\left[\sum_t \gamma^t r_t\right]\!\!. 
	\end{align}
	
	The goal of a reinforcement learning algorithm is to discover the unique optimal state value function $V^*_M$ (resp. action-state value function $Q_M^*$) and/or a policy that implements it. We define the performance of a policy by its expected return, starting from the initial state: $\rho(\pi,M) = V^\pi_M(x_0)$. Given a policy subset $\Pi'\subseteq\Pi$, a policy $\pi'$ is said to be $\Pi'$-optimal for an MDP $M$ when it maximizes its performance on $\Pi'$: $\rho(\pi',M)=\max_{\pi\in\Pi'}\rho(\pi,M)$. We will also make use of the notation $V_{max}$ as a known upper bound of the return's absolute value: $V_{max}\leq\frac{R_{max}}{1-\gamma}$.
	
	In this paper, we focus on the batch RL setting where the algorithm does its best at learning a policy from a fixed set of experience. Given a dataset of transitions $\mathcal{D}=\langle x_j,a_j,r_j,x'_j\rangle_{j\in\llbracket 1,N\rrbracket}$, we denote by $N_{\mathcal{D}}(x,a)$ the state-action pair counts; and by $\widehat{M}=\langle \mathcal{X}, \mathcal{A},\widehat{R},\widehat{P},\gamma \rangle$ the Maximum Likelihood Estimation (MLE) MDP of the environment, where $\widehat{R}$ is the reward mean and $\widehat{P}$ is the transition statistics observed in the dataset. Vanilla batch RL, referred hereinafter as Basic RL, looks for the optimal policy in $\widehat{M}$. This policy may be found indifferently using dynamic programming on the explicitly modelled MDP $\widehat{M}$, $Q$-learning with experience replay until convergence~\cite{Sutton1998}, or Fitted-$Q$ Iteration with a one-hot vector representation of the state space~\cite{ernst2005tree}.
	
	\subsection{Matrix notations for the proofs}
	
	The proofs make use of the matrix representation for the $Q$-function, $V$-function, the policy, the transition, the reward and the discount rate (when dealing with semi-MDPs) functions. 
	
	The $Q$-functions matrices have 1 row and $|\mathcal{X}||\mathcal{A}|$ columns.
	
	The $V$-functions matrices have 1 row and $|\mathcal{X}|$ columns. 
	
	The policy matrices $\pi$ have $|\mathcal{X}||\mathcal{A}|$ row and $|\mathcal{X}|$ columns. Even though a policy is generally defined as a function from $\mathcal{X}$ to $\mathcal{A}$ and should be represented by a compact matrix with $|\mathcal{A}|$ rows and $|\mathcal{X}|$ columns, in order to use simple matrix operators, we need the policy matrix to output a distribution over the state-action pairs. Consequently, our policy matrix obtained through the following expansion through the diagonal:
	\begin{equation*}
		\begin{bmatrix*}
			\pi_{11} &  \dots & \pi_{1j} &  \dots & \pi_{1|\mathcal{X}|} \\
			\vdots &  & \vdots &  & \vdots \\
			\pi_{i1} &  \dots & \pi_{ij} &  \dots & \pi_{i|\mathcal{X}|} \\
			\vdots &  & \vdots &  & \vdots \\
			\pi_{|\mathcal{A}|1} &  \dots & \pi_{|\mathcal{A}|j} &  \dots & \pi_{|\mathcal{A}||\mathcal{X}|}
		\end{bmatrix*}
		=
		\begin{bmatrix*}
			\bm{\pi}_{\cdot 1} &  \dots & \bm{\pi}_{\cdot j} & \dots & \bm{\pi}_{\cdot|\mathcal{X}|} 
		\end{bmatrix*} 
		\longrightarrow 
		\begin{bmatrix*}
			\bm{\pi}_{\cdot 1} & & 0 & & 0 \\
			& \ddots & & &  \\
			0 & & \bm{\pi}_{\cdot j} & & 0 \\
			& & & \ddots & \\
			0 & & 0 & & \bm{\pi}_{\cdot|\mathcal{X}|}.
		\end{bmatrix*}
	\end{equation*}
		
	The transition matrices $P$ have $|\mathcal{X}|$ rows and $|\mathcal{X}||\mathcal{A}|$ columns. 
	
	The reward matrices $R$ have 1 row and $|\mathcal{X}||\mathcal{A}|$ columns. 
	
	The discount rate matrices $\Gamma$ have $|\mathcal{X}|$ rows and $|\mathcal{X}||\mathcal{A}|$ columns. 
	
	The expression $AB$ is the matrix product between matrices $A$ and $B$ for which column and row dimensions coincide. 
	
	The expression $(A\circ B)$ is the element-wise product between matrices $A$ and $B$ of the same dimension. 
	
	$\mathbb{I}$ denotes the identity matrix (the diagonal matrix with only ones), which dimension is given by the context. 
		
	$\mathds{1}(y)$ denotes the column unit vector with zeros everywhere except for the element of index $y$ which equals 1. For instance $Q\mathds{1}_{x,a}$ denotes the value of performing action $a$ in state $x$.
	
	The regular and option Bellman equations are therefore respectively written as follows:
	\begin{align}
	Q &= R + \gamma Q\pi P \\
	Q &= R + Q\pi (\Gamma\circ P)
	\end{align}

\subsection{Convergence and safe policy improvement of $\Pi_b$-SPIBB}
\label{sup:th-proofs}

\begin{lemma}[$Q$-function error bounds with $\Pi_b$-SPIBB]
	Consider two semi-MDPs $M_1=\langle \mathcal{X},\Omega_{\mathcal{A}},P_1,R_1,\Gamma_1\rangle$ and $M_2=\langle \mathcal{X},\Omega_{\mathcal{A}},P_2,R_2,\Gamma_2\rangle$. Consider a policy $\pi$. Also, consider $Q_1$ and $Q_2$ be the state-action value function of the policy $\pi$ in $M_1$ and $M_2$, respectively. If:
	\begin{align}
		\forall a\in\mathcal{A}, \forall x\in\mathcal{I}_a, 
		\left\{
		\begin{array}{l}
			|R_1\mathds{1}_{x,o_a} - R_2\mathds{1}_{x,o_a}| \leq \epsilon R_{max} \\
			||(\Gamma_1 \circ P_1)\mathds{1}_{x,o_a} - (\Gamma_2 \circ P_2)\mathds{1}_{x,o_a} ||_1 \leq \epsilon,
		\end{array}
		\right.
	\end{align}
	then, we have:	
	\begin{equation}
		\forall a\in\mathcal{A}, \forall x\in\mathcal{I}_a, \quad |Q_1\mathds{1}_{x,o_a}-Q_2\mathds{1}_{x,o_a}| \leq \cfrac{2\epsilon V_{max}}{1-\gamma},
	\end{equation}
	where $V_{max}$ is the known maximum of the value function.
	\label{lem:q1-q2}
\end{lemma}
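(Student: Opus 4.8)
The plan is to read the lemma as a standard perturbation (``simulation'') bound: subtract the two option Bellman fixed‑point equations, isolate $Q_1-Q_2$, and control the resolvent $\bigl(\mathbb{I}-\pi(\Gamma_2\circ P_2)\bigr)^{-1}$ through its Neumann series. Concretely, using the option Bellman equation $Q_i = R_i + Q_i\pi(\Gamma_i\circ P_i)$ for $i\in\{1,2\}$, I subtract the two identities and insert $\pm\,Q_1\pi(\Gamma_2\circ P_2)$, which yields
\begin{equation*}
(Q_1-Q_2)\bigl(\mathbb{I}-\pi(\Gamma_2\circ P_2)\bigr) = \Delta := (R_1-R_2) + Q_1\pi\bigl[(\Gamma_1\circ P_1)-(\Gamma_2\circ P_2)\bigr].
\end{equation*}
The matrix $A:=\pi(\Gamma_2\circ P_2)$ has nonnegative entries and each of its columns has $\ell_1$‑mass at most $\gamma$: every column of $\Gamma_2\circ P_2$ is a subdistribution of total mass $\le\gamma$ (the discount factors are $\le\gamma$ and $P_2$ is a transition kernel), and left‑multiplication by the block‑diagonal stochastic matrix $\pi$ preserves column $\ell_1$‑mass. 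Hence $\mathbb{I}-A$ is invertible and $Q_1-Q_2 = \Delta\sum_{k\ge 0}A^k$.

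Next I bound $\Delta$ entrywise on the initiation sets. Fix $a\in\mathcal{A}$ and $x\in\mathcal{I}_a$. The reward contribution is $\lvert(R_1-R_2)\mathds{1}_{x,o_a}\rvert\le\epsilon R_{max}$ by the first hypothesis. For the transition contribution, $Q_1\pi$ is the row vector of state values of $\pi$ in $M_1$, so $\lVert Q_1\pi\rVert_\infty\le V_{max}$; pairing it with the column $\bigl[(\Gamma_1\circ P_1)-(\Gamma_2\circ P_2)\bigr]\mathds{1}_{x,o_a}$, whose $\ell_1$‑norm is $\le\epsilon$ by the second hypothesis, gives $\bigl\lvert Q_1\pi\bigl[(\Gamma_1\circ P_1)-(\Gamma_2\circ P_2)\bigr]\mathds{1}_{x,o_a}\bigr\rvert\le\epsilon V_{max}$. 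Therefore $\lvert\Delta\mathds{1}_{x,o_a}\rvert\le\epsilon(R_{max}+V_{max})\le 2\epsilon V_{max}$, using $R_{max}\le V_{max}$. Since every state‑option pair of the semi‑MDP is of the form $(x,o_a)$ with $x\in\mathcal{I}_a$, this is a genuine sup‑norm bound $\lVert\Delta\rVert_\infty\le 2\epsilon V_{max}$.

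Finally I propagate this through the Neumann series. For a row vector $w$, $(wA)_j=\sum_i w_iA_{ij}$, so $\lVert wA\rVert_\infty\le\lVert w\rVert_\infty\max_j\sum_iA_{ij}\le\gamma\lVert w\rVert_\infty$ by the column‑mass bound, and iterating $\lVert wA^k\rVert_\infty\le\gamma^k\lVert w\rVert_\infty$. Hence
\begin{equation*}
\bigl\lvert(Q_1-Q_2)\mathds{1}_{x,o_a}\bigr\rvert \le \sum_{k\ge 0}\lVert \Delta A^k\rVert_\infty \le \lVert\Delta\rVert_\infty\sum_{k\ge 0}\gamma^k \le \frac{2\epsilon V_{max}}{1-\gamma},
\end{equation*}
which is exactly the claimed inequality.

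The genuinely routine part is the resolvent estimate; the step I expect to require the most care is the index‑set bookkeeping — verifying that the semi‑MDP's state‑option pairs are precisely the $(x,o_a)$ with $x\in\mathcal{I}_a$, so that the hypotheses (stated only over those pairs) really translate into a global sup‑norm bound on $\Delta$ and the Neumann argument never touches entries outside the initiation sets. The only other subtlety is the elementary fact $R_{max}\le V_{max}$ (immediate from the canonical choice $V_{max}=R_{max}/(1-\gamma)$), used once to collapse $R_{max}+V_{max}$ into $2V_{max}$.
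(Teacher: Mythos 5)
Your proof is correct and follows essentially the same route as the paper's: subtract the two option Bellman fixed-point equations, insert a cross term to isolate $Q_1-Q_2$ against a resolvent, bound the perturbation by $\epsilon R_{max}+\epsilon V_{max}\le 2\epsilon V_{max}$ via H\"older, and absorb the resolvent into a $1/(1-\gamma)$ factor. The only differences are cosmetic — you put the resolvent on $M_2$ (so $Q_1$ appears in the perturbation term) where the paper puts it on $M_1$ with $Q_2$, and you justify the resolvent bound explicitly via the Neumann series and the column-mass argument, which the paper merely asserts; your attention to the initiation-set bookkeeping is in fact slightly more careful than the original.
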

\begin{proof}
	We adopt the matrix notations. The difference between the two state-option value functions can be written:
	\begin{align}
	Q_1 - Q_2 &= R_1 + Q_1\pi (\Gamma_1\circ P_1) - R_2 - Q_2\pi (\Gamma_2\circ P_2) \\
	&= R_1 + Q_1\pi (\Gamma_1\circ P_1) - R_2 - Q_2\pi (\Gamma_2\circ P_2) + Q_2\pi (\Gamma_1\circ P_1) - Q_2\pi (\Gamma_1\circ P_1) \\
	&= R_1 - R_2 + (Q_1-Q_2)\pi (\Gamma_1\circ P_1) + Q_2\pi ((\Gamma_1\circ P_1) - (\Gamma_2\circ P_2)) \\
	&= \left[R_1 - R_2 + Q_2\pi ((\Gamma_1\circ P_1) - (\Gamma_2\circ P_2))\right](\mathbb{I}-\pi(\Gamma_1\circ P_1))^{-1}.
	\label{eq:q1-q2}
	\end{align}
	
	Now using Holder's inequality and the second assumption, we have:
	\begin{equation}
	\lvert Q_2\pi ((\Gamma_1\circ P_1) - (\Gamma_2\circ P_2))\mathds{1}_{x,o_a} \rvert \leq \lVert Q_2\rVert_\infty \lVert \pi \rVert_\infty \lVert (\Gamma_1\circ P_1)\mathds{1}_{x,o_a} - (\Gamma_2\circ P_2)\mathds{1}_{x,o_a} \rVert_1 \leq \epsilon V_{max}.
	\label{eq:p1-p2-pi}
	\end{equation}
	
	Inserting~\eqref{eq:p1-p2-pi} into Equation~\eqref{eq:q1-q2} and using the first assumption, we obtain:
	\begin{align}
	|Q_1\mathds{1}_{x,o_a}-Q_2\mathds{1}_{x,o_a}| &\leq \left[\epsilon R_{max} + \epsilon V_{max}\right] \lVert(\mathbb{I}-\pi(\Gamma_1\circ P_1))^{-1}\mathds{1}_{x,o_a}\rVert_1\\
	&\leq \cfrac{2\epsilon V_{max}}{1-\gamma},
	\end{align}
	which proves the lemma. There is a factor 2 that might require some discussion. It comes from the fact that we do not control that the maximum $R_{max}$ might be as big as $V_{max}$ in the semi-MDP setting and we do not control the $\gamma$ factor in front of the second term. As a consequence, we surmise that a tighter bound down to $\frac{\epsilon V_{max}}{1-\gamma}$ holds, but this still has to be proven.
\end{proof}

	\begin{proposition}
		Consider an environment modelled with a semi-MDP~\cite{Parr1998,Sutton1999} $\ddot{M}=\langle \mathcal{X},\Omega_{\mathcal{A}},\ddot{P}^*,\ddot{R}^*,\Gamma^*\rangle$,  where $\Gamma^*$ is the discount rate inferior or equal to $\gamma$ that varies with the state action transitions and the empirical semi-MDP $\widehat{\ddot{M}}=\langle \mathcal{X},\Omega_{\mathcal{A}},\widehat{\ddot{P}},\widehat{\ddot{R}},\widehat{\Gamma}\rangle$ estimated from a dataset $\mathcal{D}$. If in every state $x$ where option $o_a$ may be initiated: $x\in \mathcal{I}_a$, we have:
		\begin{equation}
			\sqrt{\cfrac{2}{N_{\mathcal{D}}(x,a)}\log\cfrac{2|\mathcal{X}||\mathcal{A}|2^{|\mathcal{X}|}}{\delta}}\leq\epsilon,
		\end{equation}
		then, with probability at least $1-\delta$:
		\begin{align}
		\forall a\in\mathcal{A}, \forall x\in\mathcal{I}_a, \left\{
		\begin{array}{ll}
			\lVert \Gamma^* \ddot{P}^*(x,o_a)-\widehat{\Gamma}\widehat{\ddot{P}}(x,o_a)\rVert_1 \leq \epsilon \\
			\lvert \ddot{R}^*(x,o_a)-\widehat{\ddot{R}}(x,o_a)\rvert \leq \epsilon \ddot{R}_{max}
		\end{array}
		\right.
		\end{align}
		\label{prop:eps_pib}
	\end{proposition}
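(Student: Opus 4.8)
The plan is to prove this as a textbook concentration-of-measure argument: reduce the $\ell_1$ transition bound to a family of one-dimensional Hoeffding bounds via a union bound over sign vectors, handle the reward by a single two-sided Hoeffding bound, and then collect everything with a union bound over the state-action pairs, the $2^{|\mathcal{X}|}$ sign vectors, and the reward-versus-transition dichotomy. The hypothesis on $\epsilon$ is precisely calibrated so that this union bound closes at level $\delta$.

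First I would fix a pair $(x,a)$ with $x\in\mathcal{I}_a$ and condition on the event $N_{\mathcal{D}}(x,a)=n$. Under the standard assumption that, conditioned on the visit count, the transitions recorded from $(x,o_a)$ in $\mathcal{D}$ are i.i.d., each recorded transition $j$ contributes a sample vector $Z_j\in\mathbb{R}^{|\mathcal{X}|}$ whose $x'$-coordinate is $\gamma^{k_j}\mathds{1}(x'_j=x')$, with $k_j$ the observed option duration. These satisfy $Z_j\ge 0$ coordinate-wise and $\lVert Z_j\rVert_1=\gamma^{k_j}\le 1$, with $\mathbb{E}[Z_j]=\Gamma^*\ddot{P}^*(x,o_a)$ and $\tfrac1n\sum_j Z_j=\widehat{\Gamma}\widehat{\ddot{P}}(x,o_a)$. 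Using $\lVert v\rVert_1=\max_{s\in\{-1,+1\}^{|\mathcal{X}|}}\langle s,v\rangle$, note that for each fixed sign vector $s$ the scalars $\langle s,Z_j\rangle$ lie in $[-1,1]$, so Hoeffding's inequality gives $\mathbb{P}\big(\langle s,\widehat{\Gamma}\widehat{\ddot{P}}(x,o_a)-\Gamma^*\ddot{P}^*(x,o_a)\rangle\ge\epsilon\mid N_{\mathcal{D}}(x,a)=n\big)\le e^{-n\epsilon^2/2}$; a union bound over the $2^{|\mathcal{X}|}$ sign vectors yields $\mathbb{P}\big(\lVert\widehat{\Gamma}\widehat{\ddot{P}}(x,o_a)-\Gamma^*\ddot{P}^*(x,o_a)\rVert_1\ge\epsilon\mid\cdot\big)\le 2^{|\mathcal{X}|}e^{-n\epsilon^2/2}$. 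For the reward, the recorded semi-MDP rewards lie in $[-\ddot{R}_{max},\ddot{R}_{max}]$, so a two-sided Hoeffding bound gives $\mathbb{P}\big(|\widehat{\ddot{R}}(x,o_a)-\ddot{R}^*(x,o_a)|\ge\epsilon\ddot{R}_{max}\mid\cdot\big)\le 2e^{-n\epsilon^2/2}$.

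Next I would invoke the hypothesis: $\sqrt{\tfrac{2}{N_{\mathcal{D}}(x,a)}\log\tfrac{2|\mathcal{X}||\mathcal{A}|2^{|\mathcal{X}|}}{\delta}}\le\epsilon$ rearranges to $e^{-N_{\mathcal{D}}(x,a)\epsilon^2/2}\le\tfrac{\delta}{2|\mathcal{X}||\mathcal{A}|2^{|\mathcal{X}|}}$. Substituting this bounds the conditional failure probability of the transition event by $\tfrac{\delta}{2|\mathcal{X}||\mathcal{A}|}$ and that of the reward event by $\tfrac{\delta}{|\mathcal{X}||\mathcal{A}|2^{|\mathcal{X}|}}\le\tfrac{\delta}{2|\mathcal{X}||\mathcal{A}|}$ (using $2^{|\mathcal{X}|}\ge 2$); since both bounds hold for every value of the count, they hold unconditionally for $(x,a)$, with total failure probability at most $\tfrac{\delta}{|\mathcal{X}||\mathcal{A}|}$. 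A final union bound over the at most $|\mathcal{X}||\mathcal{A}|$ pairs $(x,a)$ with $x\in\mathcal{I}_a$ gives overall failure probability at most $\delta$, which is exactly the claimed statement.

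The main obstacle — and really the only delicate point — is the conditioning/i.i.d. step in the batch setting: the number of visits $N_{\mathcal{D}}(x,a)$ is itself random and correlated with which transitions were recorded, so one must either argue by conditioning on $N_{\mathcal{D}}(x,a)$ as above (legitimate because the per-pair bound is uniform in $n$, so the conditioning can be integrated out) or replace Hoeffding by a martingale/optional-stopping variant. Everything else is bookkeeping; it is worth noting that the two structural constants in the logarithm are not arbitrary — the $2^{|\mathcal{X}|}$ is exactly the size of the sign-vector union bound controlling the $\ell_1$ deviation, and the leading $2$ is exactly what is needed to split the confidence budget between the transition and reward events.
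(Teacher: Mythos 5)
Your proof is correct and follows essentially the same route as the paper, which simply defers to Proposition 9 of \citet{ghavamzadeh2016safe} — itself the standard Weissman-style $\ell_1$ concentration bound (Hoeffding over the $2^{|\mathcal{X}|}$ sign vectors, a two-sided Hoeffding bound for the reward, and a union bound over state-action pairs) that you have written out in full, with the constants in the logarithm matching exactly as you observe. Your explicit handling of the conditioning on the random count $N_{\mathcal{D}}(x,a)$, and of the fact that trajectories generated under $\pi_b$ yield valid samples of the option executions, is more careful than what the paper provides.
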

	\begin{proof}
		The proof is similar to that of Proposition 9 in \citet{ghavamzadeh2016safe}.
	\end{proof}
	
\begin{lemma}[Safe policy improvement of $\pi^\odot_{spibb}$ over any policy $\pi\in\Pi_b$]
	Let $\Pi_b$ be the set of policies under the constraint of following $\pi_b$ when $(x,a)\in\mathfrak{B}$. Let $\pi^\odot_{spibb}$ be a $\Pi_b$-optimal policy of the reward maximization problem of an estimated MDP $\widehat{M}$. Then, for any policy $\pi\in\Pi_b$, the difference of performance between $\pi^\odot_{spibb}$ and $\pi$ is bounded as follows with high probability $1-\delta$ in the true MDP $M^*$:
	\begin{equation}
		\rho(\pi^\odot_{spibb}, M^*) - \rho(\pi, M^*) \geq  \rho(\pi^\odot_{spibb}, \widehat{M}) - \rho(\pi, \widehat{M}) - \cfrac{4 \epsilon V_{max}}{1-\gamma}.
	\end{equation}
	\label{lem:nearopt-pi}
\end{lemma}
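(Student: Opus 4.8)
The plan is to reduce the statement to a one-sided simulation bound between $M^*$ and $\widehat{M}$ that holds uniformly over $\Pi_b$. From the telescoping identity $\rho(\pi^\odot_{spibb}, M^*) - \rho(\pi, M^*) = \big(\rho(\pi^\odot_{spibb}, M^*) - \rho(\pi^\odot_{spibb}, \widehat{M})\big) + \big(\rho(\pi^\odot_{spibb}, \widehat{M}) - \rho(\pi, \widehat{M})\big) + \big(\rho(\pi, \widehat{M}) - \rho(\pi, M^*)\big)$, it suffices to prove that, on a single event of probability at least $1-\delta$, every policy $\pi' \in \Pi_b$ satisfies $\lvert \rho(\pi', M^*) - \rho(\pi', \widehat{M}) \rvert \leq \frac{2\epsilon V_{max}}{1-\gamma}$, where $\epsilon = \sqrt{\frac{2}{N_\wedge}\log\frac{2\lvert\mathcal{X}\rvert\lvert\mathcal{A}\rvert 2^{\lvert\mathcal{X}\rvert}}{\delta}}$; applying this to $\pi' = \pi^\odot_{spibb}$ and to $\pi' = \pi$ and adding the two bounds produces the stated $\frac{4\epsilon V_{max}}{1-\gamma}$. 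Observe that only membership of the two policies in $\Pi_b$ is used here; the $\Pi_b$-optimality of $\pi^\odot_{spibb}$ plays no role in this lemma (it enters later, when $\pi$ is specialized to $\pi_b$ to derive Theorem~\ref{th:safepolicyimprovement-pi}).

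The naive simulation lemma is useless, because $\widehat{P}(\cdot\mid x,a)$ can be arbitrarily far from $P^*(\cdot\mid x,a)$ on $\mathfrak{B}$. The device is to recast the dynamics seen by any $\pi\in\Pi_b$ as a semi-MDP in which the badly estimated transitions have been folded into multi-step options. Given an MDP $M$, I would build $\ddot{M}$ over options $\Omega_{\mathcal{A}}=\{o_a\}$ with $o_a$ initiable exactly at the states $x$ such that $(x,a)\notin\mathfrak{B}$ (so $\mathcal{I}_a = \{x : (x,a)\notin\mathfrak{B}\}$), where executing $o_a$ means: take action $a$, then follow $\pi_b$ for as long as the visited state--action pairs stay in $\mathfrak{B}$, and hand control back at the first pair outside $\mathfrak{B}$, with the variable discount $\Gamma^*\leq\gamma$ bookkeeping the random number of elapsed steps. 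Since every $\pi\in\Pi_b$ agrees with $\pi_b$ on $\mathfrak{B}$, the induced semi-MDP policy $\ddot{\pi}$ --- which only ever selects options at states where $\pi$ makes a genuine, non-bootstrapped choice --- reproduces $\pi$ exactly, so that $\rho(\pi,M)=\rho(\ddot{\pi},\ddot{M})$ (prepending the analogous transit segment at $x_0$, and truncating options at episode ends, handles the boundary cases). Establishing this performance-preservation identity, and verifying that the support of $\ddot{\pi}$ meets only options $o_a$ at $x\in\mathcal{I}_a$, is the bookkeeping core of the argument and the step I expect to be the main obstacle.

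With the reduction in hand, the rest is routine. Because the data was collected by $\pi_b$, each of the $N_\mathcal{D}(x,a)$ logged visits to a pair $(x,a)\notin\mathfrak{B}$ supplies one fresh realization of the whole macro-transition of $o_a$ (the recorded trajectory simply continues under $\pi_b$), so $\ddot{R}$ and $\Gamma\ddot{P}$ at $o_a$ concentrate at a rate driven by $N_\mathcal{D}(x,a)\geq N_\wedge$. With the above $\epsilon$, Proposition~\ref{prop:eps_pib} then yields, on one event of probability at least $1-\delta$ and simultaneously for all $a$ and all $x\in\mathcal{I}_a$, the pointwise bounds $\lVert\Gamma^*\ddot{P}^*(x,o_a)-\widehat{\Gamma}\widehat{\ddot{P}}(x,o_a)\rVert_1\leq\epsilon$ and $\lvert\ddot{R}^*(x,o_a)-\widehat{\ddot{R}}(x,o_a)\rvert\leq\epsilon\,\ddot{R}_{max}$, which are exactly the hypotheses of Lemma~\ref{lem:q1-q2} for the fixed policy $\ddot{\pi}$ across $\ddot{M}^*$ and $\widehat{\ddot{M}}$. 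That lemma gives $\lvert Q^{\ddot{\pi}}_{\ddot{M}^*}\mathds{1}_{x,o_a}-Q^{\ddot{\pi}}_{\widehat{\ddot{M}}}\mathds{1}_{x,o_a}\rvert\leq\frac{2\epsilon V_{max}}{1-\gamma}$ for all $x\in\mathcal{I}_a$; averaging over the options $\ddot{\pi}$ uses at $x_0$ gives $\lvert\rho(\ddot{\pi},\ddot{M}^*)-\rho(\ddot{\pi},\widehat{\ddot{M}})\rvert\leq\frac{2\epsilon V_{max}}{1-\gamma}$, hence $\lvert\rho(\pi',M^*)-\rho(\pi',\widehat{M})\rvert\leq\frac{2\epsilon V_{max}}{1-\gamma}$ for every $\pi'\in\Pi_b$ on that event. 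Plugging $\pi'=\pi^\odot_{spibb}$ and $\pi'=\pi$ into the telescoping identity would then finish the proof.
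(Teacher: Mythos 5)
Your proposal is correct and follows essentially the same route as the paper: the same telescoping into two model-error terms plus the empirical gap, the same reduction to a bootstrapped semi-MDP whose options execute $a$ and then follow the bootstrapped part of $\pi_b$ until a non-bootstrapped pair is reached, and the same invocation of Proposition~\ref{prop:eps_pib} followed by Lemma~\ref{lem:q1-q2} applied separately to $\pi^\odot_{spibb}$ and to $\pi$. Your observations that only membership in $\Pi_b$ (not optimality) is used, and that data collection under $\pi_b$ is what makes the option-level estimates concentrate, both match remarks made in the paper's own proof.
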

\begin{proof}
	We transform the true MDP $M^*$ and the MDP estimate $\widehat{M}$, to their bootstrapped semi-MDP counterparts $\ddot{M}^*$ and the MDP estimate $\widehat{\ddot{M}}$. In these semi-MDPs, the actions $\mathcal{A}$ are replaced by options $\Omega_{\mathcal{A}} = \left\{o_a\right\}_{a\in\mathcal{A}}$ constructed as follows:
	\begin{align}
		o_a = \langle \mathcal{I}_a, {a\!\!:\!\!\pi_b},\beta \rangle = \left\{
		\begin{array}{l}
		\mathcal{I}_a=\left\{x\in\mathcal{X}\text{, such that } (x,a)\notin\mathfrak{B}\right\}\\
		{a\!\!:\!\!\tilde{\pi}_b} = \text{ perform } a \text{ at initialization, then follow } \tilde{\pi}_b \\
		\beta(x) = \lVert \dot{\pi}_b(x,\cdot) \rVert_1
		\end{array}
		\right.
	\end{align}
	where $\pi_b$ has been decomposed as the aggregation of two partial policies: $\pi_b = \dot{\pi}_b + \tilde{\pi}_b$, with $\dot{\pi}_b(a|x)$ containing the non-bootstrapped actions probabilities in state $x$, and $\tilde{\pi}_b(a|x)$ the bootstrapped actions probabilities:
	\begin{align}
		\forall a\in\mathcal{A}, 
		\left\{
			\begin{array}{ll}
				\dot{\pi}(a|x) = \pi(a|x) &\textnormal{if } (x,a) \notin \mathfrak{B} \\ \dot{\pi}(a|x) = 0 &\textnormal{if } (x,a) \in \mathfrak{B}
			\end{array}
		\right. \\
		\forall a\in\mathcal{A}, 
		\left\{
			\begin{array}{ll}
				\tilde{\pi}(a|x) = \pi(a|x) &\textnormal{if } (x,a) \in \mathfrak{B} \\ \tilde{\pi}(a|x) = 0 &\textnormal{if } (x,a) \notin \mathfrak{B}
			\end{array}
		\right.
	\end{align}
	
	Let $\ddot{\Pi}$ denote the set of policies over the bootstrapped semi MDPs. $\mathcal{I}_a$ is the initialization function: it determines the set of states where the option is available. ${a\!\!:\!\!\pi_b}$ is the option policy being followed during the length of the option. Finally, $\beta(x)$ is the termination function defining the probability of the option to terminate in each state.
	
	Notice that all options have the same termination function. Please, also notice that some states might have no available options, but this is okay since every option has a termination function equal to 0 in those states, meaning that they are unreachable. This to avoid being in this situation at the beginning of the trajectory, we use the notion of starting option: a trajectory starts with a void option $o_\emptyset = \langle \left\{x_0\right\}, \pi_b,\beta \rangle$.
	
	By construction $x\in\mathcal{I}_a$ if and only if $(x,a)\notin\mathfrak{B}$, \emph{i.e.} if and only if the condition on the state-action counts of Proposition~\ref{prop:eps_pib} is fulfilled\footnote{Also, note that there is the requirement here that the trajectories are generated under policy $\pi_b$, so that the options are consistent with the dataset.}. Also, any policy $\pi\in\Pi_b$ is implemented by a policy $\ddot{\pi}\in\ddot{\Pi}$ in a bootstrapped semi-MDP. Reciprocally, any policy $\ddot{\pi}\in\ddot{\Pi}$ admits a policy $\pi\in\Pi_b$ in the original MDP.
	
	Note also, that by construction, the transition and reward functions are only defined for $(x,o_a)$ pairs such that $x\in\mathcal{I}_a$. By convention, we set them to 0 for the other pairs. Their corresponding $Q$-functions are therefore set to 0 as well. 
	
	This means that Lemma~\ref{lem:q1-q2} may be applied with $\pi=\pi^\odot_{spibb}$ and $M_1 = \ddot{M}^*$ and $M_2 = \widehat{\ddot{M}}$. We have:
	\begin{align}
	\lvert\rho(\pi^\odot_{spibb}, M^*) - \rho(\pi^\odot_{spibb}, \widehat{M})\rvert &= \lvert\rho(\pi^\odot_{spibb}, \ddot{M}^*) - \rho(\pi^\odot_{spibb}, \widehat{\ddot{M}})\rvert \\
	&= \lvert V^{\pi^\odot_{spibb}}_{\ddot{M}^*}(x_0) - V^{\pi^\odot_{spibb}}_{\widehat{\ddot{M}}}(x_0)\rvert \\
	&= \lvert Q^{\pi^\odot_{spibb}}_{\ddot{M}^*}(x_0,o_\emptyset) - Q^{\pi^\odot_{spibb}}_{\widehat{\ddot{M}}}(x_0,o_\emptyset)\rvert \\
	&\leq \cfrac{2\epsilon V_{max}}{1-\gamma}
	\label{eq:Qodot}
	\end{align}
	
	Analogously to~\ref{eq:Qodot}, for any $\pi\in\Pi_b$, we also have:
	\begin{equation}
	\lvert\rho(\pi, M^*) - \rho(\pi, \widehat{M})\rvert 
	\leq \cfrac{2\epsilon V_{max}}{1-\gamma}
	\label{eq:Qstar}
	\end{equation}
	
	Thus, we may write:
	\begin{align}
	\rho(\pi^\odot_{spibb}, M^*) - \rho(\pi, M^*) &\geq \rho(\pi^\odot_{spibb}, \widehat{M}) - \rho(\pi, \widehat{M}) - \cfrac{4\epsilon V_{max}}{1-\gamma},
	\end{align}
	where the inequality is directly obtained from equations~\ref{eq:Qodot} and~\ref{eq:Qstar}.
\end{proof}

\begin{theorem}[Convergence of $\Pi_b$-SPIBB]
	$\Pi_b$-SPIBB converges to a policy $\pi^\odot_{spibb}$ that is $\Pi_b$-optimal in the MLE MDP $\widehat{M}$.
\end{theorem}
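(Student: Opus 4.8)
The plan is to reproduce the classical policy iteration convergence proof, but carried out inside the convex, state-rectangular constraint set $\Pi_b$. The key structural fact is that the constraint defining $\Pi_b$ decomposes independently over states: in each state $x$, a policy $\pi\in\Pi_b$ must place probability exactly $\pi_b(a|x)$ on every bootstrapped action (those with $(x,a)\in\mathfrak{B}$), and is free to distribute the remaining mass $m(x):=\sum_{a:(x,a)\notin\mathfrak{B}}\pi_b(a|x)$ arbitrarily over the non-bootstrapped actions. Because this is a per-state constraint, greedy improvement with respect to any $Q$-function may be performed independently in each state, which is exactly what Algorithm~\ref{alg:Pibproj} does: it freezes the bootstrapped probabilities and moves all of $m(x)$ onto $\argmax_{a:(x,a)\notin\mathfrak{B}} Q(x,a)$.

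First I would prove the constrained policy improvement lemma. Let $\pi^{(i)}\in\Pi_b$ with value functions $V^{(i)},Q^{(i)}$ in $\widehat{M}$, and let $\pi^{(i+1)}$ be the output of Algorithm~\ref{alg:Pibproj} applied to $Q^{(i)}$. Since $\pi^{(i)}$ agrees with $\pi_b$ on $\mathfrak{B}$, in every state $x$
\[
\sum_a \pi^{(i+1)}(a|x)Q^{(i)}(x,a) = \sum_{a:(x,a)\in\mathfrak{B}}\pi_b(a|x)Q^{(i)}(x,a) + m(x)\max_{a:(x,a)\notin\mathfrak{B}}Q^{(i)}(x,a) \geq V^{(i)}(x),
\]
because replacing the $\pi^{(i)}$-weighted average of the non-bootstrapped $Q$-values (whose weights also sum to $m(x)$) by their maximum can only increase the total. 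The standard monotonicity argument for the evaluation Bellman operator then yields $V^{(i+1)}\geq V^{(i)}$ pointwise, hence $\rho(\pi^{(i+1)},\widehat{M})\geq\rho(\pi^{(i)},\widehat{M})$, with strict increase whenever $\pi^{(i+1)}\neq\pi^{(i)}$. Since Algorithm~\ref{alg:Pibproj} always returns a policy that is deterministic on the non-bootstrapped actions, the improvement step ranges over a finite set, so the monotone iteration reaches a fixed point $\pi^\odot_{spibb}$ after finitely many steps.

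Next I would characterize that fixed point. At $\pi^\odot_{spibb}$ the greedy projection returns the policy itself, so $Q^\odot:=Q^{\pi^\odot_{spibb}}_{\widehat{M}}$ satisfies the constrained Bellman optimality equation
\[
Q^\odot(x,a) = \widehat{R}(x,a) + \gamma\sum_{x'}\widehat{P}(x'|x,a)\Big[\textstyle\sum_{a':(x',a')\in\mathfrak{B}}\pi_b(a'|x')Q^\odot(x',a') + m(x')\max_{a':(x',a')\notin\mathfrak{B}}Q^\odot(x',a')\Big].
\]
Let $\mathcal{T}_b$ denote the operator on $Q$-functions given by the right-hand side, and $\mathcal{T}^\pi$ the ordinary policy-evaluation operator for $\pi\in\Pi_b$. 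The bracketed backup $Q(x',\cdot)\mapsto \sum_{\mathfrak{B}}\pi_b Q + m\max_{\notin\mathfrak{B}}Q$ is a convex combination of coordinate projections and a maximum, hence non-expansive in $\|\cdot\|_\infty$, so $\mathcal{T}_b$ is a $\gamma$-contraction with unique fixed point $Q^\odot$; moreover $\mathcal{T}_b Q \geq \mathcal{T}^\pi Q$ pointwise for every $\pi\in\Pi_b$ and every $Q$, since $\sum_{a'}\pi(a'|x')Q(x',a')\leq \sum_{\mathfrak{B}}\pi_b Q + m(x')\max_{\notin\mathfrak{B}}Q$.

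Finally I would deduce global $\Pi_b$-optimality. Starting from $Q^\pi=\mathcal{T}^\pi Q^\pi\leq \mathcal{T}_b Q^\pi$ and using monotonicity of $\mathcal{T}_b$, the iterates $\mathcal{T}_b^n Q^\pi$ increase and stay above $Q^\pi$, so their limit $Q^\odot$ satisfies $Q^\odot\geq Q^\pi$; combined with $\pi^\odot_{spibb}$ being greedy on $Q^\odot$ one gets $V^{\pi^\odot_{spibb}}\geq V^\pi$ in every state, hence $\rho(\pi^\odot_{spibb},\widehat{M})\geq\rho(\pi,\widehat{M})$ for all $\pi\in\Pi_b$, i.e. $\pi^\odot_{spibb}$ is $\Pi_b$-optimal in $\widehat{M}$. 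The only genuinely delicate point is verifying the non-expansiveness of the constrained backup — equivalently, that $\Pi_b$ is state-rectangular so that per-state maximization is legitimate and does not break the contraction; the rest is the textbook policy-iteration argument.
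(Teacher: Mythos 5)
Your proof is correct, but it takes a genuinely different route from the paper's. The paper proves convergence by reusing the options construction from its safety lemma: it recasts $\widehat{M}$ as a semi-MDP in which each non-bootstrapped action $a$ in state $x$ becomes an option ``play $a$, then follow the bootstrapped part of $\pi_b$ until termination,'' so that optimization over $\Pi_b$ becomes \emph{unconstrained} policy optimization over the semi-MDP, and then it simply cites the known convergence of policy iteration for semi-MDPs. You instead work directly in the original MDP and redo the textbook policy-iteration argument inside the constraint set: you observe that $\Pi_b$ is state-rectangular (frozen baseline mass on $\mathfrak{B}$, free mass $m(x)$ on the rest), prove the constrained policy-improvement step, and then identify the limit via a constrained Bellman optimality operator $\mathcal{T}_b$ that is a $\gamma$-contraction dominating every $\mathcal{T}^\pi$ with $\pi\in\Pi_b$. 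Your version is self-contained, makes explicit the rectangularity property that actually licenses the per-state greedy projection of Algorithm~\ref{alg:Pibproj}, and avoids the fiddly bookkeeping of the semi-MDP reduction (void starting options, states with no available options); as a bonus, your operator $\mathcal{T}_b$ is exactly the model-based counterpart of the target in Equation~\eqref{eq:spibb-DQN}, so it also does most of the work for Theorem~3. The paper's route is more economical \emph{within the paper}, since the same semi-MDP transformation is needed anyway for the concentration bounds and the safety proof.

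One small overstatement: you claim a strict increase in value whenever $\pi^{(i+1)}\neq\pi^{(i)}$, but distinct policies in $\Pi_b$ can have equal values (e.g.\ under ties in $Q^{(i)}$). The standard patch suffices: the greedy projection ranges over a finite set of policies (one non-bootstrapped action selected per state), values are non-decreasing, and if $V^{(i+1)}=V^{(i)}$ then $Q^{(i)}$ already satisfies $\mathcal{T}_b Q^{(i)}=Q^{(i)}$, so the iteration has converged. This is a cosmetic fix, not a gap.
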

\begin{proof}
	We use the same transformation of $\widehat{M}$ as in Lemma~\ref{lem:nearopt-pi}. Then, the problem is cast without any constraint in a well defined semi-MDP, and Policy Iteration is known to converge in semi-MDPs to the policy optimizing the value function~\cite{Gosavi2004}.
\end{proof}

\begin{theorem}[Safe policy improvement of $\Pi_b$-SPIBB]
	Let $\Pi_b$ be the set of policies under the constraint of following $\pi_b$ when $(x,a)\in\mathfrak{B}$. Then, $\pi^\odot_{spibb}$ is a $\zeta$-approximate safe policy improvement over the baseline $\pi_b$ with high probability $1-\delta$, with:
	\begin{equation}
	\zeta = \cfrac{4 V_{max}}{1-\gamma} \sqrt{\cfrac{2}{N_{\wedge}}\log\cfrac{2|\mathcal{X}||\mathcal{A}|2^{|\mathcal{X}|}}{\delta}}  - \rho(\pi^\odot_{spibb}, \widehat{M}) + \rho(\pi_b, \widehat{M})
	\end{equation}
\end{theorem}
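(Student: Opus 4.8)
The plan is to obtain the bound directly from the two building blocks established above, Proposition~\ref{prop:eps_pib} and Lemma~\ref{lem:nearopt-pi}, specialized to the particular policy $\pi=\pi_b$. The starting remark is that $\pi_b$ itself belongs to $\Pi_b$: it trivially follows $\pi_b$ on every state-action pair, in particular on those of $\mathfrak{B}$. Consequently, Lemma~\ref{lem:nearopt-pi} is applicable with $\pi=\pi_b$ as soon as its hypothesis --- which is exactly the hypothesis of Proposition~\ref{prop:eps_pib} --- holds for an appropriate value of $\epsilon$.

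First I would fix
\[
\epsilon = \sqrt{\cfrac{2}{N_\wedge}\log\cfrac{2|\mathcal{X}||\mathcal{A}|2^{|\mathcal{X}|}}{\delta}}
\]
and verify the count condition. By definition, $\mathfrak{B}$ is the set of pairs $(x,a)$ with $N_{\mathcal{D}}(x,a)<N_\wedge$; equivalently, a pair $(x,a)$ is non-bootstrapped --- i.e.\ $x\in\mathcal{I}_a$ in the bootstrapped semi-MDP of Lemma~\ref{lem:nearopt-pi} --- if and only if $N_{\mathcal{D}}(x,a)\geq N_\wedge$, in which case $\sqrt{\frac{2}{N_{\mathcal{D}}(x,a)}\log\frac{2|\mathcal{X}||\mathcal{A}|2^{|\mathcal{X}|}}{\delta}}\leq\epsilon$. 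Proposition~\ref{prop:eps_pib} then guarantees that, with probability at least $1-\delta$, the discounted transition and reward functions of the bootstrapped semi-MDP estimate $\widehat{\ddot{M}}$ satisfy the $\epsilon$-error bounds of that proposition relative to $\ddot{M}^*$, simultaneously for all $a\in\mathcal{A}$ and all $x\in\mathcal{I}_a$; this is the step that uses the assumption that $\mathcal{D}$ was collected with $\pi_b$, so that the $N_{\mathcal{D}}(x,a)$ rollouts initiated by performing $a$ in $x$ are genuine samples of the option $o_a$.

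It then remains to invoke Lemma~\ref{lem:nearopt-pi} with this $\epsilon$ and $\pi=\pi_b$, which yields, on the same high-probability event,
\[
\rho(\pi^\odot_{spibb}, M^*) - \rho(\pi_b, M^*) \;\geq\; \rho(\pi^\odot_{spibb}, \widehat{M}) - \rho(\pi_b, \widehat{M}) - \cfrac{4 \epsilon V_{max}}{1-\gamma}.
\]
Rearranging gives $\rho(\pi^\odot_{spibb}, M^*) \geq \rho(\pi_b, M^*) - \zeta$ with
\[
\zeta = \cfrac{4 V_{max}}{1-\gamma}\sqrt{\cfrac{2}{N_\wedge}\log\cfrac{2|\mathcal{X}||\mathcal{A}|2^{|\mathcal{X}|}}{\delta}} - \rho(\pi^\odot_{spibb}, \widehat{M}) + \rho(\pi_b, \widehat{M}),
\]
which is exactly the claimed statement. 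As a sanity check, since $\pi_b\in\Pi_b$ and $\pi^\odot_{spibb}$ is $\Pi_b$-optimal in $\widehat{M}$, the last two terms of $\zeta$ are non-positive, so raising $N_\wedge$ shrinks the first term and keeps $\zeta$ small without forfeiting the improvement realized in $\widehat{M}$.

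I expect no serious obstacle at this stage: the real content sits in Lemma~\ref{lem:nearopt-pi}, where the $\Pi_b$ constraint is dissolved by the option / semi-MDP construction so that the ordinary perturbation bound of Lemma~\ref{lem:q1-q2} applies, and in Proposition~\ref{prop:eps_pib}, where a Hoeffding-type concentration together with a union bound over $|\mathcal{X}||\mathcal{A}|2^{|\mathcal{X}|}$ events produces the $2^{|\mathcal{X}|}$ factor. The only care needed here is bookkeeping: checking that the threshold $N_\wedge$ defining $\mathfrak{B}$ is precisely the quantity entering the concentration bound for every non-bootstrapped pair, and that the single high-probability event of Lemma~\ref{lem:nearopt-pi} simultaneously controls $|\rho(\pi^\odot_{spibb},M^*)-\rho(\pi^\odot_{spibb},\widehat{M})|$ and $|\rho(\pi_b,M^*)-\rho(\pi_b,\widehat{M})|$ --- which it does, so subtracting the two estimates is legitimate.
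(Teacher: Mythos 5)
Your proposal is correct and follows essentially the same route as the paper's own proof: note that $\pi_b\in\Pi_b$, instantiate Lemma~\ref{lem:nearopt-pi} with $\pi=\pi_b$ and $\epsilon=\sqrt{\tfrac{2}{N_\wedge}\log\tfrac{2|\mathcal{X}||\mathcal{A}|2^{|\mathcal{X}|}}{\delta}}$, and rearrange. Your additional bookkeeping (checking that non-bootstrapped pairs satisfy the count condition of Proposition~\ref{prop:eps_pib}, and that the two performance gaps are controlled on the same high-probability event) is exactly the content the paper leaves implicit.
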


\begin{proof}
	It is direct to observe that $\pi_b \in \Pi_b$, and therefore that Lemma~\ref{lem:nearopt-pi} can be applied to $\pi_b$. We infer that, with high probability $1-\delta$:
	\begin{equation}
	\rho(\pi^\odot_{spibb}, M^*) - \rho(\pi_b, M^*) \geq  \rho(\pi^\odot_{spibb}, \widehat{M}) - \rho(\pi_b, \widehat{M}) - \cfrac{4 \epsilon V_{max}}{1-\gamma}.
	\end{equation}
	with:
	\begin{equation}
		\epsilon = \sqrt{\cfrac{2}{N_\wedge}\log\cfrac{2|\mathcal{X}||\mathcal{A}|2^{|\mathcal{X}|}}{\delta}}
	\end{equation}
	
	Therefore, we obtain:
	\begin{align}
	\zeta &= \cfrac{4 \epsilon V_{max}}{1-\gamma} -\left(\rho(\pi^\odot_{spibb}, \widehat{M}) - \rho(\pi_b, \widehat{M})\right) \\
	&= \cfrac{4 V_{max}}{1-\gamma} \sqrt{\cfrac{2}{N_\wedge}\log\cfrac{|\mathcal{X}||\mathcal{A}|2^{|\mathcal{X}|}}{\delta}} - \rho(\pi^\odot_{spibb}, \widehat{M}) + \rho(\pi_b, \widehat{M})
	\end{align}
	
	\textit{Quod erat demonstrandum}.
\end{proof}

	\begin{theorem}
	    In finite MDPs, Equation \ref{eq:spibb-DQN} admits a unique fixed point that coincides with the $Q$-value of the policy trained with model-based $\Pi_b$-SPIBB.
	    \label{th:free}
	\end{theorem}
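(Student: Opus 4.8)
The plan is to recognize the tabular form of the model-free update as iteration of a Bellman-type operator $\mathcal{T}$, prove $\mathcal{T}$ is a $\gamma$-contraction (so that Banach's theorem yields a unique fixed point to which the iteration converges), and then identify that fixed point with $Q^{\pi^\odot_{spibb}}_{\widehat M}$, the $Q$-function returned by the model-based algorithm.

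First, specialize Equation~\eqref{eq:spibb-DQN} to a one-hot (tabular) representation trained to least squares: the fit at a pair $(x,a)$ with $N_{\mathcal D}(x,a)>0$ is the empirical average of the targets $y^{(i)}_j$ over transitions with $x_j=x$, $a_j=a$, which, by the very definition of $\widehat R$ and $\widehat P$, equals
\[
(\mathcal{T}Q^{(i)})(x,a) := \widehat R(x,a) + \gamma\sum_{x'}\widehat P(x'|x,a)\, u(x';Q^{(i)}),
\]
\[
u(x';Q) := \sum_{a'|(x',a')\in\mathfrak{B}}\pi_b(a'|x')Q(x',a') + \Bigl(\sum_{a'|(x',a')\notin\mathfrak{B}}\pi_b(a'|x')\Bigr)\max_{a'|(x',a')\notin\mathfrak{B}}Q(x',a').
\]
So the model-free procedure is exactly $Q^{(i+1)}=\mathcal{T}Q^{(i)}$; pairs with zero count lie in $\mathfrak{B}$ and never enter any $u(\cdot;Q)$, hence are irrelevant to the recursion.

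Second, I would use the algebraic identity $u(x';Q)=\max_{\pi\in\Pi_b}\sum_{a'}\pi(a'|x')Q(x',a')$: inside $\Pi_b$ the bootstrapped actions carry the fixed mass $\pi_b(a'|x')$, and the residual mass is placed optimally on the best non-bootstrapped action (when no non-bootstrapped action exists, the distribution is entirely pinned to $\pi_b$ and the identity still holds). Hence $\mathcal{T}$ is the Bellman optimality operator of $\widehat M$ over the policy class $\Pi_b$ — equivalently, the ordinary Bellman optimality operator of the bootstrapped semi-MDP $\widehat{\ddot{M}}$ of Lemma~\ref{lem:nearopt-pi}. For each $x'$, $Q\mapsto u(x';Q)$ is $1$-Lipschitz in $\|\cdot\|_\infty$, being a maximum of averages (convex combinations) of its arguments; averaging against the stochastic row $\widehat P(\cdot|x,a)$ and scaling by $\gamma$ makes $\mathcal{T}$ a $\gamma$-contraction in $\|\cdot\|_\infty$. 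Banach's fixed-point theorem then gives a unique fixed point $Q^\infty$, and $Q^{(i)}\to Q^\infty$.

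Third, I would check that $Q^{\pi^\odot_{spibb}}_{\widehat M}$ is a fixed point of $\mathcal{T}$: by Theorem~\ref{th:pib-spibb-convergence}, model-based $\Pi_b$-SPIBB converges to the $\Pi_b$-optimal policy $\pi^\odot_{spibb}$, so its policy-improvement step is stationary, i.e. $\pi^\odot_{spibb}$ attains $\max_{\pi\in\Pi_b}\sum_{a'}\pi(a'|x')Q^{\pi^\odot_{spibb}}_{\widehat M}(x',a')$ in every state; substituting this into the Bellman evaluation equation for $\pi^\odot_{spibb}$ in $\widehat M$ gives $\mathcal{T}Q^{\pi^\odot_{spibb}}_{\widehat M}=Q^{\pi^\odot_{spibb}}_{\widehat M}$. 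Uniqueness of the fixed point forces $Q^\infty = Q^{\pi^\odot_{spibb}}_{\widehat M}$, which is the claim. The main obstacle is the second step — establishing rigorously that fitting the targets $y^{(i)}_j$ is the same as applying a restricted Bellman optimality operator, with careful bookkeeping of the corner cases (states where every action is bootstrapped; zero-count pairs, which are always in $\mathfrak{B}$ and never feed back into the recursion) — together with keeping the contraction constant exactly $\gamma$ despite the $\Gamma^*\le\gamma$ slack of the semi-MDP view. The rest is a textbook use of Banach's theorem plus the already-proven convergence of the model-based algorithm.
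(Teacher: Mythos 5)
Your proof is correct and follows essentially the same route as the paper's: identify the empirical average of the targets with the $\Pi_b$-restricted Bellman optimality operator of the MLE MDP $\widehat{M}$, verify that the model-based $Q$-function is a fixed point via the decomposition of $\pi^\odot_{spibb}$ into its bootstrapped and greedy non-bootstrapped components, and invoke contraction for uniqueness. If anything, you are more careful than the paper, which asserts uniqueness in one line by appeal to ``the Bellman operator is a contraction,'' whereas you actually check that $Q\mapsto u(x';Q)$ is $1$-Lipschitz in $\lVert\cdot\rVert_\infty$ so that the restricted operator contracts with modulus exactly $\gamma$.
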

	\begin{proof}
	    Unicity of the fixed point is a classical result in RL, obtained from the fact that the Bellman operator is a contraction. 
	    
	    Let $\pi_t$ denote the policy trained with model-based $\Pi_b$-SPIBB. By construction, we know that $\pi_t$ satisfies the optimal Bellman equation in the MLE MDP, under the $\Pi_b$-constraint:
	    \begin{align}
	        Q^{\pi_t}_{\widehat{M}} = \widehat{R} + \gamma Q^{\pi_t}_{\widehat{M}} \pi_t \widehat{P}
	    \end{align}
	    
	    Moreover, $\pi_t$ may be decomposed by its component $\tilde{\pi}_t$ on $\mathfrak{B}$ and its complementary $\dot{\pi}_t$:
	    \begin{align}
	        \pi_t(a|x) &= \tilde{\pi}_t(a|x) + \dot{\pi}_t(a|x) \\
	        &\textnormal{with: } \left\{\begin{array}{l}
	             \tilde{\pi}_t(a|x) = \left\{
	                \begin{array}{l}
	                    \pi_b(a|x) \textnormal{ if } (x,a)\in\mathfrak{B} \\
    	                0 \textnormal{ otherwise}
        	        \end{array}
        	        \right.  \\
	             \dot{\pi}_t(a|x) = \left\{
	                \begin{array}{l}
	                    \sum_{a'|(x,a')\notin\mathfrak{B}} \pi_b(a'|x) \textnormal{ if } a = \argmax_{a'|(x,a')\notin\mathfrak{B}}  Q^{\pi_t}_{\widehat{M}}(x,a) \\
    	                0 \textnormal{ otherwise}
        	        \end{array}
        	        \right.
	        \end{array}\right.
	    \end{align}
	    
	    As a consequence, we obtain:
	    \begin{align}
	        Q^{\pi_t}_{\widehat{M}}(x,a) &= \widehat{R}(x,a) + \gamma \sum_{x'\in\mathcal{X}} \sum_{a'\in\mathcal{A}}Q^{\pi_t}_{\widehat{M}}(x',a') \pi_t(a'|x') \widehat{P}(x'|x,a)\\
	        &= \widehat{R}(x,a) + \gamma \sum_{x'\in\mathcal{X}} \sum_{a'\in\mathcal{A}}Q^{\pi_t}_{\widehat{M}}(x',a') \left(\tilde{\pi}_t(a'|x') + \dot{\pi}_t(a'|x')\right) \widehat{P}(x'|x,a)\\
	        &= \widehat{R}(x,a) + \gamma\sum_{x'\in\mathcal{X}} \widehat{P}(x'|x,a) \left[ \sum_{a'|(x',a')\in\mathfrak{B}} \pi_b(a'|x')Q^{\pi_t}_{\widehat{M}}(x',a') \right] \\
	        &+ \gamma\sum_{x'\in\mathcal{X}} \widehat{P}(x'|x,a) \left[ \left(\sum_{a'|(x',a')\notin\mathfrak{B}} \pi_b(a'|x')\right) \max_{a'|(x',a')\notin\mathfrak{B}}Q^{\pi_t}_{\widehat{M}}(x',a')\right] \nonumber \\
	        &= \cfrac{\displaystyle\sum_{\langle x_j=x,a_j=a, r_j, x'_j\rangle \in \mathcal{D}} r_j}{N_\mathcal{D}(x,a)} + \gamma\sum_{x'\in\mathcal{X}} \cfrac{\displaystyle\sum_{\langle x_j=x,a_j=a, r_j, x'_j=x'\rangle \in \mathcal{D}} 1}{N_\mathcal{D}(x,a)} \left[ \sum_{a'|(x',a')\in\mathfrak{B}} \pi_b(a'|x')Q^{\pi_t}_{\widehat{M}}(x',a') \right] \\
	        &+ \gamma\sum_{x'\in\mathcal{X}} \cfrac{\displaystyle\sum_{\langle x_j=x,a_j=a, r_j, x'_j=x'\rangle \in \mathcal{D}} 1}{N_\mathcal{D}(x,a)} \left[ \left(\sum_{a'|(x',a')\notin\mathfrak{B}} \pi_b(a'|x')\right) \max_{a'|(x',a')\notin\mathfrak{B}}Q^{\pi_t}_{\widehat{M}}(x_j',a')\right] \nonumber 
	    \end{align}
	        
        where $\displaystyle\sum_{\langle x_j=x,a_j=a, r_j, x'_j\rangle \in \mathcal{D}}$ denotes the sum over all transitions in the dataset that start from the state-action pair $(x,a)$ and $\displaystyle\sum_{\langle x_j=x,a_j=a, r_j, x'_j=x'\rangle \in \mathcal{D}}$ is the sum over all transitions that start from the state-action pair $(x,a)$ and transition to $x'$. 
        
        We then see that:
	        
	    \begin{align}
	        Q^{\pi_t}_{\widehat{M}}(x,a) &= \cfrac{\displaystyle\sum_{\langle x_j=x,a_j=a, r_j, x'_j\rangle \in \mathcal{D}} r_j}{N_\mathcal{D}(x,a)} + \cfrac{\gamma}{N_\mathcal{D}(x,a)} \sum_{\langle x_j=x,a_j=a, r_j, x'_j\rangle \in \mathcal{D}}  \sum_{a'|(x'_j,a')\in\mathfrak{B}} \pi_b(a'|x'_j)Q^{\pi_t}_{\widehat{M}}(x'_j,a') \\
	        &+ \cfrac{\gamma}{N_\mathcal{D}(x,a)} \sum_{\langle x_j=x,a_j=a, r_j, x'_j\rangle \in \mathcal{D}} \left(\sum_{a'|(x'_j,a')\notin\mathfrak{B}} \pi_b(a'|x'_j)\right) \max_{a'|(x'_j,a')\notin\mathfrak{B}}Q^{\pi_t}_{\widehat{M}}(x_j',a') \nonumber \\
	        &= \cfrac{1}{N_\mathcal{D}(x,a)}\sum_{\langle x_j=x,a_j=a, r_j, x'_j\rangle \in \mathcal{D}}\left[r_j + \gamma  \sum_{a'|(x'_j,a')\in\mathfrak{B}} \pi_b(a'|x'_j)Q^{\pi_t}_{\widehat{M}}(x'_j,a')\right. \\
	        &\qquad\qquad\qquad\qquad\qquad\qquad\qquad\quad + \left. \gamma \left(\sum_{a'|(x'_j,a')\notin\mathfrak{B}} \pi_b(a'|x'_j)\right) \max_{a'|(x'_j,a')\notin\mathfrak{B}}Q^{\pi_t}_{\widehat{M}}(x_j',a')\right] \nonumber \\
	        &= \cfrac{1}{N_\mathcal{D}(x,a)} \sum_{\langle x_j=x,a_j=a, r_j, x'_j\rangle \in \mathcal{D}} y^{\pi_t}_j \textnormal{ when } N_\mathcal{D}(x,a)>0 \textnormal{ and is undefined otherwise.} 
	    \end{align}
	    
	    This concludes the proof that $Q^{\pi_t}_{\widehat{M}}$ is the fixed point of Equation \ref{eq:spibb-DQN}.
	\end{proof}
	
	\subsection{Algorithms for the greedy projection of $Q^{(i)}$ on $\Pi_{b}$ and $\Pi_{\leq b}$}
	\label{sup:algos}
	The policy-based SPIBB algorithms rely on a policy iteration process that requires a policy improvement step under the constraint that the generated policy belongs to $\Pi_b$ or $\Pi_{\leq b}$. Those are respectively described in Algorithms~\ref{alg:Pibproj} (main document) and~\ref{alg:pileqbproj} (see below).
	
	\begin{pseudocode}[ht!]
		\caption{Greedy projection of $Q^{(i)}$ on $\Pi_{\leq b}$}
		\KwIn{Baseline policy $\pi_{b}$}
		\KwIn{Last iteration value function $Q^{(i)}$}
		\KwIn{Set of bootstrapped state-action pairs $\mathfrak{B}$}
		\KwIn{Current state $x$ and action set $\mathcal{A}$}
		\BlankLine
		Sort $\mathcal{A}$ in decreasing order of the action values: $Q^{(i)}(x,a)$
		
		Initialize $\pi^{(i)}_{spibb} = 0$
		
		\For{$a\in\mathcal{A}$}{
			\eIf{$(x,a) \in \mathfrak{B}$}{
				\eIf{$\pi_b(a|x) \geq 1 - \sum_{a'\in\mathcal{A}}\pi^{(i)}_{spibb}(a'|x)$}{
					$\pi^{(i)}_{spibb}(a|x) = 1 - \sum_{a'\in\mathcal{A}}\pi^{(i)}_{spibb}(a'|x)$
					
					\Return $\pi^{(i)}_{spibb}$
				}{
					$\pi^{(i)}_{spibb}(a|x) = \pi_b(a|x)$
				}
			}{
				$\pi^{(i)}_{spibb}(a|x) = 1 - \sum_{a'\in\mathcal{A}}\pi^{(i)}_{spibb}(a'|x)$
				
				\Return $\pi^{(i)}_{spibb}$
			}
		}
		\label{alg:pileqbproj}
	\end{pseudocode}
	
	\subsection{Comprehensive illustration of the difference between $\Pi_b$-SPIBB and $\Pi_{\leq b}$-SPIBB policy improvement steps}
	\label{sup:pibvspi<b}
	
	Table~\ref{tab:comprehensiveexample} illustrates the difference between $\Pi_b$-SPIBB and $\Pi_{\leq b}$-SPIBB in the policy improvement step of the policy iteration process. It shows how the baseline probability mass is locally redistributed among the different actions for the two policy-based SPIBB algorithms. We observe that for $\Pi_b$-SPIBB, the bootstrapped state-action pairs probabilities remain untouched whatever their $Q$-value estimates are. On the contrary, $\Pi_{\leq b}$-SPIBB removes all mass from the bootstrapped state-action pairs that are performing worse than the current $Q$-value estimates.
	\begin{table*}[ht!]
		\caption{Policy improvement step at iteration $(i)$ for the two policy-based SPIBB algorithms.} 
		\centering
		\setlength\tabcolsep{3pt}
		\def\arraystretch{1.6}
		\small
		\begin{tabular}{|l|l|l|l|l|l|}
			\hline
			\normalsize$Q$-value estimate & \normalsize Baseline policy & \normalsize Boostrapping & \normalsize$\Pi_b$-SPIBB & \normalsize$\Pi_{\leq b}$-SPIBB \\ 
			\hline
			$Q_{\widehat{M}}^{(i)}(x,a_1) = 1$ & $\pi_b(a_1|x) = 0.1$ & $(x,a_1)\in\mathfrak{B}$ & $\pi^{(i+1)}(a_1|x) = 0.1$ &  $\pi^{(i+1)}(a_1|x) = 0$ \\
			$Q_{\widehat{M}}^{(i)}(x,a_2) = 2$ & $\pi_b(a_2|x) = 0.4$ & $(x,a_2)\notin\mathfrak{B}$ & $\pi^{(i+1)}(a_2|x) = 0$ &  $\pi^{(i+1)}(a_2|x) = 0$ \\
			$Q_{\widehat{M}}^{(i)}(x,a_3) = 3$ & $\pi_b(a_3|x) = 0.3$ & $(x,a_3)\notin\mathfrak{B}$ & $\pi^{(i+1)}(a_3|x) = 0.7$ &  $\pi^{(i+1)}(a_3|x) = 0.8$ \\
			$Q_{\widehat{M}}^{(i)}(x,a_4) = 4$ & $\pi_b(a_4|x) = 0.2$ & $(x,a_4)\in\mathfrak{B}$ & $\pi^{(i+1)}(a_4|x) = 0.2$ &  $\pi^{(i+1)}(a_4|x) = 0.2$ \\
			\hline
		\end{tabular}
		\normalsize
		\label{tab:comprehensiveexample}
	\end{table*}

	\newpage
	\section{Finite MDP Benchmark Design}
	\label{sup:expe_finite}
	\subsection{Experiments details}
	\subsubsection{Pseudo code for the Gridworld benchmark}
	\label{sup:gridworld-protocol}
	\SetKwFor{RepTimes}{repeat}{times}{end}
	\begin{pseudocode}[ht!]
		\caption{Gridworld benchmark}
		\KwIn{List of dataset size}
		\KwIn{List of algorithms in the benchmark}
		\KwIn{List of hyper-parameter values for each algorithm}
		\BlankLine
		\RepTimes{$10^5$}{
		        
            \For{each dataset size}{
                
                Generate a dataset. (see Section \ref{sup:datasetgen})
                
                \For{each algorithm}{
                
                    \For{each algorithm hyper-parameter value}{
                    
                        Train a policy. (see Sections \ref{sec:pibootstrap} and \ref{sup:benchmarkalgos})
                        
                        Evaluate the policy. (see Section \ref{sup:evaluationgen})
                        
                        Record the performance of the trained policy.
		            }    
		        }
		    }
		}
		
		\label{alg:maze_benchmark}
	\end{pseudocode}
	
	\subsubsection{Pseudo code for the Random MDPs benchmark}
	
	\SetKwFor{RepTimes}{repeat}{times}{end}
	\begin{pseudocode}[ht!]
		\caption{Random MDPs benchmark}
		\KwIn{List of hyper-parameter values for the baseline}
		\KwIn{List of dataset size}
		\KwIn{List of algorithms in the benchmark}
		\KwIn{List of hyper-parameter values for each algorithm}
		\BlankLine
		\RepTimes{$10^5$}{
            Generate an MDP. (see Section \ref{sup:MDPgen})
            
		    \For{each hyper parameter value for the baseline}{
		        
		        Generate a baseline. (see Section \ref{sup:baselinegen})
		        
		        \For{each dataset size}{
		            
		            Generate a dataset. (see Section \ref{sup:datasetgen})
		            
		            \For{each algorithm}{
		            
		                \For{each algorithm hyper-parameter value}{
		                
		                    Train a policy. (see Sections \ref{sec:pibootstrap} and \ref{sup:benchmarkalgos})
		                    
		                    Evaluate the policy. (see Section \ref{sup:evaluationgen})
		                    
		                    Record the performance of the trained policy.
		                
		                }
		            }
		        }
		    }
		}
		
		\label{alg:garnet_benchmark}
	\end{pseudocode}
	
	\subsubsection{MDP generation}
	\label{sup:MDPgen}
	We use three parameters for our MDP generation: the number of states, the number of actions in each state, and the connectivity of the transition function stating how many states are reachable after performing a given action in a given state. We tried out various values for those parameters and found little sensitivity in those preliminary experimental results and decided to fix their respective values to 25/4/4 in the reported experiments. The discount factor $\gamma$ is set to 0.95.
	
	The initial state is arbitrarily set to be $x_0$, then we search with dynamic programming the performance of the optimal policy for all potential terminal state $x_f \in \mathcal{X}/{x_0}$. We select the terminal state for which the optimal policy yields the smaller value function and set it as terminal: $R(x,a,x_f)=1$ and $P(x|x_f,a)=0$ for all $x\in\mathcal{X}$ and all $a\in\mathcal{A}$. The reward function is set to 0 everywhere else. We found that the optimal value-function is on average $0.6$ and with a surprising low variance, which amounts to an average horizon of 10. Later on, we write this environmental MDP $M^* = \langle \mathcal{X}, \mathcal{A}, P^*, R^*, \gamma \rangle$, its optimal action-value function $Q^*$, its optimal performance $\rho^* = \rho(\pi^*,M^*)$, and its random policy performance  $\widetilde{\rho} = \rho(\widetilde{\pi},M^*)$, where $\widetilde{\pi}$ denotes the uniform random policy: $\widetilde{\pi}(a|x) = \frac{1}{|\mathcal{A}|}$ for all $x\in\mathcal{X}$ and all $a\in\mathcal{A}$.
	
	\subsubsection{Baseline generation}
	\label{sup:baselinegen}
	We use a hyper-parameter for the baseline generation:
	\begin{align}
	    \rho_b = \rho(\pi_b, M^*) = \eta \rho^* + (1-\eta) \widetilde{\rho}.
	\end{align}
	
	Therefore, $\eta \in \left\{0.1,0.2,0.3,0.4,0.5,0.6,0.7,0.8,0.9\right\}$ determines the performance of the baseline, normalized with respect to the performances of the random and the optimal performance. There are an infinite number of policies that yield this performance. We designed several heuristics to generate the actual baseline and again notice a moderate sensitivity in our preliminary results. All the reported results use the following heuristics which consists in two steps: softening and randomization.
	
	\paragraph{Softening:} We apply a softmax on $Q^*$ with temperature such that $\rho(\pi_s,M^*) = \frac{\rho_b + \rho^*}{2}$, where $\pi_s$ denotes the policy obtained after the softening operation.
	
	\paragraph{Randomization:} Until reaching the desired performance for the baseline we repeatedly apply the following process: we randomly select a state $x$, and move a 0.1 probability mass from $a^*=\argmax_{a\in\mathcal{A}} Q^*(x,a)$ to another random action. When this loop stops, the output is the baseline $\pi_b$.
	
	\subsubsection{Dataset generation}
	\label{sup:datasetgen}
	The dataset generation depends a single parameter $|\mathcal{D}|\in\left\{10,20,50,100,200,500,1000,2000\right\}$ ($\cup \left\{5000,10000\right\}$ for the Gridworld experiments): its size expressed in the number of trajectories. A trajectory generation simply consists in sampling the environment and the baseline policy until reaching the final state. The output is the dataset $\mathcal{D}$.
	
	\subsubsection{Trained policy evaluation}
	\label{sup:evaluationgen}
	In the Random MDPs experiments, we use different MDPs and baselines for each run. We need a standardized method for evaluating the trained policy $\pi$. We use the performance normalized with respect to the baseline and optimal policies:
	\begin{align}
	    \rho = \cfrac{\rho(\pi,M^*) - \rho_b}{\rho^*-\rho_b} \leq 1. \label{eq:normalized_perf}
	\end{align}
	
	Then, the results are analyzed with respect to $\rho$ as everywhere else in the paper: according to the mean and CVaR performances.
	
	\subsubsection{Mean and CVaR performance}
	The mean performance is simply the average of performance over all the runs.
	
	The X\%-CVaR performance is the average performance of the X\% worst runs. To compute this, we sort the performance of all the runs, and keep the lowest X\% fraction and then take the average. The 100\%-CVaR performance is obviously equivalent to the mean performance.

	\begin{figure*}[t]
		\centering
		\subfloat[HCPI with $\delta_{hcpi}=0.1$]{
			\includegraphics[trim = 5pt 5pt 5pt 5pt, clip, width=0.5\textwidth]{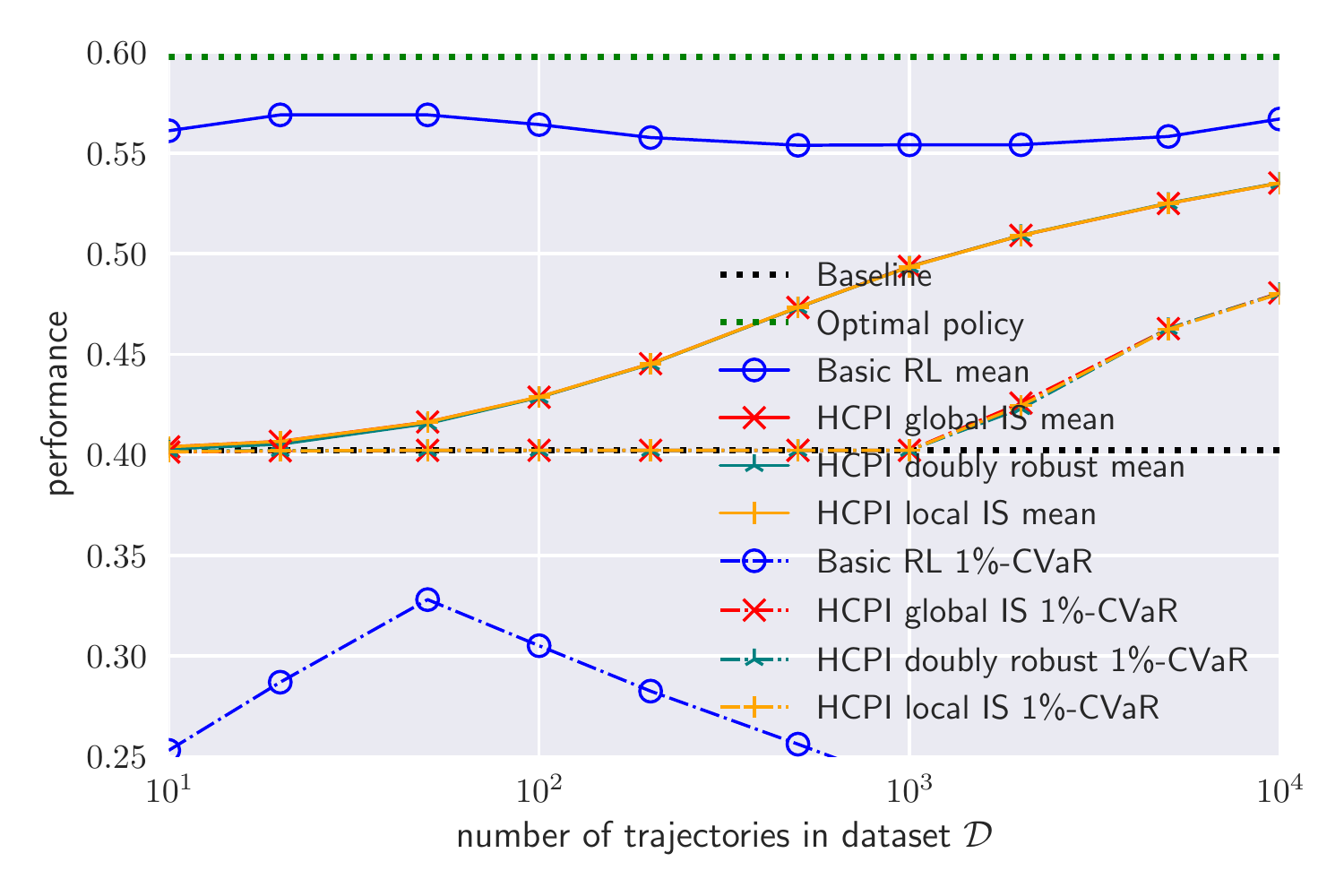}
			\label{fig:HCPI_delta=0.1}
		}
		\subfloat[HCPI with $\delta_{hcpi}=0.9$]{
			\includegraphics[trim = 5pt 5pt 5pt 5pt, clip, width=0.5\textwidth]{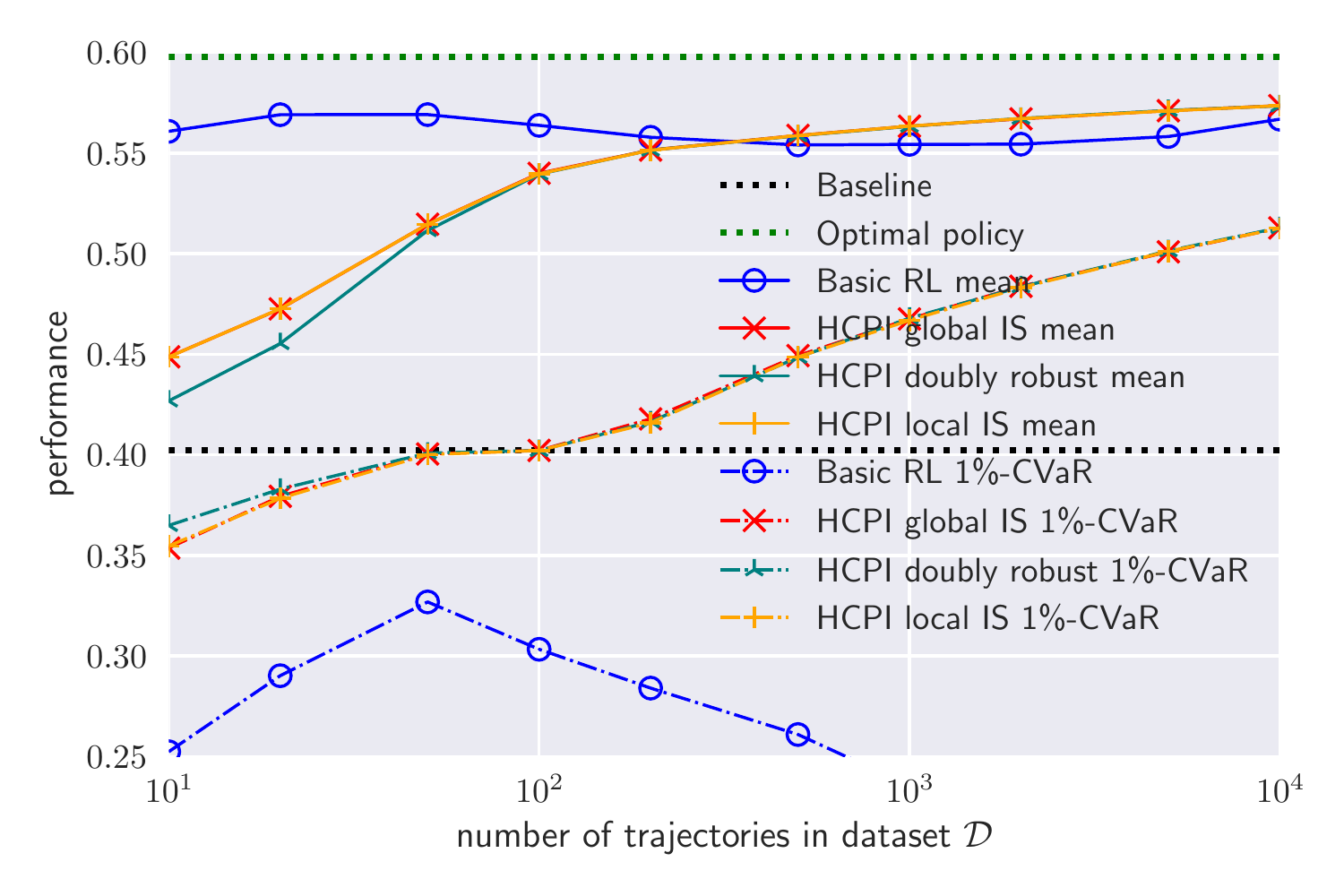}
			\label{fig:HCPI_delta=0.9}
		} \\
		\centering
		\subfloat[Mean performance HCPI doubly-robust heatmap]{
			\includegraphics[trim = 10pt 40pt 45pt 60pt, clip, width=0.5\textwidth]{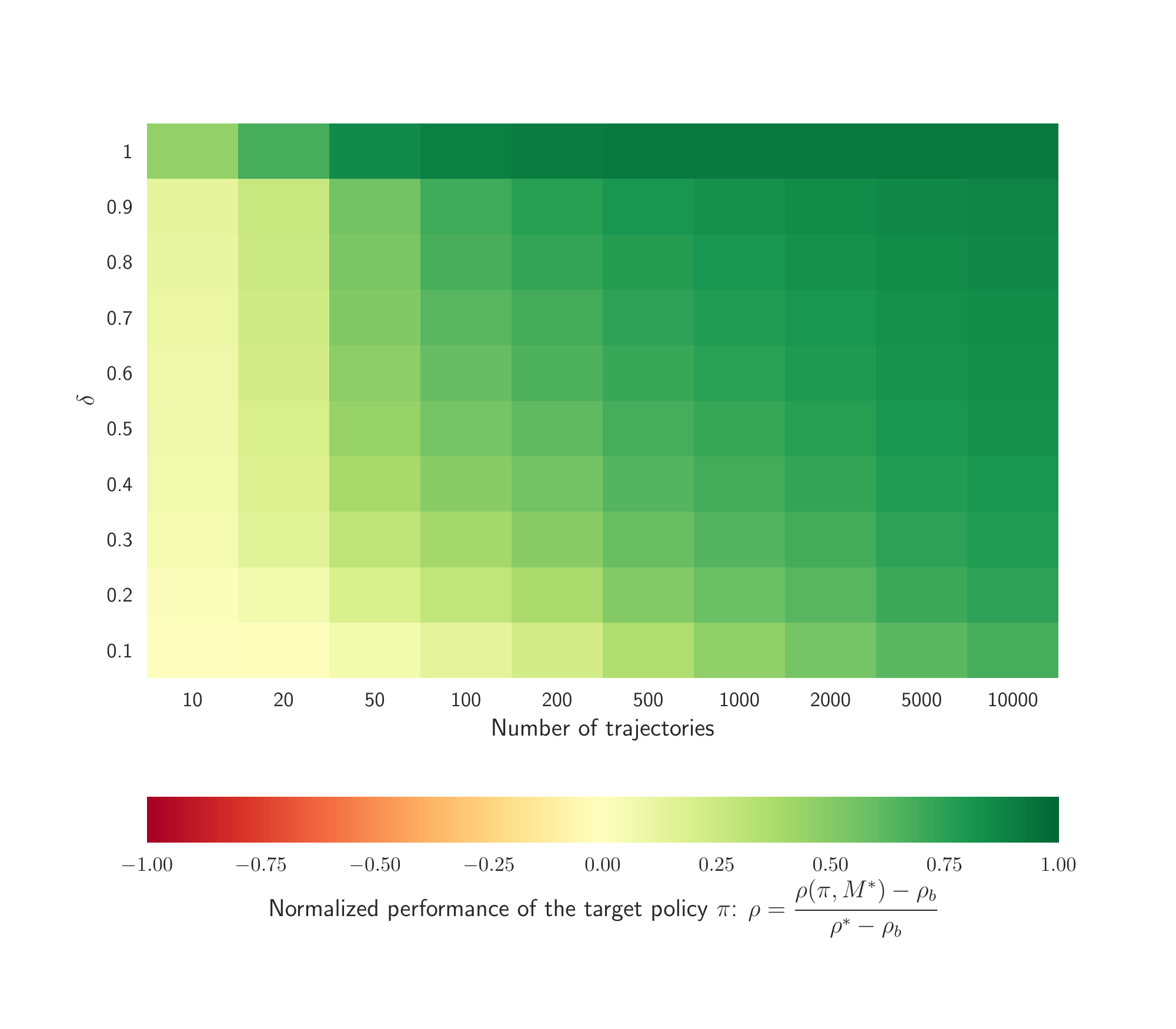}
			\label{fig:HCPI_heatmap_mean}
		}
		\subfloat[1\%-CVaR performance HCPI doubly robust heatmap]{
			\includegraphics[trim = 10pt 40pt 45pt 60pt, clip, width=0.5\textwidth]{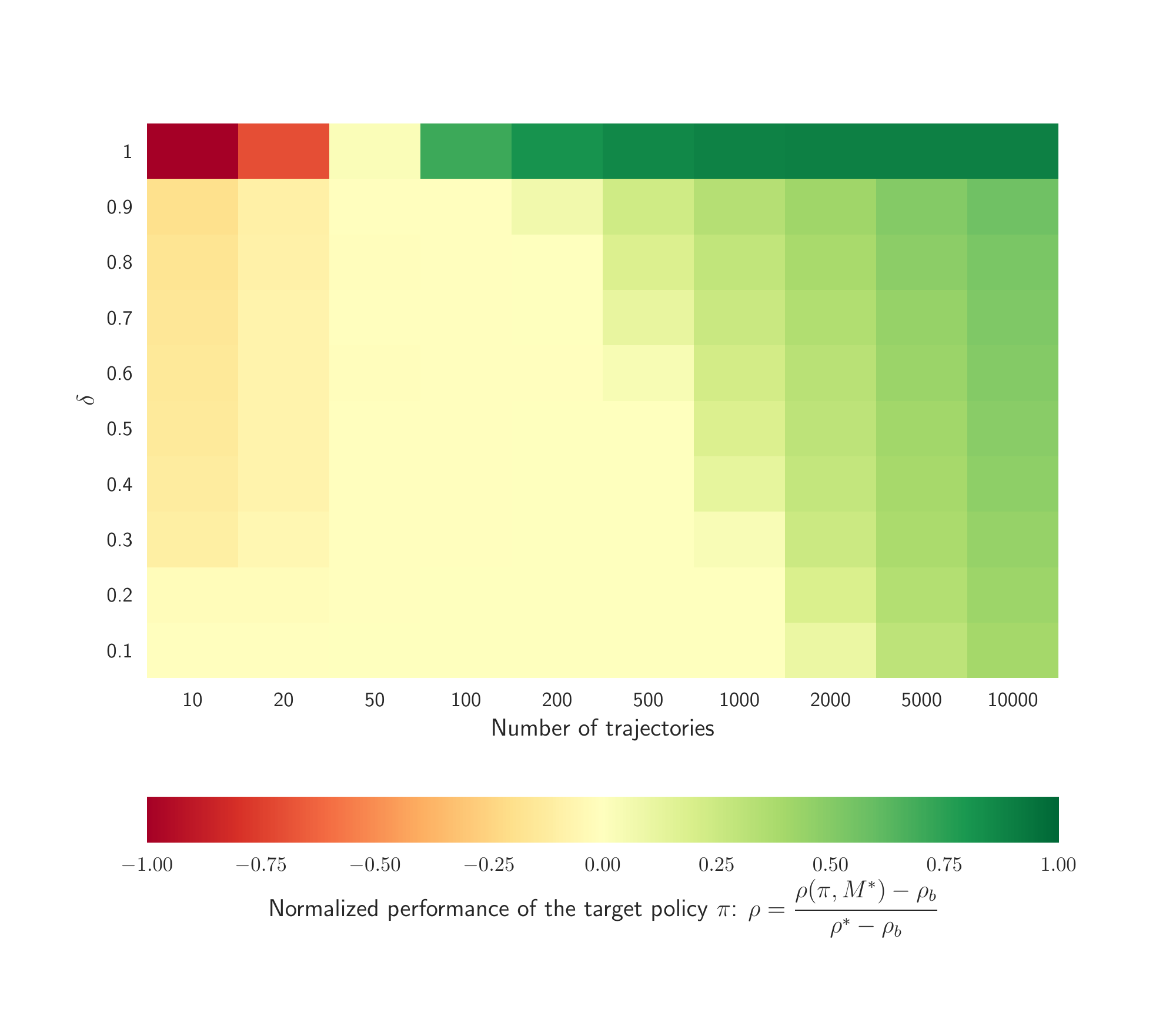}
			\label{fig:HCPI_heatmap_percentile}
		}
		\caption{HCPI hyper-parameter search results on the Gridworld domain.}
		\label{fig:HCPI_maze}
	\end{figure*} 
	
	\subsubsection{Figures}
	We present three types of figure in the paper (main document and appendix).
	
	\paragraph{Performance vs. dataset size:} These figures (for instance Figure \subref*{fig:HCPI_delta=0.1}) show the (mean and/or CVaR) performance of the algorithms as a function of the dataset size.
	
	\paragraph{Hyper-parameter search heatmaps:} These figures (for instance Figure \subref*{fig:HCPI_heatmap_mean}) show the (mean or CVaR) normalized performance of the algorithms as a function of both the dataset size and the hyper-parameter value of the evaluated algorithm. The normalized performance is computed with Equation \ref{eq:normalized_perf} and represented with colour. Red means that the performance is worse than that of the baseline, yellow means that it is equal and green means that it improves the baseline.
	
	\paragraph{Random MDPs heatmaps:} These figures (for instance Figure \subref*{fig:RaMDP_heatmap_percentile_randomMDP_0.003}) are very similar to the other heatmaps except that the normalized performance is shown as a function of both the dataset size and the hyper-parameter $\eta$ used for the baseline generation (instead of the hyper-parameter of the evaluated algorithm).
	
	\subsection{Other benchmark algorithms: competitors}
	\label{sup:benchmarkalgos}
	Since the \emph{baseline} meaning is overridden in this paper, we refer to the non-SPIBB benchmark algorithms with the term \emph{competitors}.
	
	\subsubsection{Basic RL}
	Basic RL is implemented by computing the MLE MDP and solving it with dynamic programming. In order to cover the state-action pairs absent from the dataset, two $Q$ initializations were investigated in our experiments: optimistic ($V_{max}$), and pessimistic ($-V_{max}$). The former yields awful performances in our batch RL setting. This is not surprising because optimism makes it imprudently explore every unknown state-action pairs. All the presented results were therefore obtained with the pessimistic initialization as in \citet{Jiang2015}.

	\subsubsection{HCPI}
	Safe policy improvement in a model-free setting is closely related to High Confidence Off-policy evaluation ~\cite{thomas2015high2}. Instead of relying on the model uncertainty, this class of methods relies on a high-confidence lower bound on the Importance Sampling (IS) estimate of the trained policy performance. Given a dataset $\mathcal{D}$, a part of it, $\mathcal{D}_{train}$, is used to derive a set of candidates policies.  A policy $\pi_t$ is first derived using an off policy reinforcement learning algorithm ($Q$-learning for instance) and is regularized using the baseline to obtain a set of candidate policies $\Pi_{candidates} = \{((1-\alpha)\pi_t+\alpha\pi_b),\alpha\in\{0,0.1,0.2,0.3,...1\}\}$. The remaining data $\mathcal{D}_{test}$ are used to evaluate the candidate policies. The policy with the highest lower bound on the estimated performance is returned. \citet{thomas2015high} introduced three ways of obtaining the lower bound on the estimate. 
	\begin{itemize}
	    \item The first one is an extension of Maurer and Pontil’s empirical Bernstein inequality. Let $X_1,...X_n$ be $n$ independent real-valued random variables, such that for each $i\in\{1,...,n\}$, we have $\mathbb{P}[0\leq X_i]=1$, $\mathbb{E}[X_i]\leq \nu$ and some threshold value $c_i\geq0$. Let $\delta\geq0$ and $Y_i=min\{X_i,x_i\}$. Then with probability at least $1-\delta$, we have:
    \begin{align*}
        \mu\geq \left(\sum_{i=1}^{n}\frac{1}{c_i}\right)^{-1}\sum_{i=1}^{n}\frac{Y_i}{c_i}-\left(\sum_{i=1}^{n}\frac{1}{c_i}\right)^{-1}\frac{7nln(\frac{2}{\delta})}{3(n-1)}-\left(\sum_{i=1}^{n}\frac{1}{c_i}\right)^{-1}\sqrt{\frac{ln(\frac{2}{\delta})}{n-1}\sum_{i,j=1}^{n}\left(\frac{Y_i}{c_i}-\frac{Y_j}{c_j}\right)^2}
    \end{align*}
    In the SPI setting, $X_i$ is the unbiased estimate of the return related to each trajectory. The drawback of this method is the hyper-parameter $c_i$ which needs to be tuned. 
    \item The second method is based on the assumption that the mean return is normally distributed. Relying on this assumption, a less conservative lower bound can be obtained using Student’s t-test (with the same notations):
    \begin{align*}
        \mathbb{E}[X_i]\geq \frac{1}{n}\sum_{i=1}^n X_i-\frac{\sigma}{\sqrt{n}}t_{1-\delta,n-1}
    \end{align*}
     with $\sigma=\sqrt{\frac{1}{n-1}\sum_{i=1}^n\left(X_i-\hat{X_i})^2\right)}$ the sample standard deviation of $X_1,..,X_n$ with Bessel's correction.
     \item The last one is based on Efron’s
Bootstrap methods \cite{efron1987better}. It relies on bootstrapping to estimate the true distribution of the mean return instead of considering it as normally distributed. 
	\end{itemize}
	In practice, the first method is too conservative and the third one is not computationally efficient. Therefore we limit our study to the second one, which relies on Student's t-test.\\
	We implemented three versions of HCPI: with global importance sampling, with local importance sampling, and with the doubly robust method. As Figures \subref*{fig:HCPI_delta=0.1} and \subref*{fig:HCPI_delta=0.9} reveal, they all behave more or less the same on the Gridworld domain. We also searched for the best hyper-parameter $\delta_{hcpi}\in \left\{0.1,0.2,0.3,0.4,0.5,0.6,0.7,0.8,0.9,1\right\}$ value. Figures \subref*{fig:HCPI_heatmap_mean} and \subref*{fig:HCPI_heatmap_percentile} respectively display the mean and 1\%-CVaR performances. One can observe that $\delta_{hcpi}=1$ yields the best result in mean, but turns out to be strongly unsafe for small datasets. $\delta_{hcpi} = 0.9$ appears to offer the best compromise and this is the value we retain for the experiments reported in the main document. Note that those $\delta_{hcpi}$ values mean that the confidence is very small: 0.1 for $\delta_{hcpi} = 0.9$, and even null for $\delta_{hcpi} = 1$.
	
	\begin{figure*}[t]
		\centering
		\subfloat[Robust MDP with $\delta_{rob}=0.1$ (no safety test)]{
			\includegraphics[trim = 5pt 5pt 5pt 5pt, clip, width=0.5\textwidth]{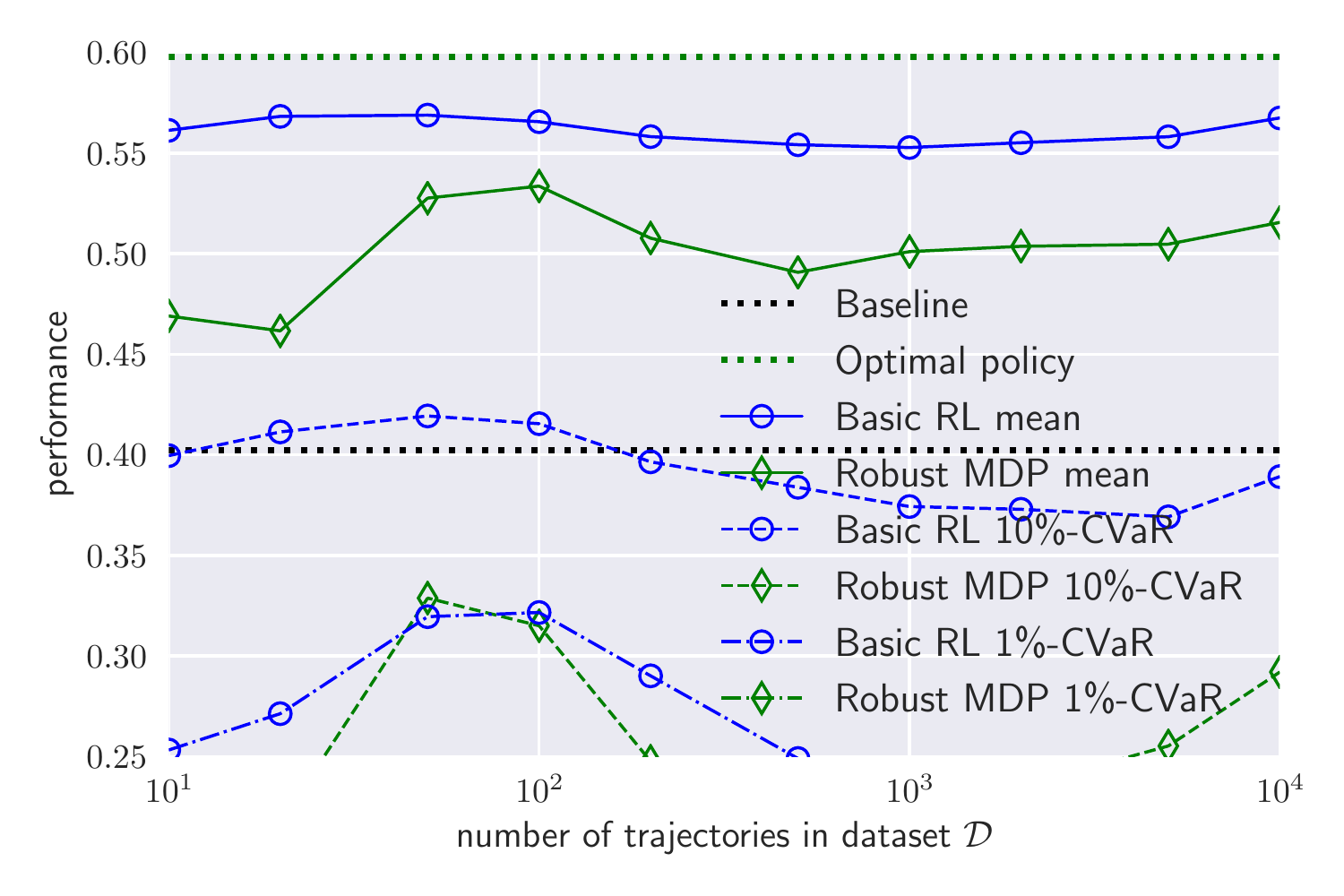}
			\label{fig:RobustMDP_delta=0.1}
		}
		\subfloat[Mean performance Robust MDP heatmap (no safety test)]{
			\includegraphics[trim = 10pt 40pt 45pt 60pt, clip, width=0.5\textwidth]{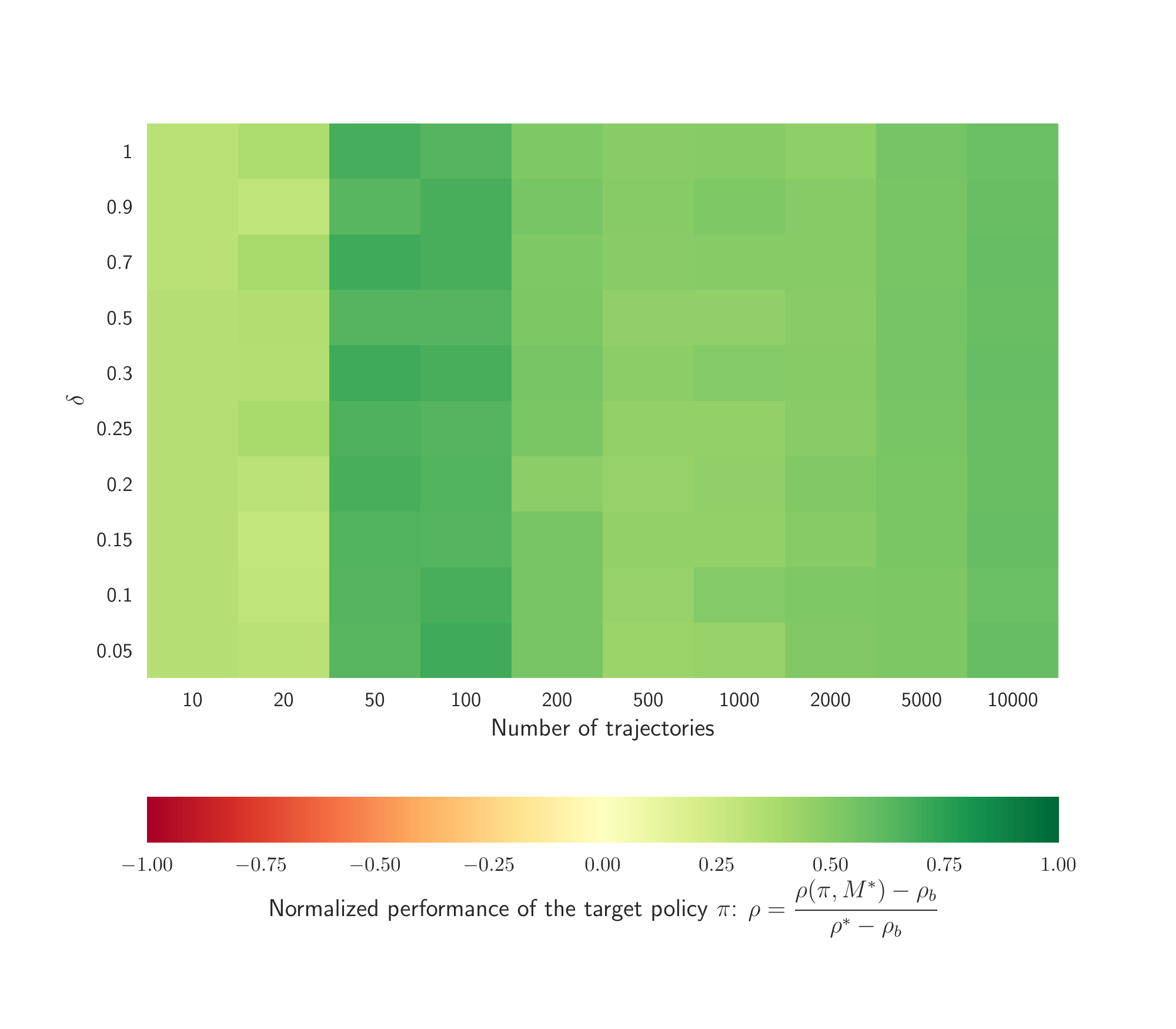}
			\label{fig:RobustMDP_heatmap_mean}
		}
		\caption{Robust MDP hyper-parameter search results on the Gridworld domain.}
		\label{fig:RobustMDP_maze}
	\end{figure*}
	
	\begin{figure*}
		\centering
		\subfloat[RaMDP with $\kappa_{adj}=0.002$ (Gridworld)]{
			\includegraphics[trim = 5pt 5pt 5pt 5pt, clip, width=0.5\textwidth]{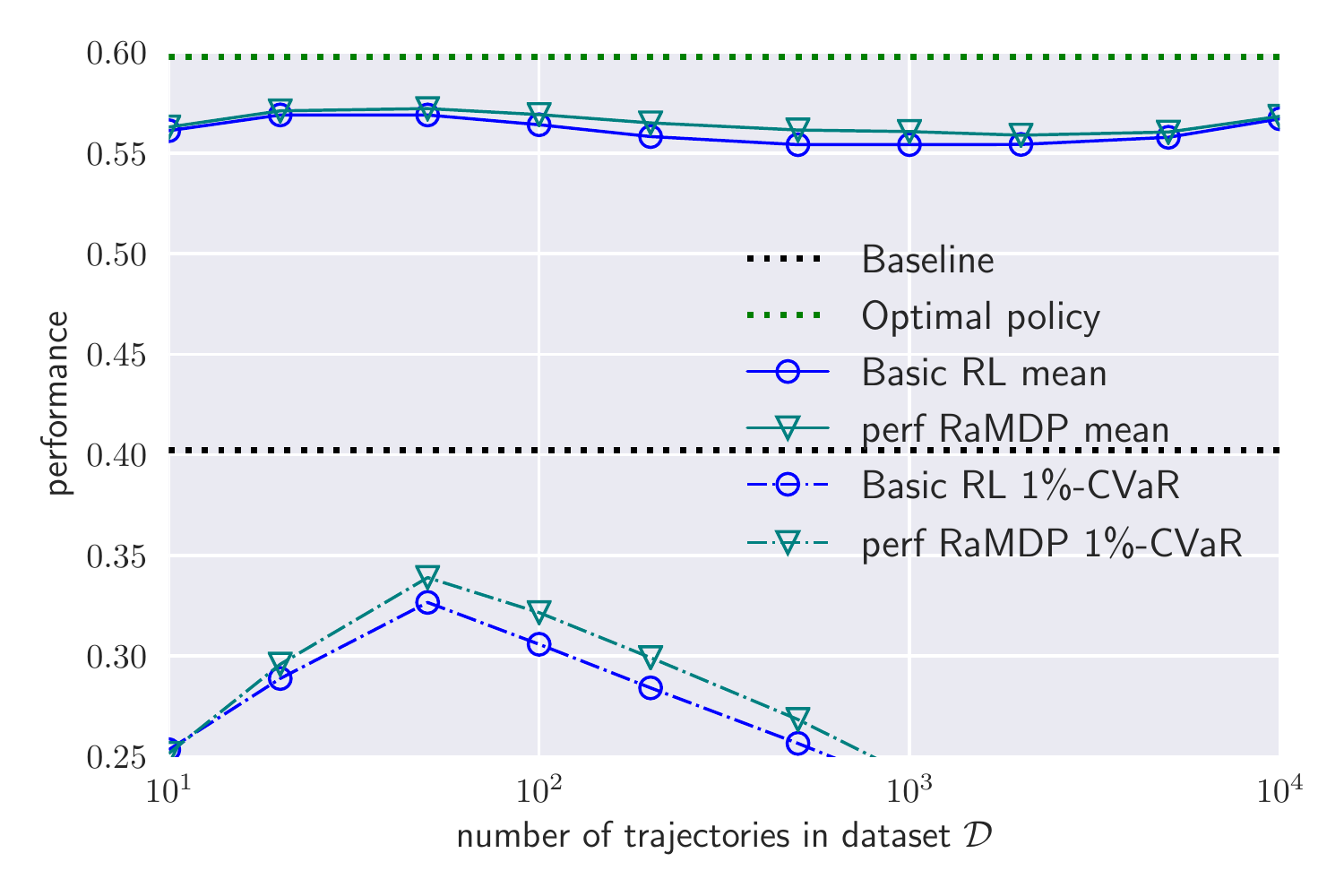}
			\label{fig:RaMDP_kappa=0.002}
		}
		\subfloat[RaMDP with $\kappa_{adj}=0.003$ (Gridworld)]{
			\includegraphics[trim = 5pt 5pt 5pt 5pt, clip, width=0.5\textwidth]{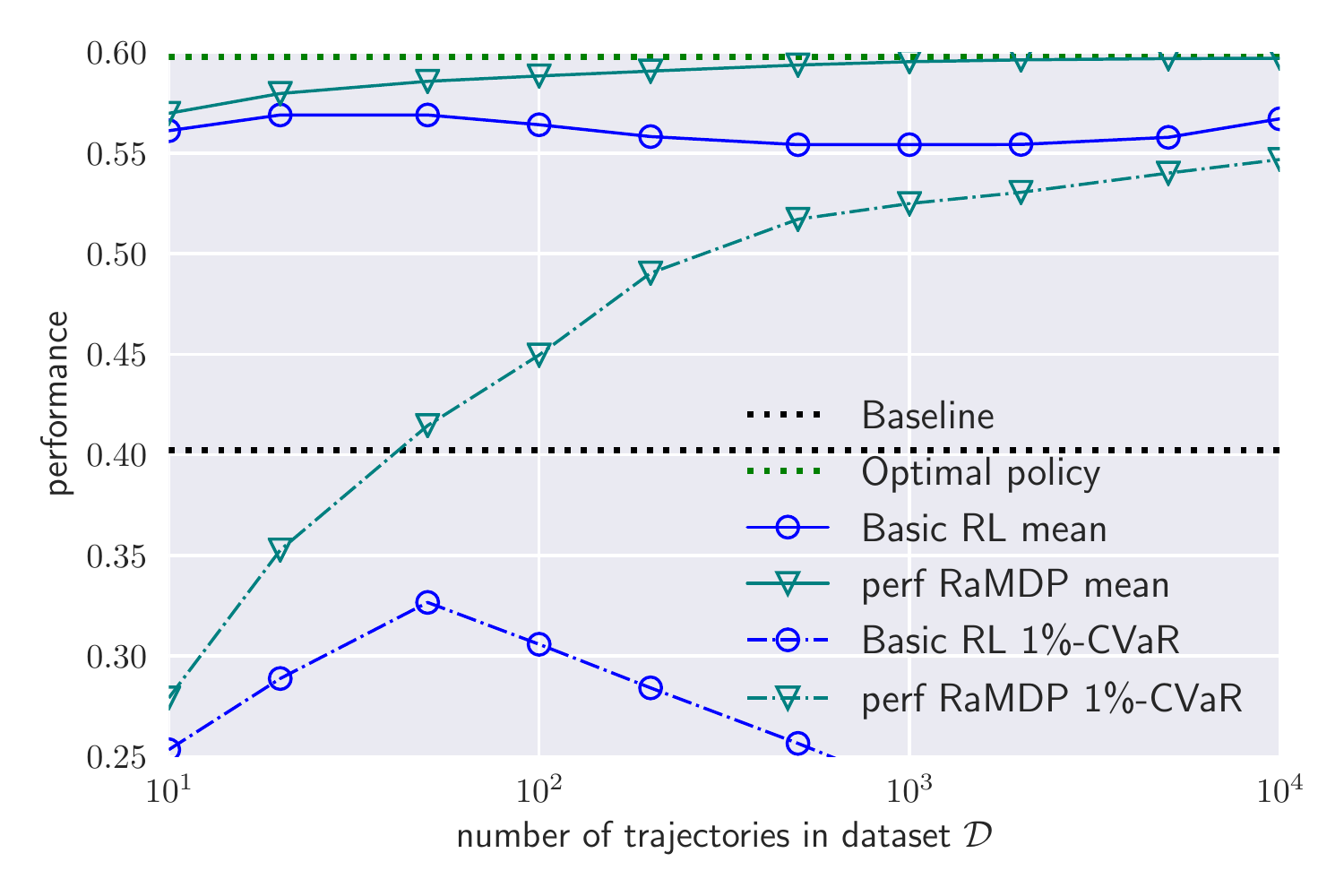}
			\label{fig:RaMDP_kappa=0.003}
		} \\
		\centering
		\subfloat[Mean performance RaMDP heatmap (Gridworld)]{
			\includegraphics[trim = 10pt 40pt 45pt 60pt, clip, width=0.5\textwidth]{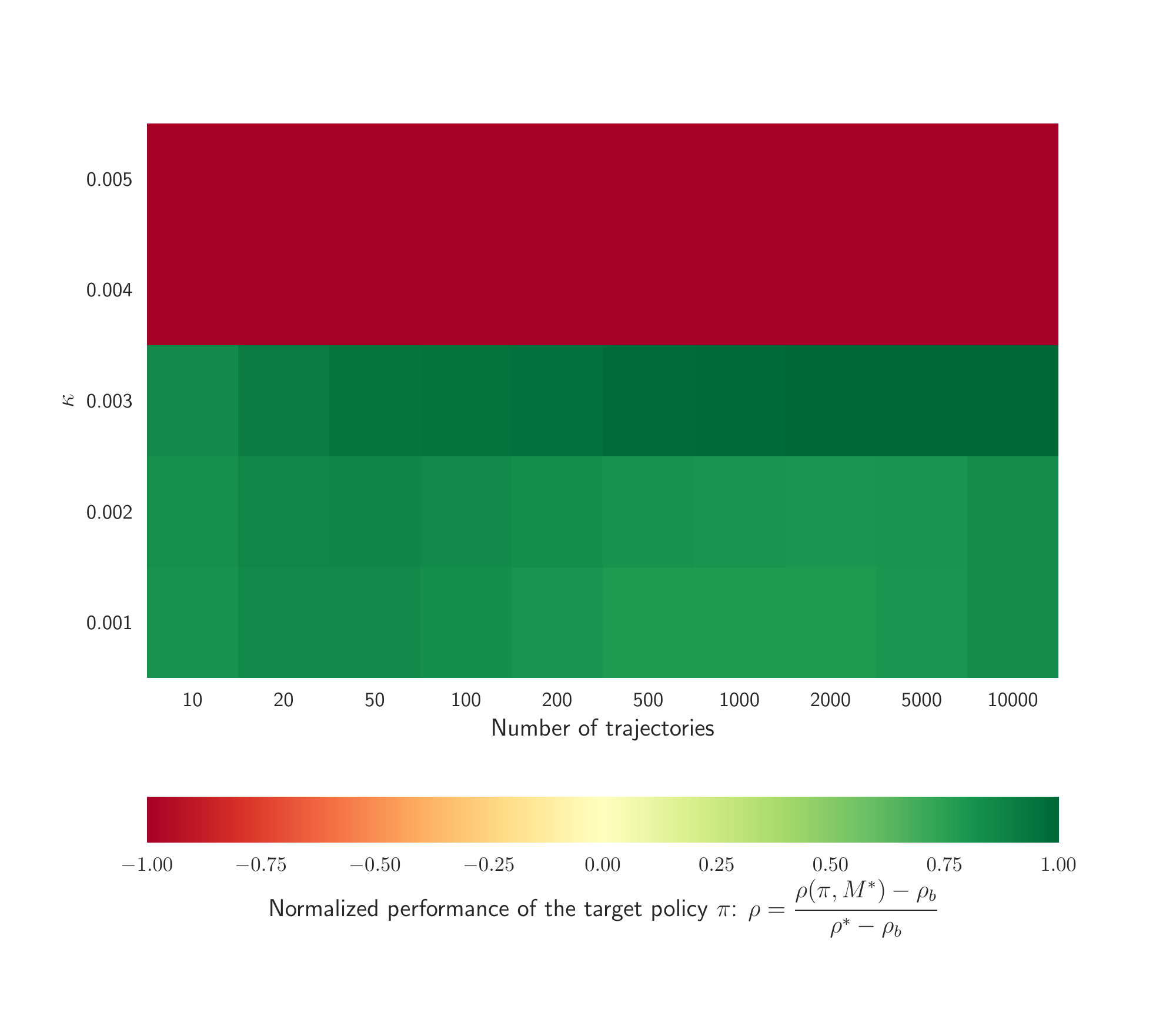}
			\label{fig:RaMDP_heatmap_mean}
		}
		\subfloat[1\%-CVaR performance RaMDP heatmap (Gridworld)]{
			\includegraphics[trim = 10pt 40pt 45pt 60pt, clip, width=0.5\textwidth]{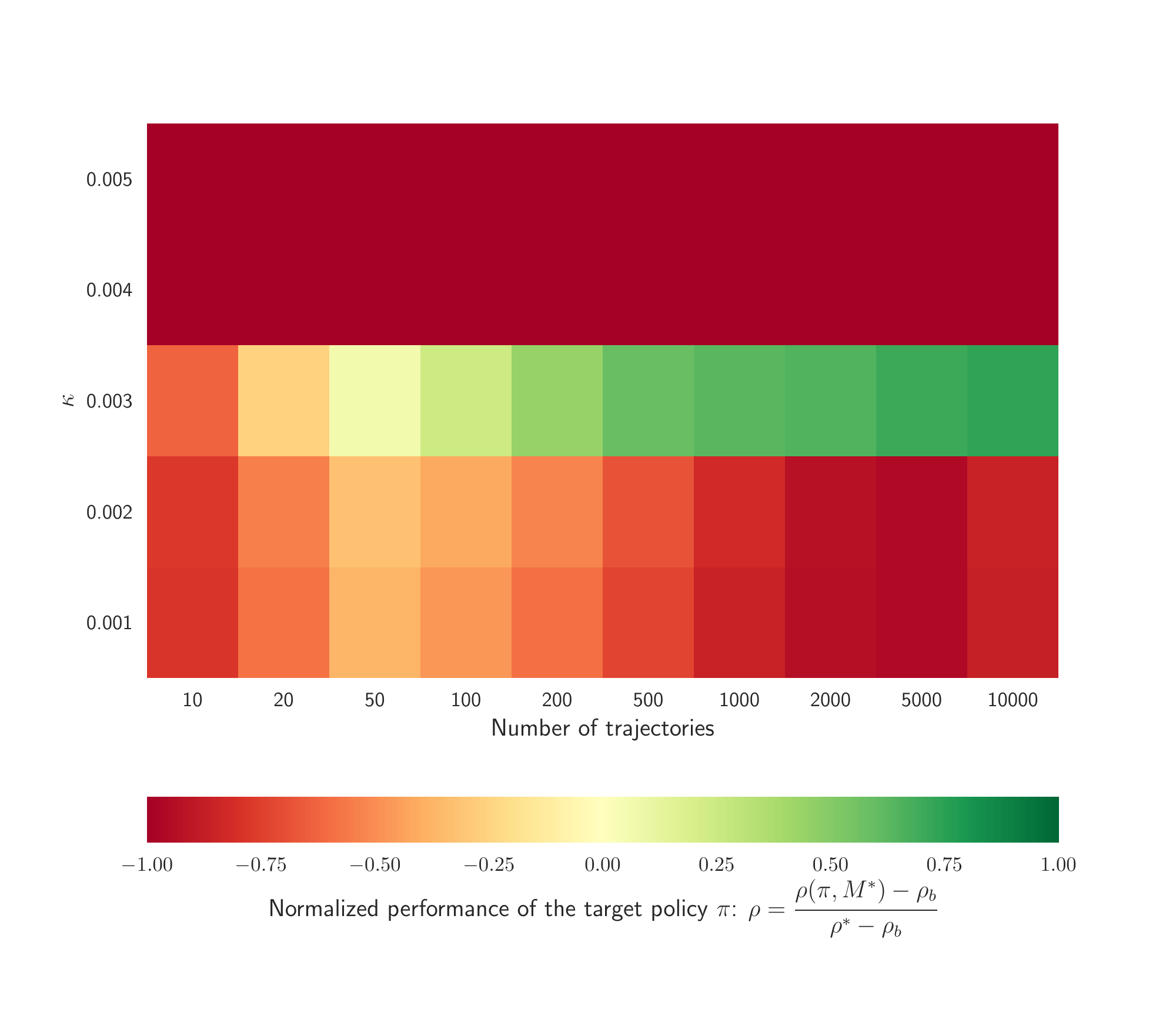}
			\label{fig:RaMDP_heatmap_percentile}
		}\\
		\centering
		\subfloat[1\%-CVaR RaMDP heatmap with $\kappa_{adj}=0.002$ (Random MDPs)]{
			\includegraphics[trim = 10pt 40pt 45pt 60pt, clip, width=0.5\textwidth]{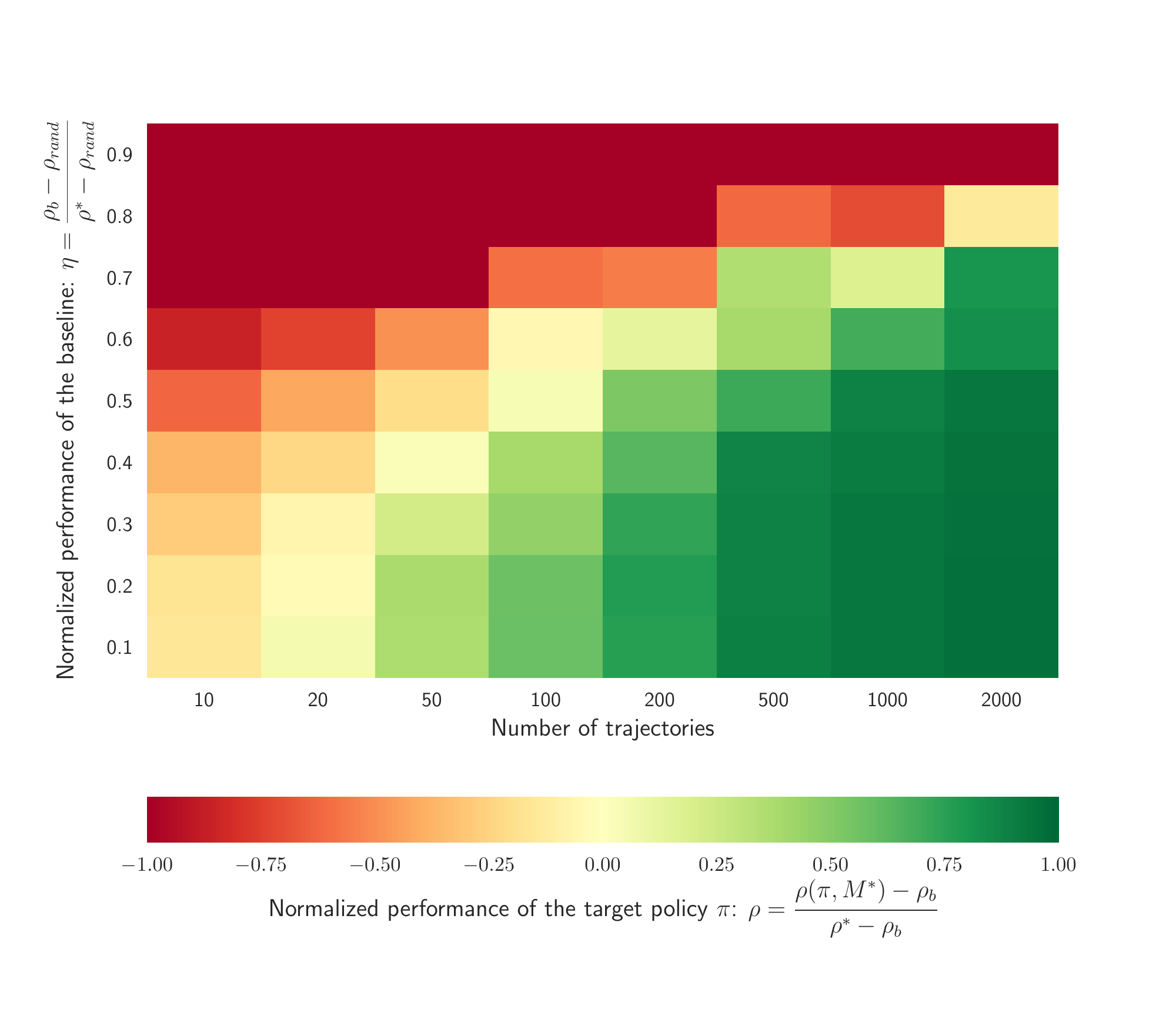}
			\label{fig:RaMDP_heatmap_percentile_randomMDP_0.002}
		}
		\subfloat[1\%-CVaR RaMDP heatmap with $\kappa_{adj}=0.003$ (Random MDPs)]{
			\includegraphics[trim = 10pt 40pt 45pt 60pt, clip, width=0.5\textwidth]{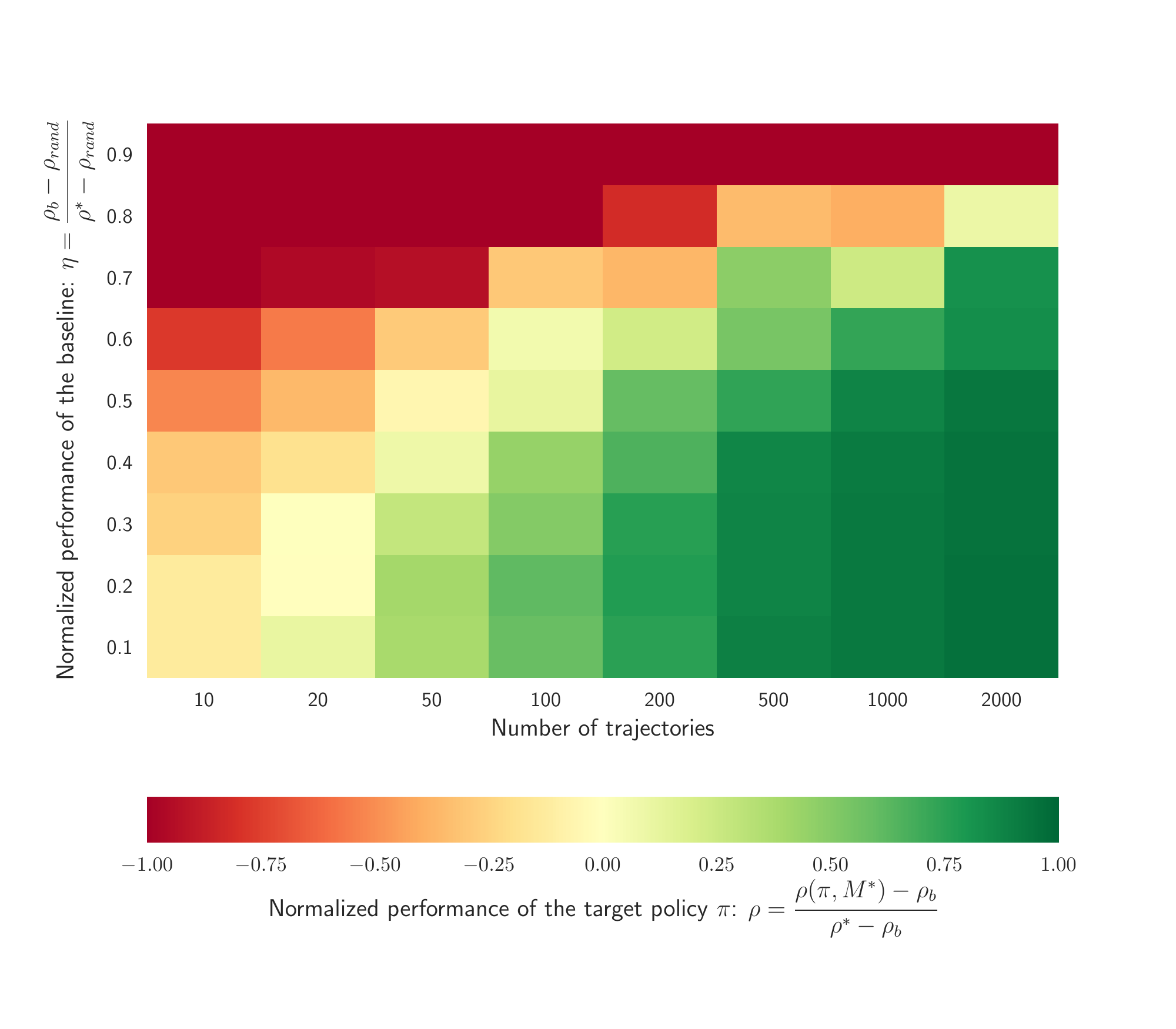}
			\label{fig:RaMDP_heatmap_percentile_randomMDP_0.003}
		}
		\caption{RaMDP hyper-parameter search results on the Gridworld and Random MDPs domains.}
		\label{fig:RaMDP}
	\end{figure*} 
	
	\subsubsection{Robust MDP}
	Robust MDP also relies on a confidence hyperparameter $\delta_{rob}$. We observe that the behaviour of the algorithm is not much dependent on $\delta_{rob} \in \left\{0.05,0.1,0.15,0.2,0.25,0.3,0.5,0.7,0.9,1\right\}$, and that it always fall back on the baseline when the dataset is under 50,000 trajectories. We observe also on Figure \subref*{fig:RobustMDP_delta=0.1} that, for the smaller datasets we do our benchmark on, independently from the safety set, the policy trained with the Robust MDP algorithm, which is the best policy in the worst-case MDP, is worse that the policy trained with Basic RL on mean and also on CVaR. 1\%-CVaR even falls down out of the figure. We interpret this as the fact that, in the Gridworld domain, there is a zone where all the states have been experienced a reasonable amount of time, and where the algorithm infers that the outcome is well known: 0 reward. Near the goal, on the contrary, there are some states where there is a risk to go because of the stochastic transitions that are largely unknown. This behaviour seem to reproduce also frequently on the Random MDPs domain. Figure \subref*{fig:RobustMDP_heatmap_mean} displays the mean performance for a large set of $\delta_{rob}$ values without the safety test. The figures of Robust MDP with the safety test are omitted because it always fails and therefore the algorithm always outputs the baseline. On the main document figures, we report the Robust MDP performance without safety test for $\delta_{rob}=0.1$.

	\subsubsection{Reward-adjusted MDP}
	The theory developed in \citet{ghavamzadeh2016safe} states that the reward should be adjusted as follows:
	\begin{align}
	    \widetilde{R}(x,a) \leftarrow R^*(x,a) - \cfrac{\gamma R_{max}}{1-\gamma} e(x,a),
	\end{align}
	where $R^*(x,a)$ is the true reward function, that they assume to be known, and $e(x,a)$ is the error function on the dynamics, with bounded with concentration bounds as in our Proposition \ref{prop:eps_pib}:
	\begin{align}
	    e(x,a) \leq \sqrt{\cfrac{2}{N_\mathcal{D}(x,a)}\log\cfrac{2|\mathcal{X}||\mathcal{A}|2^{|\mathcal{X}|}}{\delta_{adj}}}
	\end{align}
	
	Also, we do not assume that $R^*(x,a)$ is known in our experiments and there is consequently a $\gamma$ factor disappearing. We obtain:
	\begin{align}
	    \widetilde{R}(x,a) &\leftarrow \widehat{R}(x,a) - \cfrac{ R_{max}}{1-\gamma} \sqrt{\cfrac{2}{N_\mathcal{D}(x,a)}\log\cfrac{2|\mathcal{X}||\mathcal{A}|2^{|\mathcal{X}|}}{\delta_{adj}}} \\
	    &\leftarrow \widehat{R}(x,a) - \cfrac{100}{\sqrt{N_\mathcal{D}(x,a)}},
	\end{align}
	with our domain parameters and the choice of $\delta_{adj}=0.1$. Instead, we consider the following hyper-parametrization:
	\begin{align}
	    \widetilde{R}(x,a) \leftarrow \widehat{R}(x,a) - \cfrac{\kappa_{adj}}{\sqrt{N_\mathcal{D}(x,a)}}.
	\end{align}
	
	 We perform a hyper-parameter seach for:
	 $$\kappa_{adj} \in \left\{0.0001,0.0002,0.0005,0.001,0.002,0.003,0.004,0.005,0.01,0.02,0.05,0.1,0.2,0.5,1,2,5,10,20,50,100\right\}.$$
	 
	 Figures \subref*{fig:RaMDP_heatmap_mean} and \subref*{fig:RaMDP_heatmap_percentile} respectively show the mean and 1\%-CVaR performance of RaMDP for $\kappa_{adj} \in \left\{0.001,0.002,0.003,0.004,0.005\right\}$. They reveal that for $\kappa_{adj}\geq 0.004$, RaMDP is overly frightened to go near the goal in the same way as with Robust MDP; and that for $\kappa_{adj}\leq 0.002$, RaMDP just ignores the penalty and yields results very close to the Basic RL's (see Figure \subref*{fig:RaMDP_kappa=0.002}). In the middle, there is a tight spot ($\kappa_{adj}=0.003$) where it works quite well on the Gridworld domain as may be seen on Figure \subref*{fig:RaMDP_kappa=0.003}, even though it is not safe for very small datasets. It has to be noted also that, in theory, RaMDP uses a safety test, which fails everytime similarly to that of Robust MDP. In addition to the sensitivity to the $\kappa_{adj}$ parameter, on the Random MDPs benchmark, the unsafety of RaMDP is much more obvious (see Figures \subref*{fig:RaMDP_heatmap_percentile_randomMDP_0.002} and \subref*{fig:RaMDP_heatmap_percentile_randomMDP_0.003}), which tends us to think that the Gridworld domain is favorable to RaMDP. On the main document figures, we report the RaMDP performance without safety test for $\kappa_{adj}=0.003$.

	\newpage
	\section{Extensive Empirical Results on Finite MDPs}	
	\label{sup:figs}
	\subsection{Gridworld additional results}
	\label{sup:maze_full_results}
	\begin{figure*}[ht!]
		\vspace{-10pt}
		\centering
		\subfloat[10\%-CVaR: benchmark with $N_\wedge=5$.]{
			\includegraphics[trim = 5pt 5pt 5pt 5pt, clip, width=0.33\textwidth]{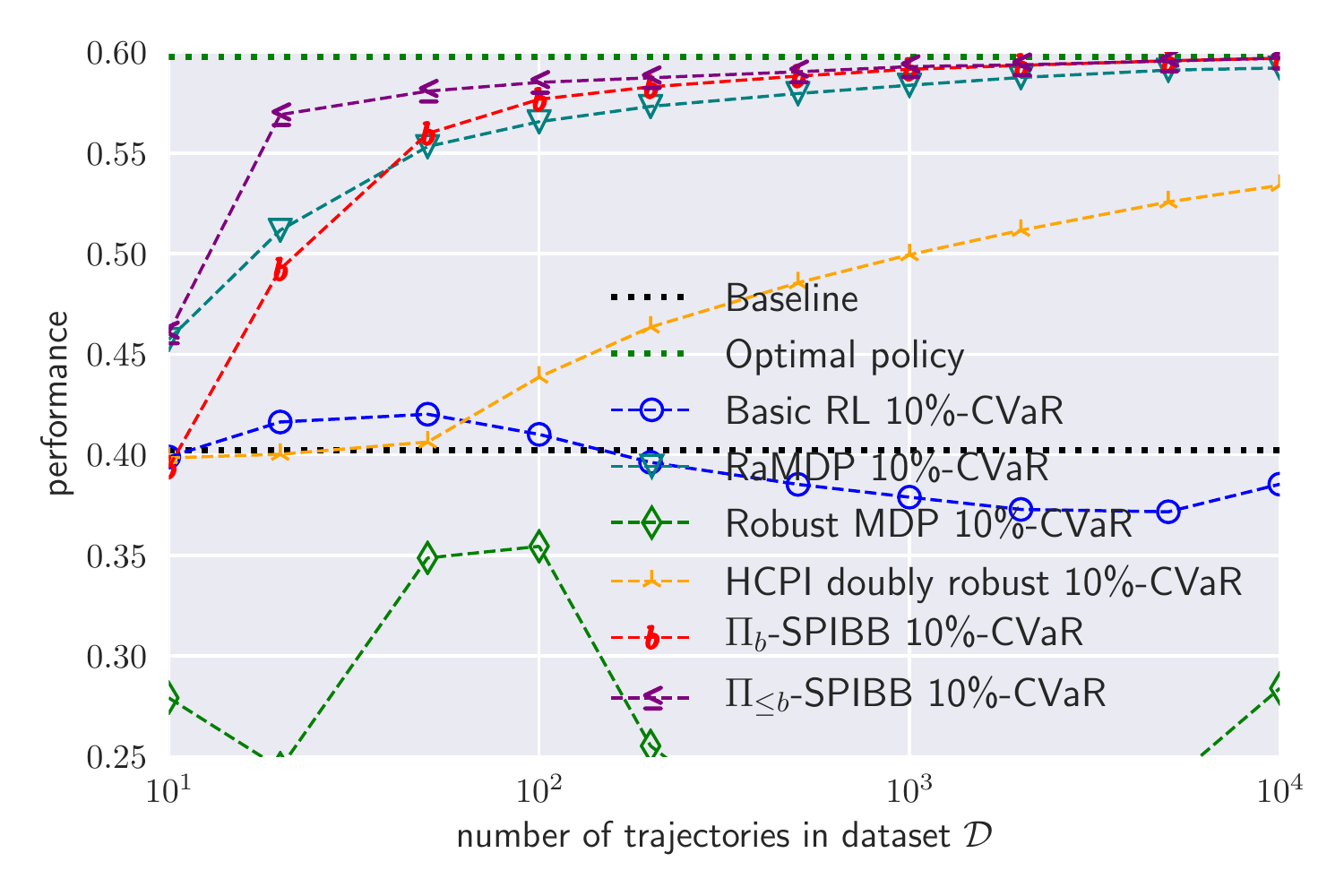}
			\label{fig:heat_mean_pib_5}
		}
		\subfloat[0.1\%-CVaR: benchmark with $N_\wedge=5$.]{
			\includegraphics[trim = 5pt 5pt 5pt 5pt, clip, width=0.33\textwidth]{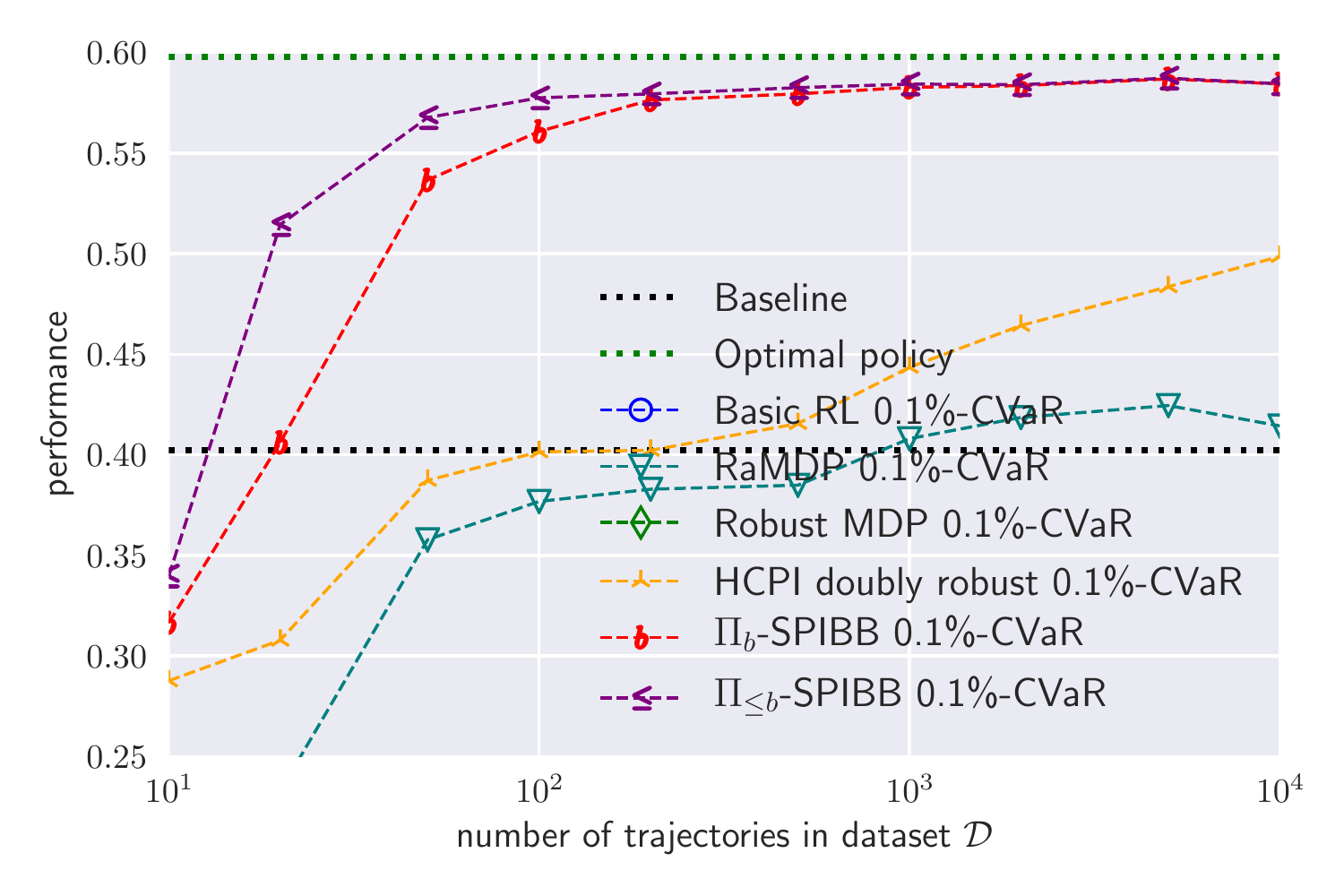}
			\label{fig:heat_mean_pileqb_5}
		}
		\subfloat[Mean \& 1\%-CVaR: SPIBB w. $N_\wedge=5$.]{
			\includegraphics[trim = 5pt 5pt 5pt 5pt, clip, width=0.33\textwidth]{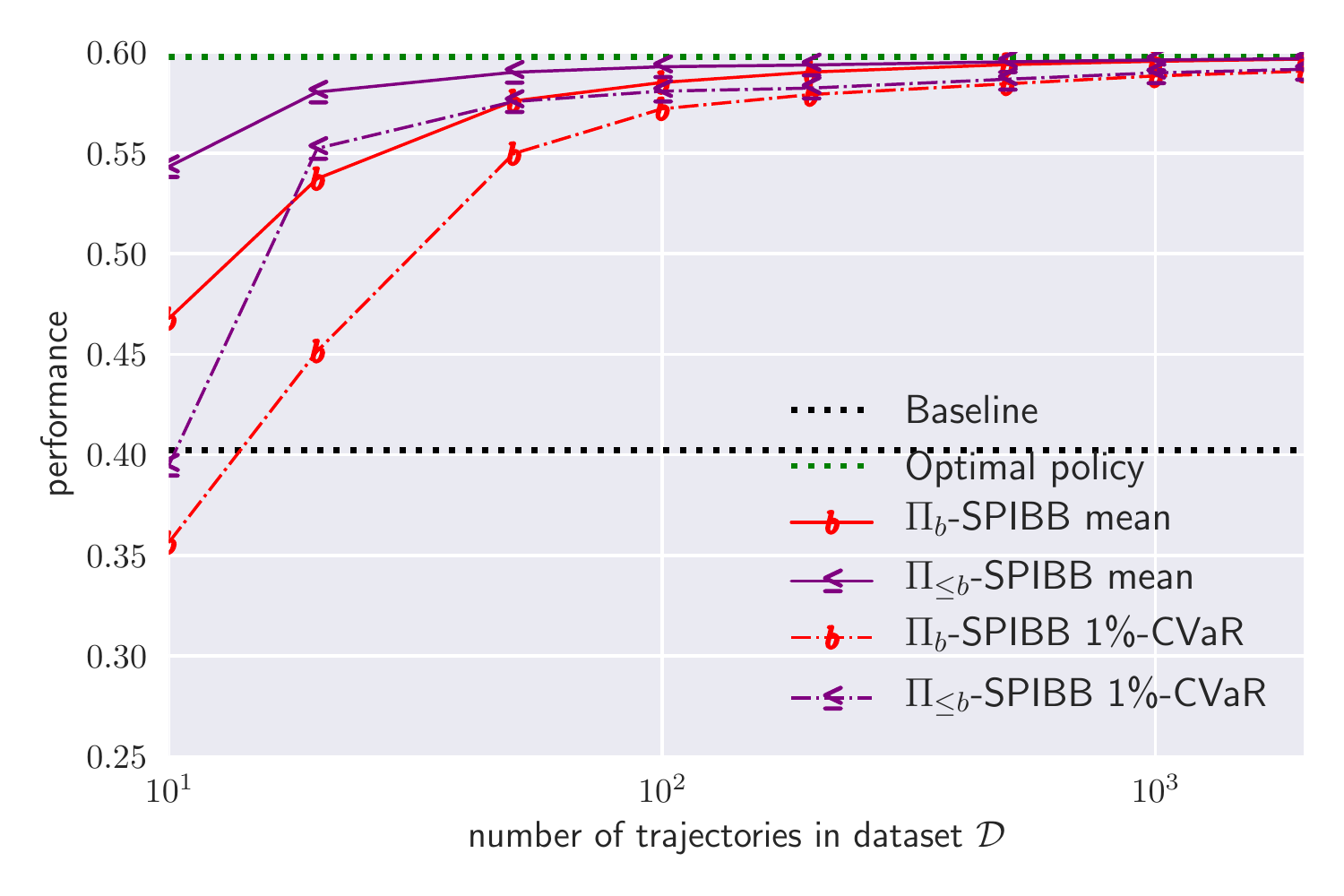}
			\label{fig:heat_mean_basic}
		} \\
		\centering
		\subfloat[Mean \& 1\%-CVaR: SPIBB w. $N_\wedge=10$.]{
			\includegraphics[trim = 5pt 5pt 5pt 5pt, clip, width=0.33\textwidth]{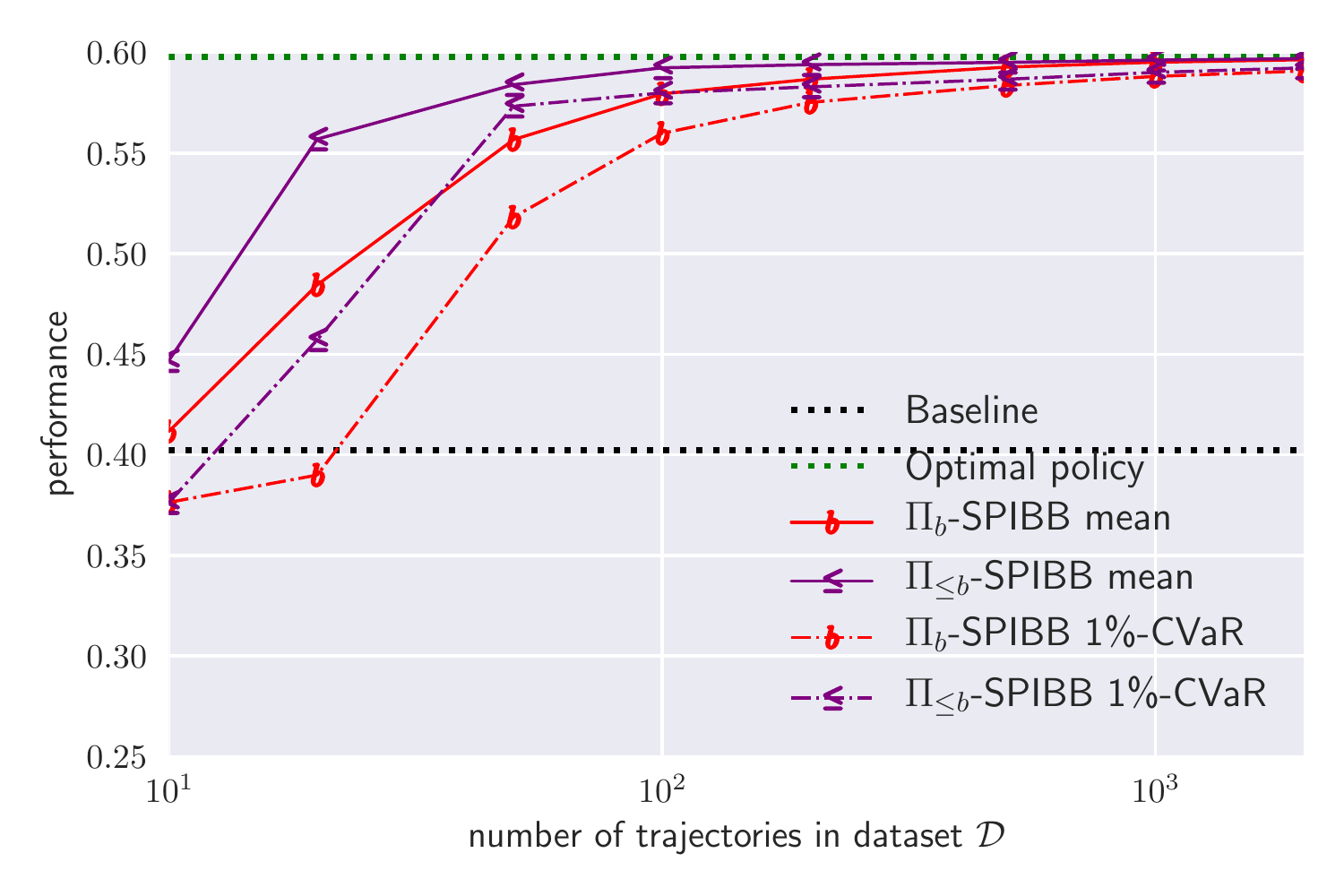}
			\label{fig:heat_mean_pib_10}
		}
		\subfloat[Mean \& 1\%-CVaR: SPIBB w. $N_\wedge=50$.]{
			\includegraphics[trim = 5pt 5pt 5pt 5pt, clip, width=0.33\textwidth]{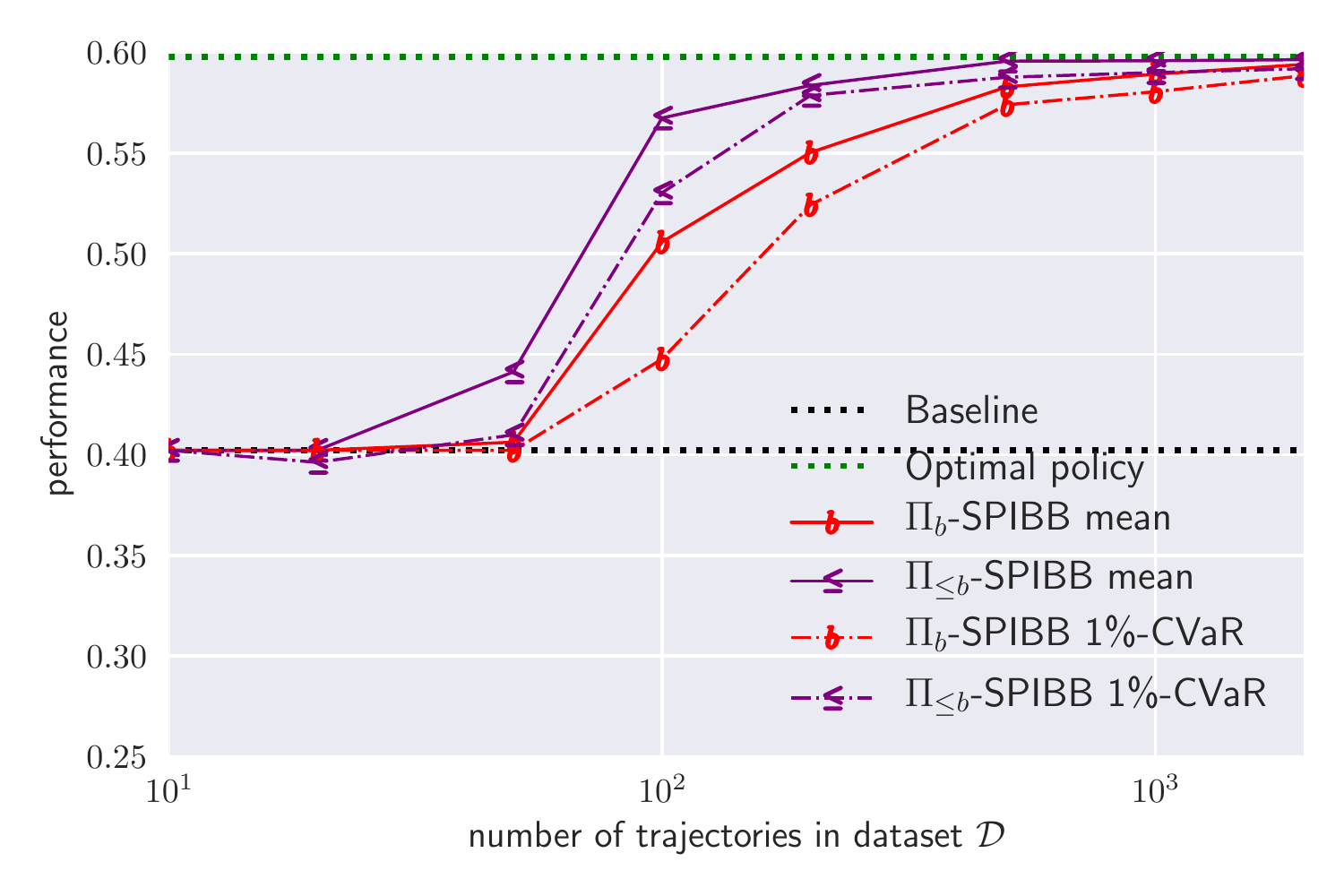}
			\label{fig:heat_mean_pib_20}
		}
		\subfloat[Mean \& 1\%-CVaR: SPIBB w. $N_\wedge=100$.]{
			\includegraphics[trim = 5pt 5pt 5pt 5pt, clip, width=0.33\textwidth]{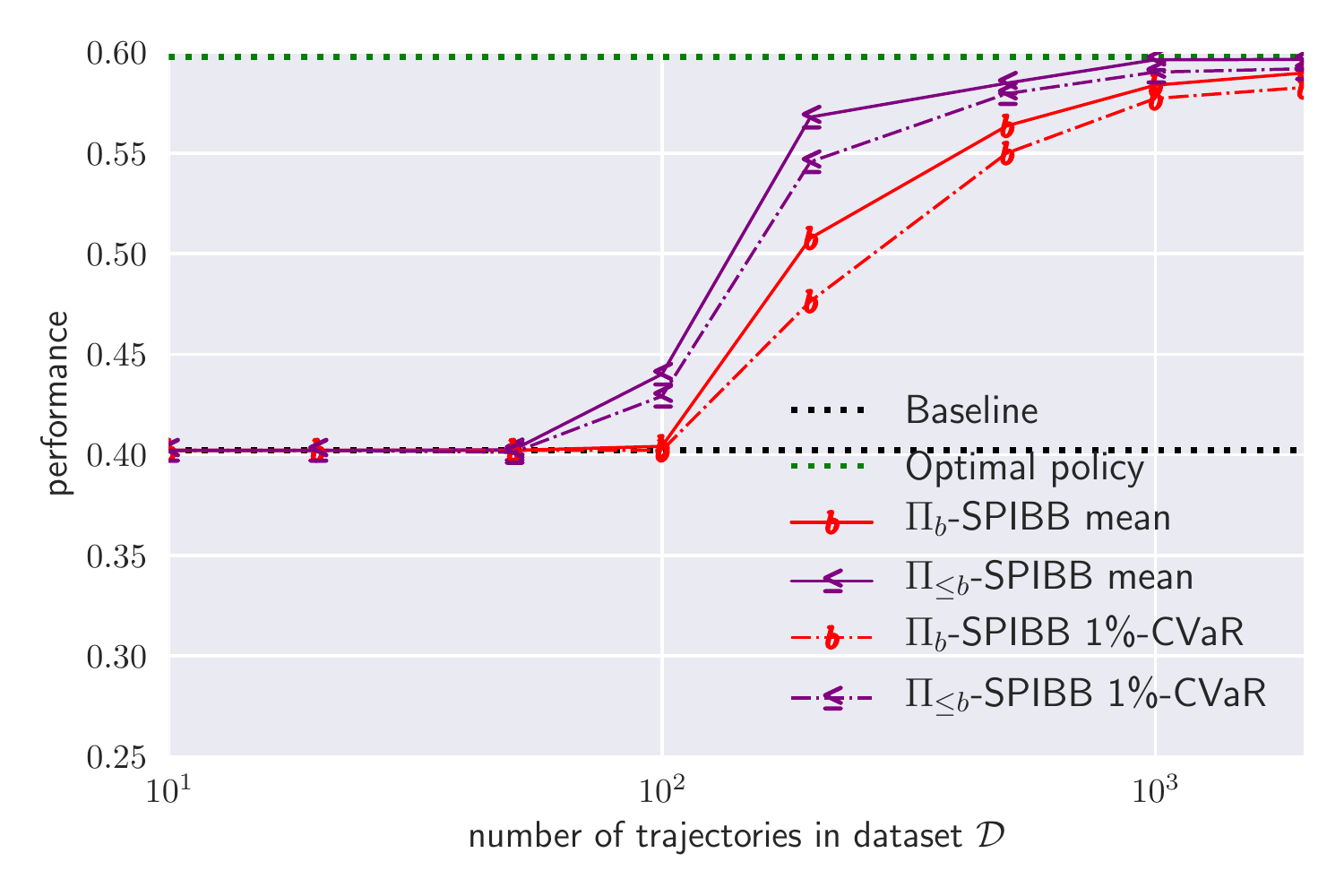}
			\label{fig:heat_mean_pileqb_20}
		} 
		\caption{Gridworld experiment: Figures (a-b) respectively show the benchmark for the 10\%-CVaR and 0.1\%-CVaR performances. Figures (c-f) display additional curves for other $N_\wedge$ values: respectively 5, 10, 50, 100. }
		\label{fig:maze_sup}
		\vspace{-10pt}
	\end{figure*}
	
	\subsection{Gridworld full results with random behavioural policy}
	\label{sup:random_baseline_full_results}
	\begin{figure*}[ht!]
		\vspace{-10pt}
		\centering
		\subfloat[Mean: benchmark with $N_\wedge=20$.]{
			\includegraphics[trim = 5pt 5pt 5pt 5pt, clip, width=0.33\textwidth]{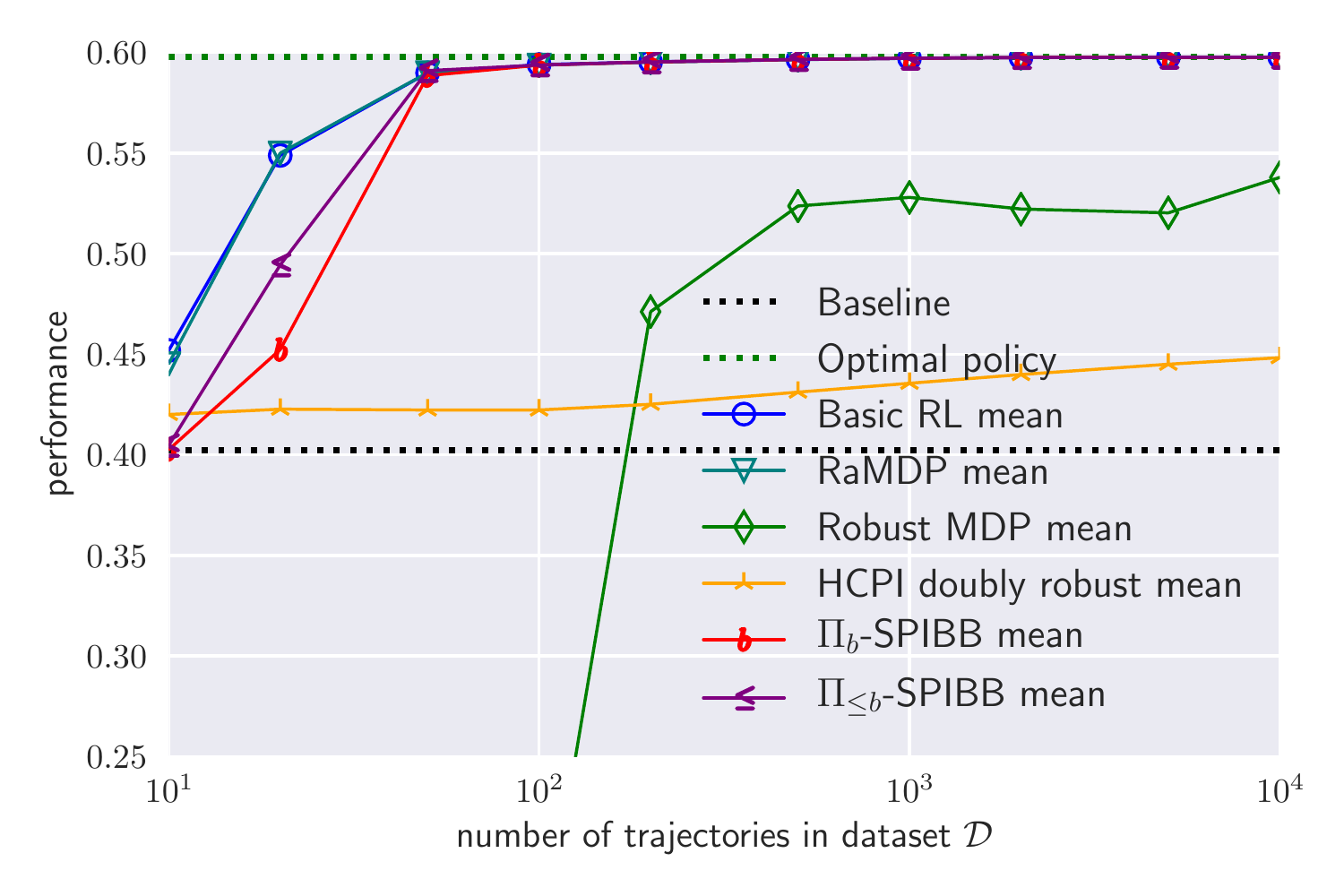}
			\label{fig:maze_rand_baseline_mean_benchmark}
		}
		\subfloat[10\%-CVaR: benchmark with $N_\wedge=20$.]{
			\includegraphics[trim = 5pt 5pt 5pt 5pt, clip, width=0.33\textwidth]{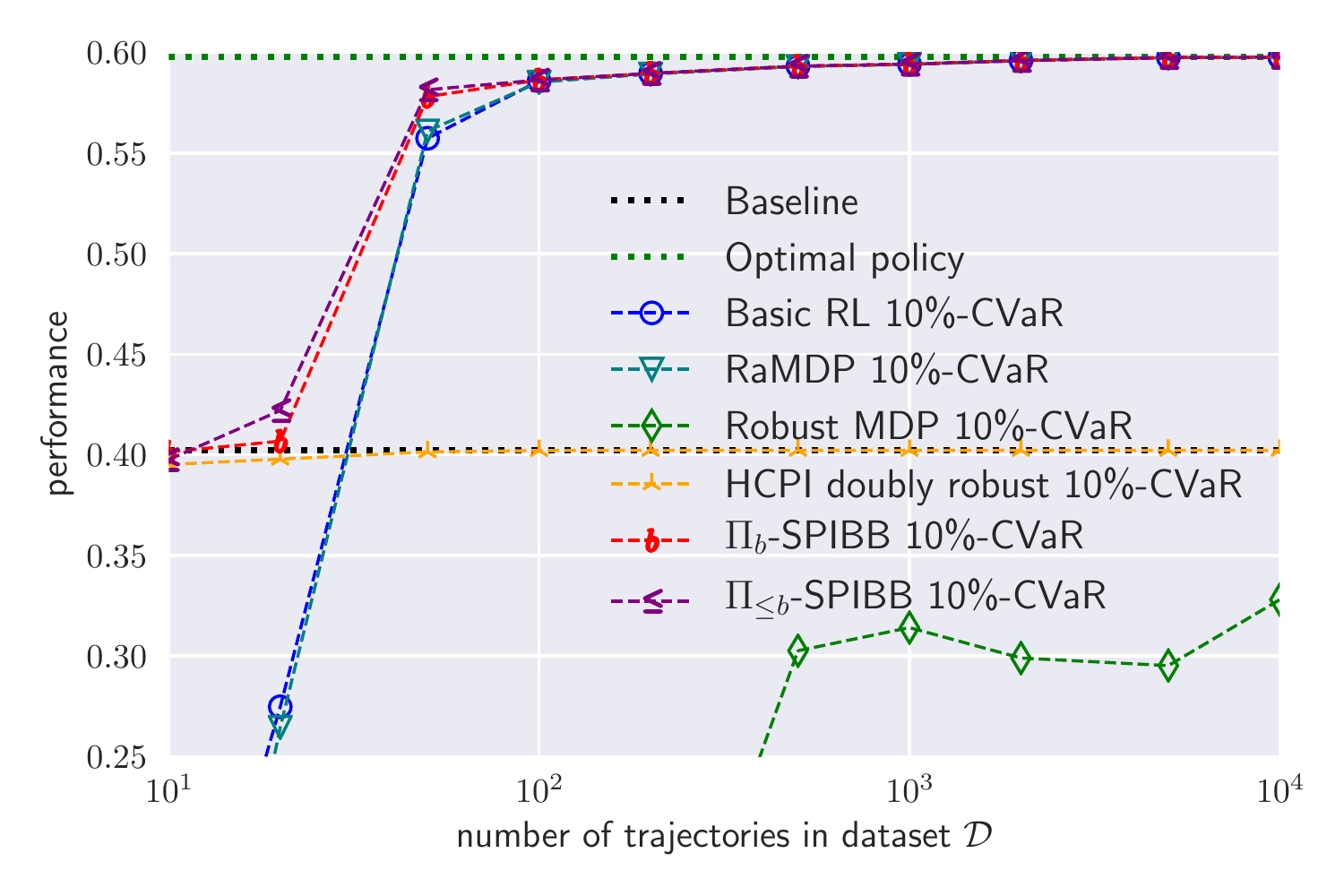}
			\label{fig:maze_rand_baseline_10CVaR_benchmark}
		}
		\subfloat[0.1\%-CVaR: benchmark with $N_\wedge=20$.]{
			\includegraphics[trim = 5pt 5pt 5pt 5pt, clip, width=0.33\textwidth]{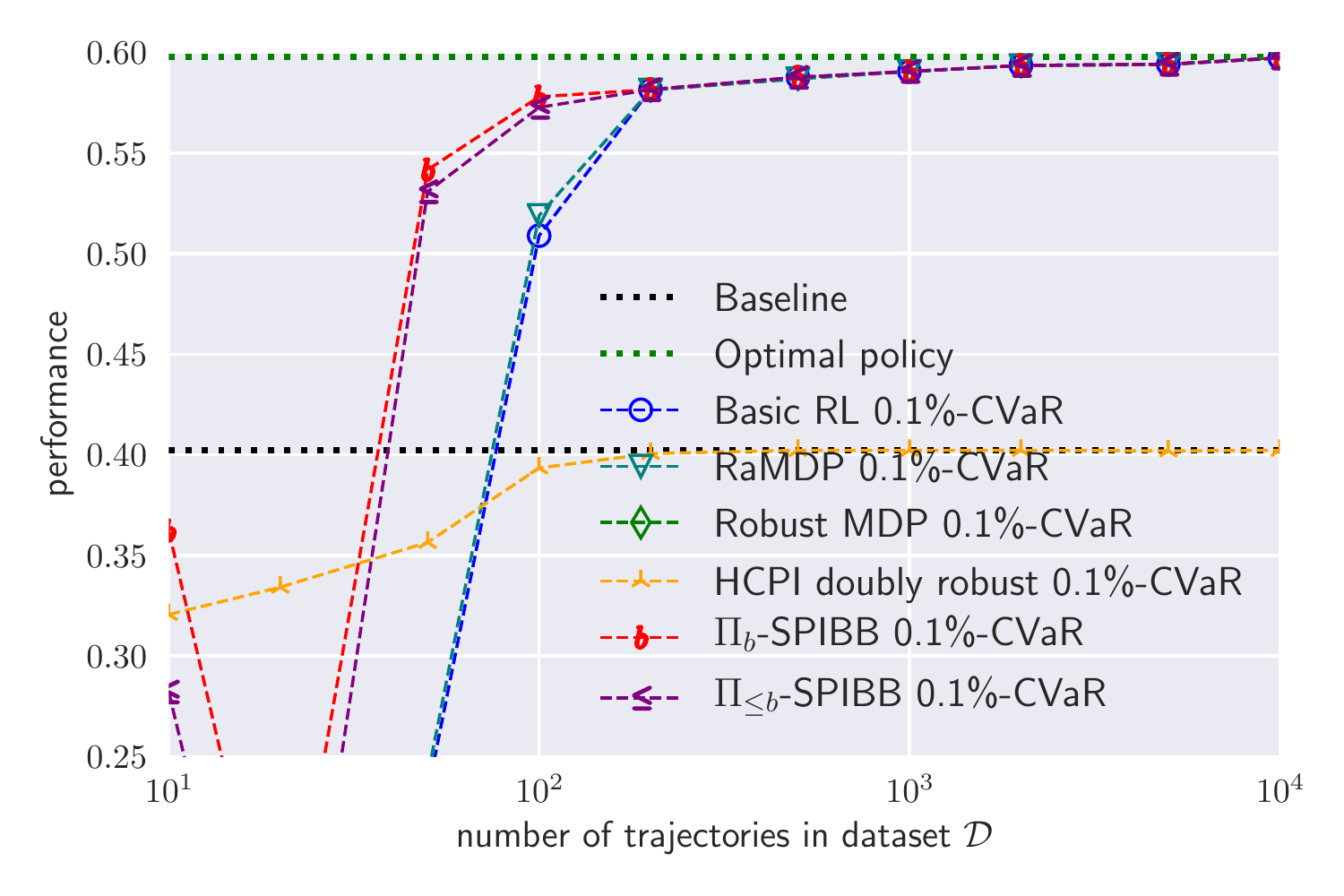}
			\label{fig:maze_rand_baseline_0.1CVaR_benchmark}
		}\\
		\centering
		\subfloat[Mean \& 1\%-CVaR: SPIBB w. $N_\wedge=5$.]{
			\includegraphics[trim = 5pt 5pt 5pt 5pt, clip, width=0.33\textwidth]{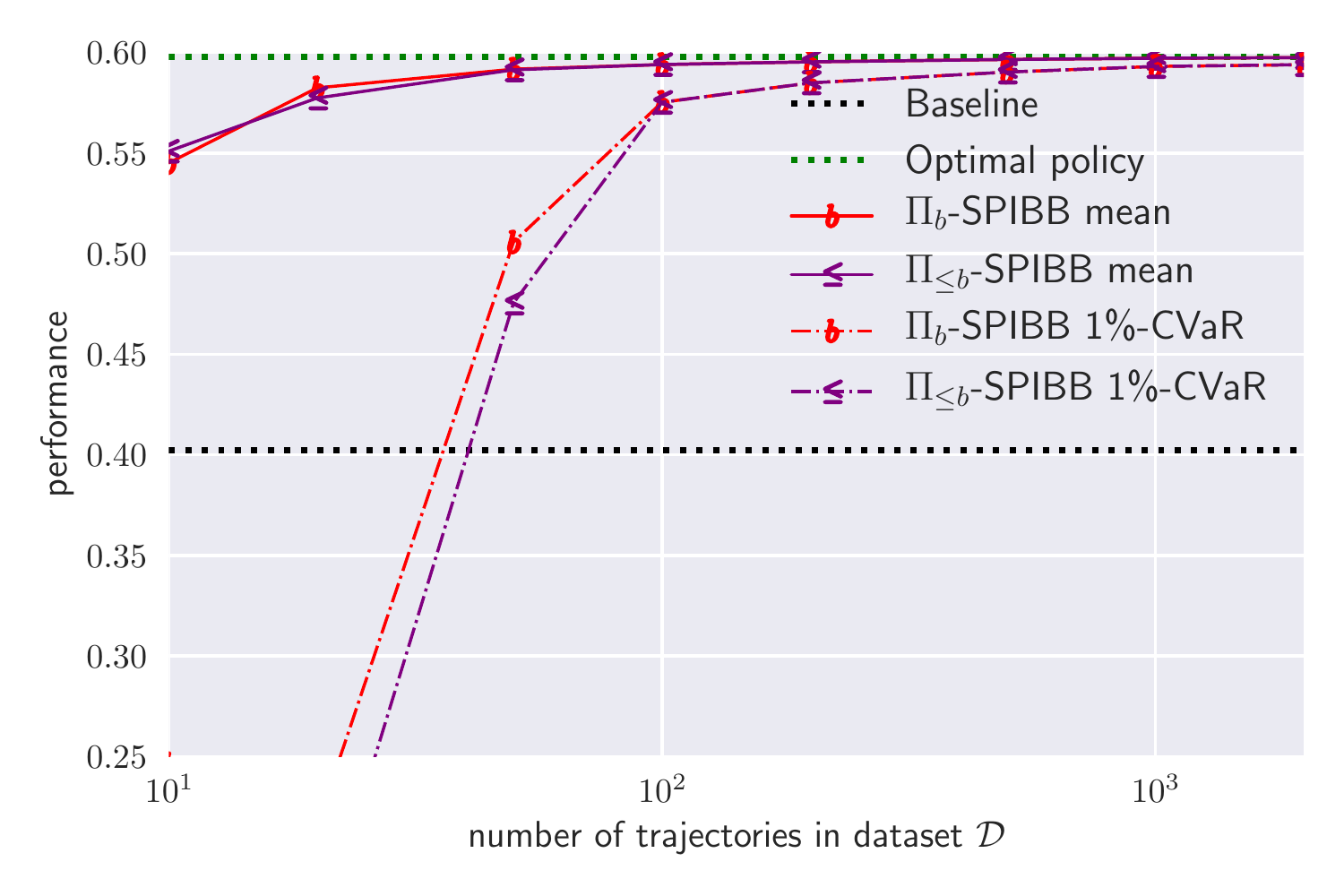}
			\label{fig:maze_rand_baseline_N=5}
		}
		\subfloat[Mean \& 1\%-CVaR: SPIBB w. $N_\wedge=10$.]{
			\includegraphics[trim = 5pt 5pt 5pt 5pt, clip, width=0.33\textwidth]{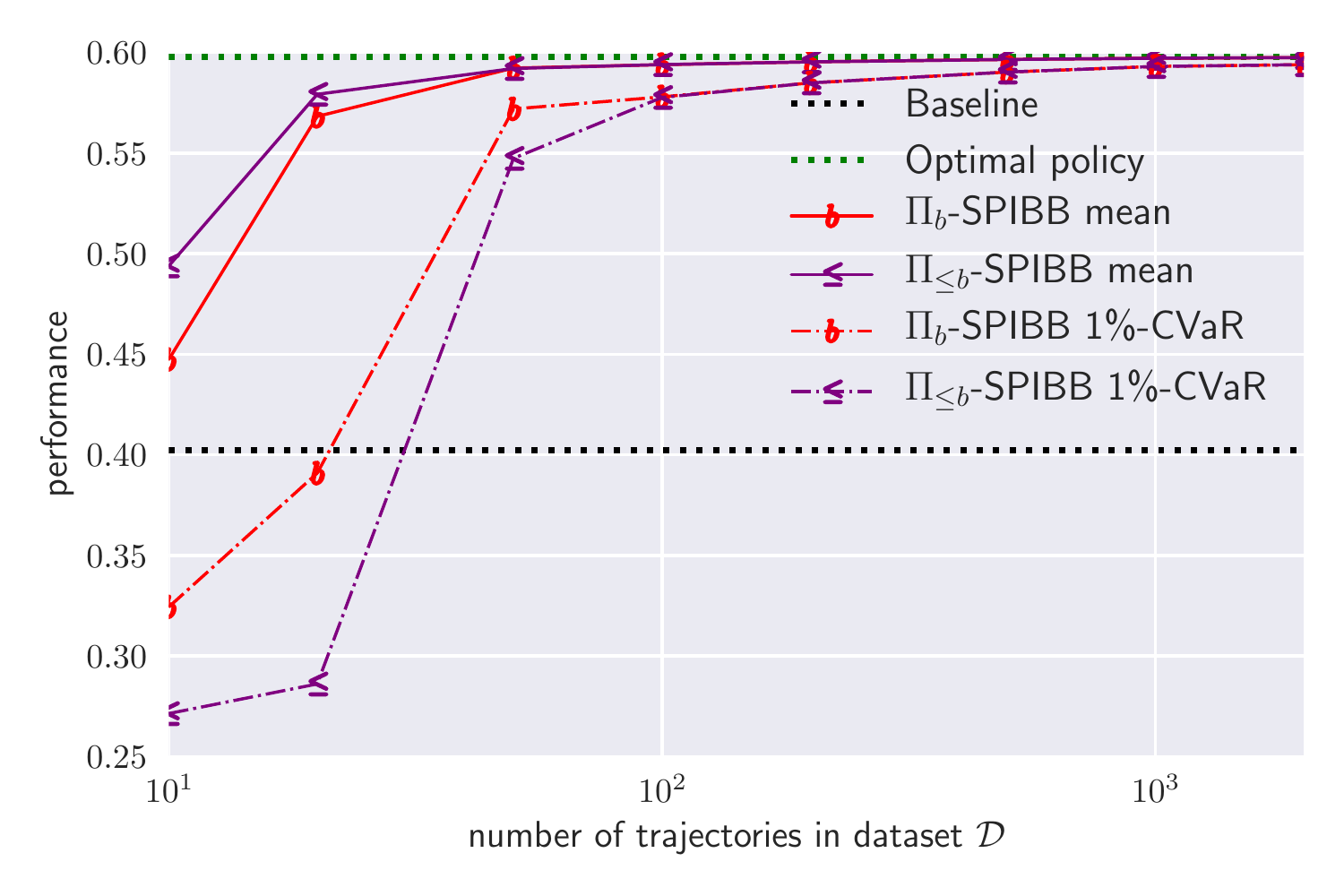}
			\label{fig:maze_rand_baseline_N=10}
		}
		\subfloat[Mean \& 1\%-CVaR: SPIBB w. $N_\wedge=50$.]{
			\includegraphics[trim = 5pt 5pt 5pt 5pt, clip, width=0.33\textwidth]{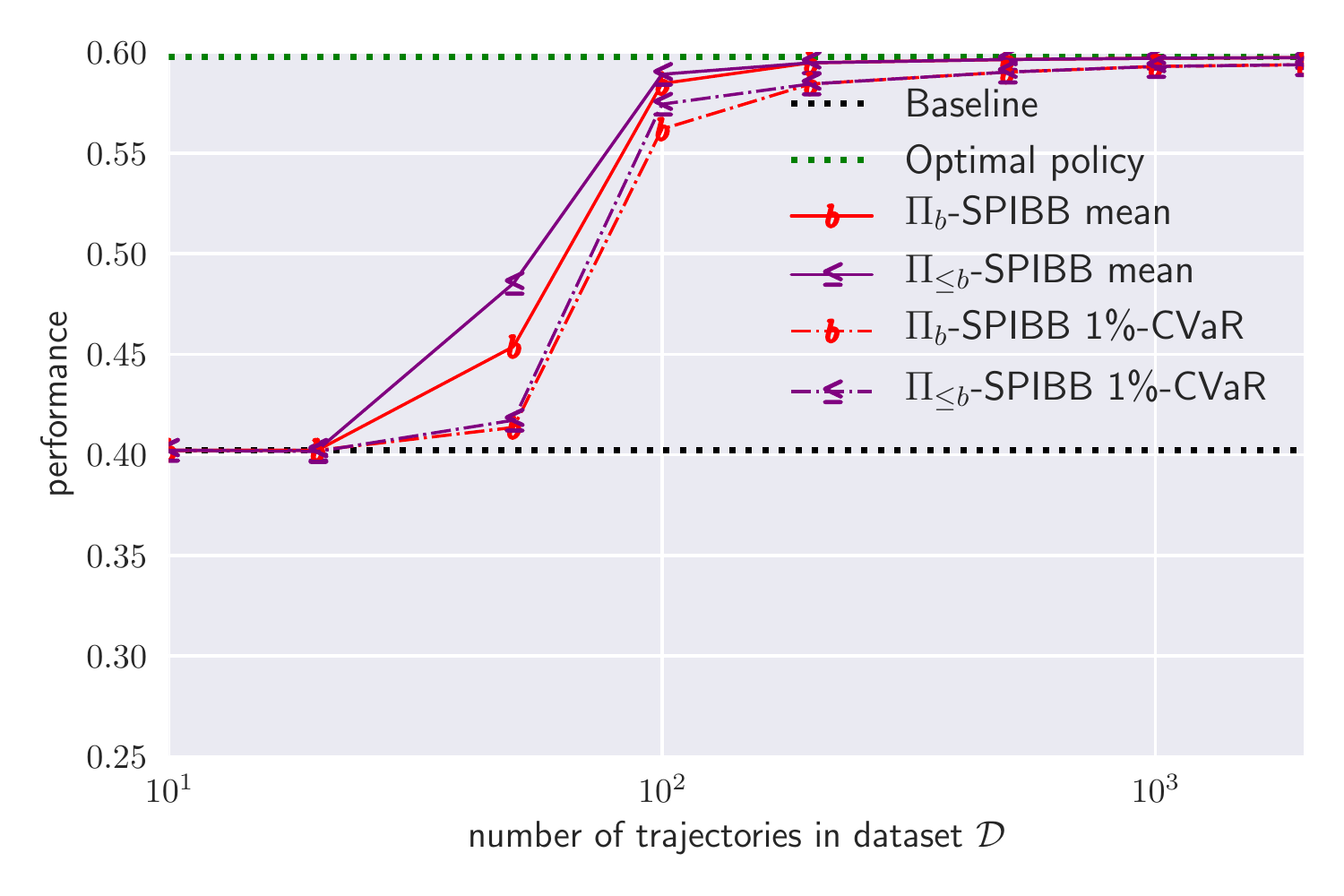}
			\label{fig:maze_rand_baseline_N=50}
		} 
		\caption{Gridworld experiment with random behavioural policy: Figures (a-c) respectively show the benchmark for the mean, 10\%-CVaR and 0.1\%-CVaR performances. Figures (d-f) display additional curves for other $N_\wedge$ values: respectively 5, 10, 50. }
		\label{fig:maze_rand_baseline_sup}
		\vspace{-10pt}
	\end{figure*}
	
	\subsection{Full Random MDPs experiment results}
	\label{sup:random_MDPs_full_results}
	\begin{figure*}[ht!]
		\centering
		\subfloat[1\%-CVaR: Basic RL.]{
			\includegraphics[trim = 10pt 40pt 50pt 60pt, clip, width=0.33\textwidth]{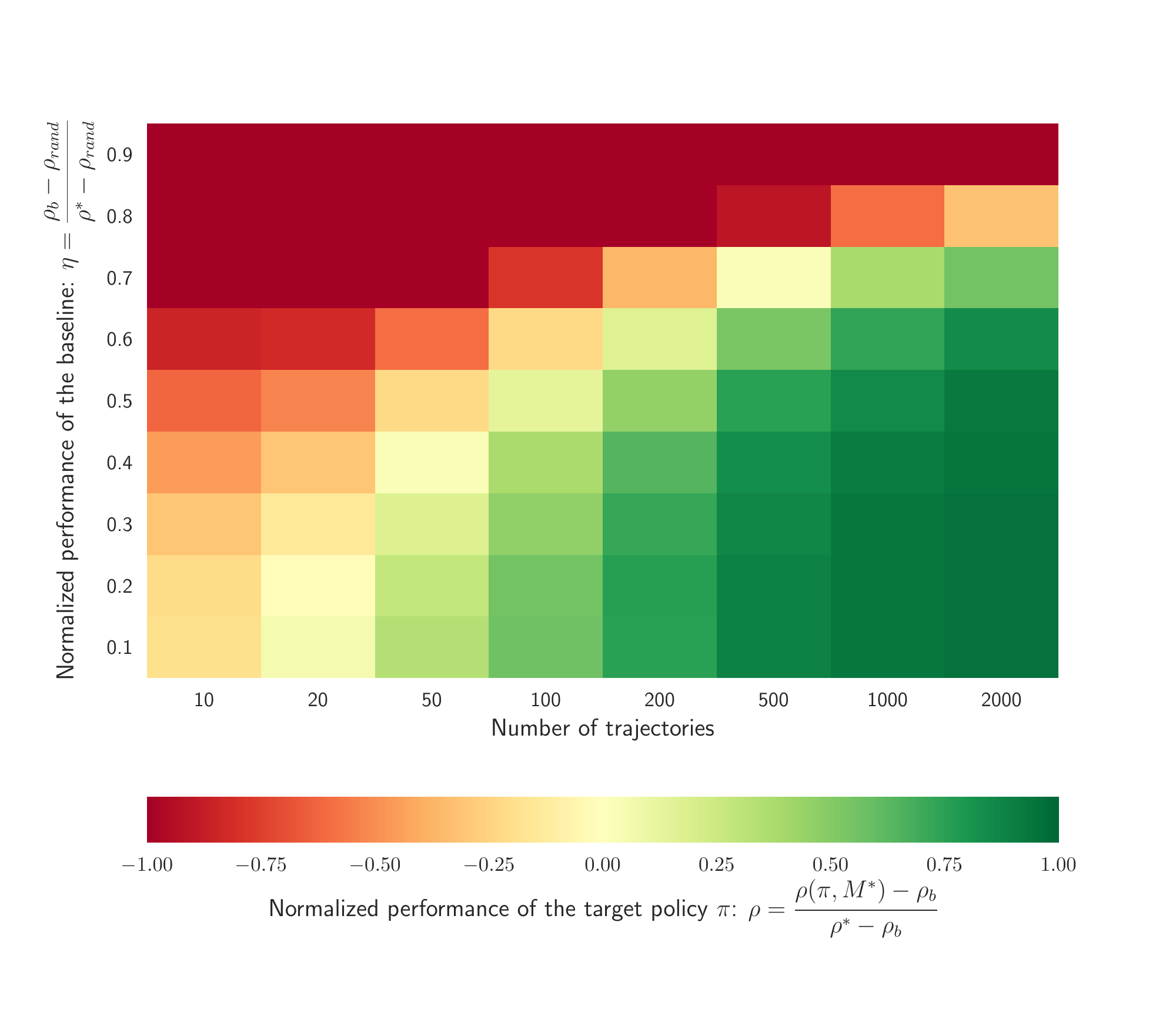}
			\label{fig:random_MDPs_heat_basic}
		}
		\subfloat[1\%-CVaR: HCPI doubly robust.]{
			\includegraphics[trim = 10pt 40pt 50pt 60pt, clip, width=0.33\textwidth]{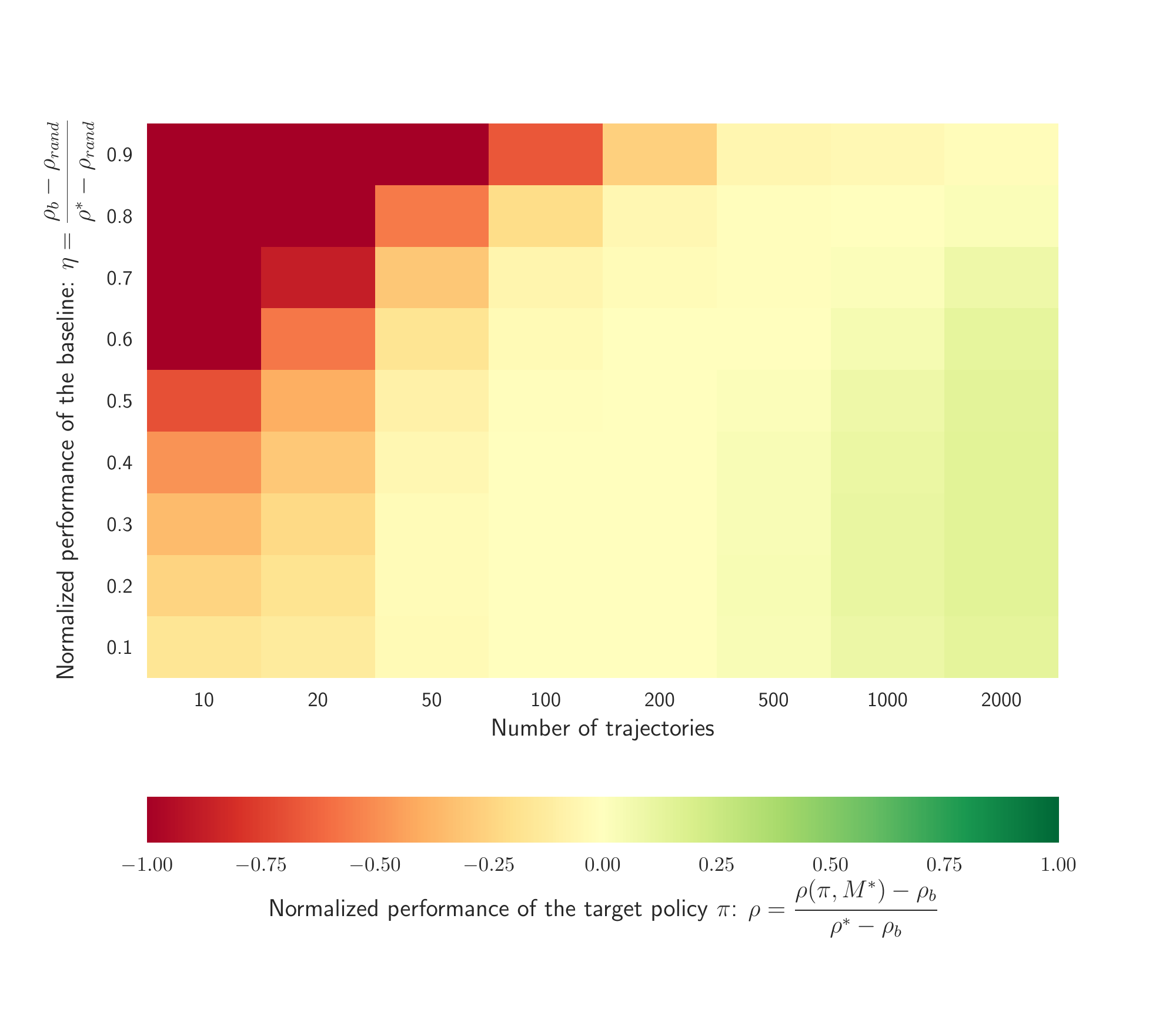}
			\label{fig:random_MDPs_heat_HCPI}
		}
		\subfloat[1\%-CVaR: Robust MDP.]{
			\includegraphics[trim = 10pt 40pt 50pt 60pt, clip, width=0.33\textwidth]{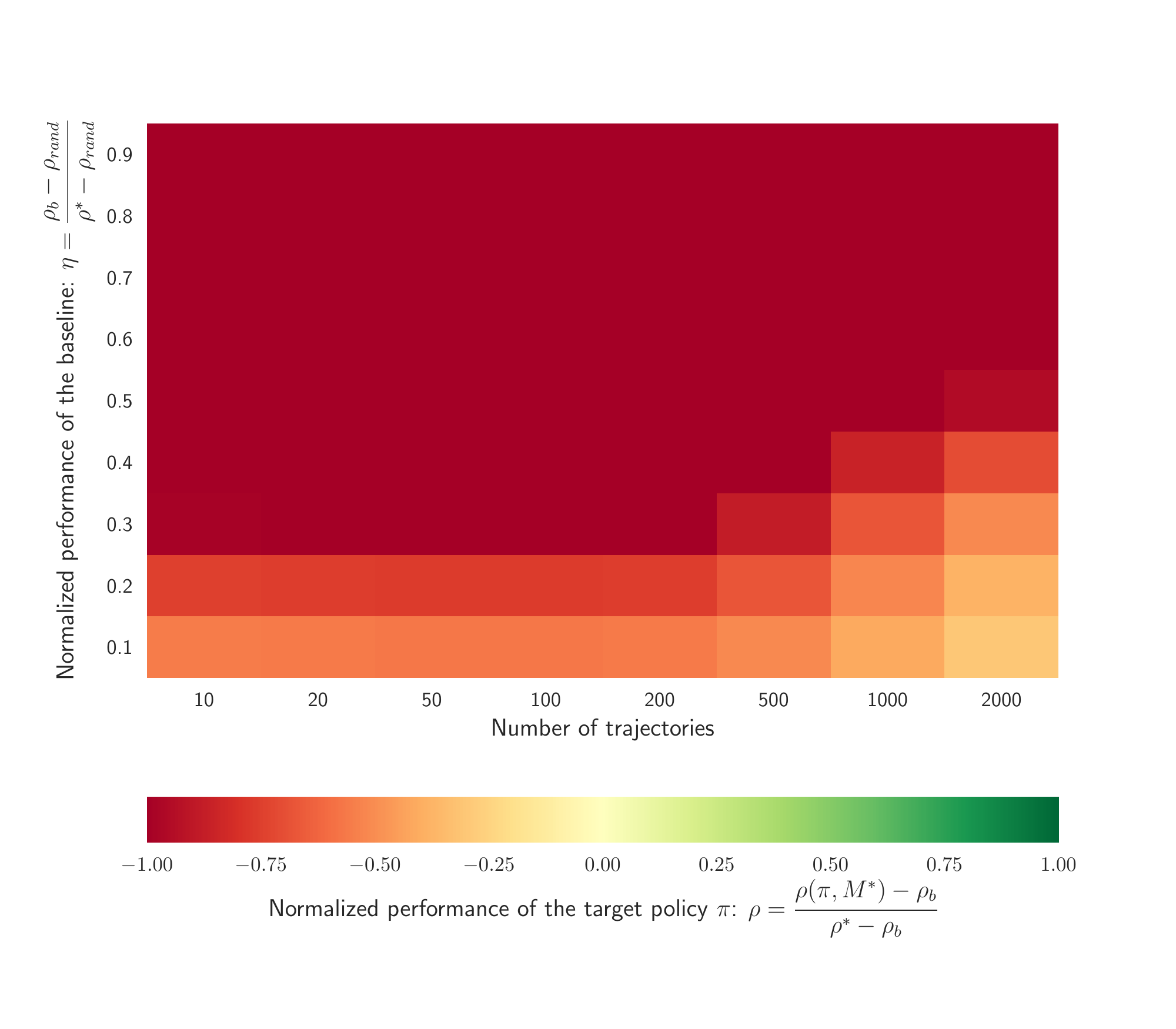}
			\label{fig:random_MDPs_heat_RobustMDP}
		} \\
		\centering
		\subfloat[1\%-CVaR: $\Pi_{\leq b}$-SPIBB, $N_\wedge=5$.]{
			\includegraphics[trim = 10pt 130pt 50pt 60pt, clip, width=0.33\textwidth]{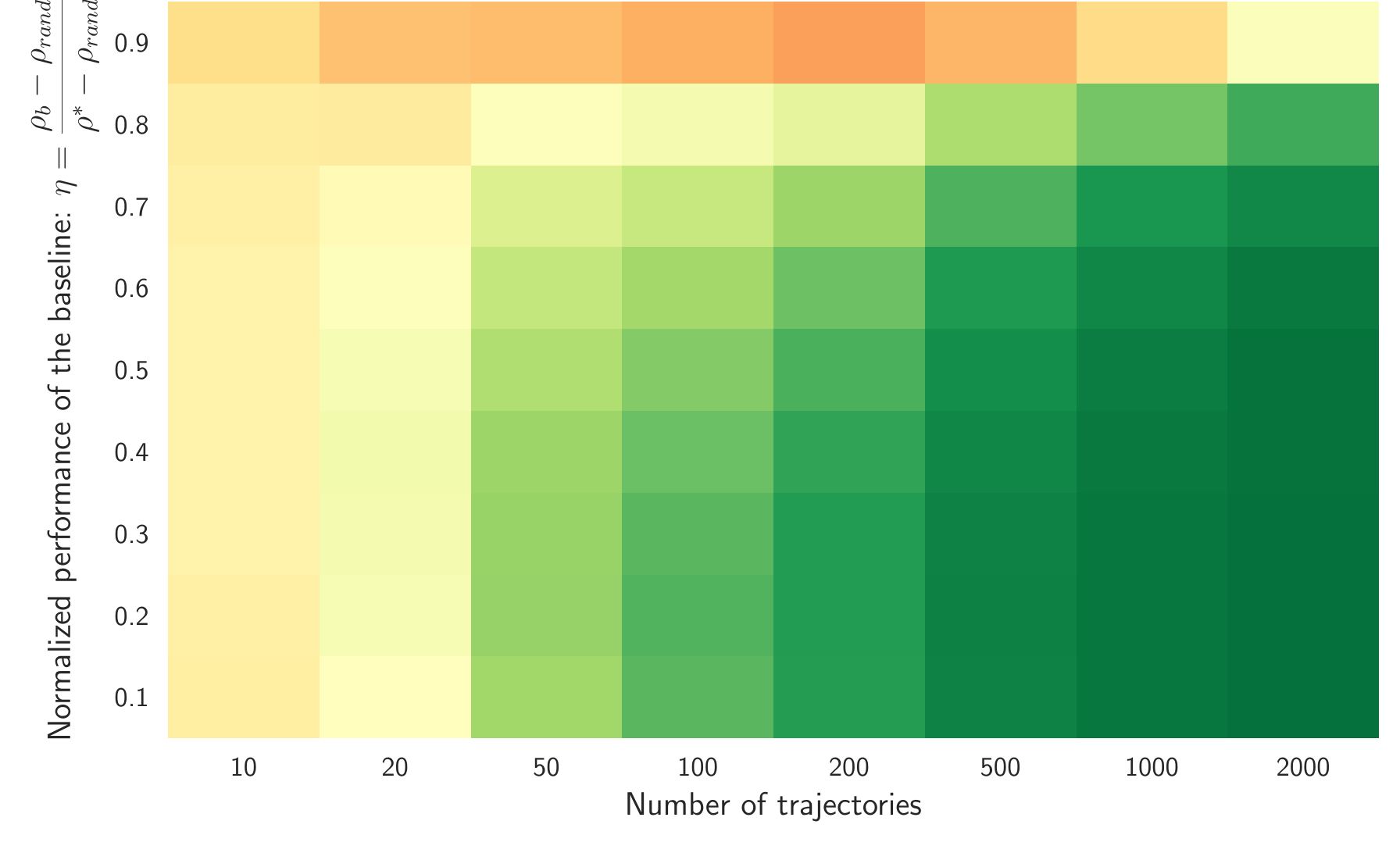}
			\label{fig:random_MDPs_heat_pi<b_N=5}
		}
		\subfloat[1\%-CVaR: $\Pi_{\leq b}$-SPIBB, $N_\wedge=20$.]{
			\includegraphics[trim = 10pt 130pt 50pt 60pt, clip, width=0.33\textwidth]{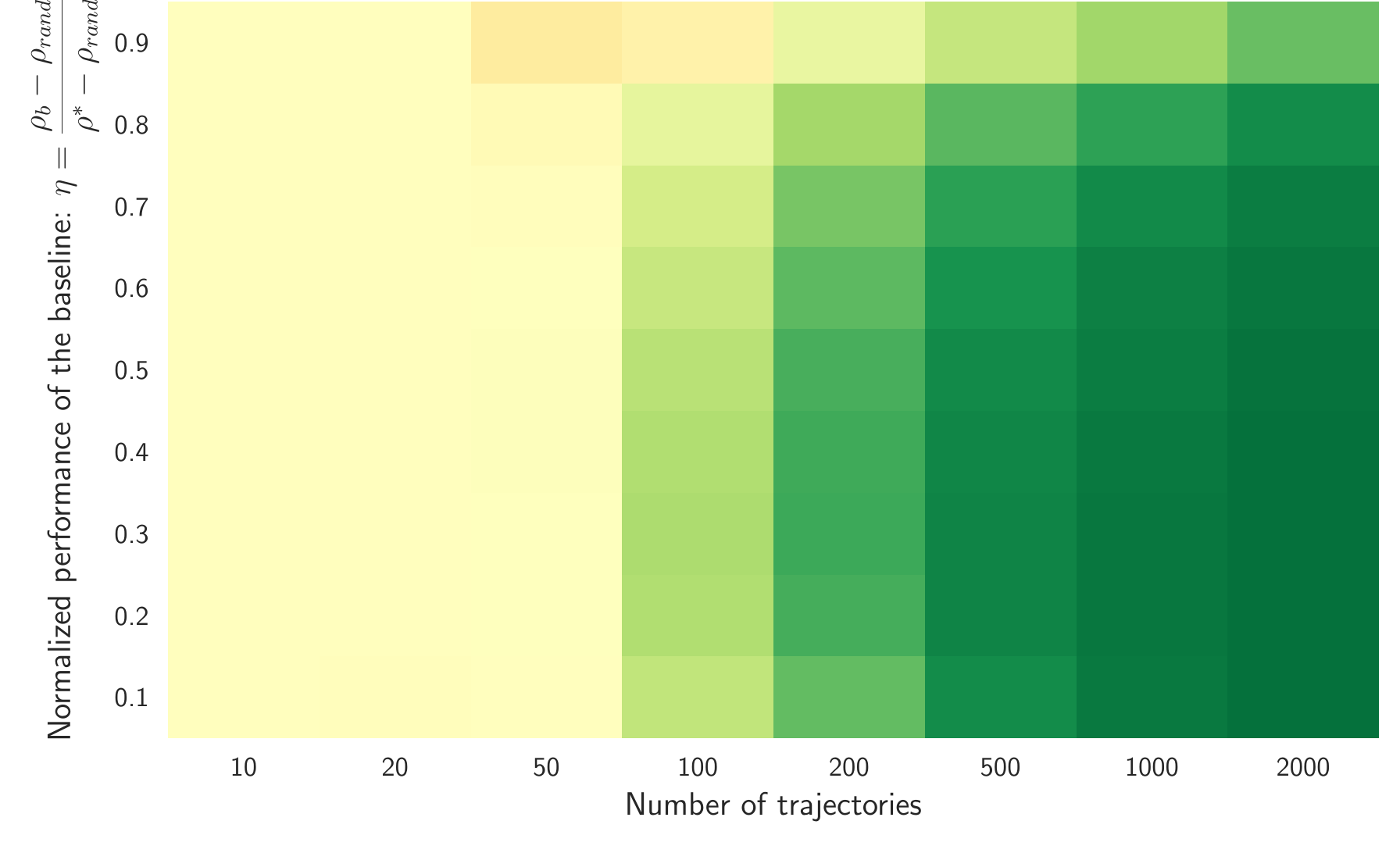}
			\label{fig:random_MDPs_heat_pi<b_N=20}
		}
		\subfloat[1\%-CVaR: $\Pi_{\leq b}$-SPIBB, $N_\wedge=50$.]{
			\includegraphics[trim = 10pt 130pt 50pt 60pt, clip, width=0.33\textwidth]{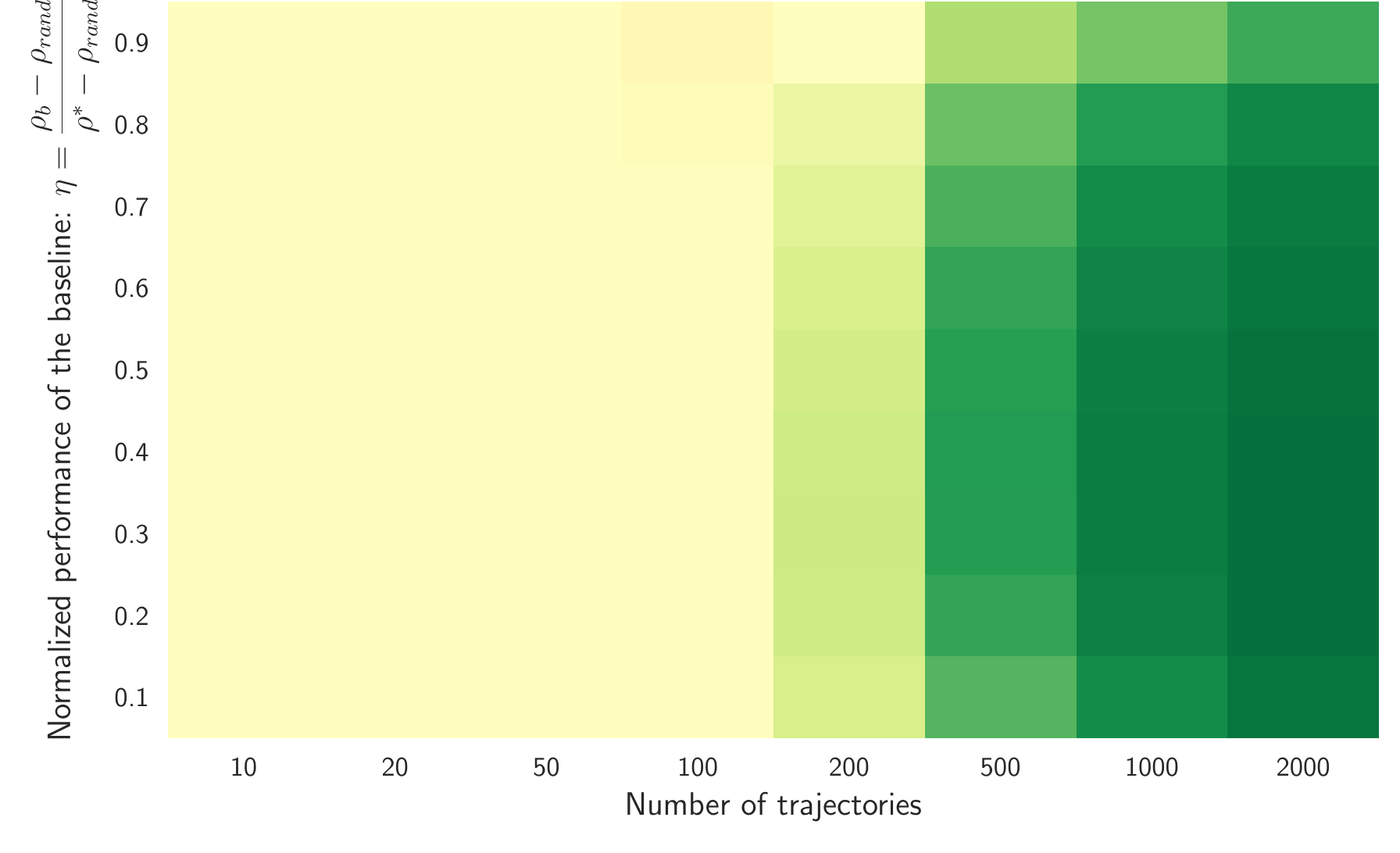}
			\label{fig:random_MDPs_heat_pi<b_N=50}
		}  \\
		\centering
		\subfloat[1\%-CVaR: $\Pi_{\leq b}$-SPIBB, $N_\wedge=100$.]{
			\includegraphics[trim = 10pt 130pt 50pt 60pt, clip, width=0.33\textwidth]{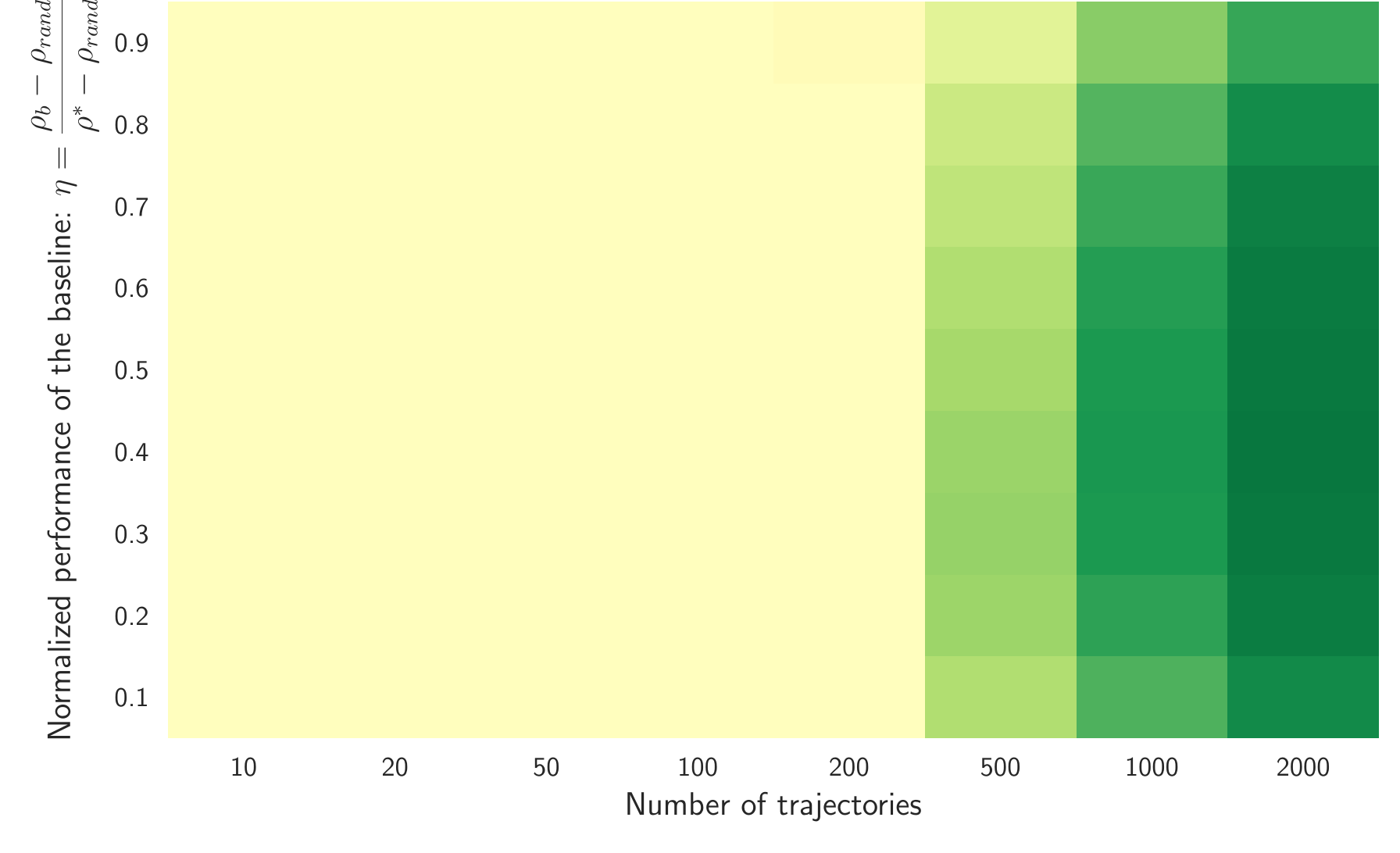}
			\label{fig:random_MDPs_heat_pi<b_N=100}
		} 
		\subfloat[1\%-CVaR: $\Pi_{b}$-SPIBB, $N_\wedge=5$.]{
			\includegraphics[trim = 10pt 130pt 50pt 60pt, clip, width=0.33\textwidth]{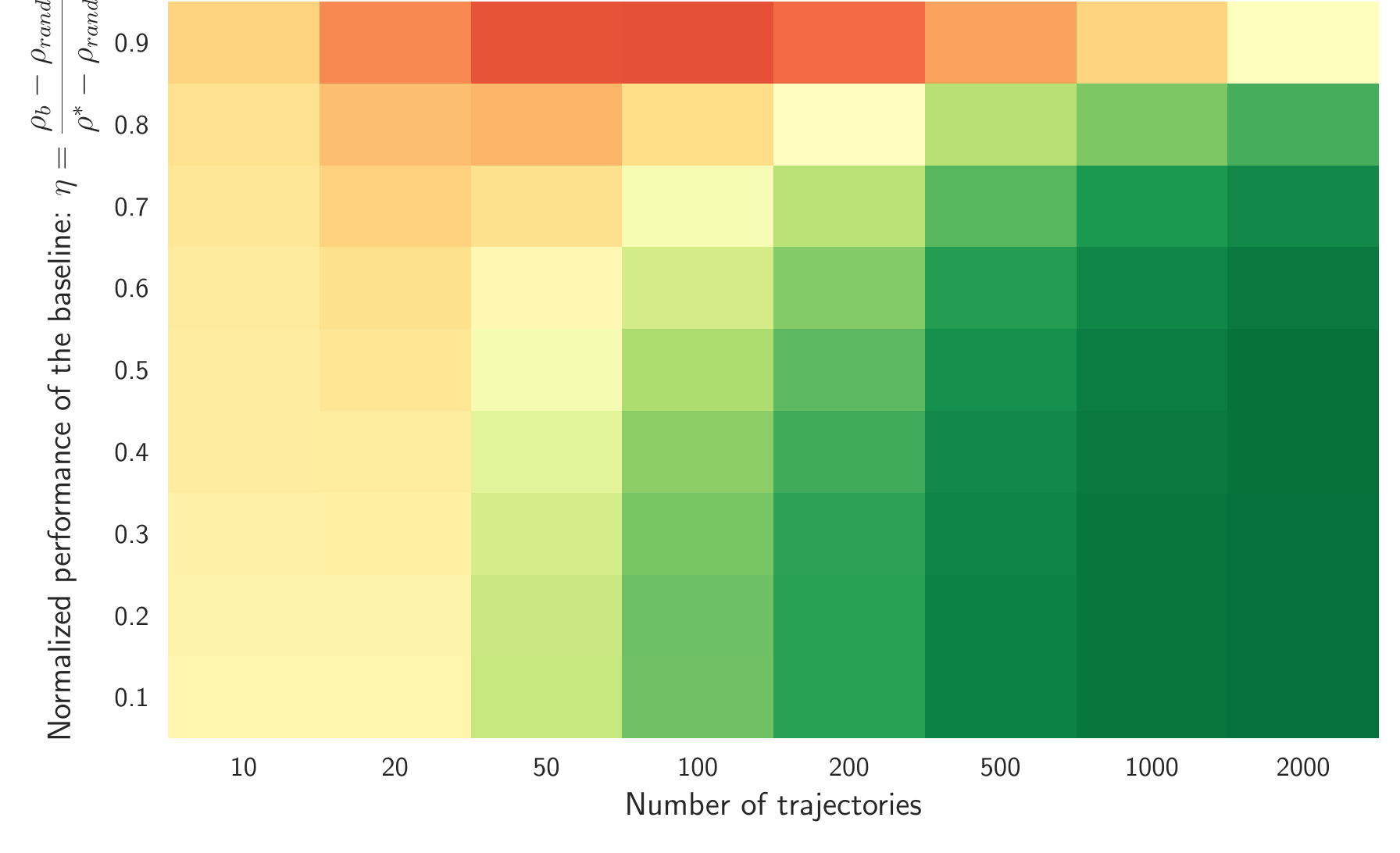}
			\label{fig:random_MDPs_heat_pib_N=5}
		}
		\subfloat[1\%-CVaR: $\Pi_{b}$-SPIBB, $N_\wedge=10$.]{
			\includegraphics[trim = 10pt 130pt 50pt 60pt, clip, width=0.33\textwidth]{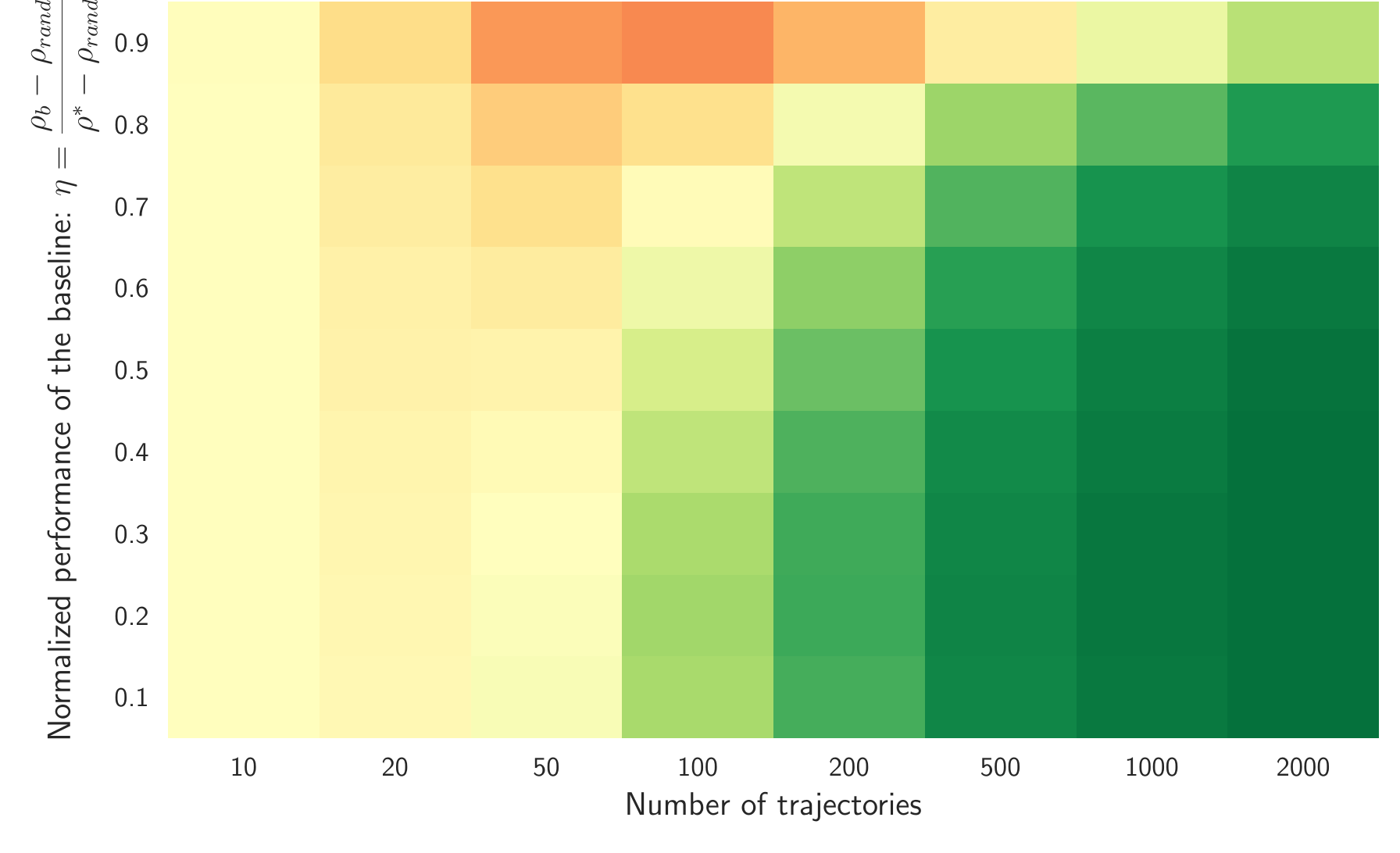}
			\label{fig:random_MDPs_heat_pib_N=10}
		} \\
		\centering
		\subfloat[1\%-CVaR: $\Pi_{b}$-SPIBB, $N_\wedge=20$.]{
			\includegraphics[trim = 10pt 130pt 50pt 60pt, clip, width=0.33\textwidth]{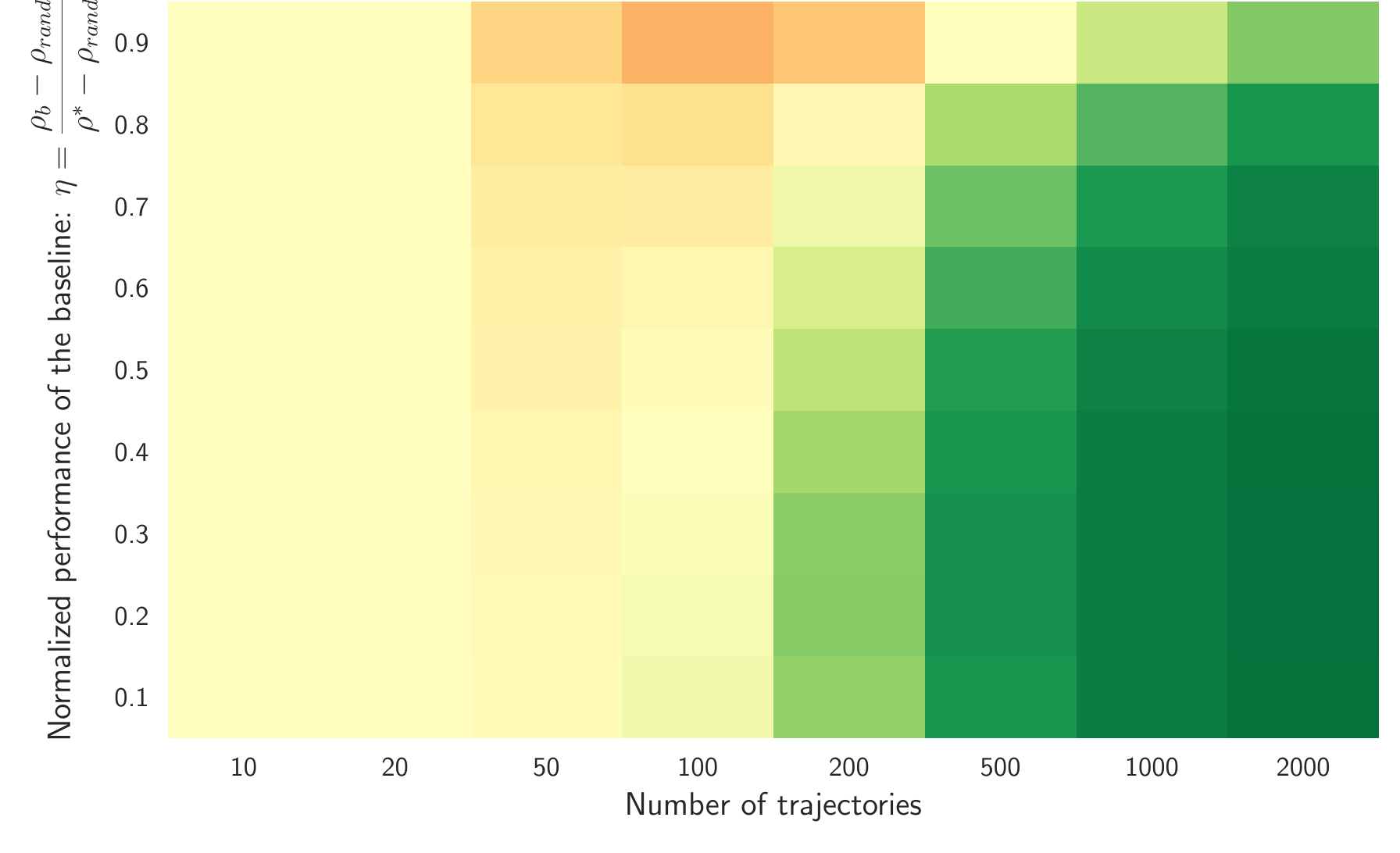}
			\label{fig:random_MDPs_heat_pib_N=20}
		} 
		\subfloat[1\%-CVaR: $\Pi_{b}$-SPIBB, $N_\wedge=50$.]{
			\includegraphics[trim = 10pt 130pt 50pt 60pt, clip, width=0.33\textwidth]{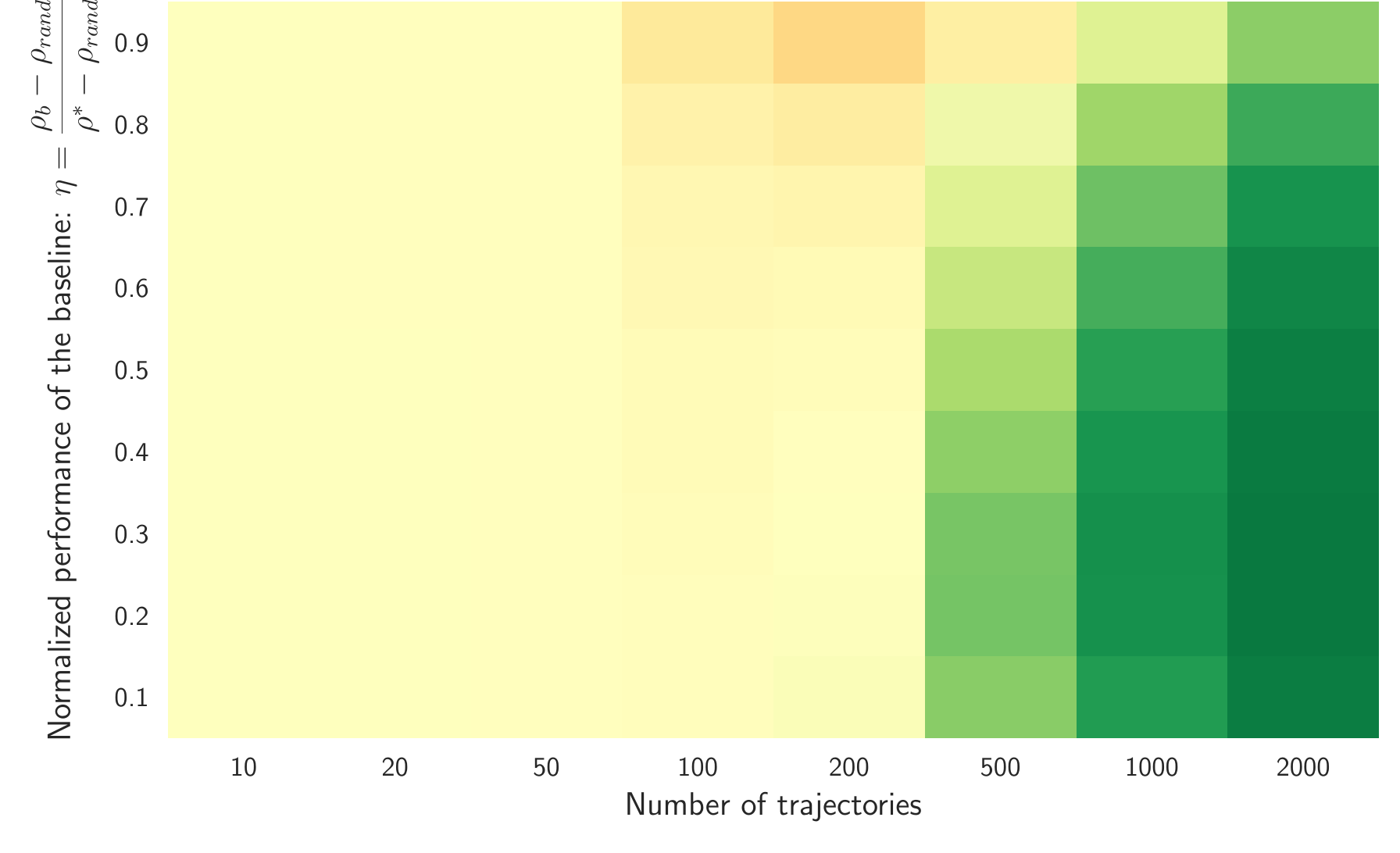}
			\label{fig:random_MDPs_heat_pib_N=50}
		}
		\subfloat[1\%-CVaR: $\Pi_{b}$-SPIBB, $N_\wedge=100$.]{
			\includegraphics[trim = 10pt 130pt 50pt 60pt, clip, width=0.33\textwidth]{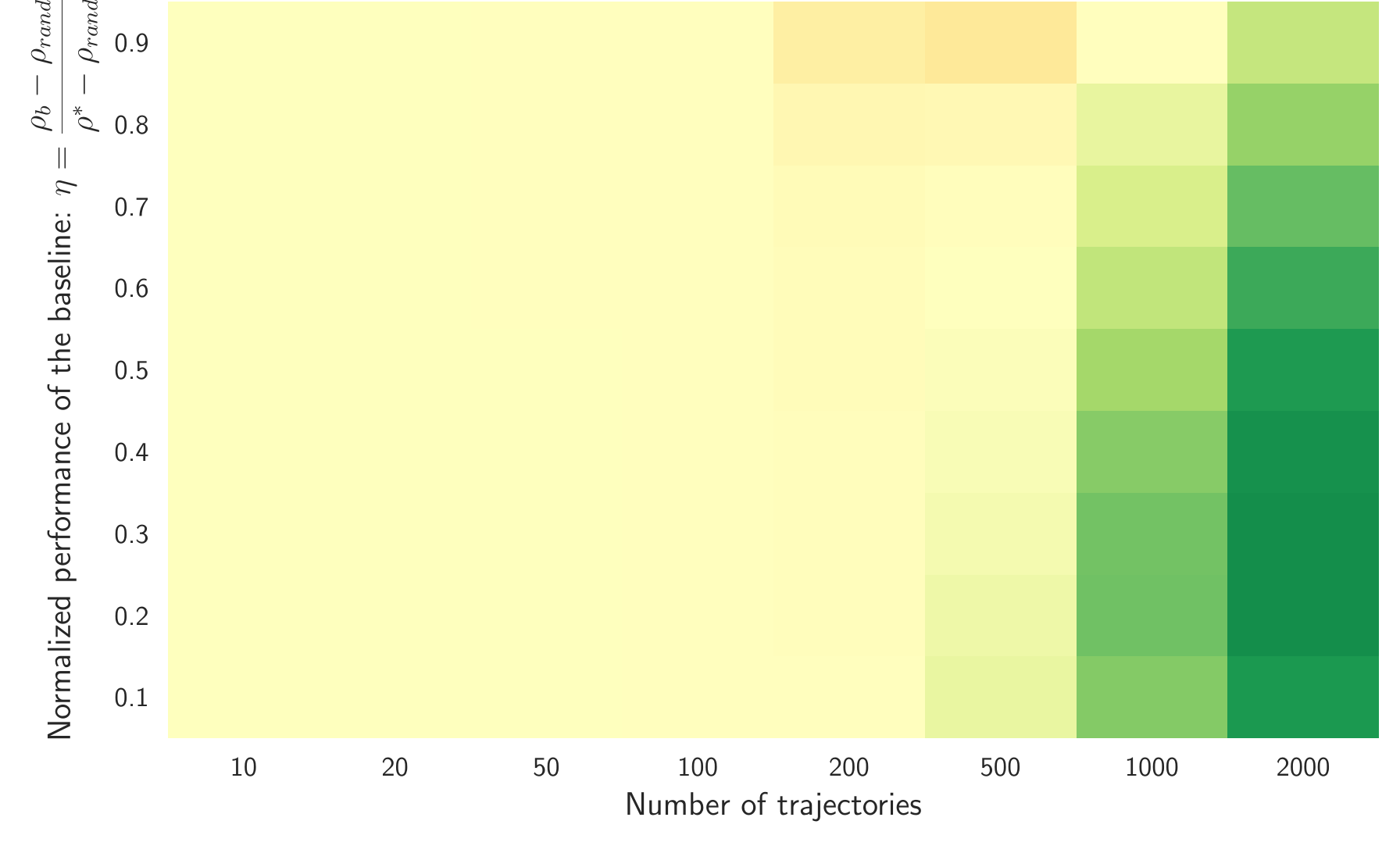}
			\label{fig:random_MDPs_heat_pib_N=100}
		}
		\caption{Random MDPs: 1\%-CVaR performance heatmaps. The abscissae is the dataset size, the ordinate is the baseline hyperparameter $\eta$, and the color is the normalized performance: red, yellow, and green respectively mean below, equal to, and above baseline performance. Heatmaps for the mean normalized performance and for additional $N_\wedge$ values: 7, 15, 30, 70, may be found in the supplementary material package. The supplementary material package also contains more heatmaps on the sensitivity to $N_\wedge$, with fixed $\eta$ values.}
		\label{fig:randomMDP_sup_heatmaps}
		\vspace{-10pt}
	\end{figure*}
	
	\begin{figure*}[ht!]
		\centering
		\subfloat[1\%-CVaR: with $\eta=0.1$ and $N_\wedge=10$.]{
			\includegraphics[trim = 5pt 5pt 5pt 5pt, clip, width=0.33\textwidth]{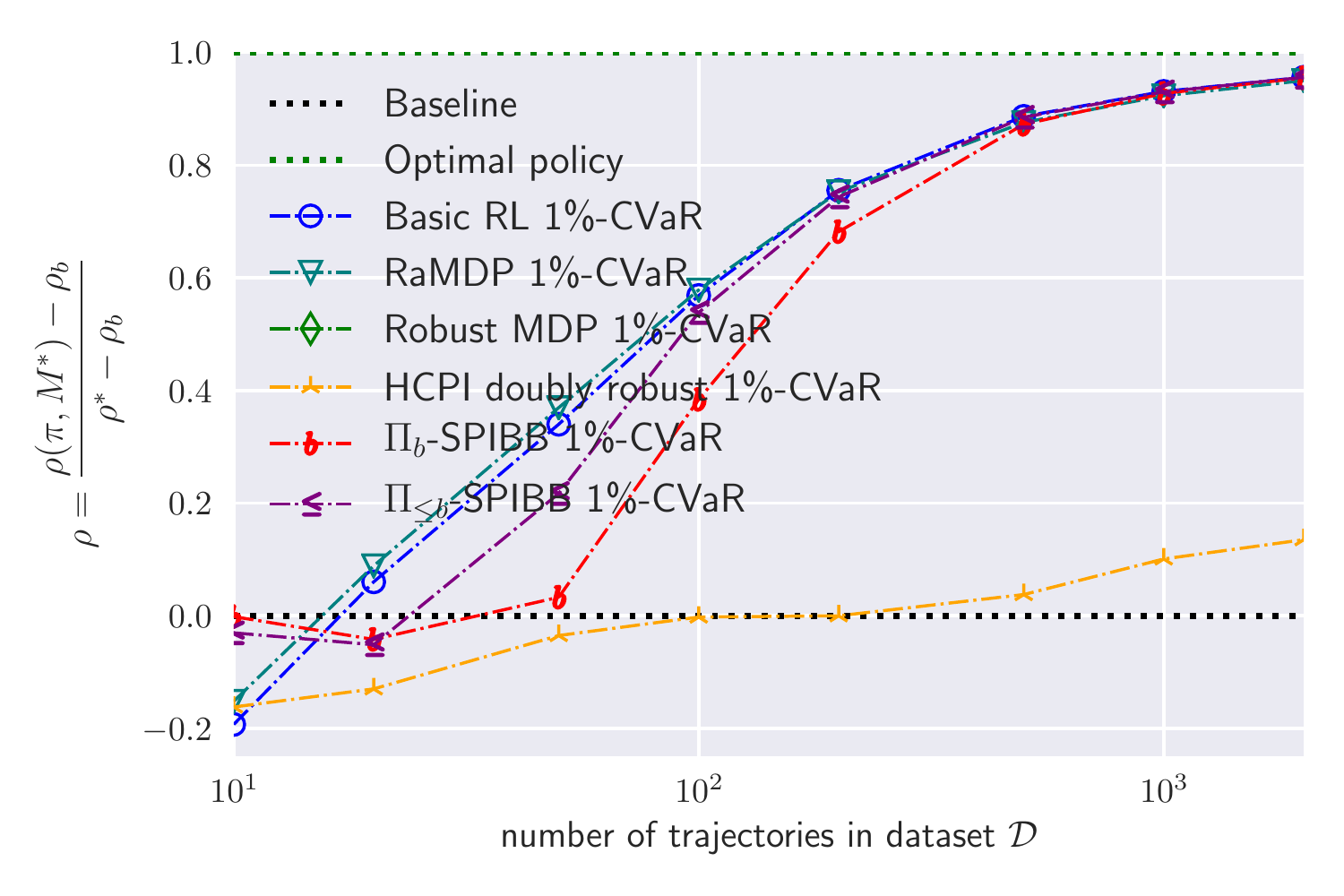}
			\label{fig:random_MDPs_1-CVaR_eta=0.1}
		}
		\subfloat[10\%-CVaR: with $\eta=0.1$ and $N_\wedge=10$.]{
			\includegraphics[trim = 5pt 5pt 5pt 5pt, clip, width=0.33\textwidth]{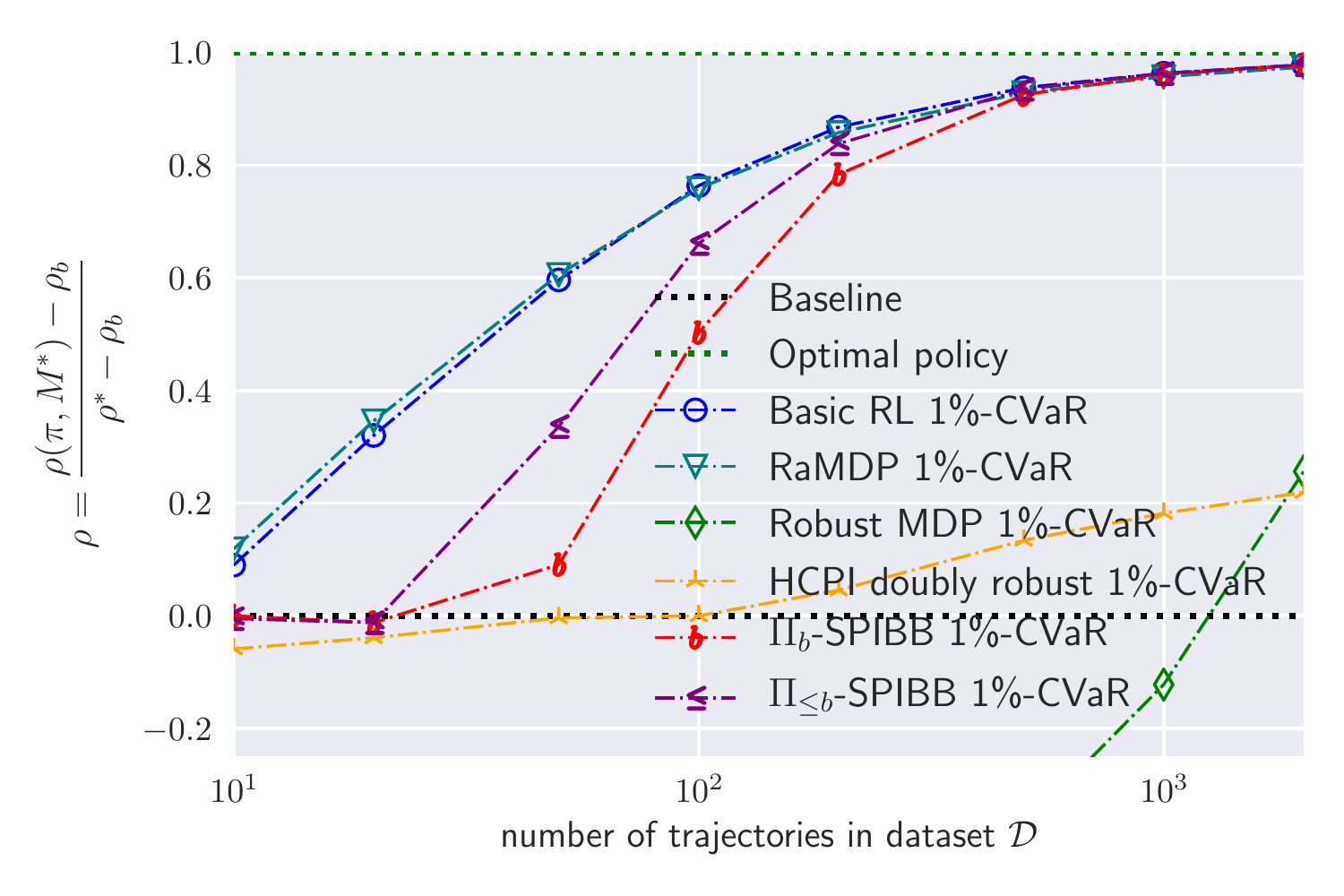}
			\label{fig:random_MDPs_10-CVaR_eta=0.1}
		}
		\subfloat[10\%-CVaR: with $\eta=0.9$ and $N_\wedge=10$.]{
			\includegraphics[trim = 5pt 5pt 5pt 5pt, clip, width=0.33\textwidth]{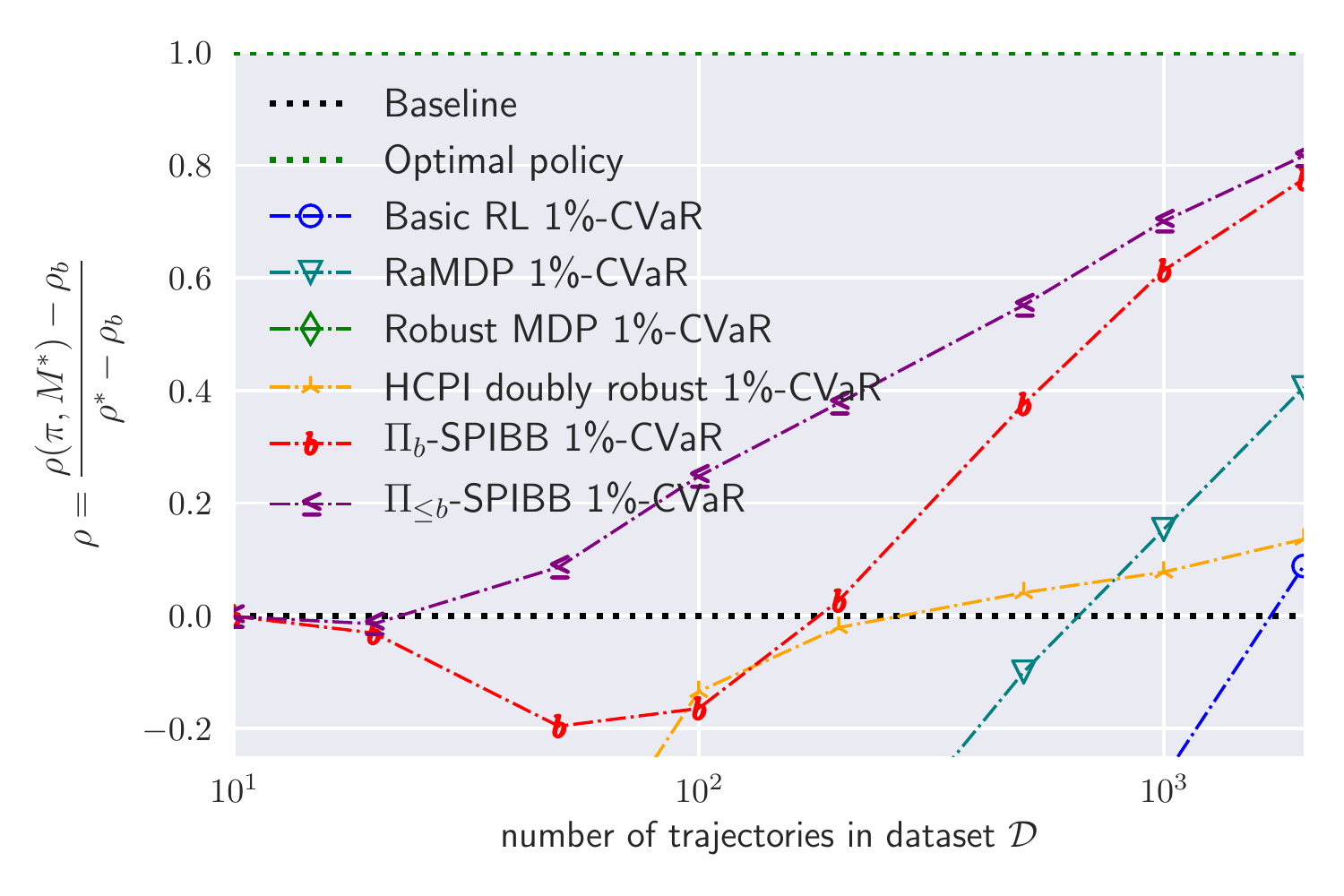}
			\label{fig:random_MDPs_10-CVaR_eta=0.9}
		} \\
		\centering
		\subfloat[1\%-CVaR: with $\eta=0.3$ and $N_\wedge=10$.]{
			\includegraphics[trim = 5pt 5pt 5pt 5pt, clip, width=0.33\textwidth]{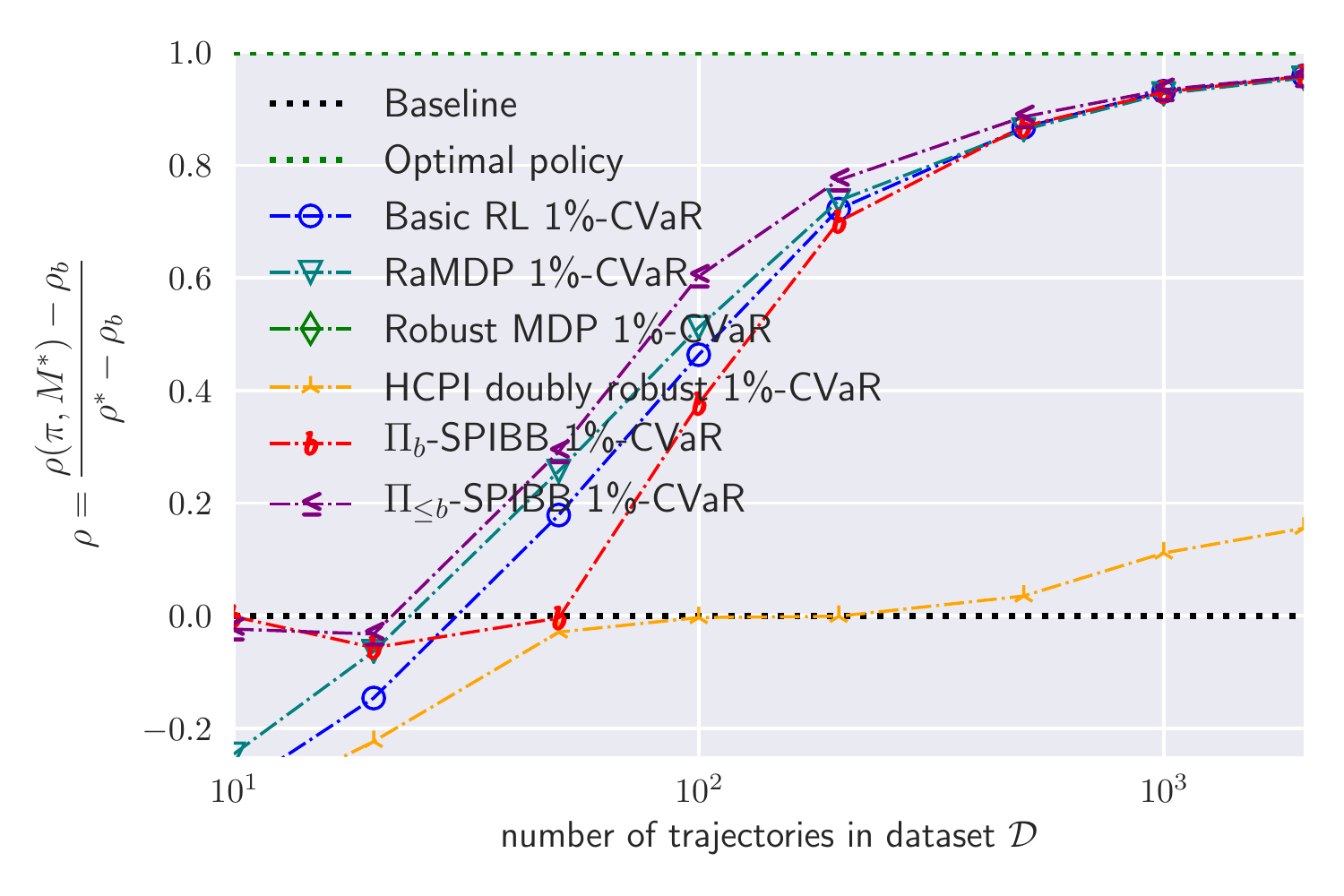}
			\label{fig:random_MDPs_1-CVaR_eta=0.3}
		}
		\subfloat[10\%-CVaR: with $\eta=0.3$ and $N_\wedge=10$.]{
			\includegraphics[trim = 5pt 5pt 5pt 5pt, clip, width=0.33\textwidth]{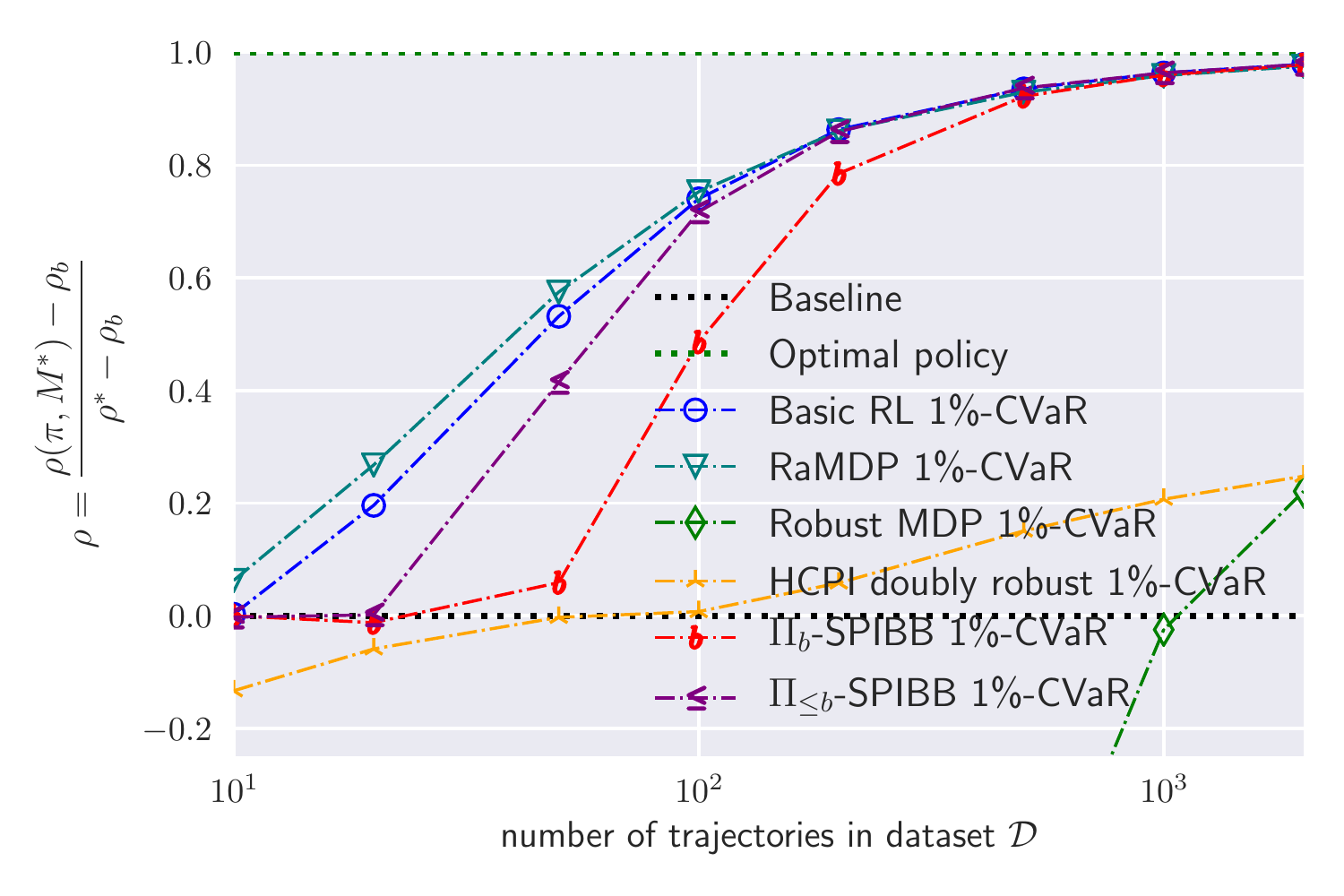}
			\label{fig:random_MDPs_10-CVaR_eta=0.3}
		}
		\subfloat[Mean: with $\eta=0.3$ and $N_\wedge=10$.]{
			\includegraphics[trim = 5pt 5pt 5pt 5pt, clip, width=0.33\textwidth]{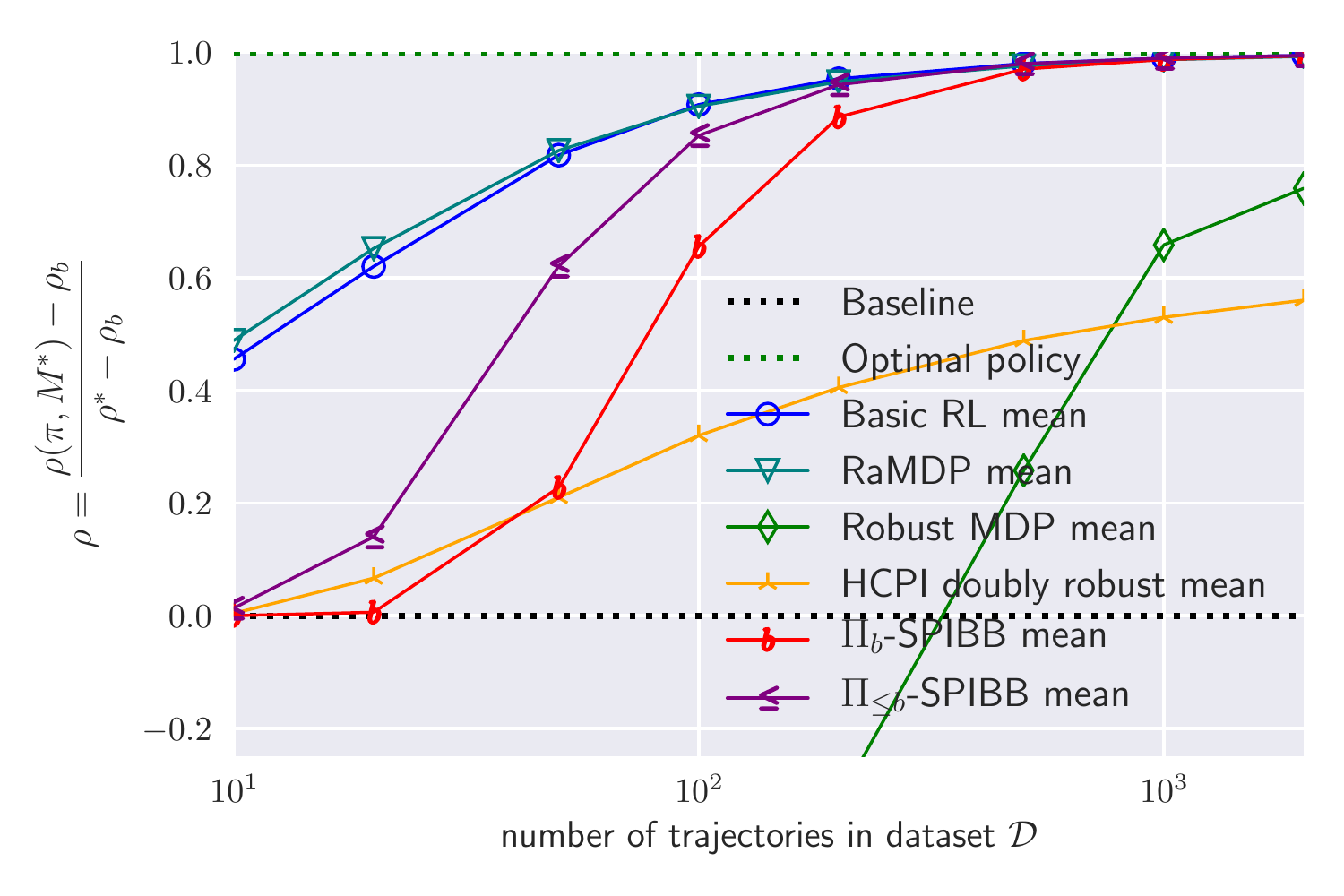}
			\label{fig:random_MDPs_mean_eta=0.3}
		}  \\
		\centering
		\subfloat[1\%-CVaR: with $\eta=0.5$ and $N_\wedge=10$.]{
			\includegraphics[trim = 5pt 5pt 5pt 5pt, clip, width=0.33\textwidth]{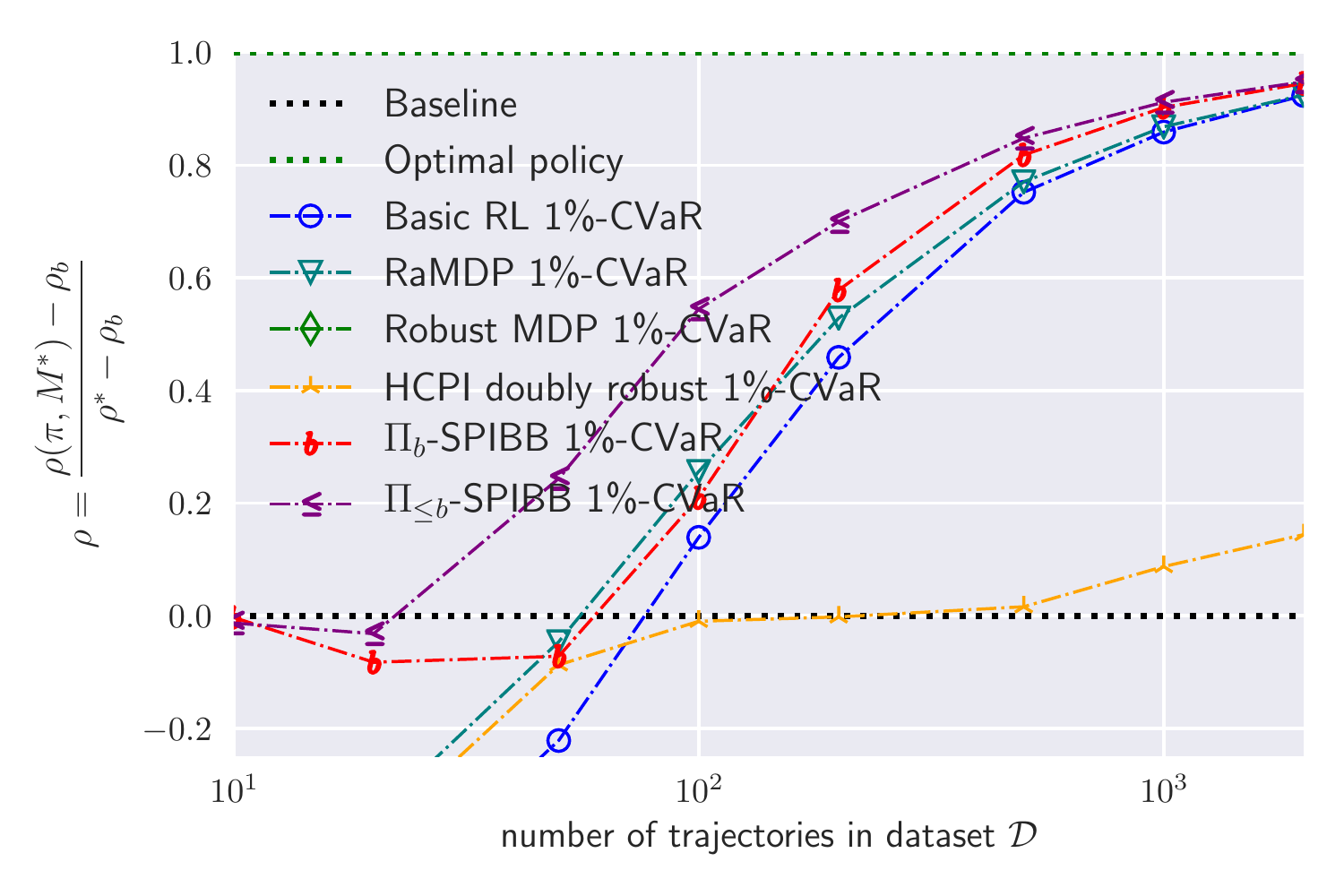}
			\label{fig:random_MDPs_1-CVaR_eta=0.5}
		} 
		\subfloat[10\%-CVaR: with $\eta=0.5$ and $N_\wedge=10$.]{
			\includegraphics[trim = 5pt 5pt 5pt 5pt, clip, width=0.33\textwidth]{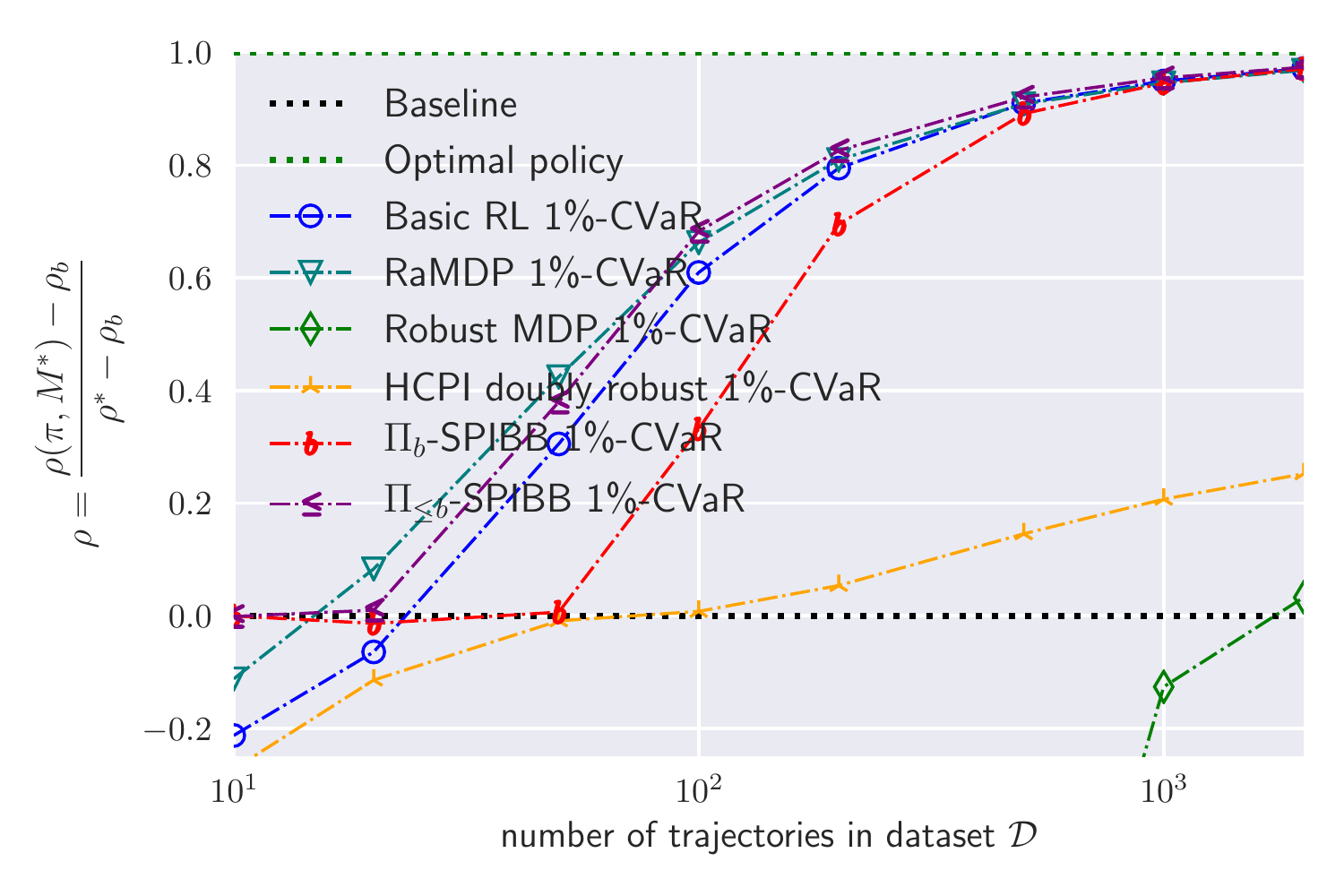}
			\label{fig:random_MDPs_10-CVaR_eta=0.5}
		}
		\subfloat[Mean: with $\eta=0.5$ and $N_\wedge=10$.]{
			\includegraphics[trim = 5pt 5pt 5pt 5pt, clip, width=0.33\textwidth]{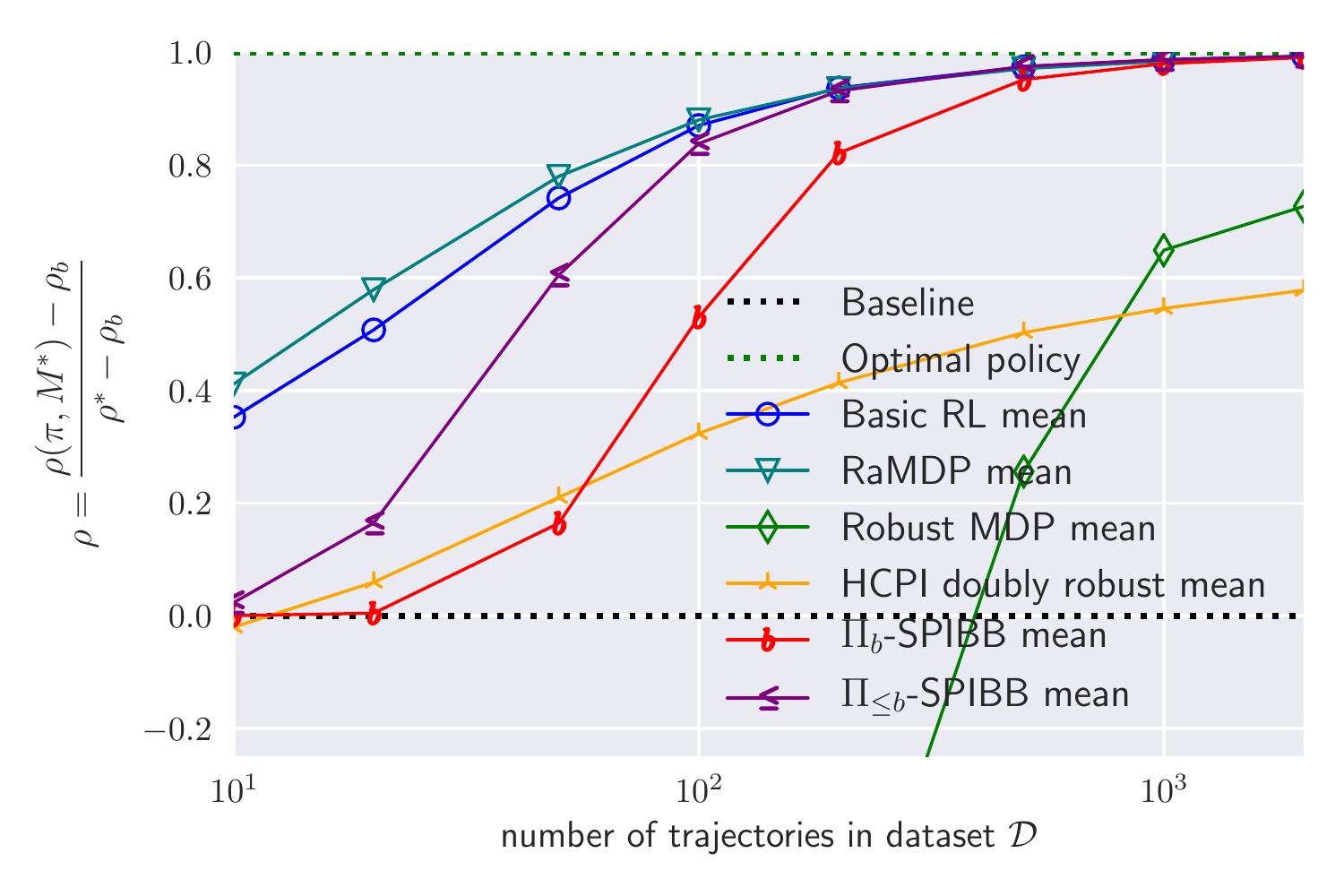}
			\label{fig:random_MDPs_mean_eta=0.5}
		} \\
		\centering
		\subfloat[1\%-CVaR: with $\eta=0.7$ and $N_\wedge=10$.]{
			\includegraphics[trim = 5pt 5pt 5pt 5pt, clip, width=0.33\textwidth]{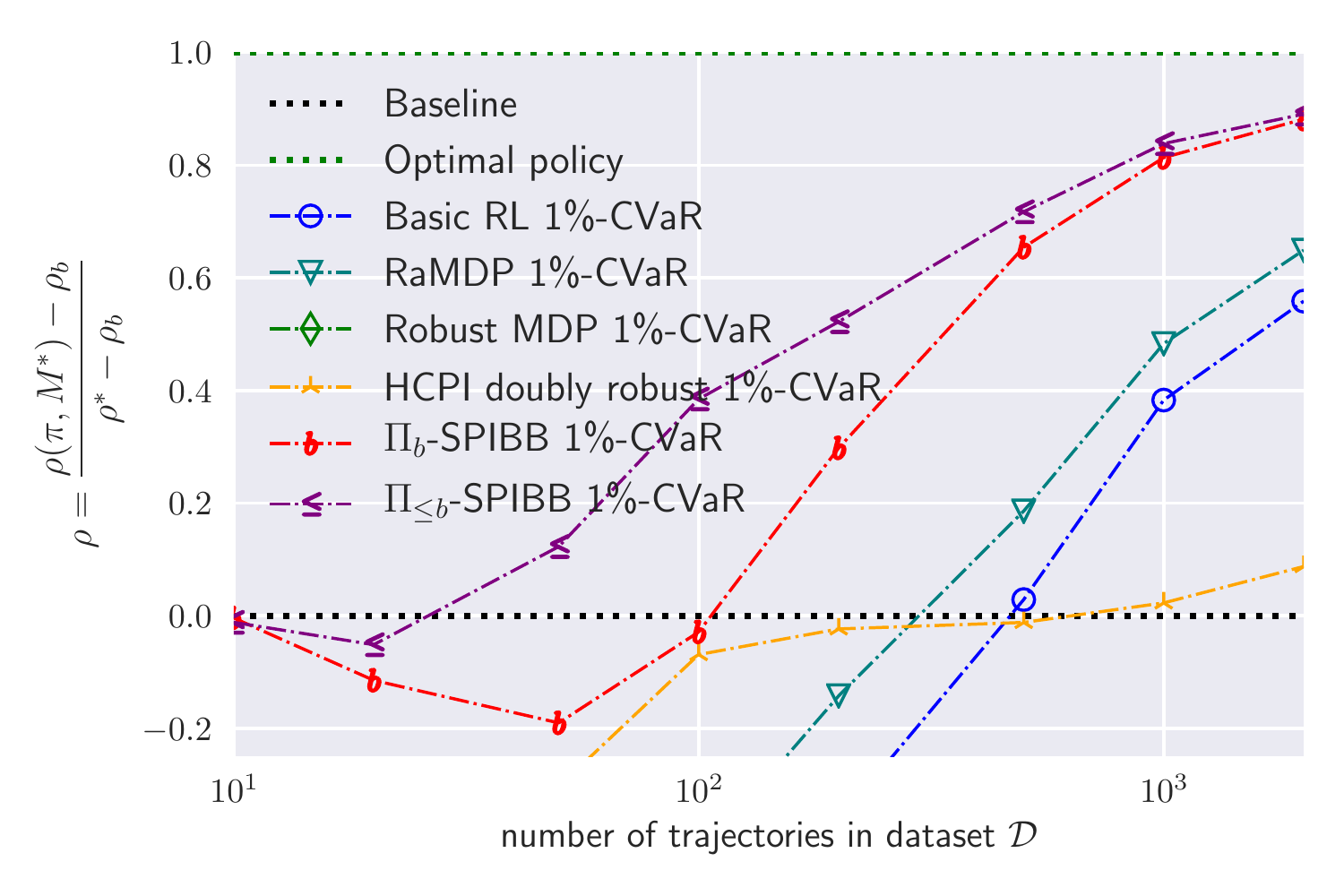}
			\label{fig:random_MDPs_1-CVaR_eta=0.7}
		} 
		\subfloat[10\%-CVaR: with $\eta=0.7$ and $N_\wedge=10$.]{
			\includegraphics[trim = 5pt 5pt 5pt 5pt, clip, width=0.33\textwidth]{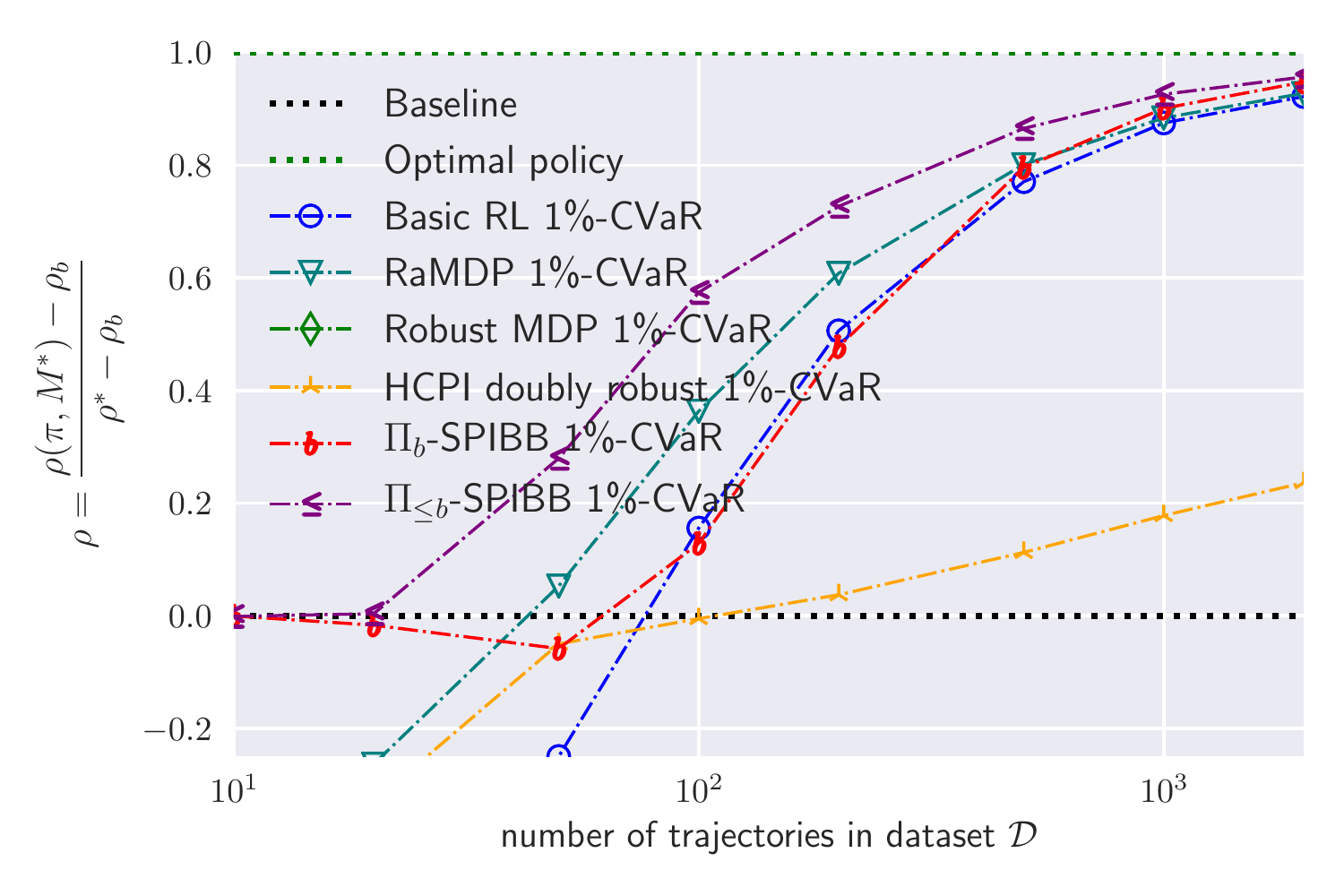}
			\label{fig:random_MDPs_10-CVaR_eta=0.7}
		}
		\subfloat[Mean: with $\eta=0.7$ and $N_\wedge=10$.]{
			\includegraphics[trim = 5pt 5pt 5pt 5pt, clip, width=0.33\textwidth]{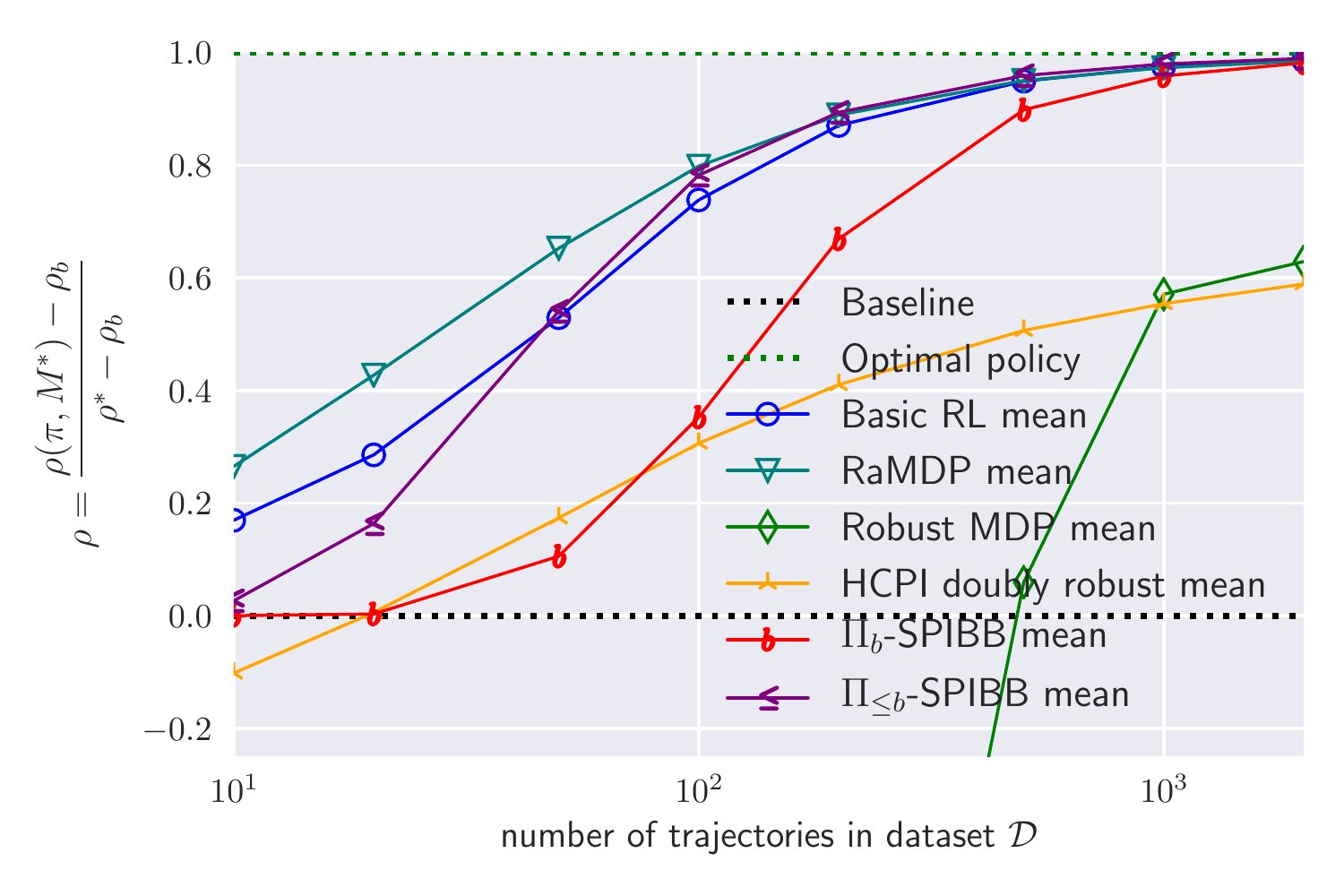}
			\label{fig:random_MDPs_mean_eta=0.7}
		}
		\caption{Random MDPs: 1\%-CVaR, 10\%-CVaR, and mean performance benchmarks for various $\eta$ values: respectively 0.1 (a-b), 0.9 (c), 0.3 (d-f), 0.5 (g-i), and 0.7 (j-l). The missing figures for $\eta=0.1$ and $\eta=0.9$ are in the main document. Figures for additional $\eta$ values: 0.2, 0.4, 0.6, and 0.8 may be found in the supplementary material package. The abscissae is the dataset size, the ordinate is the normalized performance.}
		\label{fig:randomMDP_sup_curves}
		\vspace{-10pt}
	\end{figure*}
	
	\clearpage
	
	\section{Helicopter Experiment Details}
	\label{sup:expe_helicopter}
	\subsection{Details about the helicopter environment}
	\label{sup:dummy-parameters}
	We consider the following helicopter environment, where:
	\begin{itemize}
	    \item The non terminal state space is the cross product of four features:
        	\begin{itemize}
        	    \item the abscissa position $s_x \in (0,1)$,
        	    \item the ordinate position $s_y \in (0,1)$,
        	    \item the abscissa velocity $v_x \in (-1,1)$,
        	    \item the ordinate velocity $v_y \in (-1,1)$,
        	    \item and the initial state is uniformly sampled in $(0,\frac{1}{3})\times(0,\frac{1}{3})\times(-1,1)\times(-1,1)$.
        	\end{itemize}
        \item The action set is a discrete thrust along each dimension:
        	\begin{itemize}
        	    \item the abscissa thrust $a_x \in \{-1,0,1\}$,
        	    \item and the ordinate thrust $a_y \in \{-1,0,1\}$.
        	\end{itemize}
        \item The transition function is independently applied on each dimension:
        	\begin{itemize}
        	    \item $s_x(t+1) = s_x(t) + v_x(t)\tau + \frac{1}{2}a_x(t) \tau^2 + \Gamma(0,\sigma_s)$,
        	    \item $s_y(t+1) = s_y(t) + v_y(t)\tau + \frac{1}{2}a_y(t) \tau^2 + \Gamma(0,\sigma_s)$,
        	    \item $v_x(t+1) = v_x(t) + a_x(t)\tau + \Gamma(0,\sigma_v)$,
        	    \item $v_y(t+1) = v_y(t) + a_y(t)\tau + \Gamma(0,\sigma_v)$,
        	    \item where $\tau=0.1$ is the time-step, $\Gamma(0,\sigma)$ is a centered Gaussian noise with standard deviation $\sigma$, $\sigma_s=0.025$ is the position-wise noise standard deviation, $\sigma_v=0.05$ is the velocity-wise noise standard deviation.
        	\end{itemize}
	    \item The reward function is set to:
        	\begin{itemize}
        	    \item $r(t) = 0$ in every non-terminal state,
        	    \item $r(t) = -1$ when one of the velocity features gets out of $(-1,1)$: the motor melts and the episode terminates,
        	    \item $r(t) = \min\left(10, \max\left(-1, \frac{1}{\sqrt{(s_x - 1)^2 + (s_y - 1)^2}} - 4\right)\right)$ when one of the position features leaves $(0,1)$: it landed and the episode terminates. It is good if it is close to the target coordinates $\{1,1\}$, bad otherwise, see Figure \subref*{fig:helicopter} for a visual representation of this final reward.
        	\end{itemize}
	    \item For the evaluation, similarly to what is commonly used in Atari or Go, the return is not discounted. Although, as next section specifies, the training of the SPIBB-DQN agents requires to set a discount factor lower than 1.
	\end{itemize}

	\subsection{Details about the experimental design}
	See Algorithm \ref{alg:helicopter_benchmark}.
	\begin{pseudocode}[ht!]
		\caption{Helicopter experimental process}
		\KwIn{List of hyper-parameter values for $N_\wedge$}
		\KwIn{List of dataset sizes}
		\BlankLine
		\RepTimes{$20$}{
		    \For{each dataset size}{
		            
	            Generate a dataset.
	            
	            Compute the pseudo-counts.
	            
	            \RepTimes{$15$}{
		            
                    \For{each $N_\wedge$}{
                    
    		            Train a policy. ($N_\wedge=0$ amounts to vanilla DQN, and $N_\wedge= \infty$ amounts to reproducing the baseline)
    		                    
                        Evaluate the trained policy.
    		                    
                        Record the performance of the trained policy.
    		        }
    		    }
		    }
		}
		
		\label{alg:helicopter_benchmark}
	\end{pseudocode}
	\subsection{Details about the DQN and SPIBB-DQN implementations}
	\label{sup:dqn-parameters}
    The batch version of DQN simply consists in replacing the experience replay buffer by the dataset we are training on. Effectively, we are not sampling from the environment anymore but from the transitions collected a priori following the baseline. The same methodology applies for SPIBB, except that the targets we are using for our $Q$-values update verify the following modified Bellman equation:
    \begin{align}
	    y^{(i)}_j &= r_j + \gamma \max_{\pi\in\Pi_b} \sum_{a'\in\mathcal{A}} \pi(a'|x_j') Q^{(i)}(x_j',a') \nonumber \\
	    &= r_j + \gamma\sum_{a'|(x_j',a')\in\mathfrak{B}} \pi_b(a'|x_j') Q^{(i)}(x_j',a') + \gamma\left(\sum_{a'|(x_j',a')\notin\mathfrak{B}} \pi_b(a'|x_j')\right) \max_{a'|(x_j',a')\notin\mathfrak{B}} Q^{(i)}(x_j',a') \label{eq:spibb-DQN-app} \nonumber
	\end{align}
    We notice in particular that when $\mathfrak{B} = \emptyset$ the targets fall back to the traditional Bellman ones.
    We used the now classic target network trick \cite{Mnih2015}, combined with Double-DQN \cite{HasseltGS15}.

    The network used for the baseline and for the algorithms in the benchmark is a fully connected network with 3 hidden layers of $32$, $128$ and $32$ neurons, initialized using he\_uniform \cite{he2015delving}. The network has $9$ outputs corresponding to the $Q$-values of the $9$ actions in the game.
    We train the $Q$-networks with RMSProp \cite{tieleman2012lecture} with a momentum of $0.95$ and $\epsilon = 10^{-7}$ on mini-batches of size $32$. The learning rate is initialized at $0.01$ and is annealed every $20$k transitions or every pass on the dataset, whichever is larger. The networks are trained for $2$k passes on the dataset, and are fully converged by that time. We use the Keras framework \cite{chollet2015keras} with Tensorflow \cite{tensorflow2015-whitepaper} as backend. The policy is tested for $10$k steps at the end of training, with the initial states of each trajectory sampled as described in Section \ref{sup:dummy-parameters}.

	\begin{figure*}[t]
		\centering
		\subfloat[SPIBB-DQN with a single dataset in function of $N_\wedge$.]{
			\includegraphics[trim = 5pt 5pt 5pt 5pt, clip, width=0.5\textwidth]{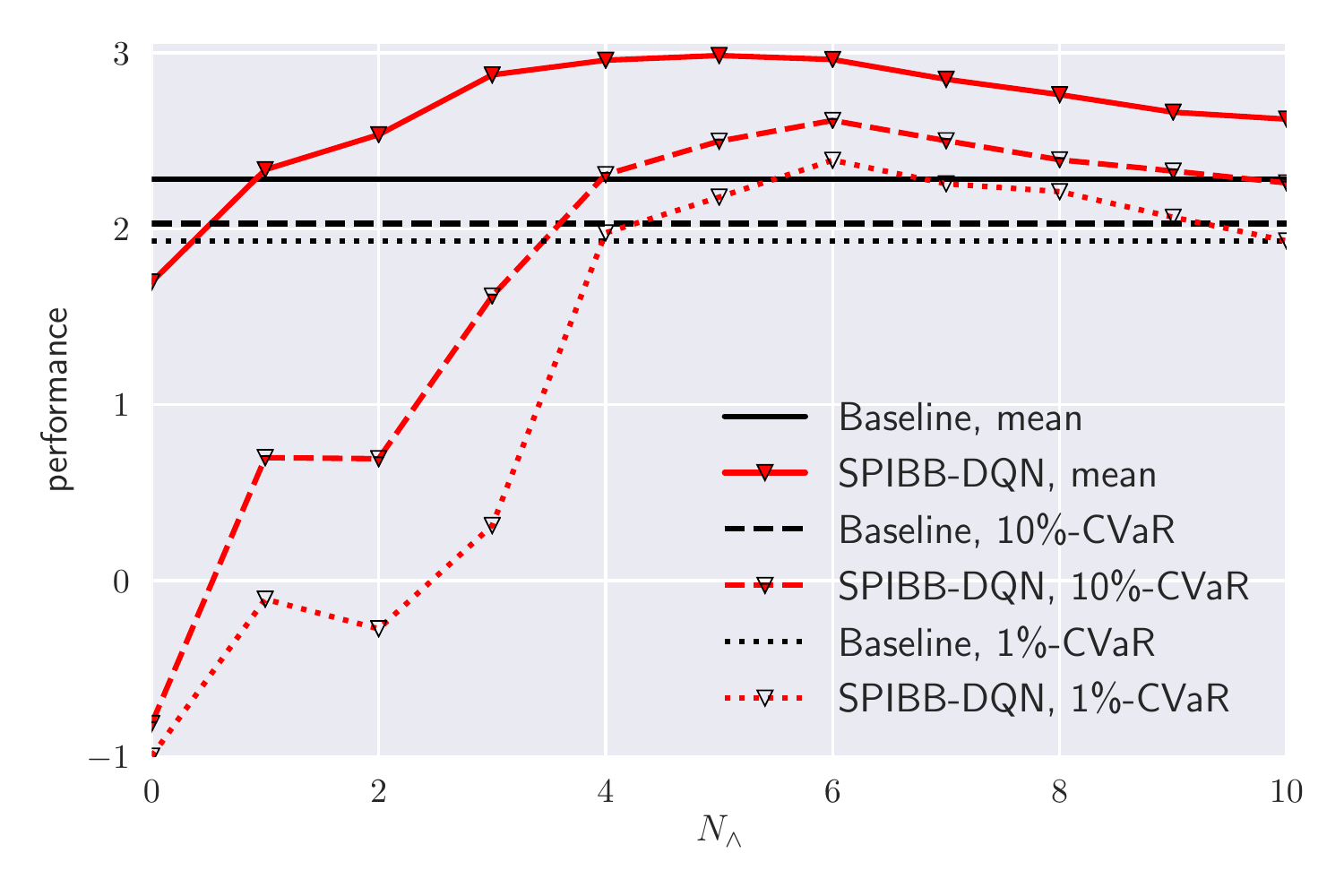}
			\label{fig:single-dataset-spibb-dqn}
		}
		\subfloat[RaMDP hyper-parameter $\kappa_{adj}$ search.]{
			\includegraphics[trim = 5pt 5pt 5pt 5pt, clip, width=0.5\textwidth]{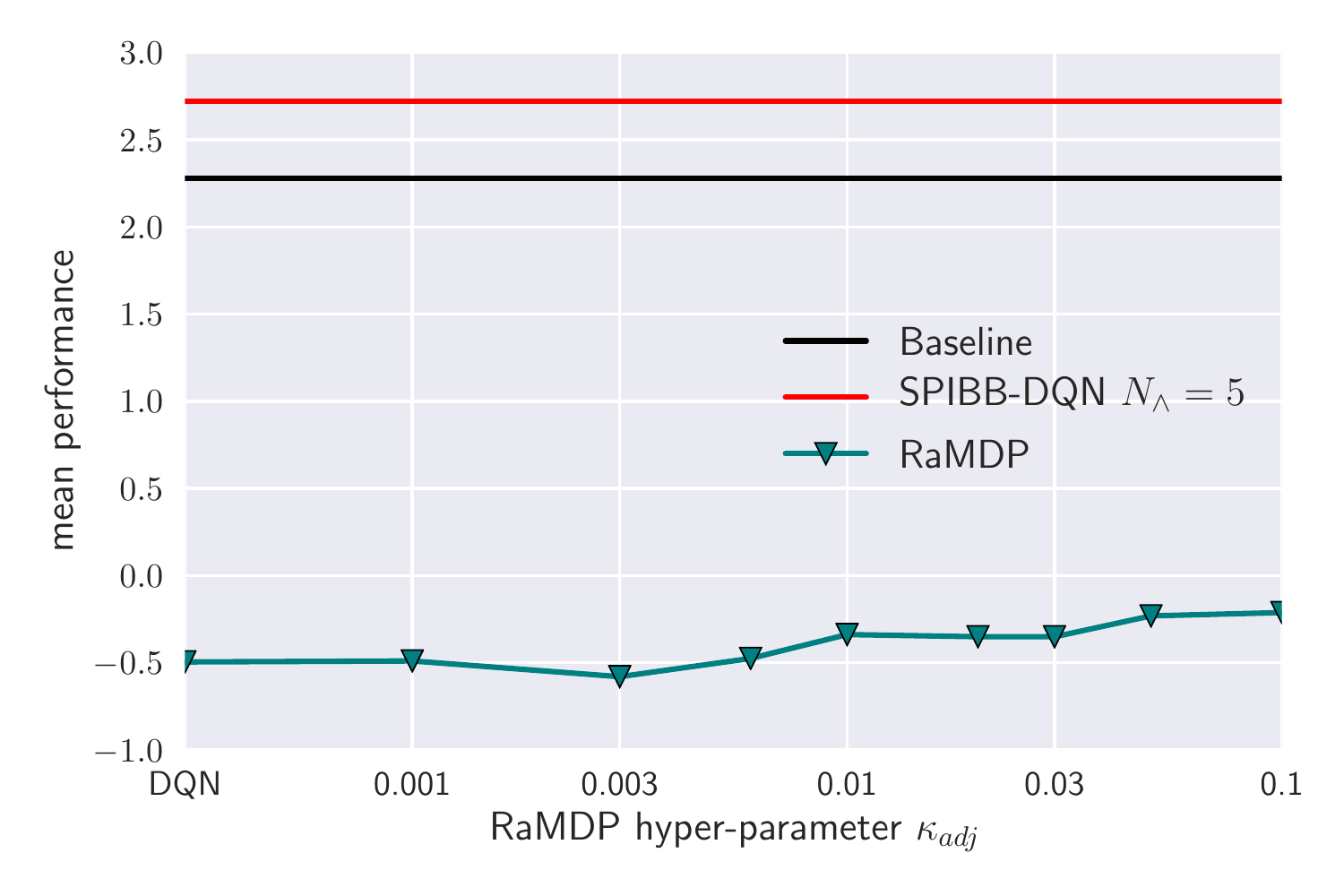}
			\label{fig:RaMDP-DQN-kappa}
		}
		\caption{Robust MDP hyper-parameter search results on the Gridworld domain.}
		\label{fig:preliminary-helicopter}
	\end{figure*}
	
	\subsection{Preliminary SPIBB-DQN experiments}
	\label{sup:DQN-figures}
	Before starting the experiments reported in the main document, Section \ref{sec:spibb-dqn-exp}, we led preliminary experiments with a single 10k-transition dataset. We found out, and report on Figure \subref*{fig:single-dataset-spibb-dqn}, that vanilla DQN trains very different $Q$-networks and therefore very different policies depending on the random seed, which influences the random initialization of the parameters of the network and the transitions sampled for the stochastic gradient. It is worth noticing a posteriori that this dataset was actually favorable to DQN on average (mean performance of 1.7 on this dataset vs. -0.5 reported in the main document), but that the reliability of DQN is still very low. In contrast, SPIBB-DQN shows stability for $N_\wedge\geq 4$.
	
	We also performed a hyper-parameter search on RaMDP on 10k-transition datasets. Given that the reward / value function amplitude is larger than on our previous experiments (Gridworld and Random MDPs), we expect the optimal $\kappa_{adj}$ value to be also larger than $0.003$. We thus considered the following hyper-parameter values: $\kappa_{adj}\in\left\{0.001, 0.003, 0.006,0.01, 0.02, 0.03, 0.05, 0.1\right\}$. To reduce the computational load, we only performed 75 runs per $\kappa_{adj}$ value. We also added $\kappa_{adj}=0$, which amounts to vanilla DQN. Figure \subref*{fig:RaMDP-DQN-kappa} shows that, although it slightly improves the DQN abysmal performance, the RaMDP performance is very limited, far under the baseline.
	
	\clearpage
	\section{Reproducible, Reusable, and Robust Reinforcement Learning}
	This paper's objective is to improve the robustness and the reliability of Reinforcement Learning algorithms. Inspired from Joelle Pineau's talk at NeurIPS 2018 about reproducible, reusable, and robust Reinforcement Learning\footnote{https://nips.cc/Conferences/2018/Schedule?showEvent=12486}, we intend to also make our work reusable and reproducible.
	
    \newlist{arrowlist}{itemize}{1}
    \setlist[arrowlist]{label=$\Rightarrow$}
	\subsection{Pineau's checklist (slide 33)}
	For all algorithms presented, check if you include:
	\begin{itemize}
	    \item A clear description of the algorithm.
            \begin{arrowlist}
                \item See Algorithm \ref{alg:Pibproj} for $\Pi_b$-SPIBB, Algorithm \ref{alg:pileqbproj} for $\Pi_{\leq b}$-SPIBB, and Equation \ref{eq:spibb-DQN} for SPIBB-DQN.
            \end{arrowlist}
	    \item An analysis of the complexity (time, space, sample size) of the algorithm.
            \begin{arrowlist}
                \item We do not provide formal analysis for the complexity of the finite MDP SPIBB algorithms as it depends on the policy iteration implementation, but it can be said that the complexity increase in comparison with standard policy iteration is insignificant: it does not change neither the order of magnitude nor the multiplying constant. For SPIBB-DQN, the pseudo-count computation may increase significantly the complexity of the algorithm. It is once more impossible to formally analyze since it depends on the pseudo-count implementation.
            \end{arrowlist}
	    \item A link to downloadable source code, including all dependencies.
            \begin{arrowlist}
                \item We provide all the code on github at these addresses: \url{https://github.com/RomainLaroche/SPIBB} and \url{https://github.com/rems75/SPIBB-DQN}. See Section \ref{sup:code}.
            \end{arrowlist}
	\end{itemize}
	
    For any theoretical claim, check if you include:
	\begin{itemize}
    	\item A statement of the result.
            \begin{arrowlist}
                \item See Theorems \ref{th:pib-spibb-convergence}, \ref{th:safepolicyimprovement-pi}, and \ref{th:free}.
            \end{arrowlist}
    	\item A clear explanation of any assumptions.
            \begin{arrowlist}
                \item See Sections \ref{sec:introduction} and \ref{sec:spibb}.
            \end{arrowlist}
    	\item A complete proof of the claim.
            \begin{arrowlist}
                \item See Section \ref{sup:proofs}.
            \end{arrowlist}
	\end{itemize}
	
    For all figures and tables that present empirical results, check if you include:
	\begin{itemize}
    	\item A complete description of the data collection process, including sample size.
            \begin{arrowlist}
                \item See Sections \ref{sec:results}, \ref{sup:MDPgen}, \ref{sup:baselinegen}, \ref{sup:datasetgen}, and \ref{sup:dummy-parameters}.
            \end{arrowlist}
    	\item A link to downloadable version of the dataset or simulation environment.
            \begin{arrowlist}
                \item See Section \ref{sup:code}.
            \end{arrowlist}
    	\item An explanation of how sample were allocated for training / validation / testing.
            \begin{arrowlist}
                \item The complete dataset is used for training. There is no need for validation set. Testing is performed in the true environment.
            \end{arrowlist}
    	\item An explanation of any data that was excluded.
            \begin{arrowlist}
                \item Does not apply to our simulated environments.
            \end{arrowlist}
    	\item The range of hyper-parameters considered, method to select the best hyper-parameter configuration, and specification of all hyper-parameters used to generate results.
            \begin{arrowlist}
                \item See Sections \ref{sec:results}, \ref{sup:MDPgen}, \ref{sup:baselinegen}, \ref{sup:datasetgen}, \ref{sup:benchmarkalgos}, and \ref{sup:dqn-parameters}.
            \end{arrowlist}
    	\item The exact number of evaluation runs.
            \begin{arrowlist}
                \item 100,000+ for finite MDPs experiments and 300 for SPIBB-DQN experiments.
            \end{arrowlist}
    	\item A description of how experiments were run.
            \begin{arrowlist}
                \item See Sections \ref{sec:results}, \ref{sup:expe_finite}, and \ref{sup:expe_helicopter}.
            \end{arrowlist}
    	\item A clear definition of the specific measure or statistics used to report results.
            \begin{arrowlist}
                \item Mean and X\% conditional value at risk (CVaR), described in Sections \ref{sec:results} and \ref{sup:evaluationgen}.
            \end{arrowlist}
    	\item Clearly defined error bars.
            \begin{arrowlist}
                \item Given the high number of runs we considered, the error bar are too thin to be displayed. Any difference visible with the naked eye is significant. We use CVaR everywhere instead to account for the uncertainty.
            \end{arrowlist}
    	\item A description of results including central tendency (e.g. mean) and variation (e.g. stddev).
            \begin{arrowlist}
                \item All our work is motivated and analyzed with respect to this matter.
            \end{arrowlist}
    	\item The computing infrastructure used.
            \begin{arrowlist}
                \item For the finite-MDPs experiment, we used clusters of CPUs. The full results were obtained by running the benchmarks with 100 CPUs running independently in parallel during 24h. For the helicopter experiment, we used a GPU cluster. However, only one GPU is necessary for a single run. Using a cluster allowed to launch several runs in parallel and considerably sped up the experiment. On a single GPU (a GTX 1080 Ti), a dataset of $|\mathcal{D}| = 10$k transitions is generated in $~5$ seconds. The dataset generation scales linearly in $|\mathcal{D}|$. Computing the counts for that dataset takes approximately $20$ minutes, it scales quadratically with the size of the dataset. As far as training is concerned, $2000$ passes on a dataset of $10$k transitions takes around $25$ minutes, it scales linearly in $N$. Finally, evaluation of the trained policy on $10$k trajectories takes $15$ minutes. It scales linearly in $|\mathcal{D}|$ as it requires the computation of the pseudo-count for each state encountered during the evaluation and this pseudo-count computation is linear in $|\mathcal{D}|$. Overall, a single run for a dataset of $10$k transitions takes around one hour.
            \end{arrowlist}
	\end{itemize}
	
	\subsection{Code attached to the submission}
	\label{sup:code}
    The attached code can be used to reproduce the experiments presented in the submitted paper. It is split into two projects: one for finite MDPs (Sections \ref{sec:gridworld_exp}, \ref{sec:gridworld_exp_random_baseline}, and \ref{sec:RandomMDPs_exp}), and one for SPIBB-DQN (Section \ref{sec:spibb-dqn-exp}).
    
    \subsubsection{Finite MDPs}
	\label{sup:code_finite}
	Found at this address: \url{https://github.com/RomainLaroche/SPIBB}.
    
    \paragraph{Prerequisites}
    The finite MDP project is implemented in Python 3.5 and only requires *numpy* and *scipy*.
    
    \paragraph{Content}
    We include the following:
    \begin{itemize}
        \item Libraries of the following algorithms:
            \begin{itemize}
                \item Basic RL,
                \item SPIBB:
                    \begin{itemize}
                        \item $\Pi_b$-SPIBB,
                        \item $\Pi_{\leq b}$-SPIBB,
                    \end{itemize}
                \item HCPI:
                    \begin{itemize}
                        \item doubly-robust,
                        \item importance sampling,
                        \item weighted importance sampling,
                        \item weighted per decision IS,
                        \item per decision IS,
                    \end{itemize}
                \item Robust MDP,
                \item and Reward-adjusted MDP.
            \end{itemize}
        \item Environments:
            \begin{itemize}
                \item Gridworld environment,
                \item Random MDPs environment.
            \end{itemize}
        \item Gridworld experiment of Section \ref{sec:gridworld_exp}. Run:
        
    		\texttt{python gridworld\_main.py \#name\_of\_experiment\# \#random\_seed\#}
    		
        \item Gridworld experiment with random behavioural policy of Section \ref{sec:gridworld_exp_random_baseline}. Run: 
        
    		\texttt{python gridworld\_random\_behavioural\_main.py \#name\_of\_experiment\# \#random\_seed\#}
    		
        \item Random MDPs experiment of Section \ref{sec:RandomMDPs_exp}. Run: 
        
    		\texttt{python randomMDPs\_main.py \#name\_of\_experiment\# \#random\_seed\#}
    		
    \end{itemize}

    \paragraph{Not included} We DO NOT include the following:
    \begin{itemize}
        \item The hyper-parameter search (Appendix \ref{sup:benchmarkalgos}): it should be easy to re-implement.
        \item The figure generator: it has too many specificities to be made understandable for a user at the moment. Also, it is not hard to re-implement with one's own visualization tools.
    \end{itemize}
    
    \paragraph{License} This project is BSD-licensed.
    
    \subsubsection{SPIBB-DQN}
	\label{sup:code_spibb_dqn}
	Found at this address: \url{https://github.com/rems75/SPIBB-DQN}.

    \paragraph{Prerequisites}
    SPIBB-DQN is implemented in Python 3 and requires the following libraries: Keras, Tensorflow, pickle, glob, yaml, argparse, numpy, yaml, pathlib, csv, scipy and click.
    
    \paragraph{Content}
    The SPIBB-DQN project contains the helicopter environment, the baseline used for our experiments and the code required to generate datasets and train vanilla DQN and SPIBB-DQN.
    
    \paragraph{Commands}

    To generate a dataset, use the following command:

     \texttt{python baseline.py baseline -{}-generate\_dataset -{}-dataset\_size 10000 -{}-dataset\_dir baseline/dataset -{}-seed 1}
        
    It will generate a dataset with 10000 transitions using the baseline defined in the baseline folder and save the dataset in the \texttt{baseline/dataset/10000/1/1\_0/dataset.pkl} folder. It will also compute the counts associated with each state-action pair in the dataset, and store those with the dataset in \texttt{baseline/dataset/10000/1/1\_0/dataset.pkl}.
    With other parameters, it creates a subfolder of the \texttt{dataset\_dir} you specify, the subfolder has the form: \texttt{{dataset\_size}/{seed}/{noise\_factor}} (\texttt{noise\_factor} is 1.0 by default, denoted as a 1\_0 folder).

    To train a policy using SPIBB-DQN with a parameter n\_wedge (denoted minimum\_count in the command) of 10, on a dataset generated following the method above, run the following command:

    \texttt{python train\_batch.py -{}-seed 1 -{}-dataset-path baseline/dataset/10000/1/1\_0/counts\_dataset.pkl  -{}-baseline-path baseline -{}-options minimum\_count 10}

    This will create, in the folder containing the dataset (\texttt{baseline/dataset/10000/1/1\_0} in that specific command), a csv file with the performance of the policy (one for each run on that dataset, 15 by default).

    To repeat the experiment, simply define a different seed for the dataset generation and train on that new dataset. The default values set in the code are the ones that produced the results from the paper. To run vanilla DQN, simply set the \texttt{minimum\_count} to 0.
    
    To run Reward-adjusted MDP on a dataset, simply add the following flag \texttt{--options learning\_type ramdp} and specify the value of kappa with e.g. \texttt{--options kappa 0.003}.
    
    \paragraph{Not included} We DO NOT include the following:
    \begin{itemize}
        \item The multi-CPU/multi-GPU implementation: its structure is too much dependent on the cluster tools. It would be useless for somebody from another lab.
    \end{itemize}
    
    \paragraph{License} This project is BSD-licensed.

\end{document}